\documentclass[11pt]{article}
\usepackage{bm}
\usepackage{amsmath,amsthm,verbatim,amssymb,amsfonts,amscd, graphicx, graphics,physics}
\usepackage[mathscr]{eucal}
\usepackage{indentfirst}
\usepackage{booktabs}
\usepackage{pict2e}
\usepackage{epic}
\numberwithin{equation}{section}
\usepackage[footnotesize,bf]{caption}
\usepackage[margin=1in]{geometry}
\usepackage{epstopdf}
\usepackage{mathtools}
\usepackage[dvipsnames]{xcolor}
\usepackage{algorithmic}
\usepackage[ruled,vlined]{algorithm2e}
\usepackage{authblk}
\usepackage{placeins}
\usepackage{enumitem}

\usepackage[normalem]{ulem}

\usepackage{xurl}

\theoremstyle{plain}
\newtheorem{theorem}{Theorem}[section]

\newtheorem{lemma}{Lemma}[section]
\newtheorem{proposition}{Proposition}[section]

\theoremstyle{definition}

\newtheorem{remark}[theorem]{Remark}
\newtheorem{myexample}{Example}

\newtheorem{assumption}{Assumption}

\usepackage{threeparttable}
\usepackage{subcaption} 
\usepackage{siunitx}
\usepackage{multirow}

\usepackage[backend=bibtex, firstinits=true,maxbibnames=9,natbib=true,url=false,sorting=nyt,doi=true,backref=false]{biblatex}
\renewbibmacro{in:}{\ifentrytype{article}{}{\printtext{\bibstring{in}\intitlepunct}}}

\usepackage[colorlinks,linkcolor = blue, citecolor=blue]{hyperref} 
\usepackage{hyperref}
\usepackage[capitalize]{cleveref}

\def\R{{\mathbb R}}

\def\P{{\mathbb P}}

\def\C{{\mathbb C}}

\def\M{{\mathcal M}}

\def\sbm{{\mathsf{SBM}}}

\def\KL{{\mathsf{KL}}}

\def\JJ{{\bar{\bm J}}}

\def\G{{\mathcal G}}

\def\e{{\varepsilon}}
\def\l{{\lambda}}

\def\u{{\bm u}}

\def\sym{{\mathrm{sym}}}

\def\l{{\mathsf L}}
\def\d{{\mathsf D}}
\def\u{{\mathsf U}}

\def\diag{{\mathrm{diag}}}

\def\rhosync{\rho_{_{\mathrm{sync}}}}
\def\rhoasync{\rho_{_{\mathrm{async}}}}
\def\brhosync{\bar{\rho}_{_{\mathrm{sync}}}}
\def\brhoasync{\bar{\rho}_{_{\mathrm{async}}}}

\newcommand{\<}{\left\langle}
\renewcommand{\>}{\right\rangle}

\renewcommand{\bar}{\overline}

\newcommand{\E}{\mathbb{E}}

\newcommand{\W}{\mathcal{W}}

\newcommand{\pih}{\widehat{\bm\pi}}

\newcommand{\vt}{\vartheta}

\DeclareMathOperator*{\argmax}{arg\,max}
\DeclareMathOperator*{\argmin}{arg\,min}
\usepackage{setspace}

\title{Convergence analysis of a family of Zermelo-type iterations for the Bradley--Terry model}
\date{\today}

\author[1]{Ruijian Han}
\author[2]{Ding Lu}
\author[2]{Yiming Xu}
\affil[1]{Department of Data Science and Artificial Intelligence, The Hong Kong Polytechnic University, Hung Hom, Kowloon, Hong Kong SAR}
\affil[2]{Department of Mathematics, University of Kentucky, Lexington, KY, 40506, USA}

\begin{document}

\maketitle

\begin{abstract}
Zermelo's algorithm is a classical method for computing the maximum likelihood estimator in the Bradley--Terry (BT) model, but its convergence can be slow in practice. To accelerate computation, Newman introduced a family of Zermelo-type fixed-point iterations parameterized by $\alpha$, with Zermelo's algorithm recovered at $\alpha=1$. Empirical evidence suggests that the choice $\alpha=0$ often converges substantially faster, making it a promising alternative, yet the mechanism underlying this acceleration remains elusive. This paper provides theoretical insight into this phenomenon through a systematic local convergence analysis. We derive closed-form expressions for local convergence factors under synchronous and asynchronous updates and analyze their dependence on $\alpha$ via spectral analysis of the associated Jacobian matrices. For synchronous updates, we show that the algorithm may fail to converge when $\alpha<1$, and its local convergence factor is quasi-convex in $\alpha$ under the population BT model. In contrast, asynchronous updates are always locally convergent, and their local convergence factor is provably monotonically increasing in $\alpha$ under the population BT model of consistently ordered bipartite comparison graphs, establishing the optimality of $\alpha=0$ in this setting. We further establish asymptotic approximation results for the population convergence factors under the BT model, justifying their practical relevance. Numerical experiments on synthetic and real-world datasets confirm the theory. Our analysis complements existing convergence results and shows that the acceleration of $\alpha=0$ arises not only from the parameter choice but, more importantly, from the use of asynchronous updates.
\end{abstract}



\section{Introduction} 

The problem of ranking a collection of objects from paired comparison data is a fundamental task in machine learning and data science.
Given a set of $n$ objects indexed by $[n]\coloneqq \{1, 2, \dots, n\}$ and $w_{ij}\geq 0$ denoting the number of times object $i$ defeats (or is preferred over) object $j$, the goal is to infer a global ordering of the objects that best explains the observed pairwise comparison outcomes $\W\coloneqq \{w_{ij}\}_{i,j=1}^n$. Paired comparison ranking problems arise in a wide range of applications, with notable examples including sports analytics, where players or teams are ranked based on match outcomes~\citep{baker2014dynamic, bozoki2016application};
psychometrics and social science, where individual preferences are
inferred through pairwise judgment or social choice~\citep{loewen2012testing,
mcfadden1973conditional, thurstone1927method, 10.1111/rssa.12124,
wapman2022quantifying}; and reinforcement learning, where AI systems are aligned with human
preferences through comparative feedbacks~\citep{Christiano2017, Rafailov2023, Sun2025}; among others.

The Bradley--Terry (BT) model, introduced by~\citet{zermelo1929berechnung} and later formalized by~\citet{MR0070925}, is a classical probabilistic model for paired comparisons. In this model, each object $i\in[n]$ is associated with an unknown positive parameter $\pi_i$, referred to as its {\em strength}. The probability that object $i$ defeats $j$ (denoted $i\succ j$) in a single comparison is modeled as 
\begin{equation}\label{btprob0}
\P(i\succ j) = \frac{\pi_i}{\pi_i + \pi_j}.
\end{equation}
According to~\eqref{btprob0}, the strength vector $\bm\pi = (\pi_1, \ldots, \pi_n)^\top \in \R_+^n$ quantifies the relative dominance of objects, with larger entries corresponding to stronger objects with higher winning probabilities, thereby inducing a natural global ranking. This conceptual simplicity has contributed to the widespread use of the BT model in both theoretical studies and practical applications; see, e.g.,~\cite{hamilton2023many}. Throughout this paper, we assume that the strength vector $\bm\pi$ is normalized to satisfy the unit-product constraint $\prod_{i}\pi_i = 1$, a convention frequently adopted to address the scaling-invariance of the BT model~\eqref{btprob0} (i.e., $\alpha\bm\pi$ yields the same probability for all $\alpha > 0$).

A common approach to infer the unknown strengths $\bm\pi$ from paired comparison data is to use maximum likelihood estimation. Given observed outcomes $\W$ generated according to the BT model~\eqref{btprob0}, where it is assumed $m_{ij}\coloneqq w_{ij}+w_{ji}$ comparisons (independent of $\bm\pi$) are conducted between each pair $(i, j)$ of objects, the log-likelihood function of the strength vector $\bm\pi$ is given by
\begin{align}
l(\bm \pi) = \sum_{1\leq i, j\leq n} w_{ij} \log \left( \frac{\pi_i}{\pi_i + \pi_j} \right). \label{btl}
\end{align}
The strength vector can then be estimated by the \textit{maximum likelihood estimator} (MLE)
\begin{align}
\pih = \argmax_{\bm\pi\in\R_+^n, \ \prod_{i}\pi_i = 1}l(\bm
\pi).\label{mle0} 
\end{align}
It is known that the MLE $\pih$ in \eqref{mle0} uniquely
exists and is finite if and only if the graph of $\W$ is {\em strongly
connected}; that is, for every $\mathcal A\subset [n]$, there exists
$i\in \mathcal A$ and $j\in \mathcal A^\complement$ such that
$w_{ij}>0$. Here, we refer to, e.g.,~\cite{MR0097876} for related details on the MLE.
Once obtained, the MLE $\pih$ provides a basis for ranking the objects and conducting statistical inference based on the magnitudes of their estimated strengths.

A classical method for computing the MLE~\eqref{mle0} is Zermelo's algorithm~\citep{zermelo1929berechnung}.
It is a simple iterative algorithm derived from the stationary condition 
$\nabla l(\bm\pi)=0$ for the optimization problem~\eqref{mle0}, 
which can be equivalently written as the fixed-point equations 
\begin{align}
	\pi_i = 
	\frac{ \sum_{j=1}^n w_{ij}}
	{\sum_{j=1}^n(w_{j i} +w_{ij})/({\pi}_i + {\pi}_j)},
		\quad\text{for $i=1,\dots,n$.}
\label{eq:zermelo}
\end{align}
These fixed-point equations admit a unique positive solution (up to scaling) provided $\W$ is strongly connected, and the fixed-point iteration, called Zermelo's algorithm, is globally convergent from any initial point $\bm\pi^{(0)}\in\mathbb R^{n}_{+}$,
with monotonically increasing log-likelihood $l(\bm\pi)$; see, e.g., \cite{ MR0097876,zermelo1929berechnung}. Historically, Zermelo's algorithm has served as the computational workhorse for the BT model and has been extensively studied; see, e.g.,~\cite{hunter2004mm} and references therein.

Despite its guaranteed global convergence, the convergence of Zermelo's algorithm can be slow in practice;
see, e.g.,~\cite{Dykstra:1956, hunter2004mm, newman2023efficient}. For acceleration, \citet{newman2023efficient} proposed a generalized family of iterative algorithms parameterized by $\alpha\geq 0$, which includes Zermelo's algorithm as the special case with $\alpha=1$. 
These algorithms are derived from an equivalent reformulation of~\eqref{eq:zermelo} as the fixed-point equations
\begin{equation}\label{eq:nmalpha0}
{\pi}_i =
\frac{\sum_{j=1}^n w_{ij}{(\alpha{\pi}_i+{\pi}_j)}/{({\pi}_i + {\pi}_j})}
{\sum_{j=1}^n {(\alpha w_{ij} + w_{j i})}/{({\pi}_i + {\pi}_j})},
\quad\text{for $i=1,\dots,n$}.
\end{equation}
When $\alpha = 1$, these equations reduce to~\eqref{eq:zermelo}, and the fixed-point iteration recovers Zermelo's algorithm.
A key observation in~\cite{newman2023efficient} is that, within this family of fixed-point iterations by~\eqref{eq:nmalpha0}, the choice of $\alpha=0$ typically achieves the fastest convergence and significantly outperforms Zermelo's algorithm in numerical experiments.
These results suggest that the $\alpha=0$ variant may serve as a computationally more efficient alternative to Zermelo's algorithm.
However, it remains elusive whether this choice is universally preferable and, more generally, what theoretical principles govern the role of $\alpha$ in determining convergence.

The major goal of this work is to fill those gaps through a systematic local convergence analysis of fixed-point iterations 
defined by~\eqref{eq:nmalpha0}, which we collectively refer to as {\em Newman's $\alpha$-scheme}.
Specifically, we derive closed-form expressions for the local convergence factors
$\rho(\alpha)$ of these algorithms and investigate their dependence on the parameter $\alpha$. 
One of our key findings is that the behavior of $\rho(\alpha)$ 
depends critically on how the fixed-point iterations are implemented, 
namely, whether the variables are updated 
{\em synchronously} (with all $\pi_i$ updated simultaneously) 
or 
{\em asynchronously} (with $\pi_i$ updated sequentially).
In particular, we show that an asynchronous implementation is essential
for the $\alpha=0$ variant to attain its superior performance observed in practice.
The choice $\alpha=0$ alone is not sufficient: Under certain conditions, the corresponding synchronous iteration is provably non-convergent and therefore offers no acceleration over Zermelo's algorithm.

\paragraph{Contributions.} Our main contributions are threefold.
\begin{enumerate}
\item {\em Synchronous iterations}: We establish the local convergence 
	factors $\rhosync(\alpha)$ of the fixed-point iterations for~\eqref{eq:nmalpha0}
	in terms of the extreme eigenvalues of an associated Jacobian, 
	whose eigenvalues are all real. 
	For expected comparison outcomes from the BT model, 
	we show that the corresponding $\rhosync(\alpha)$ 
	is a quasi-convex function in $\alpha$, but is not necessarily 
	monotonically increasing. 
	With numerical examples, we demonstrate that 
	$\rhosync(0)$ can exceed $1$, indicating the $\alpha=0$ variant 
	is non-convergent and provides no acceleration over Zermelo's
	algorithm.

\item {\em Asynchronous iterations}:
	We show that the local convergence factor satisfies $\rhoasync(\alpha)<1$ for
	all $\alpha>0$, implying the algorithms are always locally convergent. 
	For expected comparison outcomes from the BT model with a bipartite comparison graph in consistent order,
	we prove that the corresponding $\rhoasync(\alpha)$ can be characterized by the
	eigenvalues of an associated quadratic eigenvalue problem and is
	monotonically increasing in $\alpha$, thereby establishing the
	optimality of the choice $\alpha=0$ in this specialized setting.
	For general expected BT outcomes, we further provide quantitative bounds 
	on the convergence factors.

\item 
	{\em Approximation by population outcomes of BT models:}
	We illustrate that the convergence factors of Newman's $\alpha$-schemes can often 
	be well approximated using the population (expected) comparison outcomes 
	of an underlying BT model, thereby demonstrating the practical
	relevance of our theoretical analysis.
	We justify this approximation both numerically, 
	using synthetic data generated from BT models under various graph
	topologies as well as real-world datasets from applications,
	and theoretically, by
	establishing rigorous asymptotic approximation results for the
	convergence factors under the BT model.
\end{enumerate}

\paragraph{Organization.}
The rest of this paper is organized as follows. \Cref{sec:mc} introduces the preliminaries, including a detailed description of Newman's $\alpha$-scheme under both synchronous and asynchronous updates and their associated matrix representations. \Cref{sec:ana} and \Cref{sec:gs} present our main results on the local convergence analysis of Newman's
$\alpha$-scheme under synchronous and asynchronous updates, respectively. 
\Cref{sec:app5} establishes approximation results relating the
local convergence factors to their population counterparts under the BT model. 
Numerical experiments are reported in~\Cref{sec:nums}, followed by 
concluding remarks in~\Cref{sec:conc}.

\paragraph{Notation.}
We follow standard notation in linear algebra and statistics.
$\R^{m\times n}$ and $\C^{m\times n}$ are the spaces
of real and complex $m$-by-$n$ matrices, 
with $\R^n=\R^{m\times 1}$ and $\C^m=\C^{m\times 1}$ for column vectors.
We write $\bm I_n$ for the identity matrix of size $n$,
$\bm e_i$ for the $i$-th column of $\bm I_n$,
and $\bm 1_n$ for the all-ones vector of length-$n$
(the subscript $n$ is often dropped when the size is clear from context).
For a matrix or vector, 
$\|\cdot\|_2$, $\|\cdot\|_\infty$, and $\|\cdot\|_F$ 
denote the $2$-norm, infinity norm, and the Frobenius norm,
respectively.
The operator $\Re(\cdot)$ extracts the real part,
and $\cdot^\top$ and $\cdot^{\rm H}$ denote the transpose and
conjugate transpose.
For a square matrix $\bm A\in\R^{n\times n}$, 
$\Lambda(\bm A)$ denotes the set of eigenvalues,
$\rho(A)$ denotes the spectral radius 
(i.e., largest absolute value of eigenvalue),
and $\lambda_i(A)$ denotes the $i$-th largest eigenvalue 
when all eigenvalues are real. 
We also use $\bm A_\l$, $\bm A_\d$, and $\bm A_\u$ for the 
strictly lower triangular, diagonal, and strictly upper triangular parts
of $\bm A$, respectively. 
For a vector $\bm v\in\R^n$,
$\bm v^ {\circ -1}$ denotes its entrywise reciprocal, 
and $\mbox{diag}(\bm v)$ denotes the diagonal matrix with $\bm v$ on the diagonal.
We use $\partial_{\bm x}$ for the partial derivative of a function with
respect to the variable $\bm x$, and  
$f\circ g(\bm x)\coloneqq f(g(\bm x))$ for the composition of functions.
$\mathbb I_{\{i\neq j\}}$  denotes the indicator function
of the event $i\neq j$; that is, 
$\mathbb I_{\{i\neq j\}}=1$ if $i\neq j$ and $0$ otherwise.
$\G([n], \W)$ denotes a weighted graph with 
$n$ nodes, where $[n]\coloneqq \{1,\dots,n\}$ is the vertex set 
and $\W\coloneqq\{w_{ij}\}$ denotes the edge weights; the associated matrix representation of $\W$ is denoted by $\bm W=(w_{ij})\in\R^{n\times n}$.
For a random variable $r(\alpha)$ parameterized by $\alpha$, 
we write $r(\alpha) \overset{\smash{\scriptstyle{\P}}}{\to} 0$ as $\alpha\to \alpha_*$ 
if $r(\alpha) $ converges in probability to $0$.
Other notations will be explained as first used.

\section{Preliminaries}\label{sec:mc}

\subsection{Newman's $\alpha$-scheme for the MLE}\label{sec:s+as}

Recall the fixed point equations~\eqref{eq:nmalpha0}, which give rise to the fixed-point iterations 
\begin{equation}\label{eq:alpha}
\begin{aligned}
{\pi}_i \leftarrow f_{\alpha,i}(\bm \pi)\coloneqq\frac{\sum_{j=1}^nw_{ij}(\alpha{\pi}_i+{\pi}_j)/({\pi}_i + {\pi}_j)}{\sum_{j=1}^n(\alpha w_{ij} + w_{j i})/({\pi}_i + {\pi}_j)},
\qquad\text{for $i=1,2,\dots,n$},
\end{aligned}
\end{equation}
where $\alpha\geq 0$ is a prescribed parameter. Varying the parameter $\alpha$ yields a family of iterative schemes for computing the MLE $\pih$, which we collectively refer to as {\em Newman's $\alpha$-scheme}~\citep{newman2023efficient}.
In practice, each iteration can be implemented in two distinct ways: synchronously or asynchronously.

Using {\em synchronous} iteration, 
all components $\pi_i$ in~\eqref{eq:alpha} 
are updated simultaneously, i.e.,
\begin{equation}\label{eq:sync1}
(\pi_1,\dots,\pi_n)^\top \leftarrow (f_{\alpha,1}(\bm \pi),\dots, f_{\alpha,n}(\bm \pi))^\top.
\end{equation}
To ensure feasibility and maintain numerical stability, the strength vector $\bm\pi$ is normalized to have a unit product after each pass: 
\begin{equation}\label{eq:normalize}
\bm\pi \leftarrow S(\bm\pi) \quad\text{with}\quad 
S(\bm\pi) \coloneqq \frac{\bm \pi}{\sqrt[n]{\prod_{i=1}^n\pi_i}}. 
\end{equation}
This yields the fixed-point iteration: given $\bm\pi^{(0)}\in\mathbb R_+^n$,  iteratively generate for $k=1,2,\dots$,
\begin{equation}\label{eq:fpsf}
    \bm\pi^{(k)}=S\circ F_\alpha (\bm\pi^{(k-1)}),
\end{equation} 
where  
\begin{equation}\label{n-s}
\begin{aligned}
F_\alpha (\bm\pi)\coloneqq (f_{\alpha,1}(\bm\pi), \ldots, f_{\alpha,n}(\bm\pi))^\top,
\end{aligned}
\end{equation}
and $\circ$ denotes the composition of functions.

Using {\em asynchronous} iteration, the components $\pi_i$ in~\eqref{eq:alpha} are updated sequentially as soon as new values become available.
Under a fixed ordering, this gives the (cyclic) asynchronous update 
\begin{equation}\label{eq:async1}
\pi_i \leftarrow  f_{\alpha,i}(\bm \pi),
\quad\text{sequentially for $i=1,\dots,n$}.
\end{equation}
Combined with normalization~\eqref{eq:normalize} after each full pass, this leads to the fixed-point iteration: 
given $\bm\pi^{(0)}\in\mathbb R_+^n$,  iteratively generate for $k=1,2,\dots$,
\begin{equation}\label{eq:fpasyn}
\bm\pi^{(k)} = S\circ A_{\alpha}(\bm\pi^{(k-1)}),
\end{equation}
where 
\begin{align}\label{fas}
A_{\alpha}(\bm\pi) \coloneqq F_{\alpha}^{(n)}\circ\dots\circ F_{\alpha}^{(1)}(\bm\pi),
\end{align}
and 
\begin{equation}
F_{\alpha}^{(i)}(\bm \pi) \coloneqq (\pi_1,\dots,\pi_{i-1}, f_{\alpha, i}(\bm\pi), \pi_{i+1},\dots, \pi_n)^\top,
\quad\text{for $i=1,\dots,n$}.
\end{equation}
Observe that each mapping $F_{\alpha}^{(i)} $ accounts for an individual asynchronous update from~\eqref{eq:async1}.

In practice, the asynchronous update~\eqref{eq:fpasyn} is empirically
{\em preferred} over its synchronous counterpart~\eqref{eq:fpsf}, as it
incorporates newly updated components at each pass and may lead to
faster convergence~\citep{newman2023efficient, yeung2025efficient}.
However, this empirical advantage is not yet fully supported by
theoretical analysis, and a systematic comparison between the two
implementations is largely absent from the literature. 
Existing theoretical analyses primarily focus on the global convergence of the algorithms.
In~\cite{newman2023efficient}, it is proved that the
asynchronous iteration~\eqref{eq:fpasyn} is globally convergent, since
each component update $\pi_i\leftarrow f_{\alpha, i}(\bm \pi)$ yields a
monotonic increase in the likelihood function $l(\bm\pi)$. 
When $\alpha=1$, the scheme~\eqref{eq:sync1} reduces to Zermelo's algorithm~\citep{zermelo1929berechnung}, whose
global convergence is well established and can be interpreted from 
different perspectives, including
the Minorization--Maximization (MM) framework~\citep{hunter2004mm},
gradient descent optimization~\citep{vojnovic2023accelerated}, and its
connection to the Sinkhorn algorithm for matrix
balancing~\citep{qu2025sinkhorn}. 
However, these global convergence analyses do not readily
extend to the synchronous scheme \eqref{eq:fpsf} when $\alpha < 1$,
and therefore do not fully explain the observed advantage of the asynchronous scheme. 
In fact, as we show, the two implementations can exhibit fundamentally different
convergence behavior, particularly as $\alpha\to0$ in certain cases. 
For example, the synchronous scheme may fail to converge when the observed outcomes deviate
substantially from the BT model assumptions (see \Cref{sec:num1} for a
numerical illustration). 
This contrasts with the common belief that the two implementations converge to the same solution
(see, e.g.,~\cite{yeung2025efficient}),
which generally holds for $\alpha=1$.

To quantify the convergence rate and gain further insight into the effects of the parameter $\alpha$, as well as the synchronous vs. asynchronous update, a local convergence analysis is needed. In particular, based on the fixed-point equations in \eqref{eq:fpsf} and \eqref{eq:fpasyn}, one can express the local convergence rate using the Jacobian $\bm J_{F_\alpha}$ (which is computed in \Cref{sec:mcmc}), and then a spectral analysis of this Jacobian would reveal detailed information on local convergence behavior. This is the focus of Sections~\ref{sec:ana}-\ref{sec:gs}.

\subsection{Matrix representations and linearizations}\label{sec:mcmc}
We introduce several matrix representations of Newman's $\alpha$-scheme
that will be used throughout the paper to facilitate the convergence analysis and provide further insight into the underlying algebraic structure of the algorithm.
To begin with, the governing fixed-point equation~\eqref{eq:nmalpha0} can be written in a matrix product form as 
\begin{equation}\label{eq:governfp}
\bm\pi =  \bm K_\alpha(\bm \pi) \bm\pi,
\end{equation}
where the coefficient matrix  $\bm K_\alpha(\bm \pi)\in\R^{n\times n}$ is given by 
\begin{equation}\label{eq:nalpha}
\bm K_\alpha(\bm \pi) 
\coloneqq \left(\bm C(\bm\pi) + \alpha \bm R(\bm\pi)\right)^{-1}
\cdot
\left(\bm W(\bm\pi) + \alpha \bm R(\bm\pi)\right),
\end{equation}
with $\bm W(\bm\pi)\in\R^{n\times n}$ denoting a strength-weighted adjacency matrix given by 
\begin{align}\label{myQ}
[\bm W(\bm\pi)]_{ij} \coloneqq \frac{w_{ij}}{\pi_i + \pi_j}\cdot\mathbb I_{\{i\neq j\}},
\end{align}
and $\bm R(\bm\pi)$ and $\bm C(\bm\pi)\in\R^{n\times n}$  being diagonal and defined by 
\begin{align}
\bm R(\bm\pi) \coloneqq \mbox{diag}(\bm W(\bm\pi) \bm 1)
\quad\text{and}\quad
\bm C(\bm\pi) \coloneqq \mbox{diag}(\bm W(\bm\pi)^\top \bm 1),
\end{align}
where $\bm 1\in\R^{n}$ denotes  a vector of all ones; that is, $\bm R(\bm\pi)$ and $\bm C(\bm\pi)$ correspond to the row and column sums of $\bm W(\bm\pi)$, respectively.

For a useful property,  the matrix $\bm K_\alpha(\bm\pi)$~\eqref{eq:nalpha}
is {\em similar} to a column-stochastic matrix $\bm P_\alpha(\bm\pi)$ (i.e., $\bm P_\alpha(\bm\pi) \geq 0$ and $\bm 1^\top \bm P_\alpha(\bm\pi) = \bm 1^\top$), obtained by interchanging the two matrix factors in~\eqref{eq:nalpha}:
\begin{align}\label{eq:palpha}
\bm P_\alpha(\bm\pi) 
&= \left(\bm W(\bm\pi) + \alpha \bm R(\bm\pi)\right) \cdot \left(\bm C(\bm\pi) + \alpha \bm R(\bm\pi)\right)^{-1}.
\end{align}
This implies $\bm K_\alpha(\bm\pi)$ shares the same eigenvalues as $\bm P_\alpha(\bm\pi)$,
and in particular, has a dominant eigenvalue equal to $1$ due to the property of stochastic matrices.

For local convergence analysis, we need to linearize and consider the Jacobian of the 
fixed-point mapping $F_\alpha(\bm\pi) \equiv \bm K_\alpha(\bm \pi) \bm\pi$ from~\eqref{n-s} 
at its fixed-point solution, i.e., the MLE $\pih$. 
A direct calculation yields the Jacobian $\bm J_{F_\alpha}(\bm\pi)\in\R^{n\times n}$ with entries
\begin{align}\label{jac:J}
[\bm J_{F_\alpha}(\bm\pi)]_{ij} = \begin{cases}
\frac{w_{ij} \frac{\pi_i(1 - \alpha)}{(\pi_i + \pi_j)^2} }{\sum_{\ell} \frac{\alpha w_{i\ell} + w_{\ell i}}{\pi_i + \pi_\ell} } + \frac{\left(\sum_{\ell} w_{i\ell} \frac{\alpha \pi_i + \pi_\ell}{\pi_i + \pi_\ell}\right)\frac{\alpha w_{ij} + w_{ji}}{(\pi_i + \pi_j)^2}}{\left(\sum_{\ell} \frac{\alpha w_{i\ell} + w_{\ell i}}{\pi_i + \pi_\ell}\right)^2} & j\neq i,\\
\frac{\sum_{\ell} \frac{w_{i\ell} \pi_\ell(\alpha - 1)}{(\pi_i + \pi_\ell)^2} }{\sum_{\ell} \frac{\alpha w_{i\ell} + w_{\ell i}}{\pi_i + \pi_\ell} }+\frac{\left( \sum_{\ell} w_{i\ell} \frac{\alpha \pi_i + \pi_\ell}{\pi_i + \pi_\ell} \right) \left( \sum_{\ell} \frac{\alpha w_{i\ell} + w_{\ell i}}{(\pi_i + \pi_\ell)^2} \right)}{\left( \sum_{\ell} \frac{\alpha w_{i\ell} + w_{\ell i}}{\pi_i + \pi_\ell} \right)^2} & j = i.
\end{cases}
\end{align}
By definition, $\bm J_{F_\alpha}(\bm\pi)$ is non-symmetric in general and may have complex eigenvalues.
At the fixed-point solution $\pih$, we can further simplify the Jacobian formulation to
\begin{align}\label{jac:Jfp}
[\bm J_{F_\alpha}(\pih)]_{ij} = \begin{cases}
\left.
\left( \widehat\pi_i\frac{w_{ij} +w_{ji}}{(\widehat\pi_i + \widehat\pi_j)^2} \right)\middle/
\left(\sum_{\ell} \frac{\alpha w_{i\ell} + w_{\ell i}}{\widehat\pi_i + \widehat\pi_\ell} \right)
\right.
& j\neq i,\\
1 - 
\left.
\left( \sum_{\ell}\widehat \pi_\ell \frac{w_{i\ell } +w_{\ell i}}{(\widehat \pi_i + \widehat \pi_\ell )^2} \right)\middle/
\left(\sum_{\ell} \frac{\alpha w_{i\ell} + w_{\ell i}}{\widehat \pi_i + \widehat \pi_\ell} \right)
\right.
& j = i,
\end{cases}
\end{align}
which follows directly from~\eqref{jac:J}
and the governing equations in~\eqref{eq:nmalpha0} for the solution $\pih$.

We make two useful observations for the Jacobian $\bm J_{F_\alpha}(\pih)$. First,
$\bm J_{F_\alpha}(\pih)$ is similar  
to a symmetric matrix
\begin{equation}\label{eq:jsym}
\bm J^{\sym}_{F_\alpha}(\pih):=
\bm D_\alpha(\pih)^{-1} \bm J_{F_\alpha}(\pih) \bm D_\alpha(\pih),
\end{equation}
where 
$\bm D_\alpha(\pih)$ is a diagonal matrix with entries
$[\bm D_\alpha(\pih)]_{ii} = \sqrt{
\left. 
\widehat{\pi}_i 
\middle/
\left(\textstyle \sum_{\ell} \frac{\alpha w_{i\ell} + w_{\ell i}}{\widehat{\pi}_i + \widehat{\pi}_\ell}\right)
\right.
}
$, i.e.,
\begin{align}\label{myD}
\bm D_\alpha(\pih) =
\diag(\pih)^{\frac{1}{2}}\cdot
\left(\bm C(\pih)+\alpha \bm R(\pih)\right)^{-\frac{1}{2}}.
\end{align}
Consequently, $\bm J_{F_\alpha}(\pih)$ has $n$ real eigenvalues. 
Second, the diagonal entries of $\bm J_{F_\alpha}(\pih)$ 
are minimized when $\alpha=0$, 
which yields a uniform lower bound
\begin{align}\label{eq:boundjii}
[\bm J_{F_\alpha}(\pih)]_{ii} 
\geq  
1 -
\left.
\left(\sum_{\ell}
\frac{\widehat \pi_\ell  \widehat \pi_i(w_{i\ell } +w_{\ell i})}
{(\widehat \pi_i + \widehat \pi_\ell )^2} \right)\middle/\left(\sum_{\ell} \frac{\widehat \pi_i w_{\ell i}}{\widehat \pi_i + \widehat \pi_\ell} \right)
\right.
> -1,
\end{align}
where the second inequality follows from 
\[
\sum_{\ell}\frac{\widehat \pi_\ell  \widehat \pi_i  (w_{i\ell }+w_{\ell i})}{(\widehat \pi_i + \widehat \pi_\ell )^2}
< \sum_{\ell}\frac{\widehat \pi_\ell\, w_{i\ell }}{\widehat \pi_i + \widehat \pi_\ell }
+\sum_\ell \frac{\widehat \pi_i w_{\ell i}}{\widehat \pi_i + \widehat \pi_\ell }
= 2 \sum_\ell \frac{\widehat \pi_i w_{\ell i}}{\widehat \pi_i + \widehat \pi_\ell},
\]
recalling the identity $\sum_{\ell}\frac{\widehat \pi_\ell\, w_{i\ell}}{\widehat \pi_i + \widehat \pi_\ell }
= \sum_{\ell}\frac{\widehat \pi_i\, w_{\ell i}}{\widehat \pi_i + \widehat \pi_\ell}$ at the fixed-point solution $\pih$ of~\eqref{eq:alpha}.

\subsection{Population version of BT model}
For theoretical purposes, the population version of the BT model
will play an important role in our subsequent analysis.
We recall the BT model~\eqref{btprob0}, where, for notational clarity,
the true strength vector is denoted by
$\bm\pi^*=(\pi_1^*,\dots,\pi_n^*)^\top\in\R_+^n$ 
and normalized such that $\prod_i \pi_i^*=1$:
\begin{equation}\label{eq:pbt}
\P(i\succ j) = \frac{\pi_i^*}{\pi_i^* + \pi_j^*}.
\end{equation}
Suppose that each pair of objects $i$ and $j$ is compared $m_{ij}$ times independently, 
and let $\M \coloneqq\{m_{ij}\}_{i,j=1}^n$.
Then, the observed number of wins of $i$ over $j$ follows a binomial distribution:
\begin{align}
w_{ij}\sim\text{Bin}\left(m_{ij}, \frac{\pi_i^*}{\pi_i^*+\pi_j^*}\right). \label{bt-model}
\end{align}
The {\em population} version of the comparison outcome is then given by the expectation as 
\begin{equation}\label{eq:barw0}
\bar w_{ij} \coloneqq \E[w_{ij}] = \frac{m_{ij}\pi_i^*}{\pi_i^*+\pi_j^*}.
\end{equation}
In practice, rather than analyzing a particular realization of the outcomes
$\W=\{w_{ij}\}_{i,j=1}^n$, which is subject to sampling randomness, 
it is often more informative from a theoretical perspective to consider the 
corresponding population version  $\bar\W=\{\bar w_{ij}\}_{i,j=1}^n$.
For example, as a desirable property of this population model, 
the MLE $\pih$~\eqref{mle0} based on the population log-likelihood function, 
i.e., \eqref{btl} with $w_{ij}$ replaced by $\bar w_{ij}$, 
exactly recovers the true strength vector $\bm\pi^*$,
which follows directly from Jensen's inequality.
As will be further explored in~\Cref{sec:app5},
when the sampled outcomes $\W$ concentrate around the population $\bar\W$,
the estimator $\pih$ is also expected to be close to $\bm\pi^*$.

To distinguish the population setting from the sampled setting, 
we denote the corresponding coefficient matrices in~\eqref{eq:nalpha} 
under the population comparison outcome $\bar\W$ by
\begin{equation}\label{eq:nalphabar}
\bar{\bm K}_\alpha(\bm \pi) 
\coloneqq \left(\bar{\bm C}(\bm\pi) + \alpha \bar{\bm R}(\bm\pi)\right)^{-1}
\cdot
\left(\bar{\bm W}(\bm\pi) + \alpha \bar{\bm R}(\bm\pi)\right),
\end{equation}
where the expected $\bar{\bm W}(\bm\pi)\coloneqq\E[\bm W(\bm\pi)]\in\R^{n\times n}$ is given by 
\begin{align}\label{myQbar}
[\bar{\bm W}(\bm\pi)]_{ij} = 
\frac{m_{ij}\pi_i^*}{(\pi_i + \pi_j)(\pi^*_i + \pi^*_j)}\cdot\mathbb I_{\{i\neq j\}},
\end{align}
and the expected diagonal matrices 
\begin{align}\label{eq:barrc}
\bar{\bm R}(\bm\pi) = \mbox{diag}(\bar{\bm W}(\bm\pi) \bm 1)
\quad\text{and}\quad
\bar{\bm C}(\bm\pi) = \mbox{diag}(\bar{\bm W}(\bm\pi)^\top \bm 1).
\end{align}
In analogy, the Jacobian for the population fixed-point mapping $\bar F_\alpha(\bm\pi)\coloneqq  \bar{\bm K}_\alpha(\bm \pi) \bm \pi$ is denoted by $\bar{\bm J}_{F_\alpha}$. It follows from~\eqref{jac:Jfp} that this Jacobian at the fixed point solution $\pih=\bm\pi^*$ 
(recall the MLE~\eqref{mle0} has the solution $\pih= \bm\pi^*$ under the expected outcome~\eqref{eq:barw0}) has entries 
\begin{align}\label{Jbar}
[\JJ_{F_\alpha}(\bm\pi^*)]_{ij}\coloneqq\begin{cases}
\left.{\left(\frac{m_{ij} \pi_i^*}{(\pi_i^* + \pi_j^*)^2}\right)}\middle/{\left(\sum_{\ell} \frac{m_{i\ell}(\alpha \pi_i^* + \pi_\ell^*)}{(\pi_i^* + \pi_\ell^*)^2}\right)}\right. & j\neq i,\\
\alpha \left.{\left(\sum_{\ell} \frac{m_{i\ell}\pi_i^*}{(\pi_i^*+\pi_\ell^*)^2}\right)}\middle/{\left(\sum_{\ell} \frac{m_{i\ell}(\alpha \pi_i^* + \pi_\ell^*)}{(\pi_i^* + \pi_\ell^*)^2}\right)}\right. & j = i.
\end{cases} 
\end{align}
A useful observation here is that 
\begin{equation}\label{eq:jjfstar}
\JJ_{F_\alpha}(\bm\pi^*) \equiv 
\left(\bar{\bm C}(\bm\pi^*) + \alpha \bar{\bm R}(\bm\pi^*)\right)^{-1}
\cdot
\left(\bar{\bm W}(\bm\pi^*) + \alpha \bar{\bm R}(\bm\pi^*)\right)
\equiv 
\bar{\bm K}_\alpha(\bm \pi^*). 
\end{equation}
Note that $\bm J_{F_\alpha}(\pih) \neq \bm K_\alpha(\pih)$ for matrices from~\eqref{eq:nalpha} and~\eqref{jac:Jfp} under a general outcome $\mathcal W$.

\section{Convergence analysis of synchronous updates}\label{sec:ana}

In this section, we focus on the local convergence rate of Newman's $\alpha$-scheme with synchronous updates. The study of synchronous updates is of interest in its own right and also provides the basis for understanding the local convergence of the asynchronous version discussed in~\Cref{sec:gs}. Our main finding is that the convergence rate is not necessarily monotonically increasing in $\alpha$, indicating that $\alpha=0$ may not always be the optimal choice. This contrasts with the observation in \cite{newman2023efficient}, where $\alpha=0$ is empirically observed to be optimal for the {\em asynchronous} version of the algorithm in a number of scenarios. These results highlight that synchronous vs. asynchronous updates can exhibit fundamentally different behavior with respect to the parameter $\alpha$, and the optimality of $\alpha=0$ is intrinsic to the asynchronous update of the algorithm in certain scenarios.

\subsection{Local convergence factors}\label{sec:linear1}
Recall the Newman's $\alpha$-scheme with synchronous updates in \eqref{eq:fpsf}:
\begin{equation*}
    \bm\pi^{(k)}=S\circ F_\alpha (\bm\pi^{(k-1)})
    \quad\text{for $k=1,2,\dots$}
\end{equation*} 
By standard local convergence theory for fixed-point iterations \citep{ortega2000iterative}, the sequence $\{\bm\pi^{(k)}\}$ converges locally to the fixed-point $\pih$, if the spectral radius of the Jacobian matrix $\bm J_{S\circ F_\alpha}(\pih)$ satisfies 
\begin{equation}\label{eq:sprd}
\rho\left( \bm J_{S\circ F_\alpha}(\pih) \right) < 1.
\end{equation}
In this case, the asymptotic convergence factor is governed by $\rho\left( \bm J_{S\circ F_\alpha}(\pih) \right)$.

Direct analysis of the Jacobian $\bm J_{S\circ F_\alpha}$ is cumbersome. Instead, we can work with the Jacobian of $F_\alpha$, denoted by $\bm J_{F_\alpha}$, by leveraging the spectral relationship between the two Jacobians as established below.
\begin{lemma}
The Jacobian matrix of the function $S\circ F_\alpha$ satisfies 
\begin{align} \label{normF}
\bm J_{S\circ F_\alpha }(\pih) =   \bm P_{\pih^\perp}\bm J_{F_\alpha}(\pih)
\quad\text{with\quad $\bm P_{\pih^\perp}\coloneqq \bm I - \frac{1}{n}\pih (\pih^{\circ -1})^\top$},
\end{align}
where $\bm x^{\circ -1}$ denotes entrywise reciprocal of a vector $\bm x$.
\end{lemma}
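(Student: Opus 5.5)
The plan is to apply the chain rule, $\bm J_{S\circ F_\alpha}(\pih) = \bm J_S(F_\alpha(\pih))\,\bm J_{F_\alpha}(\pih)$, and then evaluate $\bm J_S$ at the right point. Since $\pih$ is a fixed point of $F_\alpha$ with the unit-product normalization, we have $F_\alpha(\pih)=\pih$. It therefore suffices to compute the Jacobian of the normalization map $S$ at a point $\bm\pi$ with $\prod_i \pi_i = 1$, and to show that it equals $\bm P_{\pih^\perp}$ there.

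To compute $\bm J_S$, write $S(\bm\pi) = \bm\pi\, g(\bm\pi)$ with $g(\bm\pi) = \bigl(\prod_k \pi_k\bigr)^{-1/n}$. Differentiating,
\[
\partial_{\pi_j} g(\bm\pi) = -\tfrac{1}{n}\, g(\bm\pi)\, \pi_j^{-1},
\]
which follows from $\log g = -\tfrac1n\sum_k \log \pi_k$. The product rule then gives
\[
\bm J_S(\bm\pi) = g(\bm\pi)\Bigl(\bm I - \tfrac{1}{n}\,\bm\pi\,(\bm\pi^{\circ -1})^\top\Bigr).
\]
At $\bm\pi = \pih$, where $\prod_i \widehat\pi_i = 1$, we have $g(\pih)=1$, so $\bm J_S(\pih) = \bm P_{\pih^\perp}$. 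Substituting this into the chain rule yields \eqref{normF}.

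The only step that needs care is the identification $F_\alpha(\pih) = \pih$ exactly, rather than up to a scaling. This holds because the MLE $\pih$ in \eqref{mle0} satisfies the fixed-point equations \eqref{eq:nmalpha0} and also has unit product, so $g$ evaluated at $F_\alpha(\pih)$ equals $1$.

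It is also worth noting that $\bm P_{\pih^\perp}$ is a projection, since $(\pih^{\circ-1})^\top\pih = n$. Its kernel is spanned by $\pih$, which reflects the scale invariance and will be useful later. However, this fact is not needed for the identity itself, and I do not expect any real obstacle beyond this bookkeeping.
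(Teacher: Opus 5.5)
Your proposal is correct and follows essentially the same route as the paper: apply the chain rule, compute $\bm J_S$ at $F_\alpha(\bm\pi)$ as $\bigl(\prod_i f_{\alpha,i}(\bm\pi)\bigr)^{-1/n}\bigl(\bm I - \tfrac{1}{n}F_\alpha(\bm\pi)(F_\alpha(\bm\pi)^{\circ -1})^\top\bigr)$, and then use $F_\alpha(\pih)=\pih$ together with $\prod_i\widehat\pi_i=1$. Your explicit derivation of $\bm J_S$ via $\log g$ and the remark that $\bm P_{\pih^\perp}$ is a projection with kernel spanned by $\pih$ are correct additions.
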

\begin{proof}
By the chain rule of differentiation, $\bm J_{S\circ F_\alpha }(\bm\pi) = \bm J_{S}(F_\alpha(\bm\pi)) \bm J_{F_\alpha}(\bm\pi)$, 
\[
\bm J_{S\circ F_\alpha }(\bm\pi) =  
\frac{1}{\left(\prod_if_i(\bm\pi)\right)^{1/n}}
\left( \bm I - \frac{1}{n} F_\alpha (\bm\pi) (F_\alpha (\bm\pi)^{\circ -1})^\top \right)
\bm J_{F_\alpha}(\bm\pi).
\]
We then obtain~\eqref{normF}  by recalling $\pih = F_\alpha(\pih)$ and the normalization condition $\prod_i\widehat{\pi}_i = 1$ at the solution $\bm\pi=\pih$.
\end{proof}

Recall that the Jacobian matrix $\bm J_{F_\alpha}$ at the fixed point $\pih=F_\alpha (\pih)$, given by~\eqref{jac:Jfp},
is similar to a symmetric matrix and has  $n$ real eigenvalues. Moreover, these eigenvalues lie to the left of $1$, as established below.
\begin{lemma}\label{lm:normspec}
Assume that $\G([n], \W)$ is strongly connected. 
\begin{enumerate}[label=(\alph*)]
\item\label{i:lm:normspec} The eigenvalues of $\bm J_{F_\alpha}(\pih)$ are real and can be ordered as
$\lambda_n\leq \cdots\leq\lambda_2<\lambda_1\equiv 1$. 
The leading eigenvalue $\lambda_1 =1$ is simple, with corresponding right-eigenvector $\pih$.
\item\label{ii:lm:normspec} $\bm J_{S\circ F_\alpha }(\pih)$ 
has the same eigenvalues  as  $\bm J_{F_\alpha}(\pih)$, 
except that $\lambda_1=1$ is replaced by $0$; that is, its eigenvalues are 
$\lambda_n, \ldots, \lambda_2$, and  $0$.
\end{enumerate}
\end{lemma}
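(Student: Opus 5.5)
Part (a) follows from the structure of $\bm J_{F_\alpha}(\pih)$ in \eqref{jac:Jfp}. Write $\bm J_{F_\alpha}(\pih) = \bm I - \bm M_\alpha^{-1}\bm L$, where $\bm M_\alpha := \bm C(\pih)+\alpha\bm R(\pih)$ is diagonal and positive. Here $\bm L$ is the symmetric matrix
\[
\bm L_{ij} = -\frac{\widehat\pi_i\widehat\pi_j(w_{ij}+w_{ji})}{(\widehat\pi_i+\widehat\pi_j)^2}\ (i\neq j),\qquad \bm L_{ii}=\sum_{\ell\neq i}\frac{\widehat\pi_i\widehat\pi_\ell(w_{i\ell}+w_{\ell i})}{(\widehat\pi_i+\widehat\pi_\ell)^2},
\]
taken together with the right scaling by $\diag(\pih)^{-1}$. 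Precisely, one checks from \eqref{jac:Jfp} that
\[
\bm J_{F_\alpha}(\pih)=\bm I-\bm M_\alpha^{-1}\bm L\,\diag(\pih)^{-1}.
\]
To verify this, note that the off-diagonal entry is $\widehat\pi_i(w_{ij}+w_{ji})/(\widehat\pi_i+\widehat\pi_j)^2/[\bm M_\alpha]_{ii}$, which matches $-\bm L_{ij}/(\widehat\pi_j[\bm M_\alpha]_{ii})$ up to sign. The diagonal entry matches $\bm L_{ii}/(\widehat\pi_i[\bm M_\alpha]_{ii})$. The same conjugation used in \eqref{eq:jsym}, namely by $\bm D_\alpha(\pih)=\diag(\pih)^{1/2}\bm M_\alpha^{-1/2}$, turns this into
\[
\bm I-\bm M_\alpha^{-1/2}\diag(\pih)^{-1/2}\,\bm L\,\diag(\pih)^{-1/2}\bm M_\alpha^{-1/2},
\]
which is symmetric. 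So the eigenvalues are real, and by Sylvester's law of inertia they equal $1-\mu$, where $\mu$ ranges over the eigenvalues of a matrix congruent to $\bm L$.

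Next, $\bm L$ is a weighted graph Laplacian. Its edge weights $\widehat\pi_i\widehat\pi_j(w_{ij}+w_{ji})/(\widehat\pi_i+\widehat\pi_j)^2\ge 0$ are positive exactly on the edges of the undirected version of $\G([n],\W)$. Hence $\bm L$ is positive semidefinite, with $\bm L\bm 1=0$. Strong connectivity implies connectivity of the undirected graph, so the kernel of $\bm L$ is exactly $\mathrm{span}\{\bm 1\}$. Congruence preserves inertia, so the symmetrized matrix $\bm M_\alpha^{-1/2}\diag(\pih)^{-1/2}\bm L\diag(\pih)^{-1/2}\bm M_\alpha^{-1/2}$ has one zero eigenvalue and $n-1$ positive ones. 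Consequently $\bm J_{F_\alpha}(\pih)$ has a simple eigenvalue $1$ and all other eigenvalues strictly less than $1$. The corresponding right eigenvector satisfies $\diag(\pih)^{-1}\bm v\in\ker\bm L$, i.e.\ $\bm v=\pih$. Checking the index conventions in the factorization is the only delicate bookkeeping; I expect it to be routine once the entries of \eqref{jac:Jfp} are compared.

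For (b), let $\bm J=\bm J_{F_\alpha}(\pih)$ and $\bm P=\bm P_{\pih^\perp}$, and use \eqref{normF}. $\bm P$ is the oblique projector with $\bm P\pih=0$ and $\bm u^\top\bm P=\bm u^\top$ for $\bm u=\pih^{\circ-1}$ (since $\bm u^\top\pih=n$). The key step is to show that $\bm u^\top\bm J=\bm u^\top$, i.e.\ that $\bm u$ is a left eigenvector of $\bm J$ for eigenvalue $1$. This holds because the normalization $S$ is built so that the product is preserved to first order. It can also be seen directly: the left eigenvector for eigenvalue $1$ of $\bm I-\bm M_\alpha^{-1}\bm L\diag(\pih)^{-1}$ is $\bm y$ with $\bm y^\top\bm M_\alpha^{-1}\bm L=0$, i.e.\ $\bm y\propto\bm M_\alpha\bm 1$. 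That gives $\bm y\propto\bm M_\alpha\bm1$ rather than $\pih^{\circ-1}$, so this route needs care.

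I therefore argue instead via an invariant decomposition, which avoids needing $\bm u$ to be a left eigenvector. Choose a basis consisting of $\pih$ together with the other eigenvectors $\bm v_2,\dots,\bm v_n$ of $\bm J$; these exist since $\bm J$ is diagonalizable. Then
\[
\bm P\bm J\pih=\bm P\pih=0,\qquad \bm P\bm J\bm v_k=\lambda_k\bm P\bm v_k=\lambda_k(\bm v_k-c_k\pih).
\]
In the basis $\{\pih,\bm v_2-c_2\pih,\dots,\bm v_n-c_n\pih\}$, which is still a basis, $\bm P\bm J$ maps $\pih\mapsto0$. Moreover,
\[
\bm P\bm J(\bm v_k-c_k\pih)=\lambda_k(\bm v_k-c_k\pih).
\]
Hence the spectrum of $\bm P\bm J$ is $\{0,\lambda_2,\dots,\lambda_n\}$. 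This final step is elementary; the main obstacle of the whole proof is the factorization in part (a).
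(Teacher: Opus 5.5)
Your proof is correct. Part (b) is essentially the paper's argument: you complete $\pih$ to an eigenbasis of the diagonalizable $\bm J_{F_\alpha}(\pih)$ and use $\bm P_{\pih^\perp}\pih=\bm 0$. Part (a), however, takes a genuinely different route.

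The paper obtains $\bm J_{F_\alpha}(\pih)\pih=\pih$ structurally, from Euler's theorem applied to the degree-one homogeneous map $F_\alpha$. It then gets simplicity of $\lambda_1=1$ and $\lambda_k<1$ from the Perron--Frobenius theorem for irreducible quasi-positive matrices, with realness taken separately from the symmetrization \eqref{eq:jsym}. That argument uses only the sign pattern of the Jacobian and needs no entrywise bookkeeping.

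You instead verify, entry by entry from \eqref{jac:Jfp}, the factorization $\bm I-\bm J_{F_\alpha}(\pih)=\bm M_\alpha^{-1}\bm L\,\diag(\pih)^{-1}$, and this factorization is correct. From congruence and $\ker\bm L=\mathrm{span}\{\bm 1\}$ you then read off realness, the inertia, simplicity and the eigenvector $\pih$ all at once.

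Your route is longer but more informative:
\begin{enumerate}
\item $\bm L$ is exactly the negative Hessian of $l$ in the coordinates $u_i=\log\pi_i$.
\item The factorization gives directly the semidefiniteness $\bm I-\bm J^{\sym}_{F_\alpha}(\pih)\succeq\bm 0$ that is used later in \Cref{lm:normspec2}.
\item $\bm L$ and $\pih$ do not depend on $\alpha$, while your $\bm M_\alpha\diag(\pih)=(\bm C(\pih)+\alpha\bm R(\pih))\diag(\pih)$ is nondecreasing in $\alpha$ in the Loewner order. Courant--Fischer for the pencil $(\bm L,\bm M_\alpha\diag(\pih))$ then shows that every ordered eigenvalue of $\bm J_{F_\alpha}(\pih)$ is nondecreasing in $\alpha$. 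This holds for an arbitrary strongly connected outcome, not only for the population outcome treated in \Cref{thm:comp}.
\end{enumerate}

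One cleanup: delete the abandoned first attempt in (b) rather than leaving it as a hedge, because it contains two false claims. With $\bm u=\pih^{\circ-1}$ one has $\bm u^\top\bm P_{\pih^\perp}=\bm u^\top-\tfrac{1}{n}(\bm u^\top\pih)\bm u^\top=\bm 0^\top$, not $\bm u^\top$. Also, $\bm u^\top\bm J_{F_\alpha}(\pih)=\bm u^\top$ fails in general; as you found yourself, the left eigenvector for $\lambda=1$ is $\bm M_\alpha\bm 1$. Your final argument needs neither claim.
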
 

\begin{proof}
For \Cref{i:lm:normspec}, since 
$F_\alpha(\bm\pi)$ is homogeneous (i.e., $F_\alpha(k\bm\pi) \equiv k F_\alpha(\bm\pi)$ for all $k\neq 0$),
it follows from  Euler's theorem for homogeneous functions that  $F_\alpha(\bm\pi)\equiv \bm J_{F_\alpha}(\bm\pi)\bm\pi$.
Hence, the fixed point equation $\pih = F_\alpha(\pih)$ implies $\pih = \bm J_{F_\alpha}(\pih)\bm\pih$, i.e., 
$\pih$ is an eigenvector of $\bm J_{F_\alpha}$ with eigenvalue $\lambda=1$. 

It remains to show that $\lambda = 1$ is simple and all other eigenvalues of $\bm J_{F_\alpha}(\pih)$ are less than 1. 
Observe that $\bm J_{F_\alpha}(\pih)$ is quasi-positive, i.e., its off-diagonal entries are nonnegative,
and under the strong connectivity assumption it is also irreducible.
Therefore, by the Perron--Frobenius theorem for irreducible quasi-positive matrices~\citep{berman1994nonnegative},
there exists a unique (up to scaling) positive eigenvector 
corresponding to a simple and rightmost eigenvalue  of $\bm J_{F_\alpha}(\pih)$,
known as the Perron eigenvalue.
Since $\pih>0$ is an eigenvector associated with $\lambda=1$, 
it follows that $\lambda=1$ is the Perron eigenvalue,
and consequently,  $\lambda = 1$ is simple  and the remaining eigenvalues are less than $1$.

For \Cref{ii:lm:normspec}, recalling that  $\bm J_{F_\alpha}(\pih)$ is similar to a symmetric matrix, it admits real eigenvalues $\lambda_1\geq\cdots\geq\lambda_n$ with corresponding linearly independent (right) eigenvectors $\bm x_1, \dots, \bm x_n$ satisfying $\bm J_{F_\alpha}(\pih)\bm x_i = \lambda_i \bm x_i$. By \Cref{i:lm:normspec}, we have $\lambda_1 = 1$ and $\bm x_1=\bm\pih$. From the definition in~\eqref{normF}, it follows immediately that $\bm J_{S\circ F_\alpha}(\pih)\bm x_1=0$ and $\bm J_{S\circ F_\alpha}(\pih) \cdot\bm P_{\pih^\perp} \bm x_i =\lambda_i\cdot \bm P_{\pih^\perp} \bm x_i$, for $i=2,\dots, n$. Consequently,  $\bm J_{S\circ F_\alpha}(\pih)$ has 
eigenvalues $0, \lambda_2, \ldots, \lambda_n$, with corresponding linearly independent eigenvectors $\bm y_1=\pih$ and $\bm y_i= \bm P_{\pih^\perp} \bm x_i$, for $i=2, \dots, n$. This completes the proof.
\end{proof}

According to~\Cref{lm:normspec}, the local convergence factor 
of Newman's $\alpha$-scheme with synchronous updates can be formulated as follows.
\begin{theorem}\label{thm:rhosync}
Assume that $\G([n], \W)$ is strongly connected. 
The local convergence factor of Newman's $\alpha$-scheme with synchronous updates in~\eqref{eq:fpsf},
for convergence to the MLE $\pih$,
is given by 
\begin{equation}\label{eq:rhosync}
\rhosync (\alpha)\coloneqq\max \left\{|\lambda|\colon \lambda \in \Lambda(\bm J_{F_\alpha}(\pih)),\ \lambda < 1 \right\}
\equiv  \max \{\lambda_2(\bm J_{F_\alpha}(\pih) ), -\lambda_n(\bm J_{F_\alpha}(\pih))\},
\end{equation}
where $\lambda_i(\bm J_{F_\alpha}(\pih))$ denotes the $i$th largest eigenvalue
(in algebraic order) of the Jacobian~\eqref{jac:Jfp}.
\end{theorem}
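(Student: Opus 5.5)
The plan is to combine the standard local convergence theory for fixed-point iterations, already recalled in \eqref{eq:sprd}, with the spectral description in \Cref{lm:normspec}. By \citep{ortega2000iterative}, the asymptotic convergence factor of $\bm\pi^{(k)}=S\circ F_\alpha(\bm\pi^{(k-1)})$ near $\pih$ is $\rho(\bm J_{S\circ F_\alpha}(\pih))$. There is one subtlety. The iterates live on the manifold $\{\prod_i\pi_i=1\}$, while the fixed point $\pih$ of $F_\alpha$ is only unique up to scaling. Because $S$ is applied after every step, however, the composite map $S\circ F_\alpha$ has the isolated fixed point $\pih$ on that manifold. So the spectral radius of its Jacobian is the right quantity, and the first step is simply to identify $\rhosync(\alpha)$ with $\rho(\bm J_{S\circ F_\alpha}(\pih))$.

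Next I invoke \Cref{lm:normspec}\ref{ii:lm:normspec}, which gives the spectrum of $\bm J_{S\circ F_\alpha}(\pih)$ as $\{0,\lambda_2,\dots,\lambda_n\}$. Here $\lambda_2,\dots,\lambda_n$ are the eigenvalues of $\bm J_{F_\alpha}(\pih)$ other than the simple Perron eigenvalue $\lambda_1=1$. By part \ref{i:lm:normspec}, these are exactly the eigenvalues $\lambda<1$, since $1$ is simple and every other eigenvalue is strictly smaller. Hence
\[
\rho(\bm J_{S\circ F_\alpha}(\pih))=\max\{0,|\lambda_2|,\dots,|\lambda_n|\}=\max\{|\lambda|:\lambda\in\Lambda(\bm J_{F_\alpha}(\pih)),\ \lambda<1\}.
\]
The extra eigenvalue $0$ can be dropped from the maximum because it never exceeds the other absolute values. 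The degenerate case $n=1$ is trivial, and for $n\ge 2$ the set on the right is nonempty.

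The final step converts this maximum of absolute values into the form $\max\{\lambda_2,-\lambda_n\}$. All eigenvalues are real, because $\bm J_{F_\alpha}(\pih)$ is similar to the symmetric matrix \eqref{eq:jsym}. They are ordered as $\lambda_n\le\dots\le\lambda_2$, so $\max_{2\le i\le n}|\lambda_i|=\max\{|\lambda_2|,|\lambda_n|\}$. I then check the sign cases. If $\lambda_2\ge 0$, then $|\lambda_2|=\lambda_2$, and $|\lambda_n|$ is either $-\lambda_n$ (when $\lambda_n<0$) or at most $\lambda_2$ (when $\lambda_n\ge0$). If $\lambda_2<0$, then every $\lambda_i$ is negative and the maximum is $-\lambda_n\ge -\lambda_2>\lambda_2$. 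In every case $\max_i|\lambda_i|=\max\{\lambda_2,-\lambda_n\}$. Moreover, this quantity is nonnegative whenever $n\ge2$, which confirms that dropping $0$ was harmless.

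I expect no real obstacle, since the result is essentially a corollary of \Cref{lm:normspec}. The only point needing care is the first step: justifying that the convergence factor on the normalized manifold is governed by the spectral radius of $\bm J_{S\circ F_\alpha}$, with the eigenvalue $0$ along $\pih$ harmlessly accounting for the scaling direction. I would handle this by citing the Ostrowski-type theorem for the map $S\circ F_\alpha$ on $\R^n_+$ directly. That theorem applies because $\pih$ is a genuine fixed point of $S\circ F_\alpha$ and the map is differentiable there.
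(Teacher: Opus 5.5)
Your proposal is correct and takes the same route as the paper. The paper presents this theorem as an immediate consequence of the Ostrowski-type criterion \eqref{eq:sprd} combined with \Cref{lm:normspec}. Your case analysis reducing $\max_{i\ge 2}|\lambda_i|$ to $\max\{\lambda_2,-\lambda_n\}$ just spells out a step the paper leaves implicit.
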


We caution that the local convergence factor $\rhosync (\alpha)$ is not always bounded by $1$. When $\alpha$ is large (e.g., $\alpha\geq 1$), $\bm J_{F_\alpha}(\pih)$ is nonnegative, and all its nonleading eigenvalues have modulus less than $1$ by the Perron--Frobenius theorem, hence $\rhosync (\alpha)<1$. However, when $\alpha$ is small, the convergence factor $\rhosync(\alpha)$ may exceed $1$ for a general comparison outcome $\W$, leading to local divergence of the algorithm. For instance, this occurs if the outcome $\W$ severely deviates from the expected outcome of the BT model; see \Cref{sec:num1} for an example. Under such circumstances, finding the MLE~\eqref{mle0} itself becomes questionable in practice. However, this divergence will not occur when the outcome follows the assumed BT model asymptotically; see \Cref{sec:gap1} and \Cref{sec:app5}.


\subsection{Variation of convergence factors}\label{sec:gap1}
In this section, we investigate the variational behavior of the local convergence factor $\rhosync(\alpha)$ in the parameter $\alpha$.
We will focus on the expected comparison outcome from a BT model~\eqref{bt-model}:
\begin{equation}\label{eq:barw}
\bar w_{ij} = \frac{m_{ij}\pi_i^*}{\pi_i^*+\pi_j^*},
\end{equation}
and consider the corresponding convergence factor from~\eqref{eq:rhosync}, denoted as 
\begin{equation}
\brhosync(\alpha)
\coloneqq\max \left\{|\lambda|\colon \lambda \in \Lambda(\JJ_{F_\alpha}(\bm\pi^*)),\ \lambda < 1 \right\}
\equiv  \max \{\lambda_2(\JJ_{F_\alpha}(\bm\pi^*)), -\lambda_n(\JJ_{F_\alpha}(\bm\pi^*))\},
\label{eq:brhosync}
\end{equation}
where $\JJ_{F_\alpha}(\bm\pi^*)$, as given by~\eqref{Jbar},
denotes the Jacobian $\bm J_{F_\alpha}(\pih)$~\eqref{jac:Jfp} corresponding to the expected outcome $\W=\bar \W$ and the MLE $\pih=\bm \pi^*$. 
This population formulation removes sampling variability and yields a deterministic characterization of the convergence factor $\brhosync(\alpha)$, which serves as an asymptotically equivalent counterpart to $\rhosync(\alpha)$; see \Cref{sec:app5} for further discussions.

For the population formulation~\eqref{Jbar}, we show that the corresponding convergence factor $\brhosync(\alpha)$ is a quasi-convex function in $\alpha\in[0,\infty)$. The following monotonicity property of the eigenvalues of  $\JJ_{F_\alpha}(\bm\pi^*)$ is key to our development. 

\begin{lemma}\label{thm:comp}
Assume that $\G([n], \M)$ is connected and let $\kappa_L\coloneqq\max_{m_{\ell j}\neq 0}\pi_\ell^*/\pi_j^*$ denote the local dynamic range of $\bm\pi^*\in\R_+^n$. There exist real analytical functions $\psi_1(\alpha),\dots,\psi_n(\alpha)$ of $\alpha$ on $\R_+$ such that for every $\alpha\geq 0$, the set $\{\psi_i(\alpha)\}_{i}$ consists of the $n$ (unordered) eigenvalues of $\JJ_{F_\alpha}(\bm\pi^*)$ in~\eqref{Jbar}. Each $\psi_i(\alpha)$ is monotonically increasing, and its derivative is bounded by
\begin{equation}\label{eq:derbnd}
\frac{1-\psi_i(\alpha)}{\alpha+\kappa_L} \leq \psi_i'(\alpha) \leq \frac{{1-\psi_i(\alpha)}}{\alpha + \kappa_L^{-1}},
\quad\mathrm{for}\ \alpha \geq 0.
\end{equation}
Consequently, the function values are bounded by linear rational functions of $\alpha$ as 
\begin{equation}\label{eq:valbnd}
\frac{\psi_i(0)+\kappa_L^{-1}\alpha}{1+\kappa_L^{-1}\alpha}
\leq  \psi_i(\alpha) \leq
\frac{\psi_i(0)+ \kappa_L\alpha}{1+\kappa_L\alpha},
\quad\mathrm{for}\ \alpha \geq 0.
\end{equation}
In particular, $\lim_{\alpha\to\infty} \psi_i(\alpha)=1$.
\end{lemma}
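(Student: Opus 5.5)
The plan is to write $\JJ_{F_\alpha}(\bm\pi^*)$ as a symmetric-definite matrix pencil that is affine in $\alpha$, and then read the eigenvalue derivatives off the Rayleigh quotient. From \eqref{Jbar}, define the symmetric matrix $\bm A\in\R^{n\times n}$ with $A_{ij}=m_{ij}\pi_i^*\pi_j^*/(\pi_i^*+\pi_j^*)^2$ for $i\ne j$ and $A_{ii}=0$. Define also the diagonal matrices
\[
\bm B=\diag\Big(\sum_\ell \tfrac{m_{i\ell}\pi_i^*}{(\pi_i^*+\pi_\ell^*)^2}\Big),\qquad \bm E=\diag\Big(\sum_\ell \tfrac{m_{i\ell}\pi_\ell^*}{(\pi_i^*+\pi_\ell^*)^2}\Big).
\]
Row $i$ of \eqref{Jbar} then shows
\[
\JJ_{F_\alpha}(\bm\pi^*)=\bm\Pi\,(\bm E+\alpha\bm B)^{-1}(\bm A\bm\Pi^{-1}+\alpha\bm B\bm\Pi^{-1})\,\bm\Pi,\qquad \bm\Pi=\diag(\bm\pi^*).
\]
Hence $\JJ_{F_\alpha}$ is similar to $(\bm E+\alpha\bm B)^{-1}(\bm A+\alpha\bm B)$ after absorbing $\bm\Pi$; I will double-check the exact diagonal scaling, but any diagonal similarity will do. Rescaling $\bm A,\bm B,\bm E$ by $\bm\Pi^{-1/2}$ on both sides, this becomes the symmetric generalized eigenproblem
\[
(\bm A+\alpha\bm B)\bm x=\psi\,(\bm E+\alpha\bm B)\bm x,
\]
where $\bm E+\alpha\bm B$ is positive definite for all $\alpha\ge0$ because $\G([n],\M)$ is connected. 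By Rellich's analytic perturbation theorem for analytic Hermitian-definite pencils, there exist real-analytic eigenvalue branches $\psi_i(\alpha)$ with analytic $(\bm E+\alpha\bm B)$-orthonormal eigenvectors $\bm x_i(\alpha)$ on $\R_+$, including at crossings. This gives the first assertion.

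Next I differentiate. Writing $\bm M(\alpha)=\bm E+\alpha\bm B$ and normalizing $\bm x^\top\bm M\bm x=1$, the Hellmann--Feynman formula gives
\[
\psi_i'=\bm x_i^\top\bigl(\bm B-\psi_i\bm B\bigr)\bm x_i=(1-\psi_i)\,\bm x_i^\top\bm B\bm x_i .
\]
Since $\bm x_i^\top(\bm E+\alpha\bm B)\bm x_i=1$, we have $\bm x_i^\top\bm B\bm x_i=t/(s+\alpha t)$, where $t=\bm x^\top\bm B\bm x>0$ and $s=\bm x^\top\bm E\bm x>0$. The key comparison is entrywise between the diagonals: $E_{ii}/B_{ii}$ is a weighted average of $\pi_\ell^*/\pi_i^*$ over neighbours $\ell$, so $\kappa_L^{-1}\bm B\preceq\bm E\preceq\kappa_L\bm B$. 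This gives $s/t\in[\kappa_L^{-1},\kappa_L]$, and therefore
\[
\frac{1}{\alpha+\kappa_L}\le \bm x_i^\top\bm B\bm x_i\le\frac{1}{\alpha+\kappa_L^{-1}}.
\]
To obtain \eqref{eq:derbnd} I also need $\psi_i\le1$ along every branch. This follows from \Cref{lm:normspec}\ref{i:lm:normspec} applied to the population outcome, which is strongly connected whenever $\M$ is connected: all eigenvalues are at most $1$. With $1-\psi_i\ge0$, monotonicity and \eqref{eq:derbnd} follow.

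Finally I integrate. Set $g=1-\psi_i\ge0$. Then $g'/g$ lies between $-1/(\alpha+\kappa_L^{-1})$ and $-1/(\alpha+\kappa_L)$ where $g>0$. If $g$ vanishes at some point, uniqueness for the linear ODE $g'=-c(\alpha)g$ forces $g\equiv0$, and the bounds hold trivially. Grönwall-type integration from $0$ to $\alpha$ yields
\[
g(0)\,\frac{\kappa_L^{-1}}{\alpha+\kappa_L^{-1}}\le g(\alpha)\le g(0)\,\frac{\kappa_L}{\alpha+\kappa_L}.
\]
Rearranging gives \eqref{eq:valbnd}, and letting $\alpha\to\infty$ gives the limit $\psi_i(\alpha)\to1$. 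I expect the main obstacle to be the bookkeeping in the first step: identifying the precise diagonal similarity that turns \eqref{Jbar} into a symmetric-definite pencil affine in $\alpha$, together with justifying analytic branches through eigenvalue crossings via Rellich's theorem.
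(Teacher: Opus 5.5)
\paragraph{The first step is wrong.} Your overall strategy is sound, and once repaired it is essentially the paper's proof. But the first step as written fails, and the problem is not just choosing the right diagonal similarity. Row $i$ of \eqref{Jbar} gives $\JJ_{F_\alpha}(\bm\pi^*)=(\bm E+\alpha\bm B)^{-1}(\bm A\bm\Pi^{-1}+\alpha\bm B)$. This is \eqref{eq:jjfstar} with $\bar{\bm W}(\bm\pi^*)=\bm A\bm\Pi^{-1}$, $\bar{\bm R}(\bm\pi^*)=\bm B$, $\bar{\bm C}(\bm\pi^*)=\bm E$. The right-hand side of your displayed identity instead simplifies to $\bm\Pi(\bm E+\alpha\bm B)^{-1}(\bm A+\alpha\bm B)$.

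The pencil $(\bm A+\alpha\bm B,\ \bm E+\alpha\bm B)$ does not have the spectrum of $\JJ_{F_\alpha}(\bm\pi^*)$. The factor $\bm\Pi^{-1}$ multiplies $\bm A$ but not the diagonal terms, so it cannot be absorbed by a similarity. Concretely, take the path with edges $\{1,2\},\{2,3\}$, $m_{12}=m_{23}=1$, $\bm\pi^*=(2,1,1/2)^\top$ and $\alpha=0$. Then $\bm E^{-1}\bm A$ has eigenvalues $0,\pm\sqrt5/2$, whereas $\JJ_{F_0}(\bm\pi^*)$ has eigenvalues $0,\pm 1$. So your later appeal to \Cref{lm:normspec} to get $\psi_i\le 1$ would be applied to the wrong spectrum; with your pencil that bound is false.

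\paragraph{The fix.} Left-multiply $(\bm A\bm\Pi^{-1}+\alpha\bm B)\bm v=\psi(\bm E+\alpha\bm B)\bm v$ by $\bm\Pi^{-1}$. This gives the symmetric-definite pencil
\[
(\bm\Pi^{-1}\bm A\bm\Pi^{-1}+\alpha\bm\Pi^{-1}\bm B)\bm v=\psi\,(\bm\Pi^{-1}\bm E+\alpha\bm\Pi^{-1}\bm B)\bm v,
\qquad [\bm\Pi^{-1}\bm A\bm\Pi^{-1}]_{ij}=\frac{m_{ij}}{(\pi_i^*+\pi_j^*)^2}.
\]
Equivalently, it is your pencil with $\bm B,\bm E$ replaced by $\bm\Pi\bm B,\bm\Pi\bm E$, acting on $\bm x=\bm\Pi^{-1}\bm v$. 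It is still affine in $\alpha$, with the same $\alpha$-derivative $\bm\Pi^{-1}\bm B$ on both sides. The diagonal ratios are unchanged: $(\bm\Pi^{-1}\bm E)_{ii}/(\bm\Pi^{-1}\bm B)_{ii}=E_{ii}/B_{ii}\in[\kappa_L^{-1},\kappa_L]$. Hence your Rellich, Hellmann--Feynman, Rayleigh-quotient and integration steps go through verbatim.

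\paragraph{Comparison with the paper.} After the correction, your ratio $s/t$ is exactly the paper's $c_i$: the Rayleigh quotient of $\diag(\bm\pi^*)^{-1}\bar{\bm C}(\bm\pi^*)$ against $\diag(\bm\pi^*)^{-1}\bar{\bm R}(\bm\pi^*)$ on the right eigenvector. The paper reaches it through the $\alpha$-dependent symmetrization $\bar{\bm D}_\alpha$ and the left/right eigenvector derivative formula, using $\partial_\alpha\JJ_{F_\alpha}=(\bar{\bm C}+\alpha\bar{\bm R})^{-1}\bar{\bm R}(\bm I-\JJ_{F_\alpha})$. Your affine-pencil formulation makes the factor $1-\psi_i$ appear immediately. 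Your linear-ODE treatment of the branch $\psi\equiv 1$ is also slightly more careful than the paper's integration of $\psi_i'/(1-\psi_i)$.
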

\begin{proof}
We first recall that the matrix $\JJ_{F_\alpha}(\bm\pi^*)$ is {\em similar} to the symmetric matrix 
\[ \bm \JJ^\sym_{F_\alpha}(\bm\pi^*) \coloneqq \bar{\bm D}_\alpha(\bm\pi^*)^{-1}\JJ_{F_\alpha}(\bm\pi^*)\bar{\bm D}_\alpha(\bm\pi^*), \]
where 
$
\bar{\bm D}_\alpha(\bm\pi^*) =
\diag(\bm\pi^*)^{1/2}\cdot 
\left(\bar{\bm C}(\bm \pi^*)+\alpha \bar{\bm R}(\bm\pi^*)\right)^{-1/2}
$
is the population version of the diagonal matrix $\bm D_\alpha(\pih)$ from~\eqref{myD}, with $\mathcal W=\bar {\mathcal W}$ and $\pih=\bm\pi^*$.
By classical spectral analysis \citep{Kato:2013, Rellich:1969}, 
the eigenvalues and eigenvectors of the symmetric $\bm \JJ^\sym_{F_\alpha}(\bm\pi^*)$ consist of analytical functions in $\alpha\geq 0$.
Consequently,  due to the similarity $\JJ_{F_\alpha}(\bm\pi^*) = \bar{\bm D}_\alpha(\bm\pi^*)\bm \JJ^\sym_{F_\alpha}(\bm\pi^*)\bar{\bm D}_\alpha(\bm\pi^*)^{-1}$,
the eigenvalues of $\JJ_{F_\alpha}(\bm\pi^*)$ consist of $n$ analytical functions $\psi_1(\alpha),\dots, \psi_n(\alpha)$.
Moreover,  the left and right eigenvectors associated with $\psi_i(\alpha)$
can be chosen as analytical functions $\bm u_i(\alpha)$ and $\bm v_i(\alpha)$, related by 
\begin{align}
\bm v_i(\alpha) =  \bar{\bm D}_\alpha(\bm\pi^*)^2 \bm u_i(\alpha)
\equiv 
\diag(\bm\pi^*) \cdot 
\left(\bar{\bm C}(\bm \pi^*)+\alpha \bar{\bm R}(\bm\pi^*)\right)^{-1}
\cdot 
\bm u_i(\alpha).
\label{uvrelate}
\end{align} 
The derivative of the eigenvalue functions $\psi_i$ admits the expression 
\begin{equation}\label{eq:vpi0}
\psi_i'(\alpha) = \frac{\bm u^\top_i(\alpha) \left (\partial_\alpha\JJ_{F_\alpha}(\bm\pi^*)\right) \bm v_i(\alpha)}{\bm u^\top_i(\alpha)\bm v_i(\alpha)},
\end{equation}
where $\partial_\alpha\JJ_{F_\alpha}(\bm\pi^*)$ denotes the derivative of $\JJ_{F_\alpha}(\bm\pi^*)$ in $\alpha$. 
This expression is classical and follows from differentiating the eigenvalue equation
$\JJ_{F_\alpha}(\bm\pi^*)\bm v_i(\alpha) = \psi_i(\alpha) \bm v_i(\alpha)$ and then  left-multiplying by $\bm u^\top_i$.

Now, recall~\eqref{eq:jjfstar} that 
$\JJ_{F_\alpha}(\bm\pi^*)  \equiv (\bar{\bm C}(\bm\pi^*) + \alpha \bar {\bm R}(\bm\pi^*)^{-1} (\bar{\bm W}(\bm\pi^*) + \alpha \bar{\bm R}(\bm\pi^*))$.
By the chain rule, 
\[
\partial_\alpha \JJ_{F_\alpha}(\bm\pi^*) =  (\bar{\bm C}(\bm\pi^*) + \alpha \bar {\bm R}(\bm\pi^*))^{-1} \bar{\bm R}(\bm\pi^*)(\bm I-\JJ_{F_\alpha}(\bm\pi^*)). 
\]
Plugging into~\eqref{eq:vpi0} and using the eigenvalue equation $ \JJ_{F_\alpha}(\bm\pi^*) \bm v_i(\alpha) = \psi_i(\alpha) \bm v_i(\alpha)$, we obtain 
\begin{equation}\label{eq:vpi}
\psi_i' = \frac{\bm u(\alpha)^\top_i
(\bar{\bm C}(\bm\pi^*) + \alpha \bar {\bm R}(\bm\pi^*))^{-1} 
\bar {\bm R}(\bm\pi^*) \bm v_i(\alpha)}{\bm u_i(\alpha)^\top\bm v_i(\alpha)} \cdot(1-\psi_i)
=  \frac{1-\psi_i}{\alpha+c_i}, 
\end{equation}
where for the last equation we substitute $\bm u_i$ by~\eqref{uvrelate} and define
\[
c_i\coloneqq\frac{\bm v_i(\alpha)^\top \left(\diag(\bm \pi^*)^{-1}\bar{\bm C}(\bm \pi^*)\right) \bm v_i(\alpha)}{\bm v_i(\alpha)^\top 
\left(\diag(\bm \pi^*)^{-1}\bar{\bm R}(\bm \pi^*)\right)  \bm v_i(\alpha)}.  
\]
Since $c_i$ is the Rayleigh quotient of a pair of diagonal matrices $\bm A_1\coloneqq\diag(\bm \pi^*)^{-1}\bar{\bm C}(\bm \pi^*)$ and $\bm A_2\coloneqq\diag(\bm \pi^*)^{-1}\bar{\bm R}(\bm \pi^*)$, it follows from the classical eigenvalue minimization principle that $c_i$ is bounded from below and above by the smallest and largest eigenvalue of the diagonal matrix
$\bm A_2^{-1}\bm A_1=  \bar{\bm R}(\bm \pi^*)^{-1} \bar{\bm C}(\bm \pi^*)$;
namely,
\[
\min_{j}[\bar{\bm R}(\bm\pi^*)^{-1}\bar{\bm C}(\bm\pi^*)]_{jj}\leq c_i \leq \max_{j}[\bar{\bm R}(\bm\pi^*)^{-1}\bar{\bm C}(\bm\pi^*)]_{jj}.
\]
A quick verification also shows 
\[
\kappa_L^{-1}\equiv\min_{m_{\ell j}\neq 0}\frac{\pi_\ell^*}{\pi_j^*}\leq [\bar{\bm R}(\bm\pi^*)^{-1}\bar{\bm C}(\bm\pi^*)]_{jj}\leq\max_{m_{\ell j}\neq 0}\frac{\pi_\ell^*}{\pi_j^*}\equiv\kappa_L. 
\]
We hence prove~\eqref{eq:derbnd} upon noticing the eigenvalues $\psi_i\leq 1$. The eigenvalue bounds~\eqref{eq:valbnd} follow directly from~\eqref{eq:derbnd} by integrating $\int_{0}^\alpha \psi_i'(t)/(1-\psi_i(t))~\mathrm{d} t$.
\end{proof}

By the monotonicity of eigenvalues in~\Cref{thm:comp}, the convergence factor $\brhosync(\alpha)$ from~\eqref{eq:brhosync} is the maximum of an increasing function $\lambda_2(\cdot)$ and a decreasing function $-\lambda_n(\cdot)$. Hence, it is a quasi-convex function on  $\R_+$, and there exists a unique optimal parameter 
\begin{equation}\label{eq:optalpha}
\alpha_{\rm opt} = \argmin_{\alpha\in[0,\infty)} (\brhosync(\alpha)).
\end{equation}
Two possible scenarios may occur: 
(a) $\alpha_{\rm opt} = 0$, where $\brhosync(\alpha)$ is monotonically increasing on $\mathbb R_+$
and  (b)  $\alpha_{\rm opt} > 0$,  where $\brhosync(\alpha)$ is monotonically decreasing on $[0,\alpha_{\rm opt})$ and monotonically increasing on $[\alpha_{\rm opt},\infty)$. The following result characterizes both scenarios. 

\begin{theorem}\label{cor:optalpha}
Assume that $\G([n], \M)$ is connected and let $\alpha_{\rm opt}$ be defined as in~\eqref{eq:optalpha}. 
Let $s_0=\frac{1}{2}(\lambda_n(\JJ_{F_0}(\bm\pi^*)) +\lambda_2(\JJ_{F_0}(\bm\pi^*)))$. 
\begin{enumerate}[label=(\alph*)]
    \item\label{aesaa}
    If  $s_0\geq 0$, then  $\alpha_{\rm opt}=0$, and for all $\alpha \geq 0$, we can bound the convergence factor $\brhosync(\alpha)$ as 
    \begin{align}\label{hsjfhsa}
    \frac{ \brhosync(0)+\kappa_L^{-1}\alpha }{1 + \kappa_L^{-1}\alpha }
    \leq 
    \brhosync(\alpha) 
    \leq 
    \frac{ \brhosync(0)+\kappa_L\alpha}{1 + \kappa_L\alpha}.
    \end{align}
    \item \label{aesab}
    If  $s_0< 0$,  then $0<\alpha_{\rm opt}<\infty$.
    Moreover,  we can bound
    $|s_0| {\kappa_L}^{-1} \leq  \alpha_{\rm opt} \leq  |s_0| {\kappa_L}$,
    with 
   \begin{align}
    \frac{ \brhosync(0) - \kappa_L^2|s_0|}{1+\kappa_L^2|s_0|}
    \leq 
    \brhosync(\alpha_{\rm opt})
    \leq 
    \frac{ \brhosync(0) - \kappa_L^{-2}|s_0|}{1+\kappa_L^{-2}|s_0|}.
    \end{align}
 
 \end{enumerate}
\end{theorem}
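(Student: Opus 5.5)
The plan is to work with the ordered eigenvalue curves $\lambda_2(\alpha)\coloneqq\lambda_2(\JJ_{F_\alpha}(\bm\pi^*))$ and $\lambda_n(\alpha)\coloneqq\lambda_n(\JJ_{F_\alpha}(\bm\pi^*))$. The first step is to transfer \Cref{thm:comp} from the analytic branches $\psi_i$ to these ordered eigenvalues. For every $\alpha$, the $k$-th largest eigenvalue is the $k$-th largest of the values $\psi_1(\alpha),\dots,\psi_n(\alpha)$. Two facts then carry over.
\begin{itemize}
\item A $k$-th order statistic of continuous increasing functions is continuous and increasing.
\item The maps $x\mapsto (x+c\alpha)/(1+c\alpha)$ are increasing in $x$, so the bounds~\eqref{eq:valbnd} pass to order statistics.
\end{itemize}
Hence $\lambda_2$ and $\lambda_n$ are continuous, increasing, and obey~\eqref{eq:valbnd}. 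For the optimal-parameter estimate I also need a derivative version. For all but finitely many $\alpha$ (away from branch crossings), $\lambda_2$ and $\lambda_n$ coincide locally with single branches, so~\eqref{eq:derbnd} holds for them almost everywhere. Since these are continuous piecewise-analytic functions, this suffices for integration.

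\emph{Case (a).} Suppose $s_0\ge 0$, i.e.\ $\lambda_2(0)\ge -\lambda_n(0)$. Because $\lambda_2$ is increasing and $-\lambda_n$ is decreasing, $\lambda_2(\alpha)\ge -\lambda_n(\alpha)$ for all $\alpha\ge0$. Thus $\brhosync(\alpha)=\lambda_2(\alpha)$, which is increasing, so $\alpha_{\rm opt}=0$. The bound~\eqref{hsjfhsa} is then~\eqref{eq:valbnd} applied to $\lambda_2$, using $\brhosync(0)=\lambda_2(0)$.

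\emph{Case (b).} Suppose $s_0<0$. Define $s(\alpha)=\tfrac12(\lambda_2(\alpha)+\lambda_n(\alpha))$, which is continuous and increasing. Averaging the two derivative inequalities gives, almost everywhere,
\begin{equation*}
\frac{1-s}{\alpha+\kappa_L}\le s'\le \frac{1-s}{\alpha+\kappa_L^{-1}}.
\end{equation*}
Integrating $s'/(1-s)$ exactly as in \Cref{thm:comp} yields
\begin{equation*}
\frac{s_0+\kappa_L^{-1}\alpha}{1+\kappa_L^{-1}\alpha}\le s(\alpha)\le \frac{s_0+\kappa_L\alpha}{1+\kappa_L\alpha}.
\end{equation*}
Since $s(0)<0$ and $s(\alpha)\to1$, there is a crossing point. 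Where $s<0$ we have $\brhosync=-\lambda_n$, which is decreasing; where $s\ge 0$ we have $\brhosync=\lambda_2$, which is increasing. Hence $\alpha_{\rm opt}$ is the first zero of $s$. The upper bound on $s$ vanishes at $|s_0|\kappa_L^{-1}$, so $s<0$ before that point and $\alpha_{\rm opt}\ge |s_0|\kappa_L^{-1}$. The lower bound on $s$ vanishes at $|s_0|\kappa_L$, so $\alpha_{\rm opt}\le|s_0|\kappa_L$. In particular $0<\alpha_{\rm opt}<\infty$.

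For the optimal value, use $\brhosync(\alpha_{\rm opt})=-\lambda_n(\alpha_{\rm opt})$ and $\brhosync(0)=-\lambda_n(0)=:\rho_0$. Applying~\eqref{eq:valbnd} to $\lambda_n$ gives
\begin{equation*}
\frac{\rho_0-\kappa_L\alpha}{1+\kappa_L\alpha}\le -\lambda_n(\alpha)\le\frac{\rho_0-\kappa_L^{-1}\alpha}{1+\kappa_L^{-1}\alpha}.
\end{equation*}
Both sides are decreasing in $\alpha$, because $\frac{d}{d\alpha}\frac{\rho_0-c\alpha}{1+c\alpha}=-c(1+\rho_0)/(1+c\alpha)^2<0$ and $\rho_0>-1$. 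Substitute $\alpha=\alpha_{\rm opt}$, then use $\alpha_{\rm opt}\ge|s_0|\kappa_L^{-1}$ in the upper bound and $\alpha_{\rm opt}\le|s_0|\kappa_L$ in the lower bound. This produces exactly the factors $\kappa_L^{\mp2}|s_0|$ in the claimed estimate.

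The main obstacle is the first step: justifying that the ordered eigenvalues inherit the derivative bounds, since the analytic branches may cross. I would handle this by the almost-everywhere argument above together with absolute continuity. An alternative is to integrate the bounds directly along each branch $\psi_i$ and pass to order statistics only at the level of the integrated bounds. Either way, $s$ should be treated as an average of two branches that change only at isolated crossing points.
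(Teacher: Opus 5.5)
Your argument is correct and follows the paper's proof. It uses the same ingredients: $\lambda_2$ and $\lambda_n$ as order statistics of the increasing branches $\psi_i$, the sign indicator $s(\alpha)=\tfrac12(\lambda_2+\lambda_n)$ whose unique zero is $\alpha_{\rm opt}$, and the linear-rational bounds~\eqref{eq:valbnd}. For the optimal value, you apply~\eqref{eq:valbnd} at $\alpha_{\rm opt}$ and then use that the bounding functions decrease in $\alpha$; the paper instead first moves $\lambda_n$ to the endpoints $|s_0|\kappa_L^{\pm1}$ by monotonicity, and the two are equivalent.

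The derivative step you flag as the main obstacle is unnecessary. Because $x\mapsto (x+c\alpha)/(1+c\alpha)$ is affine in $x$ with coefficients independent of $x$, you can simply average the value bounds you already have for $\lambda_2$ and $\lambda_n$. This gives the two-sided bound on $s(\alpha)$ directly, with no almost-everywhere derivative or integration argument, and it is exactly how the paper obtains it.
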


\begin{proof}
For convenience, we assume $\psi_i(\alpha)$ corresponds to the eigenvalue function started from $\lambda_i(\JJ_{F_0})$. 
By \eqref{eq:valbnd} and~\Cref{lm:normspec}, $\psi_1(\alpha)\equiv \lambda_1(\JJ_{F_\alpha})\equiv 1$ remains the simple leading eigenvalue 
for all $\alpha\geq 0$,  and  $\psi_i(\alpha)<1$ for $i=2,\dots,n$ are all monotonically increasing in $\alpha$.
Clearly,
\[
\lambda_2( \JJ_{F_\alpha}(\bm\pi^*))\equiv  \max_{i=2:n} \psi_i(\alpha) 
\quad \text{and}\quad
\lambda_n( \JJ_{F_\alpha}(\bm\pi^*)) \equiv \min_{i=2:n} \psi_i(\alpha),
\]
and both are monotonically increasing in $\alpha$.
We also introduce the indicating function
\begin{equation}\label{eq:salpha}
s(\alpha) := 
\frac{1}{2}\left(\lambda_2( \JJ_{F_\alpha}(\bm\pi^*)) + \lambda_n( \JJ_{F_\alpha}(\bm\pi^*))\right)
\equiv 
\frac{1}{2}\left( \max_{i=2:n} \psi_i(\alpha)  +  \min_{i=2:n} \psi_i(\alpha) \right),
\end{equation}
which is monotonically increasing in $\alpha\geq 0$  with an initial $s(0)=s_0$. Observe that $s(\alpha)\geq 0$ indicates $\brhosync(\alpha) \equiv  \lambda_2( \JJ_{F_\alpha}(\bm\pi^*))$, and $s(\alpha)\leq 0$ indicates  $\brhosync(\alpha) \equiv  -\lambda_n( \JJ_{F_\alpha}(\bm\pi^*))$.

For~\Cref{aesaa},  the initial condition $s(0)=s_0\geq 0$ implies the monotonic function $s(\alpha) \geq 0$
for all $\alpha\geq 0$.
It indicates that  $\brhosync(\alpha) = \lambda_2(\JJ_{F_\alpha}(\bm\pi^*))$,
where we can bound the second eigenvalue by 
\begin{align*}
\frac{\psi_2(0)+ \kappa^{-1}_L\alpha}{1+\kappa^{-1}_L\alpha}\stackrel{\eqref{eq:valbnd}}{\leq}\psi_2(\alpha)\leq
\lambda_2(\JJ_{F_\alpha}(\bm\pi^*)) \equiv
\max_{i\geq 2}\psi_i(\alpha)\stackrel{\eqref{eq:valbnd}}{\leq} \max_{i\geq 2}\frac{\psi_i(0)+ \kappa_L\alpha}{1+\kappa_L\alpha}\leq \frac{\psi_2(0)+ \kappa_L\alpha}{1+\kappa_L\alpha}.
\end{align*} 
Observing that  $ \psi_2(0)=\lambda_2( \JJ_{F_0}(\bm\pi^*))\equiv \brhosync(0)$
yields the result in \Cref{aesaa}.

For~\Cref{aesab}, 
by the  initial condition $s(0)=s_0<0$ and the fact $s(\alpha)\to 1$ as $\alpha\to \infty$,
the monotonic indicating function  $s(\alpha)$ must change sign over $[0,\infty)$.
Therefore, by the intermediate value theorem, there exists a unique $\alpha_{\rm opt}\in(0,\infty)$ satisfying $s(\alpha_{\rm opt}) = 0$:
For $\alpha<\alpha_{\rm opt}$, we have $s(\alpha)\leq 0$, indicating 
$\brhosync(\alpha_{\rm opt}) = -\lambda_n( \JJ_{F_\alpha}(\bm\pi^*))$ is monotonically decreasing in $\alpha$;
For $\alpha\geq \alpha_{\rm opt}$, we have $s(\alpha)\geq 0$, indicating 
$\brhosync(\alpha_{\rm opt}) = \lambda_2( \JJ_{F_\alpha}(\bm\pi^*))$ is monotonically increasing in $\alpha$.
Consequently,  $\brhosync(\alpha)$ achieves its minimum at $\alpha_{\rm opt}$.

To bound $\alpha_{\rm opt}$,  applying \eqref{eq:valbnd} to each $\psi_i$ in~\eqref{eq:salpha} yields immediately
\begin{align*}
\frac{s_0 + \kappa_L^{-1}\alpha}{1+\kappa_L^{-1}\alpha} 
\leq 
s(\alpha) 
\leq 
\frac{s_0 + \kappa_L\alpha}{1+\kappa_L\alpha}.
\end{align*}
Then  $s(\alpha_{\rm opt}) = 0$ leads to the bounds
$|s_0| {\kappa_L}^{-1} \leq  \alpha_{\rm opt} \leq  |s_0| {\kappa_L}$.
Combining these bounds with the monotonicity of $\psi_i$, and
$\brhosync(\alpha) = -\lambda_n( \JJ_{F_\alpha}(\bm\pi^*)) = -\min_{i\geq 2} \psi_i(\alpha)$ for 
$\alpha=0, \alpha_{\rm opt}$,
we obtain 
\begin{align*}
\brhosync(\alpha_{\rm opt}) 
& =  -\min_{i\geq 2} \psi_i(\alpha_{\rm opt})
\geq -\min_{i\geq 2} \psi_i(|s_0|\kappa_L) 
\stackrel{\eqref{eq:valbnd}}{\geq} -\min_{i\geq 2}\frac{\psi_i(0) + \kappa_L^2|s_0|}{1 + \kappa_L^2|s_0|} 
= \frac{\brhosync(0) - \kappa_L^2|s_0|}{1+\kappa_L^2|s_0|};\\
\brhosync(\alpha_{\rm opt}) 
& =  -\min_{i\geq 2} \psi_i(\alpha_{\rm opt})
\leq -\min_{i\geq 2} \psi_i(|s_0|\kappa_L^{-1}) 
\stackrel{\eqref{eq:valbnd}}{\leq} -\min_{i\geq 2}\frac{\psi_i(0) + \kappa_L^{-2}|s_0|}{1 + \kappa_L^{-2}|s_0|} 
=
\frac{\brhosync(0) - \kappa_L^{-2}|s_0|}{1+\kappa_L^{-2}|s_0|}.
\end{align*}
\end{proof}

\begin{remark}\label{rmk:ssls}
The optimal $\alpha_{\rm opt}$ is not necessarily attained at $\alpha=0$ if $s_0< 0$. This occurs, for instance, when the comparison graph is bipartite. In this case, $\lambda_n(\JJ_{F_0})=-1$ and thus $s_0<0$ and $\brhosync(0)=1$. Thus, Newman's $0$-scheme with synchronous updates may not converge. For $\alpha>0$, \Cref{cor:optalpha} suggests that $\brhosync(\alpha)<1$ and decreases monotonically until reaching its minimum, after which it begins to increase. As a result, the convergence rate of Newman's $\alpha$-scheme with synchronous updates varies non-monotonically as $\alpha$ increases. These interpretations rely on the assumption that $\brhosync(\alpha)$ well approximates $\rhosync(\alpha)$, which is established in \Cref{sec:app5}. 
\end{remark}

\begin{remark}\label{rem:dynamicrange}
The results in~\Cref{thm:comp} and~\Cref{cor:optalpha} show that the local dynamic range $\kappa_L$ plays a meaningful role in bounding the convergence rate. In those bounds, $\kappa_L$ may also be replaced by the global dynamic range 
$\kappa\coloneqq\max_{\ell j}\pi_\ell^*/\pi_j^*$, which is independent of the structure of the comparison 
graph $\M$ (i.e., nonzero patterns of $\{m_{ij}\}$). In practice, however, $\kappa_L$ can be substantially smaller than $\kappa$,
especially when objects are compared only with those of similar strengths, leading to significantly tighter bounds.
\end{remark}

\begin{remark}\label{rem:inter}
When $s_0\geq 0$, we can rewrite the estimates in \eqref{hsjfhsa} in \Cref{cor:optalpha} using the spectral gap $1-\brhosync(\alpha)$ to obtain a multiplicative bound: 
\begin{align*}
(1+\kappa^{-1}_L\alpha)(1-\brhosync(\alpha))\leq 1-\brhosync(0)\leq (1+\kappa_L\alpha)(1-\brhosync(\alpha)). 
\end{align*}
This bound is tight when $\kappa_L$ is close to 1. In this regime, Newman's 0-scheme with synchronous updates is approximately twice as efficient as Zermelo's algorithm ($\alpha = 1$). More generally, the potential acceleration increases with $\kappa_L$.        
\end{remark}


\section{Convergence analysis of asynchronous updates}\label{sec:gs}

In this section, we study Newman's $\alpha$-scheme using asynchronous updates, where a single component $\pi_i$ is updated at a time while all other components $\pi_j$ with $j \neq i$ remain fixed. We assume a sequential update order for $i = 1, \dots, n$, but the analysis extends directly to any fixed, but otherwise arbitrary, ordering of the component updates. As before, we focus on the local convergence analysis via fixed-point analysis. We begin with a closed-form expression of the local convergence factor and show that the asynchronous scheme is always locally convergent for $\alpha \geq 0$, provided that $\G([n], \W)$ is strongly connected. This stands in contrast to the results in the synchronous setting from~\Cref{sec:ana} and is consistent with the global convergence result from \cite{newman2023efficient}, which was obtained via a different optimization-based approach. Moreover, this explicit characterization also allows us to look more closely at the role of the parameter $\alpha$ in convergence. In particular, for bipartite and consistently ordered comparison graphs, we show that the local convergence rate increases monotonically with $\alpha$, and for general graphs, it leads to a quantitative characterization of the possible range of convergence factors. 

\subsection{Local convergence factors}

Recall Newman's $\alpha$-scheme with asynchronous updates in \eqref{eq:fpasyn}:
\begin{equation*}
\bm\pi^{(k)} = S\circ A_{\alpha}(\bm\pi^{(k-1)}).
\end{equation*}
Since $\pih$ is a fixed point of $S\circ A_\alpha$, the above scheme is locally convergent to $\pih$ if the spectral radius  of the Jacobian matrix satisfies 
\begin{equation}
\rho(\bm J_{S\circ A_\alpha}(\pih)) < 1,
\end{equation}
in which case the local convergence rate is given by $\rho(\bm J_{S\circ A_\alpha}(\pih))$. Similar to the discussion in \Cref{sec:linear1}, the Jacobian matrix $\bm J_{S\circ A_\alpha}$ can be studied via the unnormalized Jacobian $\bm J_{A_\alpha}$, which in turn is related to the original Jacobian matrix $\bm J_{F_\alpha}$ from the synchronized setting~\eqref{eq:fpsf}.
\begin{lemma}\label{lm:4.1}
The Jacobian matrix of the function $S\circ A_\alpha$ satisfies 
\begin{align} \label{normA}
\bm J_{S\circ A_\alpha}(\pih) = \bm P_{\pih^\perp}\bm J_{A_\alpha}(\pih)\
\quad\text{with\quad $\bm P_{\pih^\perp}\coloneqq \bm I - \frac{1}{n}\pih (\pih^{\circ -1})^\top$},
\end{align}
where $\bm x^{\circ -1}$ denotes entrywise reciprocal of a vector $\bm x$.
Moreover, 
\begin{align}
\bm J_{A_\alpha}(\pih) = (\bm I - \bm J_{F_\alpha, \l}(\pih))^{-1}(\bm J_{F_\alpha, \d}(\pih) + \bm J_{F_\alpha, \u}(\pih)), \label{crucial}
\end{align}
where $\bm J_{F_\alpha, \l}(\pih)$, $\bm J_{F_\alpha, \d}(\pih)$, and $\bm J_{F_\alpha, \u}(\pih)$ are the lower triangular, diagonal, and upper triangular parts of the Jacobian $\bm J_{F_\alpha}(\pih)$ 
defined in \eqref{jac:Jfp}. 
\end{lemma}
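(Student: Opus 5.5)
The first identity \eqref{normA} follows exactly as in the synchronous case. By the chain rule, $\bm J_{S\circ A_\alpha}(\bm\pi)=\bm J_S(A_\alpha(\bm\pi))\,\bm J_{A_\alpha}(\bm\pi)$. The Jacobian of $S$ at a point $\bm x$ is
\[
\bigl(\textstyle\prod_i x_i\bigr)^{-1/n}\left(\bm I-\tfrac1n \bm x(\bm x^{\circ-1})^\top\right).
\]
Each $F_\alpha^{(i)}$ fixes $\pih$, because $f_{\alpha,i}(\pih)=\widehat\pi_i$, so $A_\alpha(\pih)=\pih$. Evaluating at $\bm x=\pih$ and using $\prod_i\widehat\pi_i=1$ gives \eqref{normA}.

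For \eqref{crucial}, I would set $\bm\pi^{[0]}=\bm\pi$ and $\bm\pi^{[i]}=F^{(i)}_\alpha(\bm\pi^{[i-1]})$, so that $A_\alpha(\bm\pi)=\bm\pi^{[n]}$. Component $i$ is changed only at step $i$ and is frozen afterward. Hence
\[
[A_\alpha(\bm\pi)]_i=f_{\alpha,i}\bigl(\pi^{[i-1]}_1,\dots,\pi^{[i-1]}_{i-1},\pi_i,\dots,\pi_n\bigr),
\qquad \pi^{[i-1]}_j=[A_\alpha(\bm\pi)]_j\ \ (j<i).
\]
Writing $\bm y=A_\alpha(\bm\pi)$, this is the Gauss--Seidel-type implicit relation
\[
y_i=f_{\alpha,i}(y_1,\dots,y_{i-1},\pi_i,\dots,\pi_n),\qquad i=1,\dots,n.
\]

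I would differentiate this relation with respect to $\bm\pi$ and evaluate at $\bm\pi=\pih$. At that point every intermediate argument $(y_1,\dots,y_{i-1},\pi_i,\dots,\pi_n)$ equals $\pih$, so all partial derivatives of $f_{\alpha,i}$ are entries of $\bm J_{F_\alpha}(\pih)$. Row $i$ of $\bm J_{A_\alpha}(\pih)$ is then
\[
\sum_{j<i}[\bm J_{F_\alpha}]_{ij}\,[\bm J_{A_\alpha}]_{j,:}+\sum_{j\ge i}[\bm J_{F_\alpha}]_{ij}\,\bm e_j^\top .
\]
In matrix form this reads
\[
\bm J_{A_\alpha}=\bm J_{F_\alpha,\l}\bm J_{A_\alpha}+(\bm J_{F_\alpha,\d}+\bm J_{F_\alpha,\u}).
\]
Since $\bm I-\bm J_{F_\alpha,\l}(\pih)$ is unit lower triangular, it is invertible, and solving the matrix equation yields \eqref{crucial}.

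An alternative check is to compute $\bm J_{F^{(i)}_\alpha}(\pih)=\bm I+\bm e_i(\bm J_{F_\alpha,i:}-\bm e_i^\top)$ and expand the product $\bm J_{F^{(n)}_\alpha}\cdots\bm J_{F^{(1)}_\alpha}$. This expansion is messier than the implicit derivation, so I would not use it as the main route.

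The main point needing care is justifying that all intermediate points coincide with $\pih$. This is what allows every factor to use the same Jacobian $\bm J_{F_\alpha}(\pih)$, and it holds because $\pih$ is a common fixed point of every $F^{(i)}_\alpha$. Once that is in place, the rest is bookkeeping of the lower, diagonal and upper parts.
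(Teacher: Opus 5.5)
Your proposal is correct and is essentially the paper's proof. The paper also gets \eqref{normA} from the chain rule with $\bm J_S(\pih)=\bm P_{\pih^\perp}$, and gets \eqref{crucial} from the chain rule through the common fixed point $\pih$ of all the $F^{(i)}_\alpha$, writing $\bm J_{A_\alpha}(\pih)=\bm J_{F^{(n)}_\alpha}(\pih)\cdots\bm J_{F^{(1)}_\alpha}(\pih)$ with factors $\bm I+\bm e_i(\mathrm{row}_i(\bm J_{F_\alpha}(\pih))-\bm e_i^\top)$ and then only saying ``rearranging terms''; your row recursion $\bm J_{A_\alpha}=\bm J_{F_\alpha,\mathsf L}\bm J_{A_\alpha}+\bm J_{F_\alpha,\mathsf D}+\bm J_{F_\alpha,\mathsf U}$ is exactly that omitted rearrangement (row $i$ of the product is $\mathrm{row}_i(\bm J_{F_\alpha})$ applied to a partial product whose first $i-1$ rows are already final and whose remaining rows are still $\bm e_j^\top$), so you have simply made the bookkeeping explicit.
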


\begin{proof}
\cref{normA} follows directly from the same argument used to derive~\eqref{normF}.  
\cref{crucial} follows from a standard analysis for the nonlinear Gauss--Seidel method~\cite[E 10.3-7, pp333]{ortega2000iterative}. For completeness, we sketch the proof here. Recall that $\pih = F_{\alpha, i}(\pih)$ for all $i\in [n]$. 
We can therefore apply the chain rule of differentiation
to the composition $A_\alpha=F_{\alpha, n}\circ\dots\circ   F_{\alpha, 1}$ to obtain
\begin{align}
\bm J_{A_\alpha}(\pih) = \bm J_{F_{\alpha, n}}(\pih)\cdots \bm J_{F_{\alpha, 1}}(\pih),\label{J_A_new}
\end{align}
where, by definition, 
\[\bm J_{F_{\alpha, i}}(\pih) = \bm I + \bm e_i (\text{row}_i(\bm J_{F_\alpha}(\pih)) - \bm e_i^\top),\]
with $\bm e_i$ denoting the $i$th column of the identity matrix $\bm I_n$ 
and $\text{row}_i(\cdot)$ extracting $i$-th row of a matrix.
Substituting this back to \eqref{J_A_new} and rearranging terms yields~\eqref{crucial}. 
\end{proof}

\begin{remark}
\cref{crucial} indicates that $\bm J_{A_\alpha}(\pih) \coloneqq \bm M^{-1}\bm N$ arises from a splitting of the matrix 
$\bm I - \bm J_{F_\alpha}(\pih) = \bm M - \bm N$ with $\bm M=\bm I - \bm J_{F_\alpha, \l}(\pih)$ and $\bm N=\bm J_{F_\alpha, \d}(\pih) + \bm J_{F_\alpha, \u}(\pih)$. This resembles the classical Gauss--Seidel splitting, where $\bm M= \bm I - \bm J_{F_\alpha, \l}(\pih) -  \bm J_{F_\alpha, \d}(\pih)$ and $\bm N=\bm J_{F_\alpha, \u}(\pih)$. The two splittings are generally different but may coincide when $\bm J_{F_\alpha, \d}(\pih) \equiv \bm 0$ (e.g., when $\alpha = 0$). 
\end{remark}

In analogy with~\Cref{lm:normspec}, the eigenvalues of $\bm J_{S\circ A_\alpha}(\pih)$ can then be directly related to those of $\bm J_{A_\alpha}(\pih)$, as shown below.

\begin{lemma}\label{lm:normspec2}
Assume that $\G([n], \W)$ is strongly connected. Then
\begin{enumerate}[label=(\alph*)]
\item\label{i:lm:normspec2} $\bm J_{A_\alpha}(\pih)$ has a simple eigenvalue $\lambda =1$ with 
corresponding right-eigenvector $\pih$; all other eigenvalues of $\bm J_{A_\alpha}(\pih)$ have modulus strictly less than $1$. 

\item\label{ii:lm:normspec2} $\bm J_{S\circ A_\alpha}(\pih)$ has the same eigenvalues  as $\bm J_{A_\alpha}(\pih)$, 
except that $\lambda=1$ is replaced by $0$.
\end{enumerate}
\end{lemma}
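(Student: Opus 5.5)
The plan has two stages: first show that $\pih$ is a fixed eigenvector of $\bm J_{A_\alpha}(\pih)$ with eigenvalue $1$, and then control all the other eigenvalues by reading the splitting in \eqref{crucial} as a Gauss--Seidel-type iteration for a matrix that is similar to a symmetric positive semidefinite one. Part~\ref{ii:lm:normspec2} will then follow by the argument already used for \Cref{lm:normspec}.

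\emph{Eigenvalue $1$ at $\pih$.} Write $\bm J\coloneqq\bm J_{F_\alpha}(\pih)$. By \Cref{lm:normspec} we have $\bm J\pih=\pih$, so $(\bm I-\bm J_{\l})\pih=(\bm J_{\d}+\bm J_{\u})\pih$. Hence $\bm J_{A_\alpha}(\pih)\pih=\pih$ by \eqref{crucial}.

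\emph{Modulus of the remaining eigenvalues.} Symmetrize with $\bm D\coloneqq\bm D_\alpha(\pih)$. The triangular and diagonal parts commute with diagonal similarity, so $\bm D^{-1}\bm J_{A_\alpha}(\pih)\bm D=(\bm I-\bm S_{\l})^{-1}(\bm S_{\d}+\bm S_{\u})$, where $\bm S\coloneqq\bm J^{\sym}_{F_\alpha}(\pih)$ is symmetric and $\bm S_{\u}=\bm S_{\l}^\top$. Set $\bm B\coloneqq\bm I-\bm S$. By \Cref{lm:normspec}, $\bm B$ is positive semidefinite with one-dimensional kernel spanned by $\bm z\coloneqq\bm D^{-1}\pih$. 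Write $\bm B=\bm L+\bm\Delta+\bm L^\top$ with $\bm L=-\bm S_{\l}$ and $\bm\Delta=\bm I-\bm S_{\d}$. The iteration matrix is then $\bm G=\bm M^{-1}\bm N$ with $\bm M=\bm I+\bm L$ and $\bm N=\bm M-\bm B$. The standard Ostrowski--Reich computation applies: if $\bm G\bm x=\lambda\bm x$ with $\bm x\neq 0$, then $\bm B\bm x=(1-\lambda)\bm M\bm x$. If $\lambda\neq 1$, take inner products with $\bm x$ and use $\bm M+\bm M^{\rm H}-\bm B=2\bm I-\bm\Delta=\bm I+\bm S_{\d}$. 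This yields
\[
(1-|\lambda|^2)\,\frac{\bm x^{\rm H}\bm B\bm x}{|1-\lambda|^2}=\bm y^{\rm H}(\bm I+\bm S_{\d})\bm y,\qquad \bm y\coloneqq(1-\lambda)\bm M\bm x/(1-\lambda)^{\ldots},
\]
i.e.\ the usual identity $\bm x^{\rm H}\bm B\bm x\,(1-|\lambda|^2)=|1-\lambda|^2\,\bm x^{\rm H}(\bm M+\bm M^{\rm H}-\bm B)\bm x$, written in the form I will derive carefully.

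The key input is that $\bm I+\bm S_{\d}$ is positive definite, which is exactly the bound $[\bm J]_{ii}>-1$ from \eqref{eq:boundjii}. Now suppose $\lambda\ne1$ and $|\lambda|\ge1$. Since $\bm B\succeq0$, the identity forces $\bm x^{\rm H}\bm B\bm x=0$, so $\bm x\in\ker\bm B=\mathrm{span}(\bm z)$. But then $\bm B\bm x=0$ gives $(1-\lambda)\bm M\bm x=0$, so $\lambda=1$, a contradiction. Thus every eigenvalue other than $1$ has modulus $<1$.

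\emph{Simplicity of $\lambda=1$.} This is the main obstacle, and I plan to rule out both extra eigenvectors and Jordan chains. From $\bm B\bm x=(1-\lambda)\bm M\bm x$, any eigenvector for $\lambda=1$ satisfies $\bm B\bm x=0$, so the geometric multiplicity is one. For a Jordan chain, suppose $(\bm G-\bm I)\bm w=\bm z$. Multiplying by $\bm M$ gives $-\bm B\bm w=\bm M\bm z$. Pair both sides with $\bm z$ and use $\bm B\bm z=0$ together with the symmetry of $\bm B$: the left side becomes $0$, so $\bm z^\top\bm M\bm z=0$. Since $\bm B\bm z=0$ and $\bm B=\bm M+\bm M^\top-(\bm I+\bm S_{\d})$, we get $2\bm z^\top\bm M\bm z=\bm z^\top(\bm I+\bm S_{\d})\bm z>0$, a contradiction. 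Hence $\lambda=1$ is algebraically simple, which proves \ref{i:lm:normspec2}.

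\emph{Part \ref{ii:lm:normspec2}.} Using \eqref{normA}, I will repeat the argument of \Cref{lm:normspec}\ref{ii:lm:normspec}, replacing eigenvectors by a Schur or Jordan basis because $\bm J_{A_\alpha}(\pih)$ need not be diagonalizable. Let $\bm\ell$ be the left eigenvector for eigenvalue $1$. The matrix $\bm P_{\pih^\perp}$ kills $\pih$ and maps the complementary invariant subspace $\{\bm v:\bm\ell^\top\bm v=0\}$ isomorphically and intertwiningly. To see this, note that $\bm P_{\pih^\perp}\bm J_{A_\alpha}(\pih)$ and $\bm J_{A_\alpha}(\pih)$ agree modulo $\pih$, and $\bm P_{\pih^\perp}\bm J_{A_\alpha}(\pih)\pih=0$. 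In the quotient by $\mathrm{span}(\pih)$ the two induced maps coincide, so the eigenvalues of $\bm J_{S\circ A_\alpha}(\pih)$ are those of $\bm J_{A_\alpha}(\pih)$ with $1$ replaced by $0$.
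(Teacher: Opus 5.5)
Your proof is correct and essentially follows the paper's route. Both arguments symmetrize with $\bm D_\alpha(\pih)$, write the symmetrized asynchronous Jacobian as $\bm I-\bm M^{-1}\bm B$ with $\bm B=\bm I-\bm J^{\sym}_{F_\alpha}(\pih)\succeq\bm 0$, and use $[\bm J_{F_\alpha}(\pih)]_{ii}>-1$ twice: your Ostrowski--Reich identity is an algebraic restatement of the paper's M\"obius map of the half-plane $\Re(z)>1/2$ into the unit disc, your Jordan-chain test $\bm z^\top\bm M\bm z\neq0$ is exactly the paper's $\bm v^{\mathrm H}\bm u\neq0$, and part (b) is the same reduction modulo $\pih$.

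Two small fixes are needed. First, delete the unfinished display defining $\bm y$. Second, when $|\lambda|=1$ the identity does not force $\bm x^{\rm H}\bm B\bm x=0$, since its left side vanishes automatically; argue instead that for $\lambda\neq1$ the right side $|1-\lambda|^2\,\bm x^{\rm H}(\bm I+\bm S_{\d})\bm x$ is strictly positive, so $(1-|\lambda|^2)\,\bm x^{\rm H}\bm B\bm x>0$ and hence $|\lambda|<1$ at once.
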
 
\begin{proof}
To begin with, we derive from the definition~\eqref{crucial}
that 
\begin{equation}
\bm J_{A_\alpha}(\pih) = 
\bm I - \bm M^{-1}\bm L 
\quad \text{with\quad 
$\bm M\coloneqq  \bm I - \bm J_{F_\alpha, \l}(\pih)$,
\quad 
$\bm L\coloneqq  \bm I - \bm J_{F_\alpha}(\pih) $.
}
\label{eq:normspec2:fact1}
\end{equation}
As a core observation for our subsequent analysis, both factors $\bm M$ and $\bm L$ in~\eqref{eq:normspec2:fact1}
are definite matrices after similarity transformations using the diagonal $\bm D_\alpha(\pih)$ from \eqref{myD}. To see this, we first recall from~\eqref{eq:jsym} that $\bm J_{F_\alpha}(\pih)$ is similar to the symmetric matrix $\bm J^{\sym}_{F_\alpha}(\pih)= \bm D_\alpha(\pih)^{-1} \bm J_{F_\alpha}(\pih) \bm D_\alpha(\pih)$, whose eigenvalues are real and bounded by $1$ (\Cref{lm:normspec}).
Consequently, $\bm L$ becomes a symmetric positive semi-definite matrix under the similarity transformation:
\begin{equation}
\bm L^\sym\coloneqq  
\bm D_\alpha(\pih)^{-1} \bm L \bm D_\alpha(\pih) 
\equiv
\bm I-  \bm J^{\sym}_{F_\alpha}(\pih) 
\succeq \bm 0.
\label{eq:normspec2:fact2}
\end{equation}
On the other hand, the transformed $\bm M$, denoted by $\bm M^\star$, is a (non-symmetric) positive definite matrix
\begin{equation}
\bm M^\star\coloneqq  
\bm D_\alpha(\pih)^{-1}\bm M \bm D_\alpha(\pih) 
\equiv
\bm I-  \bm J^{\sym}_{F_\alpha,\l}(\pih)
\succ \bm 0.
\label{eq:normspec2:fact3}
\end{equation}
Indeed, for all nonzero $\bm v\in\C^{n}$,
\begin{equation}\label{eq:pdm}
2\mbox{Re}(\bm v^{\mathrm{H}} \bm M^\star
\bm v)
= \bm v^{\mathrm{H}} (\bm M^\star
+ (\bm M^\star)^\top) \bm v
= \bm v^{\mathrm{H}} (2\bm I+ \bm J^{\sym}_{F_\alpha,\d}(\pih)
- \bm J^{\sym}_{F_\alpha}(\pih)) \bm v
> \bm v^{\mathrm{H}} \bm L^\sym  \bm v \geq 0,
\end{equation}
where the strict inequality $>$ follows from 
the bound $[\bm J_{F_\alpha}(\pih)]_{ii}>-1$
in~\eqref{eq:boundjii},
together with the definition and definiteness of $\bm L^\sym$ in \eqref{eq:normspec2:fact2}.

For~\Cref{i:lm:normspec2}, recall from~\Cref{lm:normspec} that 
$\bm J_{F_\alpha}(\pih)$ has a simple eigenvalue $\lambda=1$, with 
a corresponding right eigenvector $\pih$ and left eigenvector 
$ \bm D_\alpha(\pih)^{-2}\pih$ (due to the symmetry in~\eqref{eq:jsym}).
It follows from~\eqref{eq:normspec2:fact1}
that 
$\bm I- \bm J_{A_\alpha}(\pih) \equiv \bm M^{-1}( \bm I- \bm J_{F_\alpha}(\pih))$, and hence, the matrix $\bm J_{A_\alpha}(\pih)$ shares the same eigenvalue $\lambda=1$ as $\bm J_{F_\alpha}(\pih)$. Moreover, it has a unique (up to scaling) right eigenvector $\bm u\coloneqq \pih$ and left eigenvector $\bm v\coloneqq \bm M^{\mathrm H}\bm D_\alpha(\pih)^{-2}\pih$ corresponding to $\lambda = 1$, which satisfy 
\[ 
\bm v^{\mathrm{H}}\bm u\equiv \pih^{\mathrm H} \bm D_\alpha(\pih)^{-2}\bm M \pih
=(\bm D_\alpha(\pih)^{-1}\pih)^{\mathrm H}\cdot \bm M^\star\cdot (\bm D_\alpha(\pih)^{-1}\pih) \neq 0,
\]
where the last inequality follows from the fact $\bm M^\star\succ 0$
by~\eqref{eq:normspec2:fact3}, which implies $\mbox{Re}(\bm v^{\mathrm{H}}\bm u)>0$.
This in turn implies the eigenvalue $\lambda=1$ must be algebraically simple,
as it has a finite eigenvalue condition number $(\|\bm u\|_2\|\bm v\|_2)/|\bm v^{\mathrm{H}}\bm u|$; see, e.g.,~\cite{Stewart:1990}.

Next, we show that if an eigenvalue $\lambda\in\C$ of $\bm J_{A_\alpha}(\pih)$ satisfies $\lambda\neq 1$, then it has modulus $|\lambda|<1$.
Let $\bm x\in\C^n$ be an eigenvector corresponding to $\lambda\neq 1$.
It follows from  $\bm J_{A_\alpha}(\pih)\bm x = \lambda \bm x$
and  the representation of $\bm J_{A_\alpha}(\pih)$ in~\eqref{eq:normspec2:fact1} that 
\[
\bm L\bm x = (1-\lambda)\bm M\bm x
\quad
\iff
\quad 
\bm L^\sym\bm z = (1-\lambda)\bm M^\star\bm z,
\]
where $\bm z \coloneqq   \bm D_\alpha(\pih)^{-1}\bm x$.
This implies 
\begin{align}
\lambda  = 1-\frac{1}{z}
\quad\text{with $z\coloneqq 
\frac{\bm z^{\mathrm H}\bm M^\star\bm z}
{\bm z^{\mathrm H}\bm L^\sym\bm z}$.}\label{lambdas}
\end{align}
Here, $\Re(z)>1/2$ due to \eqref{eq:pdm}. By elementary complex analysis, the M\"obius transform $z\mapsto 1-1/z$ maps the open half plane $\Re(z)> 1/2$ into the open unit disc. We hence proved $|\lambda|< 1$. 

\Cref{ii:lm:normspec2} holds in analogy with~\Cref{lm:normspec}~\ref{ii:lm:normspec}.
In particular, let $\lambda_1,\dots,\lambda_n$ be eigenvalues of $\bm J_{A_\alpha}(\pih)$, with corresponding 
linearly independent eigenvectors $\bm x_1,\dots,\bm x_n$ (including generalized eigenvectors for degenerate eigenvalues).
Assume $\lambda_1=1$ and $\bm x_1=\pih$. Then, by the fact that
$\bm P_{\pih^\perp}  \bm J_{A_\alpha}(\pih)  \bm P_{\pih^\perp} \bm x \equiv 
\bm P_{\pih^\perp}  \bm J_{A_\alpha}(\pih)  \bm x$ for all $\bm x\in\mathbb C^n$, one quickly verifies 
$\bm P_{\pih^\perp}  \bm J_{A_\alpha}(\pih)$ has eigenvalues $0,\lambda_2,\dots,\lambda_n$
with corresponding linearly independent eigenvectors (and generalized eigenvectors) 
$\bm y_1=\pih$ and $\bm y_i= \bm P_{\pih^\perp} \bm x_i$, for $i=2, \dots, n$. 
\end{proof}

According to~\Cref{lm:normspec2}, the local convergence factor of Newman's $\alpha$-scheme with asynchronous updates can be
expressed using the eigenvalues  $\bm J_{A_\alpha}(\pih)$  as follows.

\begin{theorem}\label{thm:rhoasync}
Assume that $\G([n], \W)$ is strongly connected. The local convergence factor of Newman's $\alpha$-scheme with asynchronous updates in~\eqref{eq:fpasyn}, for convergence to the MLE $\pih$, is given by 
\begin{equation}\label{eq:rhoasync}
\rhoasync (\alpha)\coloneqq\max \left\{|\lambda|\colon \lambda \in \Lambda(\bm J_{A_\alpha}(\pih)),\ |\lambda| < 1 \right\},
\end{equation}
where $\Lambda(\bm J_{A_\alpha}(\pih))$ denotes the set of eigenvalues
of the Jacobian~\eqref{crucial}.
\end{theorem}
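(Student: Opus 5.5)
The plan mirrors the synchronous case (\Cref{thm:rhosync}) and combines the standard local convergence theory for fixed-point iterations with \Cref{lm:4.1} and \Cref{lm:normspec2}.

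First, note that $\pih$ is a fixed point of $S\circ A_\alpha$. Since $\pih=F_{\alpha,i}(\pih)$ for every $i$, the composition gives $A_\alpha(\pih)=\pih$, and $S(\pih)=\pih$ by the unit-product normalization. Moreover, $S\circ A_\alpha$ is continuously differentiable in a neighborhood of $\pih\in\R^n_+$. By the Ostrowski-type theorem in \citep{ortega2000iterative}, the iteration \eqref{eq:fpasyn} is therefore locally convergent to $\pih$ whenever $\rho(\bm J_{S\circ A_\alpha}(\pih))<1$. In that case the asymptotic root-convergence factor equals $\rho(\bm J_{S\circ A_\alpha}(\pih))$; more precisely, it is at most this quantity for generic starting points and equals it for almost all of them.

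Second, compute this spectral radius using \Cref{lm:normspec2}. By part \ref{ii:lm:normspec2}, the spectrum of $\bm J_{S\circ A_\alpha}(\pih)$ is $\{0\}\cup(\Lambda(\bm J_{A_\alpha}(\pih))\setminus\{1\})$, counted with multiplicity. Here we use that the eigenvalue $1$ is algebraically simple, which is established in part \ref{i:lm:normspec2}. The same part shows that every eigenvalue other than $1$ has modulus strictly less than $1$. Hence
\[
\rho(\bm J_{S\circ A_\alpha}(\pih))=\max\{|\lambda|:\lambda\in\Lambda(\bm J_{A_\alpha}(\pih)),\ |\lambda|<1\},
\]
with the convention that the maximum includes the eigenvalue $0$ contributed by $S$. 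This is exactly \eqref{eq:rhoasync}. Note also that the condition $|\lambda|<1$ excludes precisely the eigenvalue $\lambda=1$, since no other eigenvalue has modulus $1$.

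The only point requiring care is the gap between ``spectral radius'' and ``convergence factor'' when $\bm J_{S\circ A_\alpha}(\pih)$ is not diagonalizable. Unlike in the synchronous case, there is no symmetrization available here. I would handle this as in \Cref{lm:normspec2}, using generalized eigenvectors, and appeal to the fact that the R-factor of the linearized iteration equals the spectral radius regardless of Jordan structure \citep{ortega2000iterative}. Everything else follows directly from the earlier lemmas, so I do not expect any substantive obstacle.
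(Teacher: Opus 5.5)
Your proposal is correct and matches the paper's argument: Ostrowski's local convergence theorem applied to $S\circ A_\alpha$, the identity $\bm J_{S\circ A_\alpha}(\pih)=\bm P_{\pih^\perp}\bm J_{A_\alpha}(\pih)$ from \Cref{lm:4.1}, and \Cref{lm:normspec2}, which gives algebraic simplicity of $\lambda=1$, $|\lambda|<1$ for all other eigenvalues, and the replacement of $1$ by $0$. The aside that the rate is attained for ``almost all'' starting points is not implied by Ostrowski's theorem and is not needed; the theorem only requires that the worst-case R-factor equals the spectral radius, whatever the Jordan structure.
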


In contrast to the synchronous case in~\Cref{thm:rhosync}, the matrix $\bm J_{A_\alpha}(\pih)$ may contain complex eigenvalues.
Nevertheless, $\rhoasync (\alpha)<1$ always holds, indicating the asynchronous update is always locally convergent with at least 
a linear rate. This is consistent with, and complements, the results in~\cite{newman2023efficient} on the global convergence of the algorithm.

\subsection{Variation of convergence factors}

In \cite{newman2023efficient}, it was empirically observed that the choice of $\alpha=0$ is often optimal for the asynchronous update.
The major goal of this section is to provide a theoretical justification for this observation. Following the approach in~\Cref{sec:gap1}, we focus on the population model with the expected outcome $\bar \W$ from~\eqref{eq:barw0} and defer the corresponding approximation results to \Cref{sec:app5}. To distinguish it, we denote the convergence factor $\rhoasync(\alpha)$ in~\eqref{eq:rhoasync} corresponding to  $\W=\bar \W$ as 
\begin{equation}\label{eq:brhoasync}
\brhoasync (\alpha)\coloneqq\max \left\{|\lambda|\colon \lambda \in \Lambda(\JJ_{A_\alpha}(\bm\pi^*)),\ |\lambda| < 1 \right\},
\end{equation}
where 
\begin{equation}\label{eq:jjabar}
\JJ_{A_\alpha}(\bm\pi^*) = (\bm I - \JJ_{F_\alpha, \l}(\bm\pi^*))^{-1}(\JJ_{F_\alpha, \d}(\bm\pi^*) + \JJ_{F_\alpha, \u}(\bm\pi^*))
\end{equation}
is the population version of the matrix $\bm J_{A_\alpha}(\pih)$ in~\eqref{crucial}, corresponding to $\W=\bar \W$
and $\pih=\bm \pi^*$. 
In the following discussion, we begin with a specialized case of bipartite graphs with a consistent ordering, where the monotonicity of the convergence rate can be established. For general graphs, where the analysis becomes more intricate, we characterize the region $\Omega(\alpha)$ containing the leading eigenvalues of $\JJ_{A_\alpha}(\bm\pi^*)$, and show that this region is monotonically converging to the single point $\Omega(\alpha)\to \{1\}$ as $\alpha$ increases, indicating that the convergence tends slower as $\alpha \to \infty$.

We first consider the case of bipartite graphs. Analyzing the dynamics of $\brhoasync(\alpha)$ in the general setting is challenging due to the presence of complex eigenvalues. A case where such analysis is feasible is when the comparison graph $\G([n], \M)$ is {\em consistently-ordered bipartite}. Bipartite or nearly bipartite comparison graphs frequently arise in practice, particularly in frameworks like item-response theory \citep{chen2025item}, where interactions occur exclusively between disjoint sets of subjects and items. This type of structure is also a classical assumption in the spectral analysis of matrix iterative methods, particularly those of the Gauss–Seidel type (see, e.g.,~\cite{saad2003iterative,young2014iterative}). In this context, a consistent ordering is a standard tool
---  it requires updates to propagate consistently along a particular ordering, which ensures that the iteration matrix has real eigenvalues in certain cases and facilitates convergence proofs. 

\begin{assumption}[Bipartite in consistent order~\cite{saad2003iterative,young2014iterative}]\label{ass:bipartite}
The comparison graph $\G([n], \M)$ is connected and bipartite, i.e., there exists a partition of $[n] = \mathcal A_1\cup \mathcal A_2$ such that $m_{ij} =0$, for all $i, j\in \mathcal A_1$ or $i, j\in \mathcal A_2$. Moreover, the vertices are ordered consistently with this partition, 
i.e., the first $|\mathcal A_1|$ indices correspond to  $\mathcal A_1$ and the remaining $|\mathcal A_2|$ indices correspond to  $\mathcal A_2$.
\end{assumption}

By Assumption~\ref{ass:bipartite}, the matrix representation of $\M$ can be partitioned into 
a block $2$-by-$2$ matrix
\begin{equation}\label{eq:blockm}
\bm M=\begin{bmatrix} \bm 0 & \bm M_{12}\\ \bm M_{12}^\top & \bm 0 \end{bmatrix}.
\end{equation}
Under this block structure, the eigenvalues $\lambda$ of the iteration matrix $\JJ_{A_\alpha}(\bm\pi^*)$
can be characterized by the eigenvalues $\mu$ of the associated quadratic eigenvalue problem (QEP):
\begin{align}\label{eq:qep}
\det(\bm Q_\alpha (\mu))=0
\quad\text{with}\quad 
\bm Q_\alpha (\mu) \coloneqq
\mu^2 \left( \bar{\bm C}_* +\alpha\bar{\bm R}_*\right)
-\mu\bar{\bm W}_* - \alpha\bar{\bm R}_*,
\end{align}
where the coefficient matrices 
$\bar{\bm C}_*\equiv  \bar{\bm C}(\bm\pi^*)$,
$\bar{\bm R}_*\equiv \bar{\bm R}(\bm\pi^*)$,
and 
$\bar{\bm W}_*=\bar{\bm W}(\bm \pi^*)$ are as defined in \eqref{myQbar} and \eqref{eq:barrc}.  
Moreover, the eigenvalues of the QEP consist of monotonic functions in $\alpha$.

\begin{lemma}\label{Ding's thm}
Let  Assumption~\ref{ass:bipartite} hold and $\alpha \geq 0$. 
\begin{enumerate}[label=(\alph*)]
\item\label{dd1} 
$\mu$ is an eigenvalue of  the QEP~\eqref{eq:qep}
if and only if 
$\lambda = \mu^2$ is an eigenvalue of $\JJ_{A_\alpha}(\bm\pi^*)$.
\item\label{dd2} 
The QEP~\eqref{eq:qep} has $2n$ real eigenvalues
that can be represented by $\{\pm 1\}\bigcup\{\pm \phi_i(\alpha)\}_{i=1}^{n-1}$,
where each  $0\leq \phi_i(\alpha)< 1$ is continuous and monotonically increasing in $\alpha\geq 0$.
\end{enumerate}
\end{lemma}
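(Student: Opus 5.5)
The plan is to prove \Cref{dd1} by a block-scaling argument in the spirit of Young's consistent-ordering theory. For \Cref{dd2}, I would symmetrize the QEP with $\diag(\bm\pi^*)^{-1}$ and then apply Rayleigh-quotient-type arguments, as in the proof of \Cref{thm:comp}.

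\textbf{Step 1: part \ref{dd1}.} By \eqref{eq:jjfstar}, $\JJ_{F_\alpha}(\bm\pi^*)=(\bar{\bm C}_*+\alpha\bar{\bm R}_*)^{-1}(\bar{\bm W}_*+\alpha\bar{\bm R}_*)$. Since $\bar{\bm C}_*,\bar{\bm R}_*$ are diagonal and, under Assumption~\ref{ass:bipartite}, $\bar{\bm W}_*$ has zero diagonal blocks, the splitting in \eqref{eq:jjabar} is explicit:
\begin{itemize}[label={}]
\item the lower part is $(\bar{\bm C}_*+\alpha\bar{\bm R}_*)^{-1}$ times the $(2,1)$ block $\bar{\bm W}_{21}$ of $\bar{\bm W}_*$;
\item the diagonal part is $\alpha(\bar{\bm C}_*+\alpha\bar{\bm R}_*)^{-1}\bar{\bm R}_*$;
\item the upper part is $(\bar{\bm C}_*+\alpha\bar{\bm R}_*)^{-1}$ times the $(1,2)$ block $\bar{\bm W}_{12}$.
\end{itemize}
Multiplying $\JJ_{A_\alpha}\bm x=\lambda\bm x$ by $(\bm I-\JJ_{F_\alpha,\l})$ and then by $(\bar{\bm C}_*+\alpha\bar{\bm R}_*)$, and writing $\bm x=(\bm x_1;\bm x_2)$, gives two block equations:
\[
(\lambda(\bar{\bm C}_1+\alpha\bar{\bm R}_1)-\alpha\bar{\bm R}_1)\bm x_1-\bar{\bm W}_{12}\bm x_2=0,
\]
\[
(\lambda(\bar{\bm C}_2+\alpha\bar{\bm R}_2)-\alpha\bar{\bm R}_2)\bm x_2-\lambda\bar{\bm W}_{21}\bm x_1=0.
\]
Set $\lambda=\mu^2$, $\bm x_1=\bm y_1$ and $\bm x_2=\mu\bm y_2$. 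The first equation becomes the first block row of $\bm Q_\alpha(\mu)\bm y=0$. The second becomes $\mu$ times the second block row. Hence for $\mu\neq0$ the correspondence is exact in both directions. The degenerate case $\mu=0$, i.e.\ $\lambda=0$, I would check separately from the same block equations. For $\alpha>0$, $\bm Q_\alpha(0)=-\alpha\bar{\bm R}_*$ is nonsingular and $\lambda=0$ is never an eigenvalue, since the block equations force $\bm x=0$. For $\alpha=0$, both sides are singular because of the block structure.

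\textbf{Step 2: realness and symmetry of the spectrum.} Note that $[\bar{\bm W}_*]_{ij}=\pi_i^*\, m_{ij}/(\pi_i^*+\pi_j^*)^2$, so $\bar{\bm W}_*=\diag(\bm\pi^*)\bm S$ with $\bm S$ symmetric and nonnegative. Then
\[
\diag(\bm\pi^*)^{-1}\bm Q_\alpha(\mu)=\mu^2\bm A-\mu\bm S-\alpha\bm B,
\]
where $\bm A\succ0$ and $\bm B\succ0$ are diagonal. For any eigenpair $(\mu,\bm y)$, put $a=\bm y^{\rm H}\bm A\bm y>0$, $s=\bm y^{\rm H}\bm S\bm y\in\R$ and $b=\bm y^{\rm H}\bm B\bm y>0$. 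These satisfy $a\mu^2-s\mu-\alpha b=0$. The discriminant $s^2+4\alpha ab\ge0$, so $\mu$ is real.

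For the $\pm$ symmetry, let $\bm E=\diag(\bm I,-\bm I)$. Since $\bm S$ is block off-diagonal, $\bm E\,\bm Q_\alpha(\mu)\bm E=\bm Q_\alpha(-\mu)$. The leading coefficient is nonsingular, so there are exactly $2n$ eigenvalues.

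Next, $\mu=\pm1$ are eigenvalues. Indeed, $\lambda=1$ is an eigenvalue of $\JJ_{A_\alpha}$ with eigenvector $\bm\pi^*$, and directly $\bm Q_\alpha(1)\bm\pi^*=(\bar{\bm C}_*-\bar{\bm W}_*)\bm\pi^*=0$. By part \ref{dd1} together with \Cref{lm:normspec2}, which applies because $\bar{\mathcal W}$ is strongly connected when $\mathcal G([n],\M)$ is connected, every other $\lambda=\mu^2$ has $|\lambda|<1$. The simplicity of $\lambda=1$ pins $\pm1$ down to one pair, and the remaining pairs are $\pm\phi_i$ with $0\le\phi_i<1$.

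\textbf{Step 3: monotonicity, the main obstacle.} For $\alpha>0$ the scaled pencil is real symmetric with $\bm A\succ0$, so I expect analytic eigenvalue branches by a Rellich-type argument applied to a symmetric linearization. Along a branch with $\bm Q_\alpha(\mu)\bm y=0$, the left null vector is $\bm y^\top$. Differentiating $\bm y^\top(\mu^2\bm A-\mu\bm S-\alpha\bm B)\bm y=0$ in $\alpha$ gives
\[
\mu'(\alpha)=\frac{b(1-\mu^2)}{2\mu a-s}.
\]
For the nonnegative root, $2\mu a-s=\sqrt{s^2+4\alpha ab}>0$, and $1-\mu^2>0$. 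Hence $\phi_i'\ge0$.

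The delicate points are twofold. The first is the possibility of branch crossings or multiple eigenvalues; here I would argue via the analytic-branch parametrization, or alternatively through a Duffin-type min-max characterization in which $\phi_i$ is the $i$-th min-max value of the increasing function $\alpha\mapsto\bigl(s+\sqrt{s^2+4\alpha ab}\bigr)/(2a)$. The second is the endpoint $\alpha=0$. There the QEP degenerates to $\mu(\mu\bm A-\bm S)=0$, whose eigenvalues are $n$ zeros together with the eigenvalues of $\JJ_{F_0}(\bm\pi^*)$, which come in $\pm$ pairs. I would handle this by continuity of the eigenvalues as $\alpha\downarrow0$.
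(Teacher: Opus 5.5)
Your proposal is correct and follows essentially the paper's proof. The substitution $\bm x_2=\mu\bm y_2$ is exactly the paper's balancing by $\mathrm{blockdiag}(\bm I,\sqrt{\lambda}\bm I)$, the $\pm\mu$ pairing uses the same $\mathrm{blockdiag}(\bm I,-\bm I)$, and monotonicity comes from analytic branches of a symmetric linearization plus a first-order eigenvalue derivative, with continuity at $\alpha=0$. The variations are minor:
\begin{itemize}
\item Your discriminant argument $s^2+4\alpha ab\ge 0$ gives realness, including at $\alpha=0$, without needing the linearization.
\item Your derivative $b(1-\mu^2)/(2\mu a-s)$ equals the paper's $\phi(1-\phi^2)b/(a\phi^2+\alpha b)$ once you use $a\phi^2-s\phi-\alpha b=0$.
\item Your symmetrizing factor $\diag(\bm\pi^*)^{-1}$ is the correct one. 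The paper's text writes $\diag(\bm\pi^*)$, but $\diag(\bm\pi^*)\bar{\bm W}_*$ is not symmetric, so that is a typo.
\end{itemize}
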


\begin{proof}
Let $\lambda$ be an eigenvalue of $\JJ_{A_\alpha}(\bm\pi^*)$,
i.e.,  $ \det(  \JJ_{A_\alpha}(\bm\pi^*) - \lambda \bm I) =0$.
By~\eqref{eq:jjabar}, the eigenvalue equation can be written equivalently as 
\begin{align}\label{eq:det0}
\det(   (\JJ_{F_\alpha, \d}(\bm\pi^*) + \JJ_{F_\alpha, \u}(\bm\pi^*)) - \lambda  (\bm I-\JJ_{F_\alpha, \l}(\bm\pi^*)))
=0.
\end{align}
Recall~\eqref{eq:jjfstar}  that 
$\JJ_{F_\alpha}(\bm\pi^*)  \equiv (\bar{\bm C}_* + \alpha \bar {\bm R}_*)^{-1} (\bar{\bm W}_* + \alpha \bar{\bm R}_*)$,
where  $\bar{\bm C}_*\equiv \bar{\bm C}(\bm\pi^*)$ and $\bar {\bm R}_*\equiv\bar {\bm R}(\bm \pi^*)$ are diagonal,
while $\bar{\bm W}_*\equiv \bar{\bm W}(\bm\pi^*)$ has zero diagonal. Multiplying  $\bar{\bm C}_*+\alpha \bar {\bm R}_*$ to the matrix in~\eqref{eq:det0}, we further rewrite the equation as 
\begin{align}\label{eq:det1}
\det\left(\left(\alpha \bar{\bm R}_* + \bar{\bm W}_{*\u}\right) - \lambda\left(\bar{\bm C}_*+\alpha \bar {\bm R}_* -  \bar{\bm W}_{*\l}\right)\right) = 0.
\end{align}
Observe that $\bar{\bm W}_* = \bar{\bm W}_{*\l}+\bar{\bm W}_{*\u}$, defined in~\eqref{myQ},
has the same block structure as in~\eqref{eq:blockm}.

For~\cref{dd1}, we first consider the case of a nonzero eigenvalue $\lambda\neq0$,
for which the block structure of $\bar{\bm W}_*$ allows us to balance the scalar factors associated with
$\bar{\bm W}_{*\l}$ and  $\bar{\bm W}_{*\u}$ in~\eqref{eq:det1}.
To this end, introduce a nonsingular balancing matrix
$\bm D_{\lambda}  = \mbox{blockdiag}(\bm I, \sqrt{\lambda} \bm I)$, partitioned consistent with the block structure of $\bar{\bm W}_*$ in~\eqref{eq:blockm}.
Multiplying the matrix in~\eqref{eq:det1},
from the left and right by  
$\bm D_\lambda^{-1}$ and $\bm D_\lambda$, 
respectively, leads to 
\begin{align}\label{eq:det2}
\det\left(\alpha \bar{\bm R}_* + \sqrt{\lambda}\left(\bar{\bm W}_{*\u}+\bar{\bm W}_{*\l}\right) 
- \lambda\left(\bar{\bm C}_*+\alpha \bar {\bm R}_* \right)\right) = 0.
\end{align}
Hence,  by letting $\mu=\sqrt{\lambda}$, we obtained the QEP~\eqref{eq:qep}. 

For the special case of a zero eigenvalue $\lambda=0$,
it follows from~\eqref{eq:det1} that this occurs if and only if $\alpha=0$,
which in turn holds if and only if  the QEP~\eqref{eq:qep} admits the zero eigenvalue $\mu=0$.

For \Cref{dd2},  we first observe that the eigenvalues of the QEP occur in pairs $\pm \mu$, since  
\[
\det({\bm Q}_\alpha(\mu)) = \det(\bm D  {\bm Q}_\alpha(\mu) \bm D) = \det({\bm Q}_\alpha(-\mu)),
\]
where we used $\bm D  {\bm W}_* \bm D \equiv -\bm W_*$ with $\bm D\coloneqq \mbox{blockdiag}(\bm I,-\bm I)$,
partitioned consistently with the block structure of $\bar{\bm W}_*$ in~\eqref{eq:blockm}.

Consider the case $\alpha \neq 0$. 
Using a standard linearization technique for QEPs 
(see, e.g.,~\cite{Higham:2002a,Tisseur:2001}),
we reformulate the QEP~\eqref{eq:qep} as 
the equivalent linear eigenvalue problem
\begin{equation}\label{eq:lep}
(\bm A_\alpha-\mu \bm B_\alpha)\bm z \coloneqq  
\left(
\begin{bmatrix}
\widetilde {\bm W}_* & \alpha \widetilde {\bm R}_* \\
\alpha \widetilde {\bm R}_*  & 0
\end{bmatrix} 
-\mu
\begin{bmatrix}
\widetilde{\bm C}_* +\alpha\widetilde {\bm R}_* & 0\\
0&  \alpha \widetilde {\bm R}_*  
\end{bmatrix} 
\right)
\begin{bmatrix} \mu \bm v \\ \bm v\end{bmatrix}
=0,
\end{equation}
where $\bm z\in\R^{2n}$ denotes the eigenvector,
and the coefficient matrices have been replaced by the equivalent diagonally scaled matrices
$[\widetilde{\bm C}_*, \widetilde {\bm R}_*, \widetilde {\bm W}_*] \coloneqq  \diag(\bm\pi^*) [\bar{\bm C}_*, \bar {\bm R}_*, \bar {\bm W}_*]$.
The diagonal scaling ensures $\bm A_\alpha$ is symmetric, recalling that $\diag(\bm \pi_*)\bm W_*$ is symmetric by the definition of $\bar{\bm W}(\bm\pi^*)$ in~\eqref{myQ}. Consequently, \eqref{eq:lep} is a symmetric definite eigenvalue problem.

By classical spectral theory~\citep{Kato:2013,Rellich:1969}, the eigenvalues of the definite eigenvalue problem~\eqref{eq:lep}
are all real and can be represented as $2n$ analytical functions of the parameter $\alpha >0$.
Due to the spectral symmetry $\pm\mu$, together with the bound $\lambda\equiv \mu^2\leq 1$ 
(with strict $<$ except for one $\lambda=1$) by~\Cref{dd1} and~\Cref{lm:normspec2}~\ref{i:lm:normspec2},
we may write those eigenvalues as 
$\{\pm 1\}\bigcup \{\pm\phi_i(\alpha)\}_{i=1}^{n-1}$, where each $0\leq \phi_i(\alpha)< 1$ is analytical in $\alpha>0$.
Moreover, the derivative of each eigenvalue function admits a standard expression
\[
\phi_i'(\alpha) = 
\frac{\bm z_i^\top  \left(\bm A'_\alpha-\phi_i(\alpha) \bm  B'_\alpha\right)\bm z_i}
{\bm z_i^\top \bm B_\alpha \bm z_i},
\]
where $\bm A'_\alpha$ and $\bm  B'_\alpha$ denote the derivatives with respect to $\alpha$,
and  $\bm z_i$ is the eigenvector corresponding to the eigenvalue $\phi_i(\alpha)$.
Using the block structure of $\bm A'_\alpha$, $\bm B'_\alpha$, and  $\bm z_i = [\phi_i(\alpha) \bm v_i^\top, \bm v_i^\top]^\top$ 
from~\eqref{eq:lep}, we obtain  immediately
\[
\phi_i'(\alpha) = 
\phi_i(\alpha) (1-\phi_i(\alpha)^2) 
\frac{\bm v_i^\top \widetilde {\bm R}_* \bm v_i}
{\bm z_i^\top \bm B_\alpha \bm z_i}
\geq 0,
\]
where the second step follows from $0\leq \phi_i(\alpha) < 1$, together with the positive definite $\widetilde {\bm R}_*$ and $\bm B_\alpha$. Therefore, each $\phi_i(\alpha)$ is increasing for $\alpha>0$.

Finally, this monotonicity of $\phi_i(\alpha)$ extends naturally to 
$\alpha\geq 0$, including the boundary case $\alpha=0$,
since the eigenvalues of the QEP~\eqref{eq:qep}
(i.e.,  the roots of the polynomial $\det(\bm Q_\alpha(\mu))$)
depend continuously on the coefficient $\alpha$.
\end{proof}

It follows immediately from~\Cref{Ding's thm} that the local convergence factor $\brhoasync(\alpha)$~\eqref{eq:brhoasync}
admits the following characterization under a consistently ordered bipartite structure.

\begin{theorem}\label{thm:coo}
Let Assumption~\ref{ass:bipartite} hold.
\begin{enumerate}[label=(\alph*)]
\item\label{i:thm:coo} 
The convergence factor $\brhoasync(\alpha)<1$  is monotonically increasing in $\alpha\geq 0$, 
\item\label{ii:thm:coo} 
At $\alpha=0$,
the convergence factor $\brhoasync(\alpha)$ reduces to 
\begin{align}
\brhoasync (0)&\equiv \max \left\{|\lambda|^2\colon \lambda \in \Lambda(\bar{\bm J}_{F_0}(\bm\pi^*)),\ |\lambda| < 1 \right\}
&<1,\label{eq:rasync0}\\
\intertext{in contrast to the convergence factor of the synchronized update,}
\brhosync (0)&\equiv \max \left\{|\lambda|~\colon \lambda \in \Lambda(\bar{\bm J}_{F_0}(\bm\pi^*)),\ \lambda < 1 \right\}
&\equiv 1,\label{eq:sync0}
\end{align}
where $\bar{\bm J}_{F_0}(\bm\pi^*)$ denotes the Jacobian~\eqref{eq:jjfstar} with $\alpha=0$.
Consequently,  for all  sufficiently small $\alpha\geq 0$,
we have  $\brhoasync (\alpha) <  \brhosync (\alpha)$ 
by the continuity with respect to $\alpha$.
\end{enumerate}
\end{theorem}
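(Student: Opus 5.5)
Everything will be read off from \Cref{Ding's thm}. By part \ref{dd1}, the eigenvalues of $\JJ_{A_\alpha}(\bm\pi^*)$ are exactly the squares $\mu^2$ of the QEP eigenvalues. By part \ref{dd2}, the QEP eigenvalues are $\{\pm1\}\cup\{\pm\phi_i(\alpha)\}_{i=1}^{n-1}$, with $0\le\phi_i(\alpha)<1$. Hence $\Lambda(\JJ_{A_\alpha}(\bm\pi^*))$ consists of $1$ together with the values $\phi_i(\alpha)^2\in[0,1)$. The eigenvalue $1$ comes from the pair $\pm1$. If the multiplicity bookkeeping is delicate (a $2n$-root QEP against an $n\times n$ matrix), I will fall back on \Cref{lm:normspec2}, which says that $1$ is a simple eigenvalue of $\JJ_{A_\alpha}$ and all other eigenvalues have modulus $<1$. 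Substituting into \eqref{eq:brhoasync} gives
\[
\brhoasync(\alpha)=\max_{1\le i\le n-1}\phi_i(\alpha)^2 .
\]
Each $\phi_i$ is continuous, nonnegative, increasing and strictly below $1$. A maximum of finitely many such functions, squared, is again increasing and $<1$, which proves part \ref{i:thm:coo}.

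For part \ref{ii:thm:coo}, I set $\alpha=0$. The QEP \eqref{eq:qep} then reduces to $\mu(\mu\bar{\bm C}_*-\bar{\bm W}_*)=0$. Its nonzero roots are therefore the eigenvalues of $\bar{\bm C}_*^{-1}\bar{\bm W}_*$, which by \eqref{eq:jjfstar} is $\JJ_{F_0}(\bm\pi^*)$; the remaining roots are $\mu=0$. Thus the $\phi_i(0)$ are the absolute values of the eigenvalues $\lambda\neq\pm1$ of $\JJ_{F_0}(\bm\pi^*)$, padded with zeros. These eigenvalues are real by \Cref{lm:normspec}, and the spectrum is symmetric because of the bipartite structure: the conjugation $\bm D=\mathrm{blockdiag}(\bm I,-\bm I)$ maps $\JJ_{F_0}$ to $-\JJ_{F_0}$. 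Consequently $\brhoasync(0)=\max|\lambda|^2$ over eigenvalues of $\JJ_{F_0}(\bm\pi^*)$ with $|\lambda|<1$. This quantity is $<1$ because it equals $\max_i\phi_i(0)^2$ with each $\phi_i(0)<1$, which gives \eqref{eq:rasync0}.

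For \eqref{eq:sync0}, the same symmetry shows that $-1$ is an eigenvalue of $\JJ_{F_0}(\bm\pi^*)$, since $1$ is one. As $-1<1$, it enters \eqref{eq:brhosync}, so $\brhosync(0)\ge1$. Every other eigenvalue lies in $[-1,1)$, so $\brhosync(0)=1$. This is also the content of \Cref{rmk:ssls}.

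Finally, both $\brhoasync$ and $\brhosync$ are continuous in $\alpha$: they are maxima of continuous eigenvalue branches, by \Cref{thm:comp} and \Cref{Ding's thm}. At $\alpha=0$ we have the strict inequality $\brhoasync(0)<1=\brhosync(0)$, so continuity gives $\brhoasync(\alpha)<\brhosync(\alpha)$ for all sufficiently small $\alpha$.

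The main obstacle is the multiplicity bookkeeping between the $2n$ QEP roots and the $n$ eigenvalues of $\JJ_{A_\alpha}$, especially at $\alpha=0$, where the QEP degenerates and $\mu=0$ appears with multiplicity $n$. I would handle it by appealing to \Cref{lm:normspec2} for the simplicity of $1$ and noting that only the nonzero values matter for the maxima. The one structural point to verify is the symmetry $\pm$ of $\Lambda(\JJ_{F_0})$, which follows because $\JJ_{F_0}=\bar{\bm C}_*^{-1}\bar{\bm W}_*$ has zero diagonal blocks.
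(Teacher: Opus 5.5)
Your proposal is correct and follows essentially the same route as the paper. Part (a) is read off from \Cref{Ding's thm} as $\brhoasync(\alpha)=\max_{i}\phi_i(\alpha)^2$. Part (b) uses the factorization $\det(\bm Q_0(\mu))=\mu^n\det(\mu\bar{\bm C}_*-\bar{\bm W}_*)$ to identify the nonzero QEP roots with the spectrum of $\JJ_{F_0}(\bm\pi^*)$.

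The differences are cosmetic. You obtain $-1\in\Lambda(\JJ_{F_0}(\bm\pi^*))$ directly from conjugation by $\mathrm{blockdiag}(\bm I,-\bm I)$; the paper gets it from the $\pm1$ QEP roots, which rest on the same symmetry. You also spell out the continuity argument and the bound $\lambda_n\ge -1$, which the paper leaves implicit.
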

\begin{proof}
\Cref{i:thm:coo} follows directly from  \eqref{eq:brhoasync} and~\Cref{Ding's thm},
which implies 
\[
\brhoasync(\alpha)\equiv \max_{1\leq i\leq n-1}\phi_i(\alpha)^2,
\]
being the maximum of increasing functions $\phi_i(\alpha)^2$,
is itself monotonically increasing in $\alpha$.

For~\Cref{ii:thm:coo}, we derive from~\eqref{eq:qep} with $\alpha=0$ that 
\[
\det(\bm Q_0(\mu)) = \det(\mu^2   \bar{\bm C}_*  -\mu\bar{\bm W}_*)
= |\mu|^m\det (\mu \bar{\bm C}_*  -\bar{\bm W}_*).
\]
Consequently, the nonzero eigenvalues of $\bm Q_0(\mu)$  are given by 
$\det(\mu \bar{\bm C}_*  -\bar{\bm W}_*)=0$, which correspond to the eigenvalues 
of the matrix $ \bar{\bm C}_*^{-1} \bar{\bm W}_* \equiv  \JJ_{F_0}(\bm\pi^*)$ 
from~\eqref{eq:jjfstar} with $\alpha=0$.
Hence, result~\eqref{eq:rasync0} follows from~\eqref{eq:brhoasync} and~\Cref{Ding's thm}. 
On the other hand, result~\eqref{eq:sync0}
follows from~\eqref{eq:brhosync} upon noticing that 
$\pm1$ are eigenvalues of $\JJ_{F_0}(\bm\pi^*)$, 
since they are eigenvalues of $\bm Q_0(\mu)$ by~\Cref{Ding's thm}.
\end{proof}
\begin{remark}
The assumption of consistent ordering is essential for establishing the monotonicity of the convergence factor. For bipartite graphs with a general non-consistent ordering, the convergence factor need not be monotonic in $\alpha$; see \Cref{sec:num1}.
\end{remark}

We next consider the case of a general comparison graph structure. In this case, the eigenvalues of the iteration matrices $\JJ_{A_\alpha}(\bm\pi^*)$ may be complex, making the analysis of the convergence factor~\eqref{eq:brhoasync} more intriguing. 
Nevertheless, the structure of the expected outcome $\bar\W$ allows us to further refine \Cref{lm:normspec2}
to characterize the possible location of the eigenvalues of $\JJ_{A_\alpha}(\bm\pi^*)$. This characterization provides insight into how the eigenvalues vary with the parameter $\alpha$ and partially explains why increasing $\alpha$ tends to slow the convergence.

\begin{proposition}\label{lm:normspec3}
Assume that $\G([n], \M)$ is connected. Then $\JJ_{A_\alpha}(\bm\pi^*)$ has a simple eigenvalue $\lambda =1$ with the remaining eigenvalues lying in the following disk punctured at $1$:
\begin{align}
\Omega(\alpha)\coloneqq \left\{z\in\C: |z-c_\alpha|\leq r_\alpha \right\} \setminus \{1\}, \label{Omega}
\end{align}
with  center $c_\alpha \coloneqq  (1-r_\alpha)$ and radius $r_\alpha\coloneqq (1- \lambda_\alpha )/(2-\lambda_\alpha)\in (0,2/3]$,
where  $\lambda_\alpha\coloneqq \lambda_n(\JJ_{F_\alpha}(\bm\pi^*))\in[-1,1)$ denotes the smallest eigenvalue of $\JJ_{F_\alpha}(\bm\pi^*)$. 
\end{proposition}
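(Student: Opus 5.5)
The plan is to sharpen the M\"obius-transform argument from the proof of \Cref{lm:normspec2}, specialized to the population outcome $\bar\W$ with $\pih=\bm\pi^*$. Simplicity of $\lambda=1$ and its eigenvector $\bm\pi^*$ come directly from \Cref{lm:normspec2}~\ref{i:lm:normspec2}, since connectivity of $\G([n],\M)$ makes $\G([n],\bar\W)$ strongly connected ($\bar w_{ij},\bar w_{ji}>0$ whenever $m_{ij}>0$). The work is in locating the remaining eigenvalues.

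Take an eigenpair $(\lambda,\bm x)$ of $\JJ_{A_\alpha}(\bm\pi^*)$ with $\lambda\neq 1$, and set $\bm z=\bar{\bm D}_\alpha(\bm\pi^*)^{-1}\bm x$. As in \eqref{lambdas}, we have $\bm L^\sym\bm z=(1-\lambda)\bm M^\star\bm z$.

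First I check that $\bm z^{\mathrm H}\bm L^\sym\bm z>0$. If it were zero, then $\bm L^\sym\bm z=0$ because $\bm L^\sym\succeq 0$. Then $(1-\lambda)\bm M^\star\bm z=0$ with $\lambda\neq1$, which contradicts the definiteness of $\bm M^\star$ in \eqref{eq:normspec2:fact3}. Hence $\lambda=1-1/z$ with $z=\bm z^{\mathrm H}\bm M^\star\bm z/\bm z^{\mathrm H}\bm L^\sym\bm z$, exactly as before.

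The key new step is a quantitative lower bound on $\Re(z)$. From the identity in \eqref{eq:pdm}, $\bm M^\star+(\bm M^\star)^\top=\bm L^\sym+\bm I+\JJ^\sym_{F_\alpha,\d}(\bm\pi^*)$. In the population case, \eqref{Jbar} shows the diagonal of $\JJ_{F_\alpha}(\bm\pi^*)$ is $\alpha(\cdot)/(\cdot)\ge 0$, and the similarity by a diagonal matrix preserves the diagonal. Therefore
\[
2\Re(z)=1+\frac{\bm z^{\mathrm H}(\bm I+\JJ^\sym_{F_\alpha,\d})\bm z}{\bm z^{\mathrm H}\bm L^\sym\bm z}\ \ge\ 1+\frac{\|\bm z\|_2^2}{(1-\lambda_\alpha)\|\bm z\|_2^2}.
\]
The last step uses $\bm L^\sym=\bm I-\JJ^\sym_{F_\alpha}\preceq(1-\lambda_\alpha)\bm I$, since $\lambda_\alpha$ is the smallest eigenvalue of the similar symmetric matrix. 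So $\Re(z)\ge a:=\tfrac12\bigl(1+\tfrac{1}{1-\lambda_\alpha}\bigr)>\tfrac12$.

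For the range of $\lambda_\alpha$: $\lambda_\alpha<1$ holds because $\lambda=1$ is simple and $n\ge2$. $\lambda_\alpha\ge-1$ holds because $\JJ_{F_\alpha}(\bm\pi^*)=\bar{\bm K}_\alpha(\bm\pi^*)$ by \eqref{eq:jjfstar}, which is similar to the column-stochastic $\bar{\bm P}_\alpha(\bm\pi^*)$ from \eqref{eq:palpha} and so has spectral radius $1$.

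It remains to compute the image of the half-plane $\{\Re z\ge a\}$ under $z\mapsto 1-1/z$. The map $z\mapsto1/z$ sends it onto the closed disk centered at $1/(2a)$ with radius $1/(2a)$, with $0$ excluded. Hence $\lambda$ lies in the disk of radius $r=1/(2a)$ centered at $1-r$, with the point $1$ excluded.

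A direct simplification gives $1/(2a)=(1-\lambda_\alpha)/(2-\lambda_\alpha)=r_\alpha$, so the center is $c_\alpha=1-r_\alpha$. Since $t\mapsto(1-t)/(2-t)$ is decreasing on $[-1,1)$, we get $r_\alpha\in(0,2/3]$.

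The main obstacle I expect is the lower bound on $\Re(z)$. It needs the nonnegativity of the population Jacobian's diagonal, which fails for general $\W$ and is exactly why the statement is restricted to $\bar\W$. It also needs the observation that the Rayleigh quotient of $\bm L^\sym$ is at most $1-\lambda_\alpha$. The rest is bookkeeping.
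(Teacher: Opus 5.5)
Your proposal is correct and follows the paper's proof essentially step by step. In both, the nonnegative diagonal of $\JJ_{F_\alpha}(\bm\pi^*)$ sharpens the definiteness estimate of \Cref{lm:normspec2} to $2\Re(\bm z^{\mathrm H}\bar{\bm M}^\star\bm z)\geq \|\bm z\|_2^2+\bm z^{\mathrm H}\bar{\bm L}^\sym\bm z$, the Rayleigh quotient of $\bar{\bm L}^\sym$ is bounded by $1-\lambda_\alpha$, and the half-plane $\Re(z)\geq 1/(2r_\alpha)$ is mapped by $z\mapsto 1-1/z$ onto the punctured disk. Your extra checks fill in details the paper leaves implicit: that $\bm z^{\mathrm H}\bar{\bm L}^\sym\bm z>0$ when $\lambda\neq 1$, and that $\lambda_\alpha\in[-1,1)$ via similarity to the column-stochastic $\bar{\bm P}_\alpha$.
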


\begin{proof}
The proof refines the arguments of~\Cref{lm:normspec2} for the expected outcome $\bar{\W}$. 
Since $\M$ is connected, $\bar{\W}$ is strongly connected,
and~\Cref{lm:normspec2} implies that $\JJ_{A_\alpha}(\bm\pi^*)$ has a
simple eigenvalue $\lambda=1$.
Moreover,  we can improve the bound 
for the diagonal entries 
$[\bm J_{F_\alpha}(\pih)]_{ii}>-1$ to 
$[\JJ_{F_\alpha}(\bm\pi^*)]_{ii}\geq 0$
according to the definition~\eqref{Jbar}, and refine~\eqref{eq:pdm}
to 
\begin{equation}
2\mbox{Re}(\bm v^{\mathrm{H}} \bar{\bm M}^\star \bm v) = \bm v^{\mathrm{H}} (2\bm I+ \bar{\bm J}^{\sym}_{F_\alpha,\d}(\bm\pi^*)
- \bar{\bm J}^{\sym}_{F_\alpha}(\bm\pi^*)) \bm v\geq \|\bm v\|_2^2 + \bm
v^{\mathrm H}\bar{\bm L}^\sym\bm v,
\end{equation}
where to differentiate, we use the `bar' to denote the coefficient matrices 
corresponding to $\bar{\W}$.
Therefore, 
for the ratio $z:= \frac{{\bm z}^{\mathrm H}\bar{\bm M}^\star{\bm z}}{{\bm
z}^{\mathrm H}{\bar{\bm L}}^\sym{\bm z}}$ in~\eqref{lambdas},
we can improve its bound $\Re(z)\geq 2$ to 
\begin{align*}
\Re(z)= \Re\left( \frac{\bar{\bm z}^{\mathrm H}\bar{\bm M}^\star\bar{\bm
z}}{\bar{\bm z}^{\mathrm H}\bar{\bm L}^\sym\bar{\bm z}}\right)
\geq\frac{1}{2}\left(1+\frac{\|\bar{\bm z}\|_2^2}
{\bar{\bm z}^{\mathrm H}\bar{\bm L}^\sym\bar{\bm z}}\right)\geq\frac{1}{2}\left(1+\frac{1}{\lambda_1(\bar{\bm L}^\sym)}\right)
= \frac{1}{2r_\alpha},
\end{align*}
where in the last equation we used $\lambda_1(\bar{\bm L}^\sym)=1- \lambda_n(\JJ_{F_\alpha}(\bm\pi^*))$ by~\eqref{eq:normspec2:fact2}
and the definition of $r_\alpha$ in~\eqref{Omega}. Finally, by the M\"obius transform $z\mapsto \lambda\equiv 1-1/z$
from~\eqref{lambdas}, this half plane $\Re(z)\geq 1/(2r_\alpha)$ for $z\in\mathbb C$ is mapped to the disc in $\Omega(\alpha)$~\eqref{Omega} for all eigenvalues satisfying $\lambda\neq 1$.
\end{proof}

Observe that the condition $r_\alpha\leq 2/3$ implies $\Re(z)\geq -1/3$ for all $z\in\Omega(\alpha)$ in~\eqref{Omega}.
Therefore, the real parts of all eigenvalues of $\JJ_{A_\alpha}(\bm\pi^*)$ are bounded from below by $-1/3$. 
Moreover, as $\alpha$ increases,~\Cref{thm:comp} indicates $\lambda_n(\JJ_{F_\alpha}(\bm\pi^*))\to 1$ monotonically.
Consequently,  $r_\alpha \to 0$ and $c_\alpha \to 1$, and hence the region $\Omega(\alpha)$ shrinks monotonically towards the single point $\{1\}$. 
See~\Cref{fig:omegaa} for an illustration of $\Omega(\alpha)$ for different values of $\alpha$.

\begin{figure}[htbp]
\centering
\includegraphics[width=0.4\textwidth, trim={0.2cm 1cm 3cm 2cm}, clip]{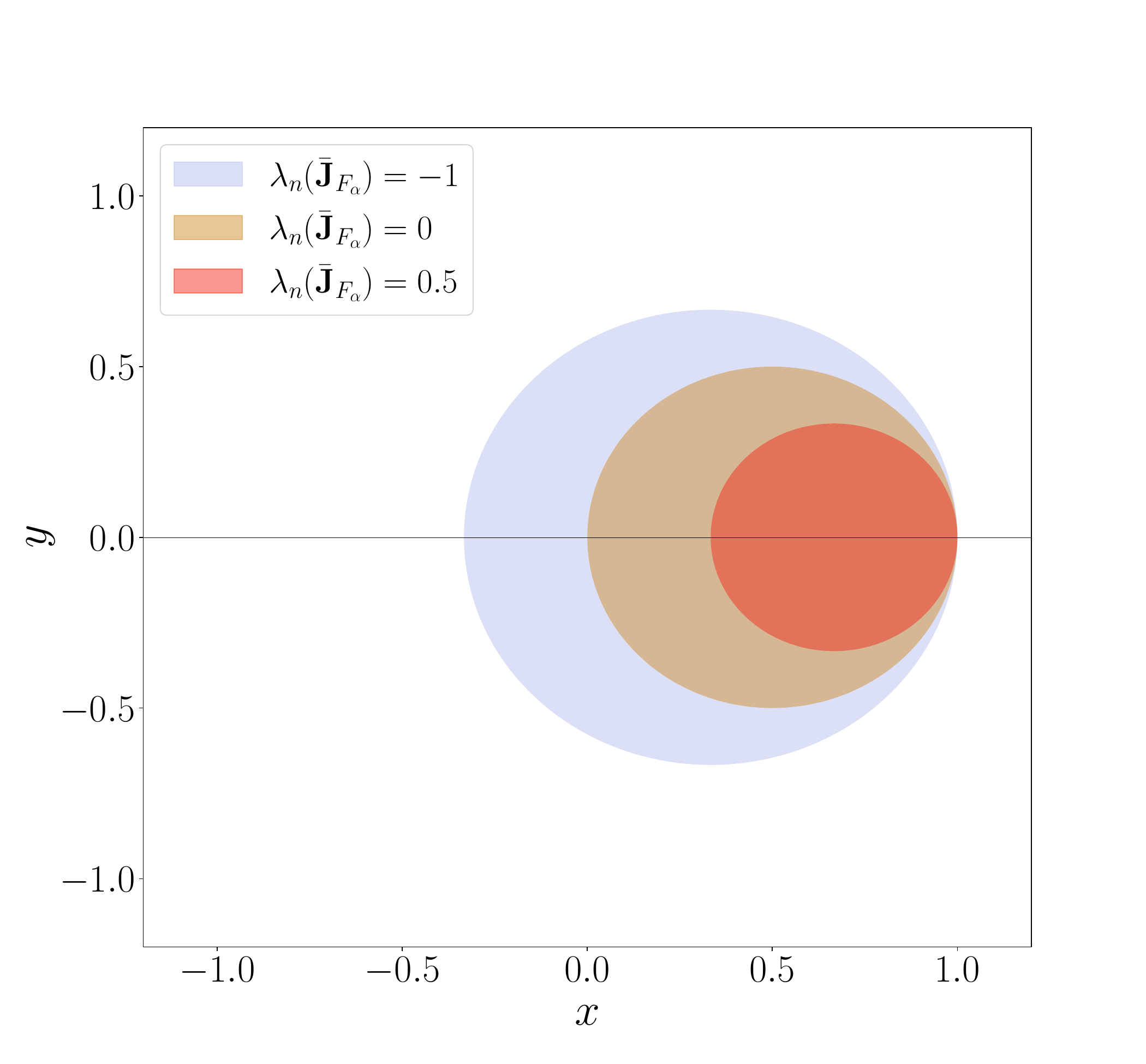}
\caption{Visualization of $\Omega(\alpha)$ for three increasing values
of $\alpha$ with the corresponding $\lambda_n(\JJ_{F_\alpha}(\bm\pi^*))=-1, 0, 0.5$.}
\label{fig:omegaa}
\end{figure}

\section{Asymptotic approximation}\label{sec:app5}
Several preceding convergence results are established for expected outcomes under the BT model. This idealization simplifies the theoretical analysis and enables a rigorous characterization of the local convergence behavior of Newman's $\alpha$-scheme.  
The strength of these conclusions relies on the premise that the local convergence factors $\rho_{\mathrm{sync}}$ and $\rho_{\mathrm{async}}$ are close to their counterparts $\brhosync$ and $\brhoasync$ in the expected model, respectively. 
In this section, we establish such approximation results in both the conventional fixed-$n$ regime and the large-$n$ regime. 

\subsection{Fixed-$n$ regime}

Recall the BT model with outcomes $\{w_{ij}\}$ subject to the binary distribution in~\eqref{bt-model}.
One may expect the observed outcomes $w_{ij}$ to approximate their expectations $\bar w_{ij}$ in \eqref{eq:barw0}
as the numbers of comparisons $m_{ij}$  become sufficiently large,
and consequently,  the convergence analysis established for $\bar w_{ij}$ in~\Cref{sec:ana} 
approximates that for the observed $w_{ij}$. 

To formalize such approximations, we consider comparison graphs $\G([n], \M)$ in the {\em fixed-$n$ regime},
where the number $n$ of the objects remains fixed and the minimum number of pairwise comparisons, measured by
\begin{align}\label{myL}
L \coloneqq \min_{1\leq i,j\leq n, \atop m_{ij}\neq 0} m_{ij},
\end{align}
is sufficiently large. The definition of $L$ also accommodates prescribed structures of the comparison graph, where certain pairs $i$, $j$ may satisfy  $m_{ij}\equiv 0$ and hence are not compared.

In this fixed-$n$ regime, the following consistency result holds for the MLE~\eqref{mle0}. The proof is deferred to~\Cref{app:001}, which is based on a chaining argument and extends an existing result for general logistic regression \citep{fahrmeir1985consistency} that requires additional balancing conditions.

\begin{theorem}[Consistency of the MLE in the fixed-$n$ regime]\label{thm:asymp1}
Consider connected comparison graphs  $\G([n], \M)$ with a fixed $n$
and associated  $L$ defined in \eqref{myL}.
As $L\to\infty$, the MLE $\pih$ in~\eqref{mle0}, based on  the observed $\W$ in~\eqref{bt-model},
exists  uniquely  with probability tending to one, and 
$\pih - \bm\pi^*  \xrightarrow{\P} \bm 0$.  
\end{theorem}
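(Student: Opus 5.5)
The plan is to combine the existence criterion recalled in the introduction (the MLE exists uniquely iff the graph of $\W$ is strongly connected) with a local argument on the concave log-likelihood in the reparametrization $\theta_i=\log\pi_i$, $\sum_i\theta_i=0$. The proof has two parts: existence with high probability, and consistency.

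\textbf{Existence.} Fix an edge $(i,j)$ with $m_{ij}>0$. We have $m_{ij}\ge L$, and $\P(i\succ j)=\pi_i^*/(\pi_i^*+\pi_j^*)\in(0,1)$ is fixed. Hence
\[
\P(w_{ij}=0)+\P(w_{ji}=0)\le 2\,(1-p_{\min})^{L}\to0,
\]
where $p_{\min}$ is the smallest win probability over the finitely many edges. A union bound over the at most $n^2$ edges shows that, with probability tending to one, every edge of $\M$ is traversed in both directions by $\W$. Since $\G([n],\M)$ is connected, the graph of $\W$ is then strongly connected, so $\pih$ exists uniquely.

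\textbf{Consistency.} Work with the normalized log-likelihood $\ell_L(\bm\theta)=\sum_{i\ne j}(w_{ij}/m_{ij})\,\tilde m_{ij}\log\sigma(\theta_i-\theta_j)$, with weights $\tilde m_{ij}=m_{ij}/M$ and $M=\max m_{ij}$. The difficulty is that the $m_{ij}$ may grow at very different rates, so there need not be a limiting objective. This is the balancing issue in \citep{fahrmeir1985consistency}, and I would avoid it by a chaining argument rather than a global limit.

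For each edge, the weak law gives $w_{ij}/m_{ij}\xrightarrow{\P}p_{ij}\coloneqq\pi_i^*/(\pi_i^*+\pi_j^*)$ uniformly over the finitely many edges. The stationarity equations~\eqref{eq:zermelo} can be written as
\[
\sum_j m_{ij}\bigl(\hat p_{ij}-\sigma(\hat\theta_i-\hat\theta_j)\bigr)=0,\qquad \hat p_{ij}\coloneqq w_{ij}/m_{ij}.
\]
Consider the set $\mathcal B_\delta=\{\bm\theta:\|\bm\theta-\bm\theta^*\|_\infty\le\delta\}$. On the event that all $|\hat p_{ij}-p_{ij}|<\eta$, I show that the concave objective $\ell$ has strictly smaller value on $\partial\mathcal B_\delta$ than at $\bm\theta^*$. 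This places the unique maximizer inside $\mathcal B_\delta$.

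\textbf{Key step and main obstacle.} The crux is this boundary comparison with arbitrary weights $m_{ij}$. I would first try a maximum-principle-type argument directly on the score equations. Let $k$ attain $\max_i(\hat\theta_i-\theta_i^*)=\Delta>0$. Then for every neighbor $j$ of $k$ we have $(\hat\theta_k-\hat\theta_j)-(\theta_k^*-\theta_j^*)\ge0$, so $\sigma(\hat\theta_k-\hat\theta_j)\ge p_{kj}$. Since the score at $k$ is zero, and each summand $\hat p_{kj}-\sigma(\hat\theta_k-\hat\theta_j)\le\eta$, every such summand must be within $\eta$ of zero up to the weighted average. This gives
\[
\min_{j\sim k}\frac{m_{kj}}{\sum_\ell m_{k\ell}}\bigl(\sigma(\hat\theta_k-\hat\theta_j)-p_{kj}\bigr)\le\eta,
\]
but the weight ratio is uncontrolled. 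To sidestep this, I would instead use the fact that at the maximizing vertex all summands $\sigma(\cdot)-p_{kj}$ are nonnegative and at least one $\hat p_{kj}-p_{kj}$ term has bounded magnitude. Every term is then bounded by $\eta$ (the nonnegative parts of the summands sum to at most $\eta$ times the total weight, and each is nonnegative). Hence each neighbor $j$ of $k$ has deviation $\ge\Delta-c\eta$, for a constant $c$ depending on the Lipschitz lower bound of $\sigma$ on a compact set. Propagating this along the connected graph (chaining, at most $n$ steps) shows that all deviations are $\ge\Delta-nc\eta$. The constraint $\sum_i(\hat\theta_i-\theta_i^*)=0$ then forces $\Delta\le nc\eta$.

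The compactness needed for the Lipschitz constant comes from a preliminary a priori bound: the same propagation argument with $\sigma$ bounded away from $0$ and $1$ on the event above bounds $\|\hat{\bm\theta}\|_\infty$. Letting $\eta\to0$ yields $\pih-\bm\pi^*\xrightarrow{\P}\bm0$.
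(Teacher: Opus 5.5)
Your existence argument is fine and matches the paper's: a union bound shows every edge of $\M$ is won in both directions, which gives strong connectivity. The consistency argument, however, fails at exactly the point you identified as the obstacle.

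With $d_{kj}\coloneqq\sigma(\hat\theta_k-\hat\theta_j)-p_{kj}\ge 0$ and $e_{kj}\coloneqq\hat p_{kj}-p_{kj}$, the score equation at the maximizing vertex $k$ is $\sum_j m_{kj}d_{kj}=\sum_j m_{kj}e_{kj}\le\eta\sum_j m_{kj}$. A weighted average of nonnegative numbers being at most $\eta$ does not bound each of them by $\eta$. It only gives $d_{kj}\le\eta\sum_\ell m_{k\ell}/m_{kj}$, which is the uncontrolled ratio again, so ``every term is then bounded by $\eta$'' is a non sequitur.

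This is not an artifact of the sup-norm event. Take $m_{k1}=L$ and $m_{k2}=L^3$. The sampling noise $m_{k2}e_{k2}$ on the heavy edge is of order $L^{3/2}$ and swamps the signal $m_{k1}d_{k1}=L\,d_{k1}$ on the light edge. The single-vertex equation then yields only $d_{k1}\lesssim L^{1/2}$, which is vacuous. Your propagation to non-maximal vertices (where the summands are not even sign-definite) and your a priori bound on $\|\hat{\bm\theta}\|_\infty$ rest on the same vertex-by-vertex reasoning and fail for the same reason. The boundary comparison on $\partial\mathcal B_\delta$ is announced but never carried out.

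The missing idea, which the paper's chaining argument supplies, is to sum the score equations over a super-level set $S$ of the deviations $\Delta_i\coloneqq\hat\theta_i-\theta_i^*$ instead of using a single vertex. Because $d_{ji}=-d_{ij}$ and $e_{ji}=-e_{ij}$, all edges inside $S$ cancel exactly, leaving
\[
\sum_{i\in S,\,j\notin S}m_{ij}\,d_{ij}=\sum_{i\in S,\,j\notin S}m_{ij}\,e_{ij}\le\eta\,m_{S,\partial S},\qquad m_{S,\partial S}\coloneqq\sum_{i\in S,\,j\notin S}m_{ij}>0 .
\]
Suppose $\Delta_i-\Delta_j\ge\tau$ for all $i\in S$, $j\notin S$, with $\tau\le 1/2$. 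Monotonicity of $\sigma$ gives $d_{ij}\ge\sigma(\theta_i^*-\theta_j^*+\tau)-\sigma(\theta_i^*-\theta_j^*)\ge c(\kappa)\tau$ on every cut edge, and no a priori bound on $\hat{\bm\theta}$ is needed. The common factor $m_{S,\partial S}$ now cancels, so $\tau\le\eta/c(\kappa)$. Heavy edges are absorbed into the interior of level sets rather than entering as noise.

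Apply this to each of the $n-1$ gaps between consecutive sorted deviations. Once $\eta<c(\kappa)/2$, every gap is at most $\eta/c(\kappa)$, and $\sum_i\Delta_i=0$ then gives $\|\Delta\|_\infty\le(n-1)\eta/c(\kappa)$.

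The paper runs exactly this argument on its nested sets $B_k$. It bounds the cut noise by Hoeffding's inequality relative to $\sqrt{m_{S,\partial S}}$, with a union bound over all $2^n$ subsets, which gives the explicit rate $(n-1)\tau$ with $\tau\asymp\sqrt{n\log L/L}$. Once the level-set summation is in place, your edgewise event $\max_{i\sim j}|\hat p_{ij}-p_{ij}|<\eta$ already suffices for consistency.
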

\begin{proof}
See~\Cref{app:001}.
\end{proof}

Regarding the synchronous update, we can establish the asymptotic approximation of the Jacobian $\bm J_{F_\alpha}(\pih)$ in~\eqref{jac:Jfp} by its population counterpart $\JJ_{F_\alpha}(\bm\pi^*)$ in~\eqref{eq:jjfstar}, corresponding respectively to the observed $\W$ and expected $\bar\W$ outcomes. The following result can be established by observing that 
$\pih - \bm\pi^* \xrightarrow{\P}\bm 0$ from~\Cref{thm:asymp1}, 
$\bm W -\bar{\bm W} \xrightarrow{\P} 0$, and the continuity of the matrix $\bm J_{F_\alpha}(\pih)$ with respect to $\pih$ and $\W$; a detailed proof is deferred to~\Cref{app:002}.

\begin{theorem}\label{thm:asymp}
Following~\Cref{thm:asymp1},  as $L\to\infty$, we have
\begin{enumerate}[label=(\alph*)]
\item\label{0021} $\bm{J}_{F_\alpha}(\pih) - \JJ_{F_\alpha}(\bm\pi^*) \xrightarrow{\P} \bm 0$, and
\item\label{0022} 
$\rhosync(\alpha)-\brhosync(\alpha)\xrightarrow{\P} 0$
for convergence factors of the synchronous updates in~\eqref{eq:rhosync}~and~\eqref{eq:brhosync}.
\end{enumerate}
\end{theorem}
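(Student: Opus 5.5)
The plan is to prove part (a) by a continuity argument and then obtain part (b) from (a) through continuity of eigenvalues. For (a), write each entry of $\bm J_{F_\alpha}(\pih)$ in \eqref{jac:Jfp} as a function $g_{ij}(\bm\pi,\W)$ of the strength vector and the outcomes. Dividing numerator and denominator by $L$, rescale the outcomes to $\tilde w_{ij}\coloneqq w_{ij}/L$. The entries in \eqref{jac:Jfp} are ratios that are homogeneous of degree zero in $\W$, so this rescaling leaves the Jacobian unchanged. Along a subsequence we may assume $m_{ij}/L\to \tau_{ij}\in[1,\infty]$ for $m_{ij}\neq 0$; to avoid the case $\tau_{ij}=\infty$, I would instead normalize by $m_i\coloneqq\max_\ell m_{i\ell}$ row by row. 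The law of large numbers (Chebyshev, since $\mathrm{Var}(w_{ij})\le m_{ij}/4$) gives $(w_{ij}-\bar w_{ij})/m_{ij}\xrightarrow{\P}0$ uniformly over the finitely many pairs. This holds because $n$ is fixed and each $m_{ij}\ge L\to\infty$.

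The key estimate is to write $\bm J_{F_\alpha}(\pih)-\JJ_{F_\alpha}(\bm\pi^*)$ entrywise as a difference of ratios $N/D-\bar N/\bar D$. Here $N,D$ are the numerators and denominators in \eqref{jac:Jfp} evaluated at $(\pih,\W)$, and $\bar N,\bar D$ are the same quantities at $(\bm\pi^*,\bar\W)$, which give \eqref{Jbar}. Then
\[
\left|\frac{N}{D}-\frac{\bar N}{\bar D}\right|\le \frac{|N-\bar N|}{D}+\frac{\bar N\,|D-\bar D|}{D\bar D}.
\]
Each of $N,D,\bar N,\bar D$ is a linear combination of $w_{i\ell}$ or $\bar w_{i\ell}$ whose coefficients are continuous in $\bm\pi$ and bounded on a neighbourhood of $\bm\pi^*$. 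By \Cref{thm:asymp1}, $\pih\in B(\bm\pi^*,\delta)$ with probability tending to one. On that event, $|N-\bar N|$ is bounded by a sum of two contributions: $\sum_\ell m_{i\ell}\cdot o_\P(1)$ from replacing $\W$ by $\bar\W$, and $\sum_\ell m_{i\ell}\cdot \omega(\|\pih-\bm\pi^*\|)$ from replacing $\pih$ by $\bm\pi^*$, where $\omega$ is a modulus of continuity.

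The denominators need a matching lower bound. Since $\pi^*$ is fixed and positive, $\bar D\ge c\sum_\ell m_{i\ell}$ for some constant $c>0$. For instance, $\bar D=\sum_\ell m_{i\ell}(\alpha\pi_i^*+\pi_\ell^*)/(\pi_i^*+\pi_\ell^*)^2$ in the population form, and it is comparable to $\sum_\ell m_{i\ell}$ even at $\alpha=0$. Because $D-\bar D=o_\P(\sum_\ell m_{i\ell})$, it follows that $D\ge (c/2)\sum_\ell m_{i\ell}$ with probability tending to one. Each error term is then $o_\P(1)$, which proves (a). The same ratio argument applies to the diagonal entries, where the deviation from $1$ is again a ratio of this type.

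The main obstacle is exactly this uniformity. The $m_{i\ell}$ may be wildly unbalanced across $\ell$, so one cannot appeal to plain continuity of a fixed map. The row-wise normalization by $\sum_\ell m_{i\ell}$ handles it, since all errors are relative to that same scale. I would also need the identity used to simplify \eqref{jac:J} to \eqref{jac:Jfp}, which requires $\pih$ to be the MLE. Because of this, I would work from \eqref{jac:Jfp} directly rather than from \eqref{jac:J}.

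For (b), the eigenvalues of the matrices in (a) are real. They are continuous functions of the matrix entries, with ordered eigenvalues being Lipschitz (Weyl) after the symmetrization \eqref{eq:jsym}. The diagonal scalings $\bm D_\alpha(\pih)$ also converge in probability, but the simplest route is to avoid them: ordered real eigenvalues of any matrix with real spectrum depend continuously on its entries, by continuity of polynomial roots. Hence $\lambda_2$ and $\lambda_n$ of $\bm J_{F_\alpha}(\pih)$ converge in probability to those of $\JJ_{F_\alpha}(\bm\pi^*)$. By \Cref{lm:normspec}, $\lambda_1=1$ is simple in both cases, so $\lambda_2$ remains the largest eigenvalue below $1$. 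Since $\max\{\cdot,\cdot\}$ is continuous, \eqref{eq:rhosync} and \eqref{eq:brhosync} give $\rhosync(\alpha)-\brhosync(\alpha)\xrightarrow{\P}0$ by the continuous mapping theorem.
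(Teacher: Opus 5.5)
Your part (a) is correct and is essentially the paper's argument. Both proceed entrywise for fixed $n$, normalize row by row by the comparison count of row $i$, and use the law of large numbers with a union bound over the finitely many pairs. Both then combine consistency of $\pih$ from \Cref{thm:asymp1} with a uniform lower bound on the denominators. The differences are cosmetic. You work from the fixed-point form \eqref{jac:Jfp}, whose numerators involve $w_{ij}+w_{ji}=m_{ij}$ and are therefore deterministic, and you bound $N/D-\bar N/\bar D$ directly. The paper instead uses the general formula \eqref{jac:J} and splits through the intermediate matrix $\bm J_{F_\alpha}(\bm\pi^*)$, using a Lipschitz bound on the compact set $\mathcal B$.

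Part (b) has a gap at exactly the point the paper flags. $\JJ_{F_\alpha}(\bm\pi^*)$ is not a fixed matrix. Its entries depend on the ratios $m_{ij}/\sum_\ell m_{i\ell}$, which may keep changing as $L\to\infty$. So (a) only gives $\bm X_L-\bm Y_L\xrightarrow{\P}\bm 0$ with a moving deterministic $\bm Y_L=\JJ_{F_\alpha}(\bm\pi^*)$. The continuous mapping theorem needs a fixed limit, and pointwise continuity of the ordered eigenvalues at a moving point does not yield $\lambda_k(\bm X_L)-\lambda_k(\bm Y_L)\xrightarrow{\P}0$.

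What is missing is uniform continuity, and it is easy to supply. By \eqref{Jbar}, the off-diagonal entries of $\JJ_{F_\alpha}(\bm\pi^*)$ are at most $\pi_i^*/(\alpha\pi_i^*+\pi_j^*)\le\kappa$ and the diagonal ones at most $1$, uniformly in $L$. Hence, with probability tending to one, both $\bm J_{F_\alpha}(\pih)$ and $\JJ_{F_\alpha}(\bm\pi^*)$ lie in a fixed compact subset of the closed set of real matrices with real spectrum. On that set each ordered eigenvalue map $\lambda_k(\cdot)$ is uniformly continuous. Alternatively, argue along subsequences on which $\JJ_{F_\alpha}(\bm\pi^*)$ converges; the subsequence idea you floated in (a) is exactly what is needed here.

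The paper obtains the same effect quantitatively. It combines the Ostrowski--Elsner matching-distance bound \eqref{eq:elsner} with $\|\JJ_{F_\alpha}(\bm\pi^*)\|_2\le\kappa n$, which also gives an explicit, though $n$-dependent, rate. With this step added, your conclusion via \Cref{lm:normspec} and \eqref{eq:rhosync}--\eqref{eq:brhosync} goes through.
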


\begin{proof}
See \Cref{app:002}. 
\end{proof}

Similarly, for the asynchronous update, we can establish the asymptotic approximation of 
the corresponding Jacobian $\bm J_{A_\alpha}(\pih)$ in~\eqref{crucial} by its population counterpart 
$\JJ_{A_\alpha}(\bm\pi^*)$ in~\eqref{eq:jjabar}
as follows. 

\begin{theorem}\label{lm:gs}
Following~\Cref{thm:asymp1},  as $L\to\infty$, we have
\begin{enumerate}[label=(\alph*)]
\item $\bm{J}_{A_\alpha}(\pih) - \JJ_{A_\alpha}(\bm\pi^*) \xrightarrow{\P} \bm 0$, and
\item  $\rhoasync(\alpha)-\brhoasync(\alpha)\xrightarrow{\P} 0$ 
for convergence factors of the asynchronous updates  in~\eqref{eq:rhoasync}~and~\eqref{eq:brhoasync}.
\end{enumerate}
\end{theorem}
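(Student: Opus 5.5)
The plan is to deduce both parts from \Cref{thm:asymp}\ref{0021} by a continuity argument, using the representation \eqref{crucial} of $\bm J_{A_\alpha}(\pih)$ and its population counterpart \eqref{eq:jjabar}.

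For part (a), write $\bm J\coloneqq\bm J_{F_\alpha}(\pih)$ and $\JJ\coloneqq\JJ_{F_\alpha}(\bm\pi^*)$. Extracting the strictly lower, diagonal and strictly upper parts is a linear, norm-nonincreasing operation, so \Cref{thm:asymp}\ref{0021} gives $\bm J_{\l}-\JJ_{\l}\xrightarrow{\P}\bm 0$ and $(\bm J_{\d}+\bm J_{\u})-(\JJ_{\d}+\JJ_{\u})\xrightarrow{\P}\bm 0$. The matrix $\bm I-\JJ_{\l}$ is unit lower triangular, hence nonsingular with determinant one. The map
\[
(\bm X,\bm Y)\mapsto(\bm I-\bm X)^{-1}\bm Y
\]
is continuous on the open set where $\bm I-\bm X$ is invertible, and this set contains $\bm X=\JJ_{\l}$. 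In fact $\bm I-\bm X$ is always invertible for strictly lower triangular $\bm X$. By the continuous mapping theorem (fixed $n$, fixed $\alpha$, deterministic limit), $\bm J_{A_\alpha}(\pih)\xrightarrow{\P}\JJ_{A_\alpha}(\bm\pi^*)$. Equivalently, one could use the product form \eqref{J_A_new}, each factor of which is a continuous function of one row of $\bm J$.

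For part (b), the difficulty is that $\rhoasync$ is not simply the spectral radius: it is the largest modulus among eigenvalues other than the simple eigenvalue $1$. I would proceed as follows.

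1. By \Cref{lm:normspec2} (applied to $\bar\W$, as in \Cref{lm:normspec3}), $\JJ_{A_\alpha}(\bm\pi^*)$ has the simple eigenvalue $1$. All its other eigenvalues lie in a disc of radius $\brhoasync(\alpha)<1$.
2. Fix $\delta>0$ with $\brhoasync(\alpha)+\delta<1-\delta$. By continuity of eigenvalues as roots of the characteristic polynomial (e.g.\ Ostrowski--Elsner, or Rouch\'e applied to the determinant), there is $\eta>0$ such that every matrix $\bm B$ with $\|\bm B-\JJ_{A_\alpha}(\bm\pi^*)\|<\eta$ has exactly one eigenvalue in $\{|z-1|<\delta\}$. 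The remaining $n-1$ eigenvalues of $\bm B$ then lie within $\delta$ of the remaining eigenvalues of $\JJ_{A_\alpha}(\bm\pi^*)$, counted with multiplicity.
3. On the event that the MLE exists and $\|\bm J_{A_\alpha}(\pih)-\JJ_{A_\alpha}(\bm\pi^*)\|<\eta$, whose probability tends to one, \Cref{lm:normspec2} says the eigenvalue of $\bm J_{A_\alpha}(\pih)$ near $1$ is exactly $1$. It follows that
\[
|\rhoasync(\alpha)-\brhoasync(\alpha)|\le\delta .
\]
4. Since $\delta$ is arbitrary, this yields convergence in probability.

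The step I expect to be the main obstacle is this eigenvalue-matching in part (b): the max over the non-unit eigenvalues is a discontinuous functional of the matrix in general. It becomes continuous here only because the unit eigenvalue is simple in the limit and is exactly $1$ in the perturbed matrix as well. The same argument also settles the synchronous case in \Cref{thm:asymp}\ref{0022}.
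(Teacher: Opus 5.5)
\textbf{Gap: the population matrices depend on $L$.} Your outline mirrors the paper's proof (part (a) by continuity from \Cref{thm:asymp}\ref{0021}, part (b) by eigenvalue perturbation), but both parts treat $\JJ_{A_\alpha}(\bm\pi^*)$ as a fixed limit (``deterministic limit''). In the fixed-$n$ regime it is not fixed. Only $L=\min m_{ij}$ is required to grow, and the ratios among the $m_{ij}$ are unrestricted. So $\JJ_{F_\alpha}(\bm\pi^*)$, $\JJ_{A_\alpha}(\bm\pi^*)$ and $\brhoasync(\alpha)$ all change with $L$; the paper flags exactly this in its proof of \Cref{thm:asymp}\ref{0022}.

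\textbf{Part (a).} This is harmless once stated. On strictly lower triangular $\bm X$ your map equals the polynomial $\sum_{k=0}^{n-1}\bm X^k\bm Y$, so it is uniformly continuous on bounded sets. The entries of $\JJ_{F_\alpha}(\bm\pi^*)$ are bounded uniformly in $L$, which finishes (a).

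\textbf{Part (b).} Here steps 2--3 fail.
\begin{enumerate}
\item The radius $\eta$ from qualitative eigenvalue continuity depends on the matrix $\JJ_{A_\alpha}(\bm\pi^*)$, hence on $L$, and may shrink to $0$. The event $\|\bm J_{A_\alpha}(\pih)-\JJ_{A_\alpha}(\bm\pi^*)\|<\eta$ then need not have probability tending to one.
\item The condition $\brhoasync(\alpha)+\delta<1-\delta$ assumes a spectral gap bounded away from zero uniformly in $L$, and this fails in general. Take two pairs $\{1,2\}$ and $\{3,4\}$ with $m_{12}=m_{34}=L^2$, joined by $m_{23}=L$. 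The population Jacobians become nearly block diagonal, so $1$ is nearly a double eigenvalue and $\brhoasync(\alpha)\to1$.
\end{enumerate}

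\textbf{The fix, as in the paper.} You need a perturbation bound that is uniform over the $L$-dependent family. The paper uses the Ostrowski--Elsner inequality \eqref{eq:elsner}, which needs only a uniform bound on $\|\JJ_{A_\alpha}(\bm\pi^*)\|_2$. That bound follows from $(\bm I-\JJ_{F_\alpha,\l})^{-1}=\sum_{k=0}^{n-1}\JJ_{F_\alpha,\l}^k$ and the bounded entries, and it is the ``only essential difference'' from the synchronous proof. Combined with part (a), it gives $\mathrm{md}(\bm J_{A_\alpha}(\pih),\JJ_{A_\alpha}(\bm\pi^*))\xrightarrow{\P}0$.

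Simplicity of the eigenvalue $1$ in a limit, which you identify as the crux, is then not needed. On the event that the MLE exists, both matrices have $1$ as a simple eigenvalue and all other eigenvalues in the open unit disc (\Cref{lm:normspec2}). Suppose an optimal matching pairs each $1$ with a non-unit eigenvalue of the other matrix. Re-pairing those two non-unit eigenvalues with each other costs at most $2\,\mathrm{md}$, so $|\rhoasync(\alpha)-\brhoasync(\alpha)|\le 2\,\mathrm{md}\xrightarrow{\P}0$.

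If you read the regime as $m_{ij}=Lc_{ij}$ with fixed $c_{ij}$, the population matrices are scale invariant and your argument works as written. That setting is narrower than the theorem.
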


\begin{proof}
See \Cref{app:0066}. 
\end{proof}

\subsection{Large-$n$ regime}\label{sec:div}

In modern data science applications, comparison graphs are typically vast and sparse~\citep{fang2026recent} and exhibit particular structures, such as expander, clustered, or bipartite graphs~\citep{goldenberg2010survey}. It is therefore necessary to consider the regime $n\to\infty$ while also accounting for these structures. With this in mind, in this section we consider a large-$n$ regime in which the comparison graph $\G([n], \M)$ is generated from the \textit{stochastic block model} (SBM)~\citep{holland1983stochastic}, which is flexible enough to capture certain graph structures while remaining tractable for rigorous analysis; see, e.g.,~\cite{abbe2018community, han2025unified}.

In our study, we focus on a two-community SBM with equal-sized groups, denoted by $\sbm(n, p, q, L)$, in which within-community and cross-community edges of fixed weight $L$ are generated independently with 
probabilities $p$ and $q$, respectively. That is, for $\G([n], \M) \sim  \sbm(n, p, q, L)$, the number of comparisons between objects $i,j$ is 
\begin{equation}\label{eq:sbm}
\text{$m_{ij} = L$ with probability $r$, and $0$ otherwise},
\end{equation}
where $r= p$, if $i$ and $j$ belong to the same community, and $r=q$ otherwise. 
Special cases of $\sbm(n, p, q, L)$  include the Erd\H{o}s--R\'enyi model obtained when $p=q$, which is commonly used in the analysis of the BT model in the large-$n$ regime \citep{Negahban2012RankCR, agarwal2018accelerated, chen2019spectral, han2020asymptotic, chen2022optimal, gao2023uncertainty, chen2022partial}. 
More generally, $p\gg q$ gives rise to cluster structures, and $q \gg p$ to near-bipartite structures.

For the BT model~\eqref{bt-model} under $\G([n], \mathcal{M}) \sim \sbm(n, p, q, L)$, the following uniform consistency for the MLE can be established. This result essentially follows from Theorem 4 and Proposition 2 of~\cite{han2023general},
where the special case $L=1$ was considered, and the extension to general $L$ is straightforward.
We omit the proof and refer to~\cite{han2023general} for details.

\begin{theorem}[Consistency of the MLE in the large-$n$ regime]\label{thm:asymp2}
Consider the BT model~\eqref{bt-model} with $\G([n], \M) \sim  \sbm(n, p, q, L)$ and true strength vectors $\bm\pi^*\in\R^n_{+}$. As $n$ varies, we assume the corresponding $p, q$ satisfy $\min\{p, q\}\cdot n/(\log n)^3\to\infty$ as $n\to\infty$, and each $\bm\pi^*$ has a dynamic range $\kappa_n:=\max_{i,j} \pi^*_i/\pi_j^*\leq \kappa$, for some constant $\kappa$ independent of $n$. Then, for all sufficiently large $n$, the MLE $\pih$ in~\eqref{mle0} uniquely exists with probability at least $1-n^{-2}$, 
and it satisfies 
\begin{align*}
\|\pih - \bm\pi^*\|_\infty\leq c(\kappa)\sqrt{\frac{(\log n)^3}{np_{\min}L}},  
\end{align*}
where $p_{\min}=\min\{p, q\}$ and $c(\kappa)>0$ is a constant that depends only on $\kappa$ 
(and is independent of $n$, $p$, $q$, and $L$).
\end{theorem}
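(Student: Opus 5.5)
The plan is to derive the statement from the general-graph results of~\cite{han2023general} (their Theorem~4 and Proposition~2). I would first condition on the random comparison graph and check that $\sbm(n,p,q,L)$ satisfies, with high probability, the deterministic graph conditions those results require. I would then redo their argument for the BT model with $L$ independent comparisons per edge. A union bound over the graph event and the outcome event gives the probability $1-n^{-2}$.

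\textbf{Step 1 (graph regularity).} For $\G([n],\M)\sim\sbm(n,p,q,L)$ with $p_{\min}n/(\log n)^3\to\infty$, I will show the following with probability at least $1-\tfrac12 n^{-2}$:
\begin{itemize}
\item every vertex degree (counted in edges, before multiplying by $L$) lies in $[c_1 np_{\min}, c_2 n\max\{p,q\}]$, by Chernoff bounds and a union bound over $i$;
\item the unweighted graph Laplacian $\bm{\mathcal L}$ has $\lambda_{n-1}(\bm{\mathcal L})\geq c_3 np_{\min}$.
\end{itemize}
For the spectral gap I would write $\bm{\mathcal L}=\E\bm{\mathcal L}+(\bm{\mathcal L}-\E\bm{\mathcal L})$. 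The expected Laplacian of the two-block model has nonzero eigenvalues of order $n(p+q)/2$ and $nq$, both at least $c\,np_{\min}$. The fluctuation is controlled in operator norm by $O(\sqrt{n\max\{p,q\}\log n})$ via matrix Bernstein, which is $o(np_{\min})$ under the growth assumption. These are exactly the conditions (degree homogeneity, spectral gap, and $np_{\min}\gg(\log n)^3$) under which Proposition~2 of~\cite{han2023general} certifies their general theorem; in particular, connectivity of $\M$ follows.

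\textbf{Step 2 (existence of the MLE).} Conditional on a graph satisfying Step~1, I will show that the outcome graph $\W$ is strongly connected with probability at least $1-\tfrac12 n^{-2}$. For any cut $(\mathcal A,\mathcal A^\complement)$, failure requires every comparison across the cut to be won by the same side. Each such comparison is won by the other side with probability at least $1/(1+\kappa)$. The cut contains at least $c\,|\mathcal A|\,np_{\min}L$ comparisons, by the degree and expansion bounds from Step~1. Summing over cut sizes, with $\binom{n}{k}$ choices of $\mathcal A$ for $|\mathcal A|=k$, gives a super-polynomially small failure probability because $np_{\min}\gg\log n$.

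\textbf{Step 3 (the $\ell_\infty$ rate with general $L$; the main obstacle).} With $L=1$, Theorem~4 of~\cite{han2023general} gives $\|\pih-\bm\pi^*\|_\infty\lesssim\sqrt{(\log n)^3/(np_{\min})}$. The idea for general $L$ is to view $w_{ij}$ as a sum of $L$ i.i.d.\ Bernoulli variables. Then the score and the Hessian of $l$ at $\bm\pi^*$ scale as follows:
\begin{itemize}
\item the Hessian scales exactly by $L$ (its weighted Laplacian is $L$ times the $L=1$ one);
\item each score coordinate has variance proportional to $L\cdot\deg(i)$ and uniformly bounded increments, so Bernstein gives $|\nabla_i l(\bm\pi^*)|\lesssim\sqrt{L\,np_{\min}\log n}$.
\end{itemize}
Running their argument with these two scalings produces an extra factor $\sqrt{L}/L=L^{-1/2}$, which is the claimed bound.

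The delicate point is the leave-one-out (or chaining) step that passes from $\ell_2$ to $\ell_\infty$. There I must check that every concentration inequality used is of Bernstein type, with variance proxy scaling in $L$ and no term growing with $L$. I must also check that the strong-convexity region they use stays of constant size in $\log$-strength coordinates, so that $\kappa$-dependence is absorbed into $c(\kappa)$ uniformly in $L$. I would first attempt this by reparametrizing $\theta_i=\log\pi_i$ and verifying that each lemma in their proof is stated for a weighted graph with edge weights $m_{ij}$. In that case the replacement of $1$ by $L$ is purely notational, and the translation from $\theta$ to $\pi$ costs only a constant depending on $\kappa$.
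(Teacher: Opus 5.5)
Your overall route is the same as the paper's. The paper gives no argument beyond invoking Theorem~4 and Proposition~2 of \cite{han2023general} for the SBM with $L=1$ and asserting that passing to general $L$ is straightforward. Your Step~3 (Hessian scaled by $L$, score fluctuations by $\sqrt L$, so each cut in the chaining gains a factor $L^{-1/2}$) is exactly that rescaling.

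The problem is the graph-regularity check you add in Step~1. You bound $\|\bm{\mathcal L}-\E\bm{\mathcal L}\|_2=O(\sqrt{n\max\{p,q\}\log n})$ and assert that this is $o(np_{\min})$ under $np_{\min}/(\log n)^3\to\infty$. That assumption places no restriction on $\max\{p,q\}/\min\{p,q\}$, and in the clustered regime the claim is false. For $p=1/2$ and $q=(\log n)^4/n$, the fluctuation is of order $\sqrt{n\log n}$, while the smallest nonzero expected eigenvalue is $nq=(\log n)^4$. The perturbation argument then gives no lower bound on $\lambda_{n-1}(\bm{\mathcal L})$. Your Step~2 cut-size bound is derived from this gap, so existence of the MLE is also left unproved for exactly the clustered graphs the paper studies.

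The conclusion of Step~1 is still true, and the repair is short.
\begin{enumerate}
\item Split $\bm{\mathcal L}=\bm{\mathcal L}_{\rm in}+\bm{\mathcal L}_{\rm cross}$ into the Laplacians of the within- and cross-community edges. Both are positive semidefinite and annihilate $\bm 1$, so $\lambda_{n-1}(\bm{\mathcal L})\geq\lambda_{n-1}(\bm{\mathcal L}_{\rm cross})$.
\item The cross edges form a bipartite Erd\H{o}s--R\'enyi graph whose expected Laplacian has nonzero eigenvalues $nq/2$ and $nq$.
\item Its fluctuation is $O(\sqrt{nq\log n})=o(nq)$, since $nq\geq np_{\min}\gg\log n$. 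Hence $\lambda_{n-1}(\bm{\mathcal L})\gtrsim nq\geq np_{\min}$ with high probability.
\end{enumerate}
Alternatively, drop Step~1 and take the $L=1$ SBM statement directly from \cite{han2023general}, as the paper does.

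There is also a smaller slip in Step~3. The score bound should be $|\nabla_i l|\lesssim\sqrt{L\deg(i)\log n}$ with $\deg(i)\asymp n(p+q)$, not $\sqrt{Lnp_{\min}\log n}$. This is harmless in the chaining, which compares the score fluctuation across each cut with the curvature on that same cut. However, pairing the correct score bound with the global gap $Lnp_{\min}$ in an $\ell_2$ step would lose a factor $\sqrt{\max\{p,q\}/p_{\min}}$. Your misstated bound would hide that loss.
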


We can further establish the following approximation of the convergence factor for the synchronous update.
This result serves as a large-$n$ analogue of~\Cref{thm:asymp}, and its proof follows from a refinement of the perturbation argument used in~\Cref{thm:asymp} together with~\Cref{thm:asymp2}. 

\begin{theorem}\label{thm:long}
Following~\Cref{thm:asymp2}, there exists a constant $C(\kappa)>0$ depending only on $\kappa$ (and independent of $n,p, q, \alpha$, and $L$),  such that the following hold with probability at least $1-4n^{-2}$  for all sufficiently large $n$: 
\begin{enumerate}[label=(\alph*)]
\item\label{0023} $\|\bm J_{F_\alpha}(\pih)-\JJ_{F_\alpha}(\bm\pi^*)\|_2 \le C(\kappa)\sqrt{\frac{(\log n)^3}{np_{\min}L}}$
for the Jacobian in~\eqref{jac:Jfp} and~\eqref{Jbar}; and
\item\label{0024} $|\rhosync(\alpha)-\brhosync(\alpha)| \le C(\kappa)\sqrt{\frac{(\log n)^3}{np_{\min}L}}$ for the convergence factors in~\eqref{eq:rhosync}~and~\eqref{eq:brhosync}. 
\end{enumerate}
\end{theorem}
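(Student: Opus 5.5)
Part (b) follows from part (a) by eigenvalue perturbation for symmetrizable matrices, and part (a) is the main work. To prove (a), I decompose
\[
\bm J_{F_\alpha}(\pih)-\JJ_{F_\alpha}(\bm\pi^*) = \bigl(\bm J_{F_\alpha}(\pih)-\bm J_{F_\alpha}(\bm\pi^*;\bar\W)\bigr)+\bigl(\bm J_{F_\alpha}(\bm\pi^*;\bar\W)\text{-type terms with } \W\bigr),
\]
that is, into a part from the error $\pih-\bm\pi^*$ and a part from the noise $\W-\bar\W$. Concretely, I use the fixed-point form \eqref{jac:Jfp} and write each entry as a ratio $N_{ij}/\Delta_i$. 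Here $\Delta_i=\sum_\ell(\alpha w_{i\ell}+w_{\ell i})/(\widehat\pi_i+\widehat\pi_\ell)$, and the numerators are weighted sums of the $w_{ij}$.

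\textbf{Denominator bounds.} On the event of \Cref{thm:asymp2}, the dynamic range is at most $\kappa$ and $\|\pih-\bm\pi^*\|_\infty\le c(\kappa)\epsilon_n$ with $\epsilon_n:=\sqrt{(\log n)^3/(np_{\min}L)}$. By a Chernoff bound for the SBM degrees, with probability $\ge 1-n^{-2}$ every degree satisfies $d_i:=\sum_\ell m_{i\ell}\asymp npL/2+nqL/2$, hence $d_i\gtrsim np_{\min}L$. It follows that $\Delta_i\ge c'(\kappa)(1+\alpha)d_i$, both for $\W$ and for $\bar\W$; for $\W$ I use concentration of $\sum_\ell w_{\ell i}$ around its mean.

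\textbf{Operator-norm bound.} The key point is that all matrices are uniform in $\alpha$: after dividing by $\Delta_i$, the $\alpha$-dependence cancels up to factors in $[\kappa^{-1},\kappa]$. I bound $\|\cdot\|_2$ by splitting into diagonal and off-diagonal parts.

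For the diagonal part, each entry is $1-(\text{sum})/\Delta_i$. The $\pih$-perturbation contributes $O(\kappa\,\epsilon_n)$ by Lipschitz continuity of each summand in $\bm\pi$. The noise part $\sum_\ell (w_{i\ell}-\bar w_{i\ell})g_{i\ell}/d_i$ is $O(\sqrt{\log n/(d_i)})$ uniformly over $i$ by Bernstein's inequality and a union bound.

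For the off-diagonal part, I write it as $\bm\Delta^{-1}\diag(\pih)\,\bm E$, where $E_{ij}=(w_{ij}+w_{ji})/(\widehat\pi_i+\widehat\pi_j)^2-m_{ij}\pi^*_i\pi^*_j\ldots$. I split $\bm E$ into a $\pih$-Lipschitz piece, which is supported on the graph, and a pure noise piece $(\bm W-\bar{\bm W})$ reweighted by deterministic factors.

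The graph piece has entries bounded by $C\epsilon_n m_{ij}$, so its operator norm is at most $C\epsilon_n\max_i d_i$ (Schur test, since the matrix is symmetric up to diagonal scaling). After dividing by $\Delta_i\gtrsim\min_i d_i$, and using $\max d_i/\min d_i=O(1)$ with high probability, this gives $O(\epsilon_n)$.

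The noise piece is a symmetric-type matrix with independent centered entries of variance at most $L$. By a matrix Bernstein or Bandeira–van Handel bound, its norm is $\lesssim \sqrt{\max_i d_i}+\sqrt{L\log n}$ with probability $\ge1-n^{-2}$. Dividing by $d_i$ gives $O(1/\sqrt{np_{\min}L})\le\epsilon_n$.

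Summing the pieces and taking the union of the (at most four) failure events yields (a) with probability $\ge1-4n^{-2}$.

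\textbf{Part (b).} The eigenvalues are not those of a symmetric matrix directly, so I use the symmetrizations \eqref{eq:jsym}. Weyl's inequality applied to $\bm J^{\sym}_{F_\alpha}(\pih)$ versus $\JJ^{\sym}_{F_\alpha}(\bm\pi^*)$ would require bounding the difference of the symmetrized matrices. I expect this to be the main obstacle, because the similarity transforms $\bm D_\alpha$ differ.

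Instead, I bound $\|\bm D_\alpha(\pih)^{-1}\bm J\bm D_\alpha(\pih)-\bar{\bm D}_\alpha^{-1}\JJ\bar{\bm D}_\alpha\|_2$ by writing it as a conjugation of (a) plus commutator-type terms $(\bm D^{-1}\bar{\bm D}-\bm I)$. The relative error $\|\bm D^{-1}\bar{\bm D}-\bm I\|_\infty=O(\epsilon_n)$ follows from the denominator bounds, and $\|\JJ^{\sym}\|_2\le 1$.

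Weyl's inequality then shows that each ordered eigenvalue moves by $O(\epsilon_n)$. Since $\lambda_1\equiv1$ for both matrices by \Cref{lm:normspec}, $\lambda_2$ and $\lambda_n$ are each perturbed by $O(\epsilon_n)$. Finally, $\max\{\lambda_2,-\lambda_n\}$ is 1-Lipschitz, which gives (b) with $C(\kappa)$ independent of $\alpha$.
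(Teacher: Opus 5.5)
Your argument is correct. Part (b) is essentially the paper's proof: symmetrize with $\bm D_\alpha(\pih)$ and $\bar{\bm D}_\alpha(\bm\pi^*)$, control their diagonal ratio, and apply Weyl. One tacit step there: conjugating (a) by $\bar{\bm D}_\alpha(\bm\pi^*)$ uses that its condition number is $O(1)$. This follows from your degree-ratio bound and is what \Cref{lm:bfl} records.

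Part (a) takes a genuinely different route. The paper interpolates through $\bm J_{F_\alpha}(\bm\pi^*)$, the true Jacobian \eqref{jac:J} of the observed data at the non-fixed point $\bm\pi^*$. Its numerators contain the individual $w_{ij}$, which forces the paper to freeze the random denominators, prove a matrix Bernstein bound (\Cref{lm:t1}, \Cref{lm:trig}), and prove a Lipschitz lemma uniform on $\mathcal B$ (\Cref{lm:reg}).

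You instead compare the two fixed-point forms \eqref{jac:Jfp} directly, and this works better than you seem to realize. Their numerators involve only $w_{ij}+w_{ji}=m_{ij}$, which is deterministic. So your ``pure noise piece'' is identically zero, and no matrix concentration is needed. The data enter only in two ways:
\begin{itemize}
\item through the $n$ scalars $\Delta_i$, whose relative deviation from the population value is $\frac{|\alpha-1|}{\alpha+1}O\bigl(\sqrt{\log n/(np_{\min}L)}\bigr)$ by scalar Bernstein and a union bound;
\item through $\pih$, via deterministic relative Lipschitz bounds.
\end{itemize}
Together these give $|[\bm J_{F_\alpha}(\pih)]_{ij}-[\JJ_{F_\alpha}(\bm\pi^*)]_{ij}|\le C(\kappa)\epsilon_n[\JJ_{F_\alpha}(\bm\pi^*)]_{ij}$ for $i\neq j$ and $\le C(\kappa)\epsilon_n$ for $i=j$, uniformly in $\alpha$. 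The Schur test then finishes (a).

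Such entrywise relative bounds are stronger than a norm bound. They pass to masked parts, such as the $\bm J_{F_\alpha,\l}$ needed in \Cref{thm:long1}. They also survive conjugation by diagonal matrices whose ratio is $1+O(\epsilon_n)$. So in (b) you could show $\|\bm J^{\sym}_{F_\alpha}(\pih)-\bar{\bm J}^{\sym}_{F_\alpha}(\bm\pi^*)\|_2=O(\epsilon_n)$ using only $\|\bar{\bm J}^{\sym}_{F_\alpha}(\bm\pi^*)\|_2\le 1$ and the entrywise nonnegativity of $\JJ_{F_\alpha}(\bm\pi^*)$, with no condition-number estimate. In exchange, the paper's route yields concentration of the true Jacobian at $\bm\pi^*$ and a Lipschitz estimate that does not depend on the fixed-point identity; this theorem needs neither.
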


\begin{proof}
See~\Cref{sec:proof}. 
\end{proof}

\begin{remark}
\Cref{thm:long} shows that for any $\delta>0$ and a large $n$ with $p_{\min}\geq (\log n)^{3+\delta}/n$ 
(i.e., an object is expected to be compared with at least $\mathrm{poly}(\log n)$ distinct objects on average in a network involving $n$ objects), the convergence factor $\rhosync(\alpha)$ converges to $\brhosync(\alpha)$, for all $\alpha\geq 0$ and $L\geq 1$. This stands in contrast to the convergence result in \Cref{thm:asymp}, where it requires the number of pairwise comparisons to satisfy $L\to\infty$.
\end{remark}

For asynchronous updates, we can also establish the approximation of the Jacobian $\bm J_{A_\alpha}(\pih)$ by $\JJ_{A_\alpha}(\bm\pi^*)$
in the large-$n$ regime, in analogy to~\Cref{lm:gs} for the fixed-$n$ regime.

\begin{theorem}\label{thm:long1}
Following~\Cref{thm:asymp2}, there exists a constant $C(\kappa)>0$ depending only on $\kappa$ (and independent of $n,p,q, \alpha$, and $L$), such that  for sufficiently large $n$, it holds with probability at least $1-4n^{-2}$that
\begin{equation*}
\|\bm J_{A_\alpha}(\pih)-\JJ_{A_\alpha}(\bm\pi^*)\|_2 \le C(\kappa)\sqrt{\frac{(\log n)^3}{np_{\min}L}}. 
\end{equation*}
\end{theorem}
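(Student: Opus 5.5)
The plan is to write $\bm J_{A_\alpha} = \bm M^{-1}\bm N$ via \eqref{crucial}, with $\bm M = \bm I-\bm J_{F_\alpha,\l}$ and $\bm N = \bm J_{F_\alpha,\d}+\bm J_{F_\alpha,\u}$, and analogously $\bar{\bm M},\bar{\bm N}$ for the population Jacobian \eqref{eq:jjabar}. Then use the standard perturbation identity
\[
\bm M^{-1}\bm N-\bar{\bm M}^{-1}\bar{\bm N}
=\bm M^{-1}(\bm N-\bar{\bm N})+\bm M^{-1}(\bar{\bm M}-\bm M)\bar{\bm M}^{-1}\bar{\bm N}.
\]
With $\bm E\coloneqq \bm J_{F_\alpha}(\pih)-\JJ_{F_\alpha}(\bm\pi^*)$, we have $\bm N-\bar{\bm N}=\bm E_\d+\bm E_\u$ and $\bar{\bm M}-\bm M=\bm E_\l$. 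The bound then reduces to three ingredients: (i) $\|\bm E\|_2\lesssim\varepsilon_n\coloneqq\sqrt{(\log n)^3/(np_{\min}L)}$, which is \Cref{thm:long}\ref{0023}; (ii) norm bounds on the triangular and diagonal parts of $\bm E$; (iii) uniform bounds on $\|\bm M^{-1}\|_2$, $\|\bar{\bm M}^{-1}\|_2$, and $\|\bar{\bm N}\|_2$ depending only on $\kappa$.

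For (ii), the spectral norm of a triangular truncation is not controlled by $\|\bm E\|_2$ without a $\log n$ loss. I would therefore revisit the proof of \Cref{thm:long}\ref{0023} rather than use its statement. That argument presumably controls $\bm E$ through entrywise and row/column-sum bounds, i.e. $\|\bm E\|_1,\|\bm E\|_\infty\lesssim\varepsilon_n$. These bounds are inherited by $\bm E_\l,\bm E_\d,\bm E_\u$, since truncation only zeroes entries. Riesz--Thorin, $\|\cdot\|_2\le\sqrt{\|\cdot\|_1\|\cdot\|_\infty}$, then gives the $\varepsilon_n$ rate for each piece. If that route fails, a fallback is to pay a $\log n$ factor through the triangular truncation bound, but the stated rate suggests the $\ell_1/\ell_\infty$ route. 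The same reasoning bounds $\|\bar{\bm N}\|_2$: the row sums of $\JJ_{F_\alpha}(\bm\pi^*)$ involve ratios like $\sum_\ell m_{i\ell}\pi_i^*/(\pi_i^*+\pi_\ell^*)^2$ over $\sum_\ell m_{i\ell}(\alpha\pi_i^*+\pi_\ell^*)/(\pi_i^*+\pi_\ell^*)^2$, which are $O(\kappa)$ uniformly in $\alpha$ via \eqref{Jbar}, and column sums are handled symmetrically.

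Step (iii) is the main obstacle: bounding $\|\bar{\bm M}^{-1}\|_2$ uniformly in $n$ and $\alpha$. I would use the similarity $\bar{\bm M}^\star=\bar{\bm D}_\alpha^{-1}\bar{\bm M}\bar{\bm D}_\alpha$ from the proof of \Cref{lm:normspec3}, where $\Re(\bm v^{\rm H}\bar{\bm M}^\star\bm v)\ge\tfrac12\|\bm v\|_2^2$. This gives $\|(\bar{\bm M}^\star)^{-1}\|_2\le 2$, hence $\|\bar{\bm M}^{-1}\|_2\le 2\,\mathrm{cond}(\bar{\bm D}_\alpha)$. The condition number of the diagonal scaling $\bar{\bm D}_\alpha$ is a ratio of $\sqrt{\pi_i^*/(\bar C_{ii}+\alpha\bar R_{ii})}$ across $i$. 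Under the SBM, degrees concentrate at $\asymp np_{\min}$ with probability $1-n^{-2}$ (Chernoff plus a union bound, using $np_{\min}\gg\log n$), so the ratio is bounded by some $c(\kappa)$ independent of $\alpha$. This holds because $\bar C_{ii}$ and $\bar R_{ii}$ are comparable within a factor $\kappa$, and so is their combination uniformly in $\alpha$.

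For $\bm M^{-1}$ there are two options. One is to note that $\|\bm E_\l\|_2\lesssim\varepsilon_n\to0$, so a Neumann series gives $\|\bm M^{-1}\|_2\le 2\|\bar{\bm M}^{-1}\|_2$ for large $n$. The other is to repeat the definiteness argument with \eqref{eq:pdm}; the Neumann route is cleaner. Combining the pieces gives $\|\bm J_{A_\alpha}(\pih)-\JJ_{A_\alpha}(\bm\pi^*)\|_2\le C(\kappa)\varepsilon_n$. The high-probability events are the ones from \Cref{thm:long} together with degree concentration, which I would fold into the $1-4n^{-2}$ budget.
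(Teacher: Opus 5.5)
Your architecture matches the paper's proof: the splitting $\bm M^{-1}\bm N$ with a first-order perturbation bound (\Cref{lm:perturb}), the bound $\|\bar{\bm M}^{-1}\|_2\le 2\,\mathrm{cond}(\bar{\bm D}_\alpha(\bm\pi^*))$ from the real-part estimate plus degree concentration (\Cref{lm:lowerbdd}), and a Schur bound on $\|\bar{\bm N}\|_2$. The gap is step (ii), which is exactly the new ingredient this theorem needs beyond \Cref{thm:long}.

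You assume the proof of \Cref{thm:long}\ref{0023} controls $\bm E$ in $\|\cdot\|_1$ and $\|\cdot\|_\infty$. It does not. Only the Lipschitz step (\Cref{lm:reg}) is entrywise. The concentration step (\Cref{lm:t1} and \Cref{lm:trig}) applies matrix Bernstein separately to $\widetilde{\bm J}^{(1)}_{F_\alpha}(\bm\pi^*)$ and $\widetilde{\bm J}^{(2)}_{F_\alpha}(\bm\pi^*)$, and it yields only a spectral-norm bound. For each piece, nothing better is possible in $\ell_\infty$: for $\alpha=0$, the absolute row sums of its fluctuation are of order $\sum_{j\sim i}|w_{ij}-\bar w_{ij}|/(d_iL)\asymp L^{-1/2}$, which does not decay in $n$. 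So revisiting that proof does not give $\|\bm E\|_1,\|\bm E\|_\infty\lesssim\varepsilon_n$, where $\varepsilon_n=\sqrt{(\log n)^3/(np_{\min}L)}$. Your fallback then loses a $\log n$ and misses the stated rate. Your Neumann-series bound on $\|\bm M^{-1}\|_2$ has the same problem, since it hinges on $\|\bm E_\l\|_2\to 0$.

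The paper closes this by rerunning the argument on the truncation itself.
\begin{itemize}
\item The strictly lower-triangular part of each summand $\bar{\bm Z}^{(ijs)}$ in \Cref{lm:t1} is still independent, centered and bounded, with no larger variance proxy.
\item The diagonal scalings $\bm\Lambda,\bm\Gamma$ in \Cref{lm:trig} commute with taking lower-triangular parts.
\item The entrywise Lipschitz estimate passes to $\bm J_{F_\alpha,\l}$.
\end{itemize}
Together these give $\|\bm E_\l\|_2\lesssim\varepsilon_n$. Then $\bm E_\d+\bm E_\u=\bm E-\bm E_\l$ is controlled by \Cref{lm:2est}.

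Alternatively, your $\ell_1/\ell_\infty$ route can be rescued, but only with an observation you did not make. At the fixed point, \eqref{jac:Jfp} gives off-diagonal entries $\widehat\pi_i m_{ij}(\widehat\pi_i+\widehat\pi_j)^{-2}/\widehat D_i$, with $\widehat D_i=\sum_\ell(\alpha w_{i\ell}+w_{\ell i})/(\widehat\pi_i+\widehat\pi_\ell)$. The individual fluctuations of $w_{ij}$ therefore cancel between the two pieces, and the data enter only through $\pih$ and the scalars $\widehat D_i$. Compare $\widehat D_i$ with $\bar D_i=\sum_\ell m_{i\ell}(\alpha\pi_i^*+\pi_\ell^*)/(\pi_i^*+\pi_\ell^*)^2$ using \Cref{thm:asymp2}, a Lipschitz bound, and Hoeffding plus a union bound. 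This gives errors $\lesssim\varepsilon_n/d_i$ off the diagonal and $\lesssim\varepsilon_n$ on it. Hence $\|\bm E\|_1,\|\bm E\|_\infty\lesssim(1+\vartheta_{\max})\varepsilon_n$, which is exactly the bound you wanted.
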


We should mention whether~\Cref{thm:long1} yields a rigorous bound for the corresponding convergence factor difference
$|\rhoasync(\alpha)-\brhoasync(\alpha)|$ in the general setting remains open. One may still apply the classical eigenvalue perturbation bounds (e.g.,~\cite[Section IV.1]{Stewart:1990}) as in the proof of \Cref{lm:gs} to obtain a worst-case bound on $|\rhoasync(\alpha)-\brhoasync(\alpha)|$. However, due to the presence of $n$ in the bound, this would require $L$ to be extremely large to guarantee convergence to zero, which essentially reduces the problem to the fixed-$n$ setting.

In practice, however, we observe that even a relatively small $L$ leads to a good approximation of the convergence factors (see \Cref{sec:nums}). 
This can be partially explained by the empirical observation that the eigenvalue condition number  
for the eigenvalue $\lambda$ corresponding to $\rhoasync(\alpha)$ of the matrix $\JJ_{A_\alpha}(\bm\pi^*)$ 
(i.e., $\|\bm v\|_2\|\bm u\|_2/|\bm v^\top \bm u|$ with $\bm u, \bm v\in\C^n$ the left and right eigenvectors associated with $\lambda$),
remains asymptotically constant in the matrix size $n$ under our SBM setting. Thus, assuming constant eigenvalue condition numbers, a small perturbation in the matrix $\JJ_{A_\alpha}(\bm\pi^*)$ from  $\bm J_{A_\alpha}(\pih)$, as $n\to\infty$ according to~\Cref{thm:long1}, leads to a correspondingly small perturbation in $\brhoasync(\alpha)$ from $\rhoasync(\alpha)$. A rigorous analysis of this phenomenon is, however, beyond the scope of the present work.

\section{Numerical experiments}\label{sec:nums}
In this section, we present numerical examples to verify our theoretical analysis and illustrate its practical implications.
We consider both synthetic test data generated from BT models with representative graph structures and real-world datasets from applications.
We will show that the convergence factor $\bar{\rho}$ for the expected outcomes often effectively captures the actual convergence factor $\rho$ 
of Newman's $\alpha$-scheme under the BT model. Moreover, the synchronous and asynchronous updates may exhibit
fundamentally different behavior as $\alpha$ varies, and the relative advantage of asynchronous updates generally becomes more
significant as $\alpha\to 0$. 

Unless otherwise specified, the {\em exact} MLE $\pih$ used in the experiments is computed using Newman's algorithm with asynchronous updates, terminated when the iterates $\{\bm\pi^{(k)}\}$ satisfy $\|\bm\pi^{(k)}-\bm\pi^{(k-1)}\|_2/\|\bm\pi^{(k)}\|_2\leq 10^{-15}$. 
All experiments are carried out in Python on a MacBook Air (M4, 2025) with 10 CPU cores. The source code to replicate the simulation results is available at \url{https://github.com/Yiminithere/newman-algorithm-for-BT}.

\subsection{Synthetic data}\label{sec:num1}

\begin{myexample}\label{ex:synth}\rm 

This example illustrates the local convergence factors of the algorithms and their dependence on the parameter
$\alpha$ using test data generated from the BT model~\eqref{bt-model}.
The test problems are set up as follows.
First, to account for different graph structures, we generate test comparison graphs 
using the two-community SBM described in \Cref{sec:div}:
\[
\G([n],\M) \sim \sbm(n, p, q, L).
\]
Note that different choices of $p$ and $q$ lead to varying connectivity patterns in the resulting graphs,
and we consider three representative cases corresponding to different topological structures:
\begin{enumerate}[label=(\alph*),itemsep=-2pt]
\item  $ p = q = 0.05$ ~~~~~~~~(homogeneous graphs); 
\item $ p = 0.1$, $q = 0.01$ ~~(clustered graphs);
\item $ p = 0.01$, $q = 0.1$ ~~(near-bipartite graphs). 
\end{enumerate}
For each comparison graph  $\G([n],\M)$, we sample the outcome matrix $\W$ according to 
the BT model~\eqref{bt-model}, using a prescribed strength vector $\bm\pi^*\in\R_+^n$.
The test strengths are generated from a lognormal distribution,
\[
\bm\pi^* \sim  \text{Lognormal}(0, \sigma^2),
\]
followed by normalization to a unit product. 
The lognormal distribution mimics practical scenarios in which most objects have strengths 
concentrated within a moderate range. 
Note that $\sigma$ controls the spread of each strength $\pi^*_{i}$, 
and hence the size of the expected dynamic range $\kappa=\max_{i,j} \pi^*_{i}/\pi^*_{j}$.
We test two values of $\sigma$, $\sigma = 0.1$ and $\sigma = 0.5$, 
corresponding to small and large dynamic ranges, with 
expected dynamic ranges $\kappa\approx 1.90$ and $24.69$, respectively.
All test problems are generated with a fixed size $n = 500$,
and two different numbers of pairwise comparisons $L = 10$ and $L = 100$.

\Cref{fig:1} reports the convergence factors of the algorithms against the parameter $\alpha\in[0,1]$ for the 
$12$ sampled test problems. 
For each comparison graph $\G([n],\M)$ and strength vector $\bm\pi^*$, 
we consider both the local convergence factors $\rho$ for a particular outcome $\W$ generated from the BT model~\eqref{bt-model},
and their population counterparts $\bar\rho$ corresponding to the averaged outcome $\bar\W$,
i.e., $\brhosync$ in~\eqref{eq:brhosync} and  $\brhoasync$ in \eqref{eq:brhoasync}.
For more direct comparison, \Cref{fig:112} reports the convergence histories for the boundary cases 
$\alpha=1$ (Zermelo's algorithm) and $\alpha=0$ (Newman's algorithm),
with the corresponding convergence factors listed in~\Cref{tab:results}.
We make the following observations.

\smallskip

\noindent{\em (a) Consistency of $\brhosync$}:
As shown in \Cref{fig:1}, the expected convergence factors $\bar{\rho}$
closely track the observed convergence factors
computed from comparison outcomes sampled from the BT model.
This phenomenon is consistent with the asymptotic analysis in~\Cref{sec:app5},
but it is nevertheless remarkable given the relatively modest problem size
and number of pairwise comparisons considered in our experiments
($n=500$ and $L=10,100$).
In particular, \Cref{tab:results} shows that, for the test cases with $\alpha=0,1$,
the expected and observed convergence factors agree to three decimal
places in most instances.
These results justify the use of $\bar{\rho}$ to predict and characterize 
the convergence behavior of Newman's $\alpha$-scheme 
in practice when the comparison data fit the BT model reasonably well.

\smallskip
\noindent{\em (b) Synchronous updates}:  We observe in~\Cref{fig:1} that the convergence factors $\rhosync(\alpha)$ 
are monotonically increasing in $\alpha$ for the homogeneous and clustered graphs, 
whereas they are quasi-convex (first decreasing and then increasing) for the near-bipartite graphs.
These observations are consistent with~\Cref{cor:optalpha}. In particular, all tested homogeneous and cluster graphs satisfy $s_0\geq 0$, indicating the monotonicity of $\rhosync(\alpha)$, and in contrast, the near bipartite graphs have $s_0<0$, 
indicating a `v' shaped curve for the convergence factors. Moreover, for the monotonic cases (homogeneous and cluster graphs, where $s_0\geq 0$), we also observe that a larger dynamic range (i.e., $\sigma=0.5$ from $0.1$) tends to yield a more significant speedup 
as $\alpha$ decreases from $\alpha=1$ to $0$.
This trend is more clearly illustrated in~\Cref{tab:results} and~\Cref{fig:112}
through comparisons between the boundary cases $\alpha=1$ (Zermelo's algorithm)
and  $\alpha=0$ (Newman's algorithm).
In particular, for $\sigma=0.1$ (smaller dynamic range), the convergence gaps $1-\brhosync$ for $\alpha=0$ are approximately twice as large as
those for $\alpha=1$ 
(e.g., $0.625/0.309\approx 2.02$ for the homogeneous graph with $\sigma=0.1$).
While for $\sigma=0.5$ (larger dynamic range), this ratio can become even larger
(e.g., $0.624/0.146\approx 4.27$ for the homogeneous graph with $\sigma=0.5$).
These observations are consistent with the predictions from the upper bounds on $1-\brhosync$ 
in Remark~\ref{rem:inter}. 

\smallskip

\noindent{\em (c) Asynchronous updates}:
\Cref{fig:1} shows that the convergence factor $\rhoasync(\alpha)$ increases monotonically with $\alpha$ across all testing cases,
including the near-bipartite graphs for which the synchronous updates lose monotonicity. Moreover, $\rhoasync(\alpha) \leq \rhosync(\alpha)$ in all test cases, indicating the asynchronous update always improves over its synchronous counterpart. Interestingly, the magnitude of this improvement depends strongly on the choice of $\alpha$. While the gain is seemingly negligible as $\alpha\to 1$ (Zermelo's algorithm), it becomes increasingly significant as $\alpha\to 0$ (Newman's algorithm); see also~\Cref{tab:results}. In particular, for the near-bipartite graphs with $\alpha=0$,
we observe $\rhosync(\alpha)\approx 1$, whereas $\rhoasync(\alpha)$ is substantially smaller (by roughly a factor of $7$), a behavior partially explained by our analysis for the bipartite case in~\Cref{thm:coo}. Taken together, these observations suggest that asynchronous updates are crucial for the practical effectiveness of Newman's algorithm ($\alpha=0$) in many applications. In contrast, when implemented synchronously, the algorithm may offer only limited acceleration over the classical Zermelo's algorithm ($\alpha=1$).

\begin{figure}[htbp]
\begin{subfigure}{0.3\textwidth}{\includegraphics[width=\linewidth, trim={0.1cm 0cm 0.2cm 0.2cm},clip]{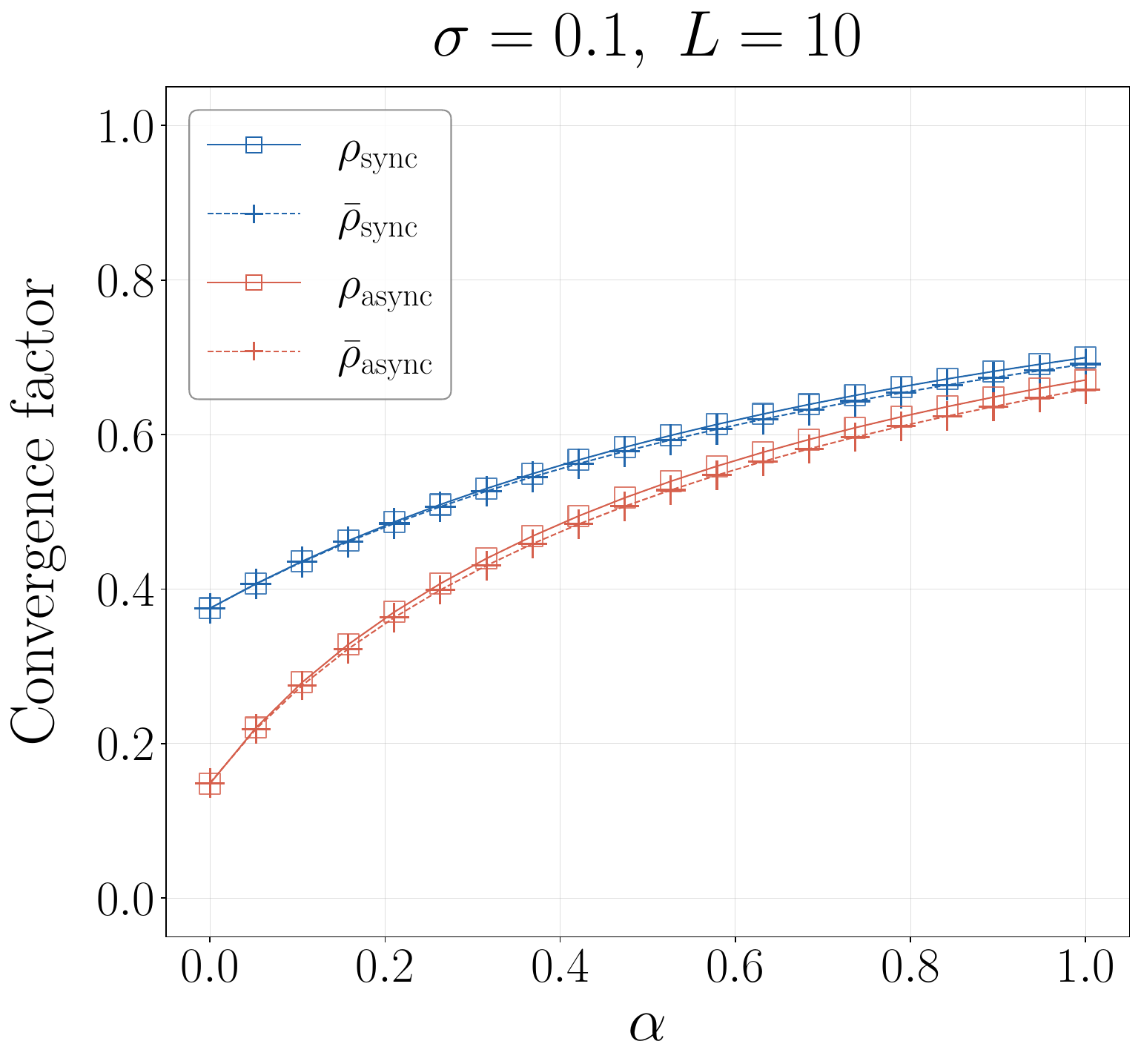}}
\end{subfigure}\hfill
\begin{subfigure}{0.3\textwidth}{\includegraphics[width=\linewidth, trim={0.1cm 0cm 0.2cm 0.2cm},clip]{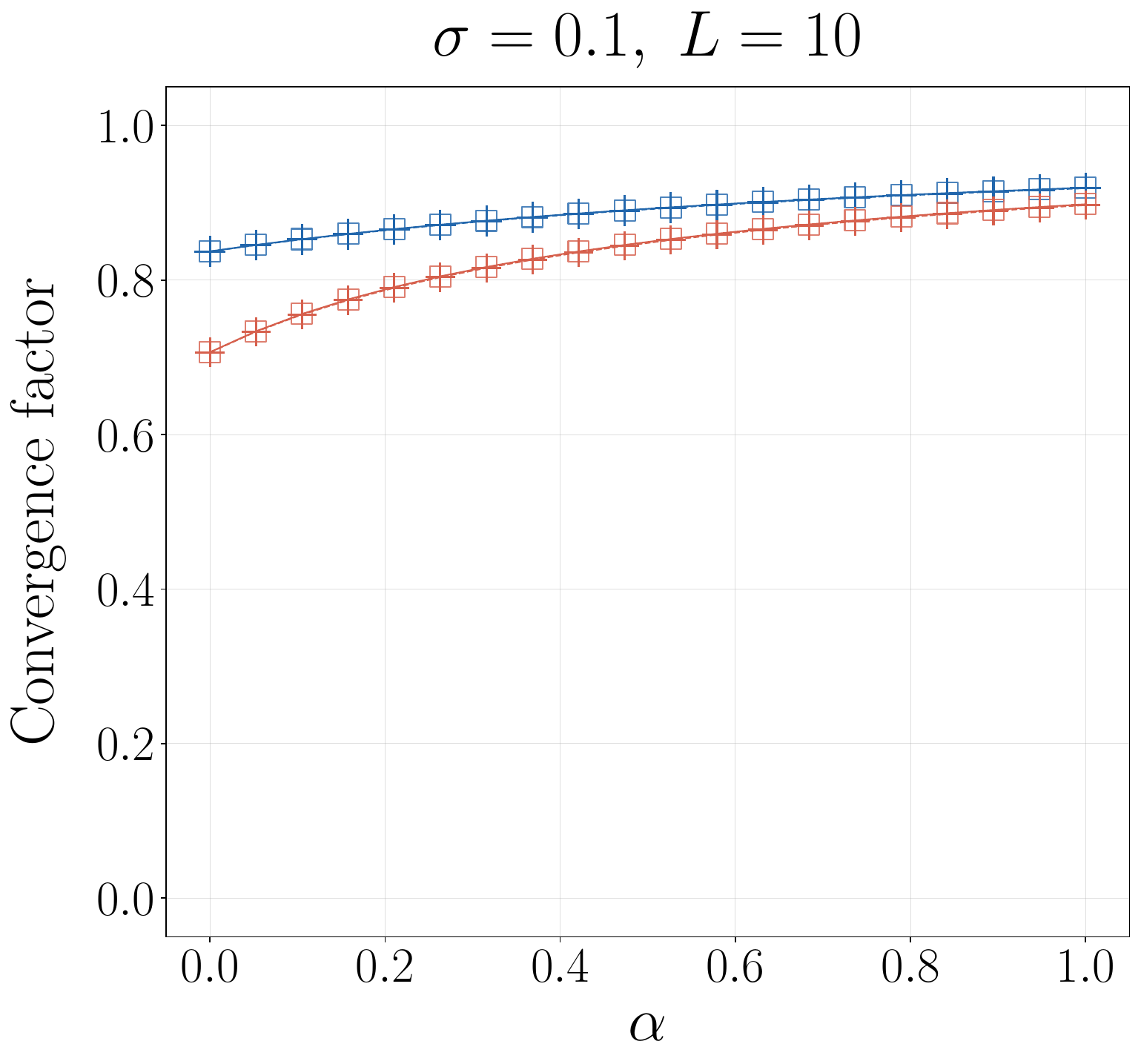}}
\end{subfigure}\hfill
\begin{subfigure}{0.3\textwidth}{\includegraphics[width=\linewidth, trim={0.1cm 0cm 0.2cm 0.2cm},clip]{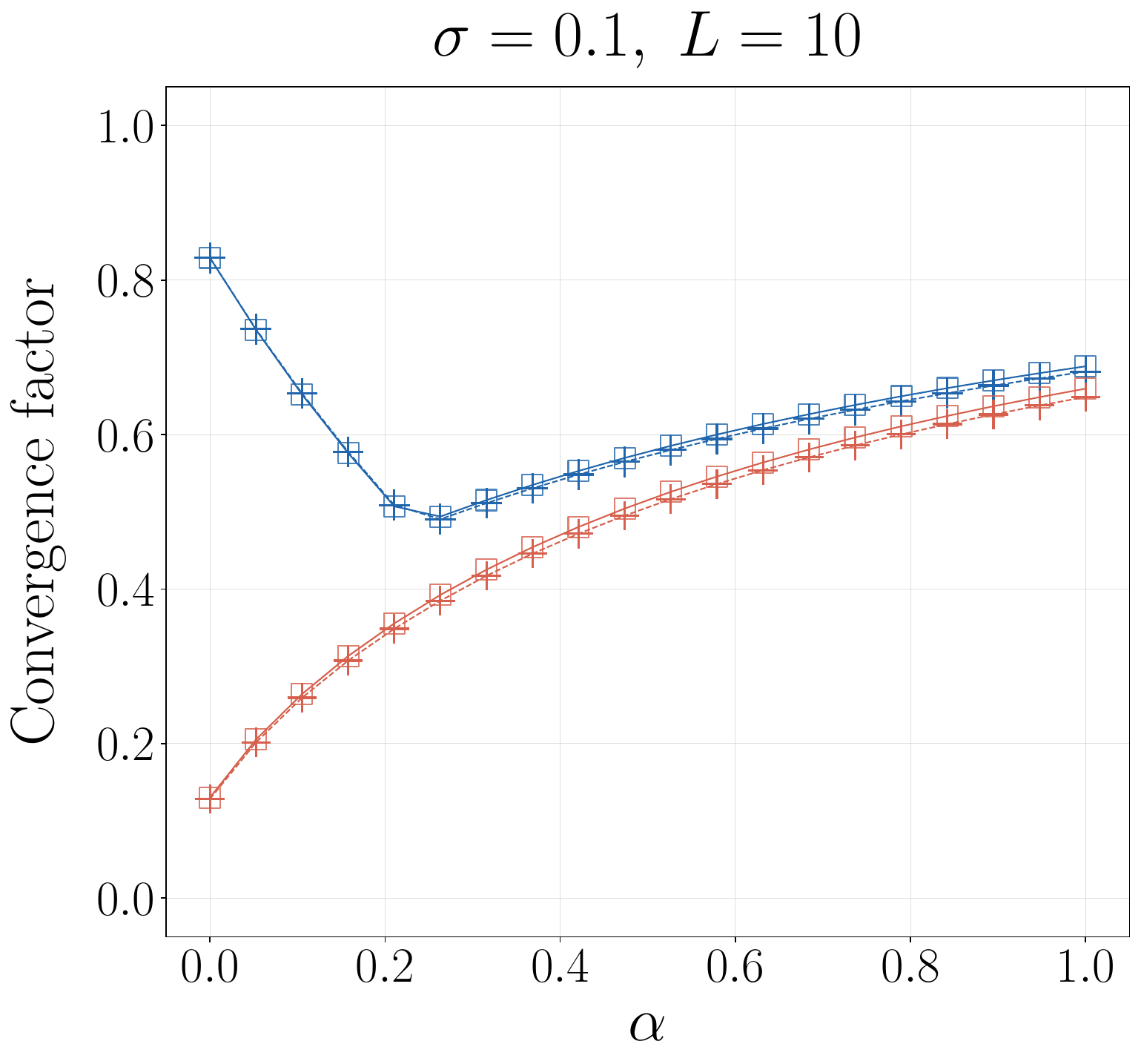}}
\end{subfigure}
\\[6pt]
\begin{subfigure}{0.3\textwidth}{\includegraphics[width=\linewidth, trim={0.1cm 0cm 0.2cm 0.2cm},clip]{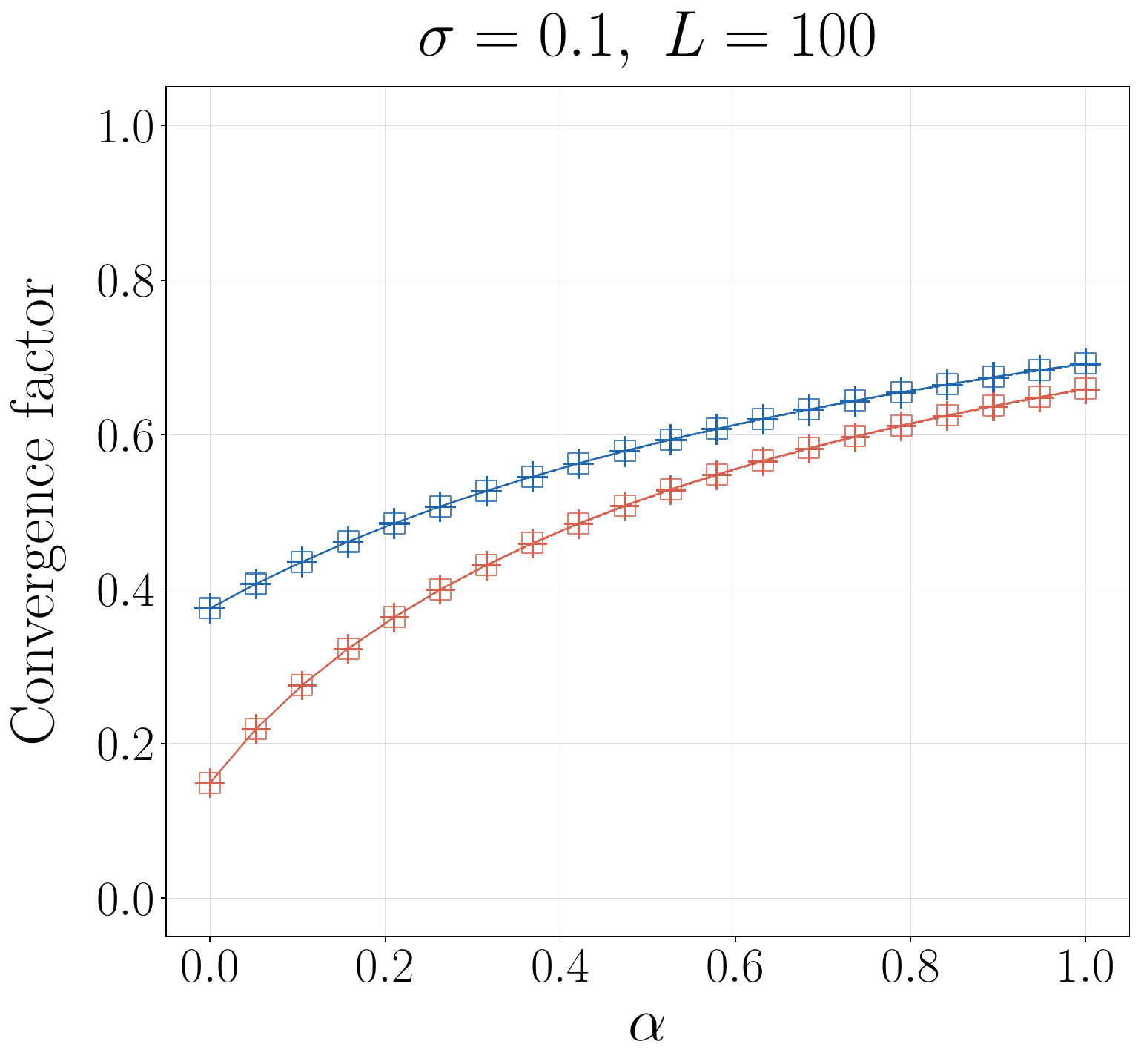}}
\end{subfigure}\hfill
\begin{subfigure}{0.3\textwidth}{\includegraphics[width=\linewidth, trim={0.1cm 0cm 0.2cm 0.2cm},clip]{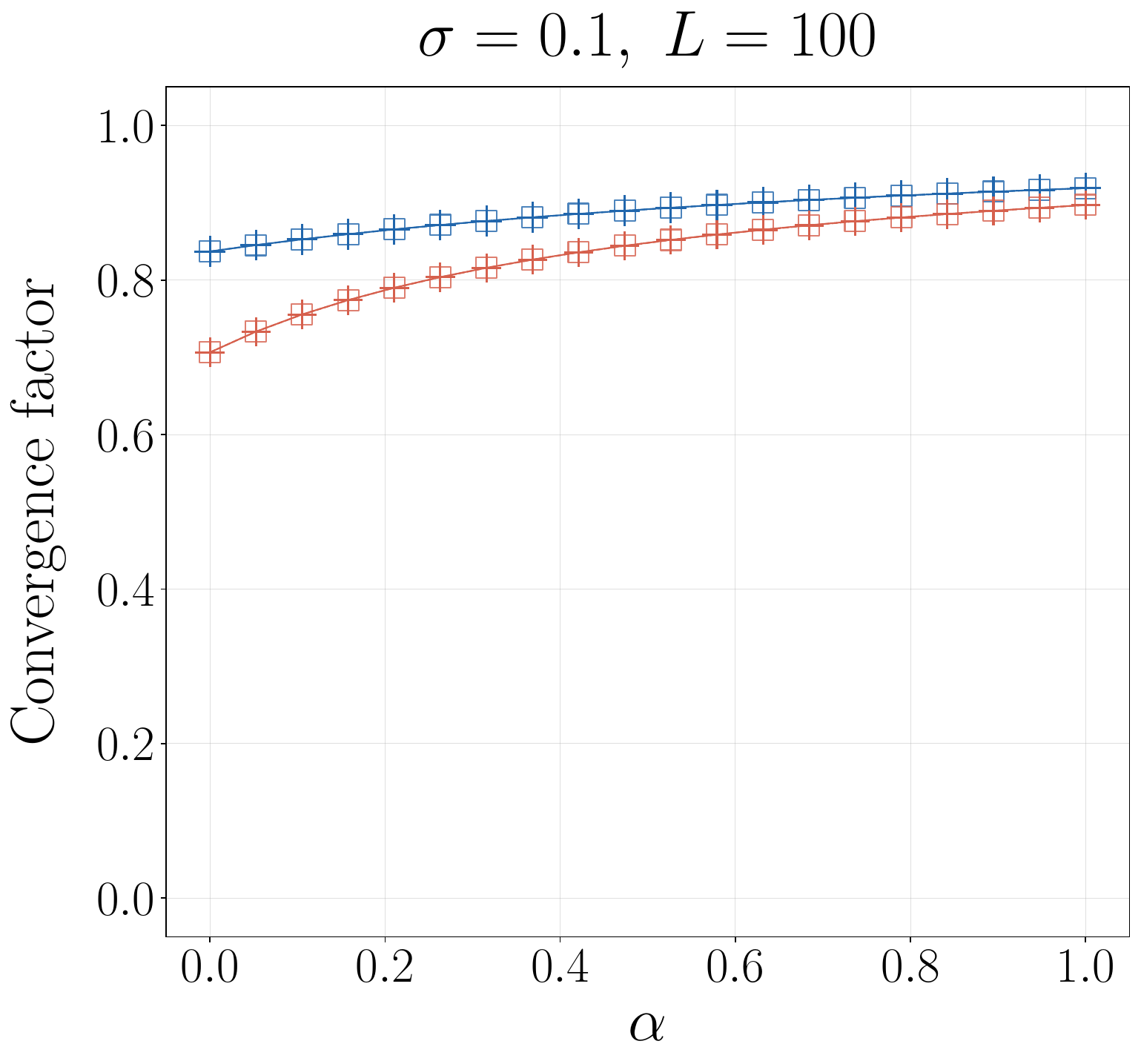}}
\end{subfigure}\hfill
\begin{subfigure}{0.3\textwidth}{\includegraphics[width=\linewidth, trim={0.1cm 0cm 0.2cm 0.2cm},clip]{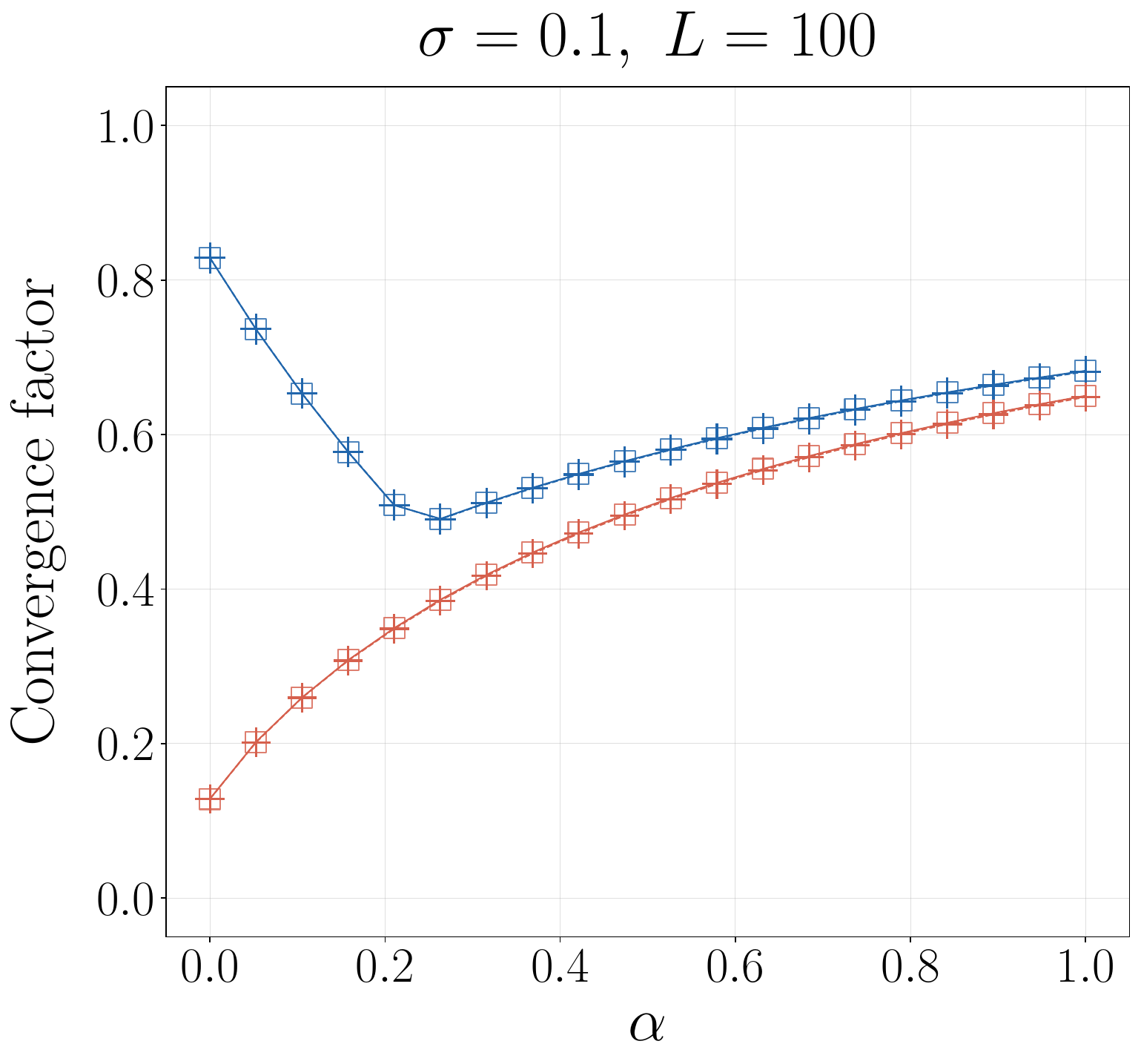}}
\end{subfigure}
\\[6pt]
\begin{subfigure}{0.3\textwidth}{\includegraphics[width=\linewidth, trim={0.1cm 0cm 0.2cm 0.2cm},clip]{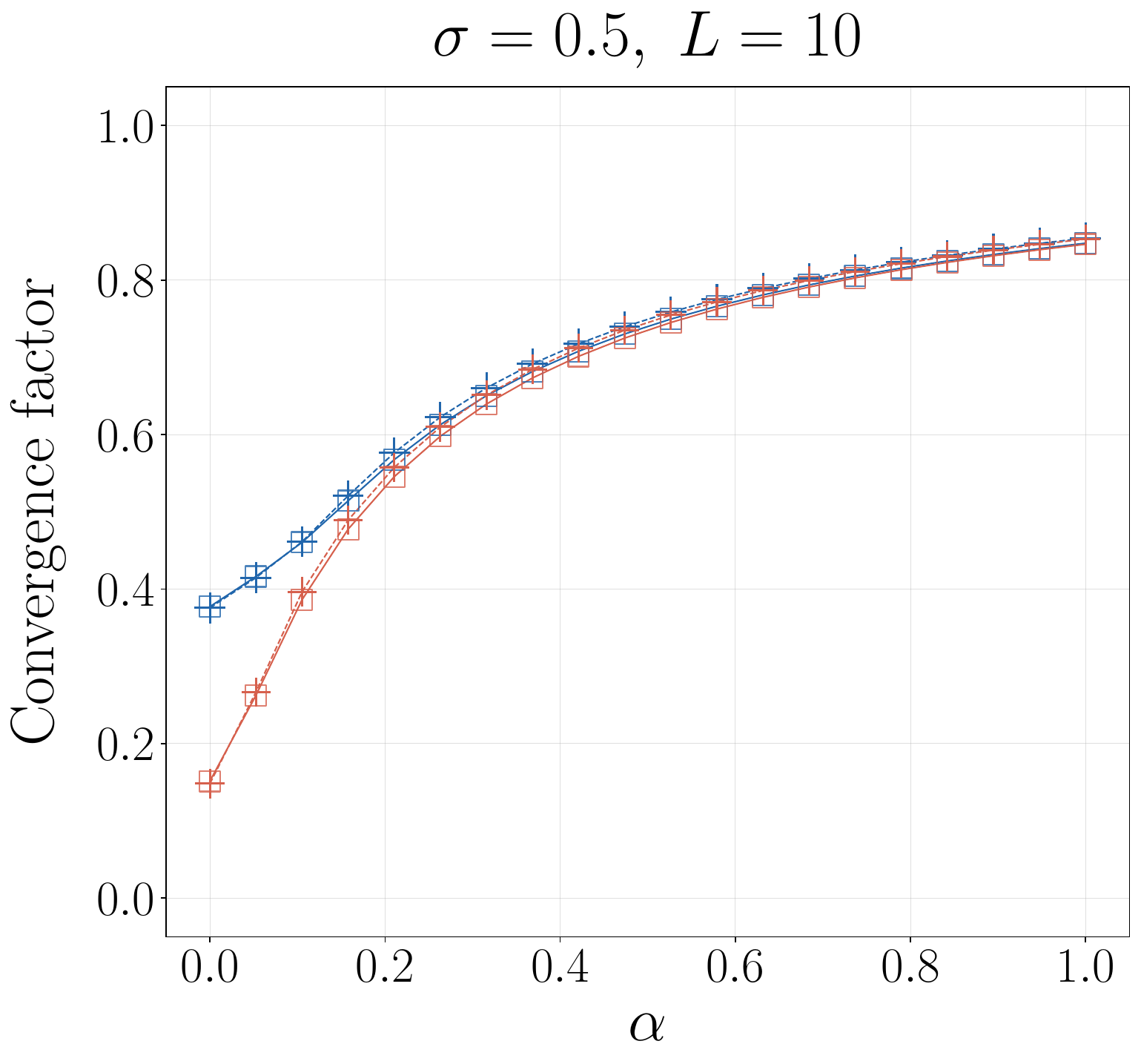}}
\end{subfigure}\hfill
\begin{subfigure}{0.3\textwidth}{\includegraphics[width=\linewidth, trim={0.1cm 0cm 0.2cm 0.2cm},clip]{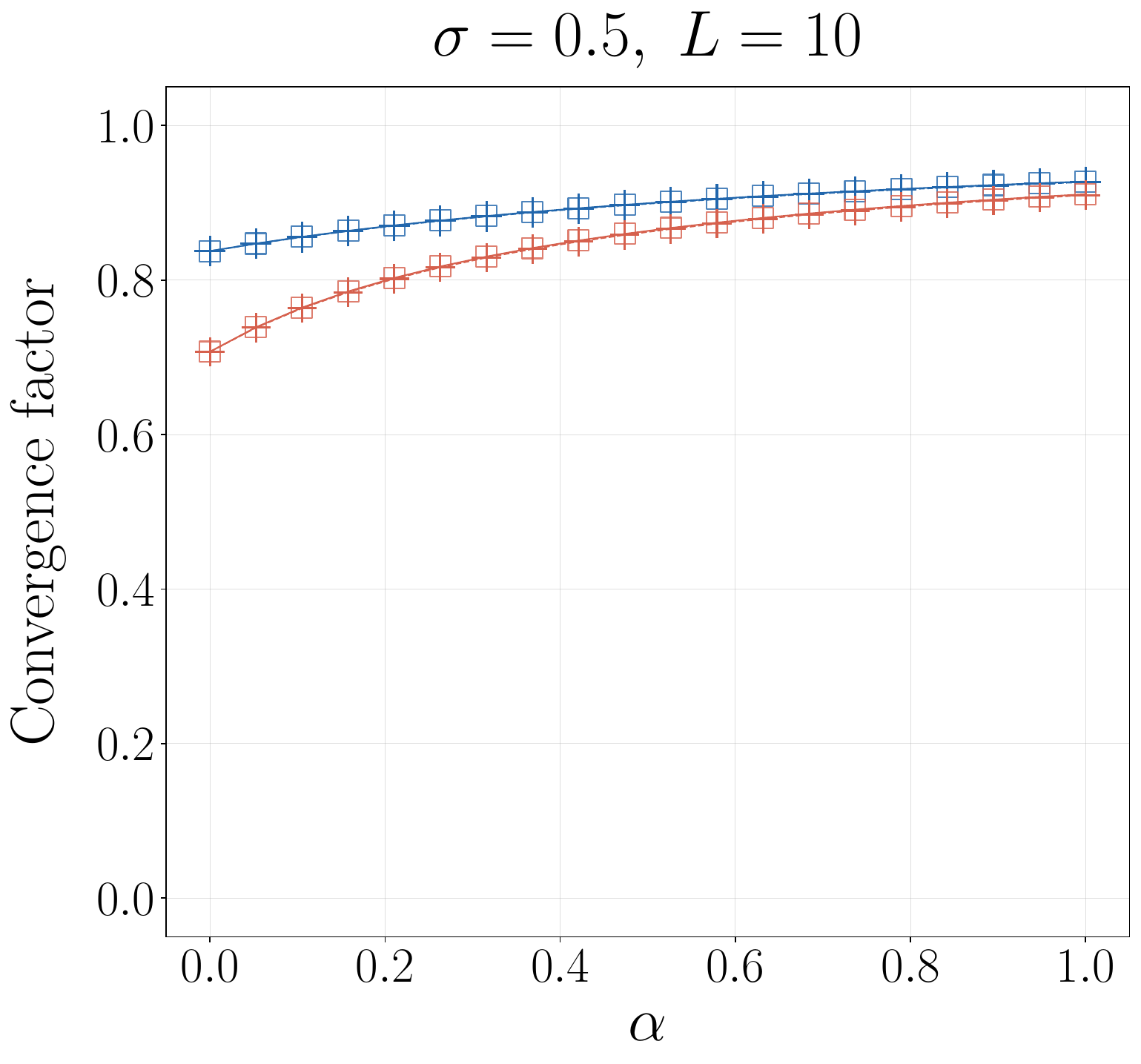}}
\end{subfigure}\hfill
\begin{subfigure}{0.3\textwidth}{\includegraphics[width=\linewidth, trim={0.1cm 0cm 0.2cm 0.2cm},clip]{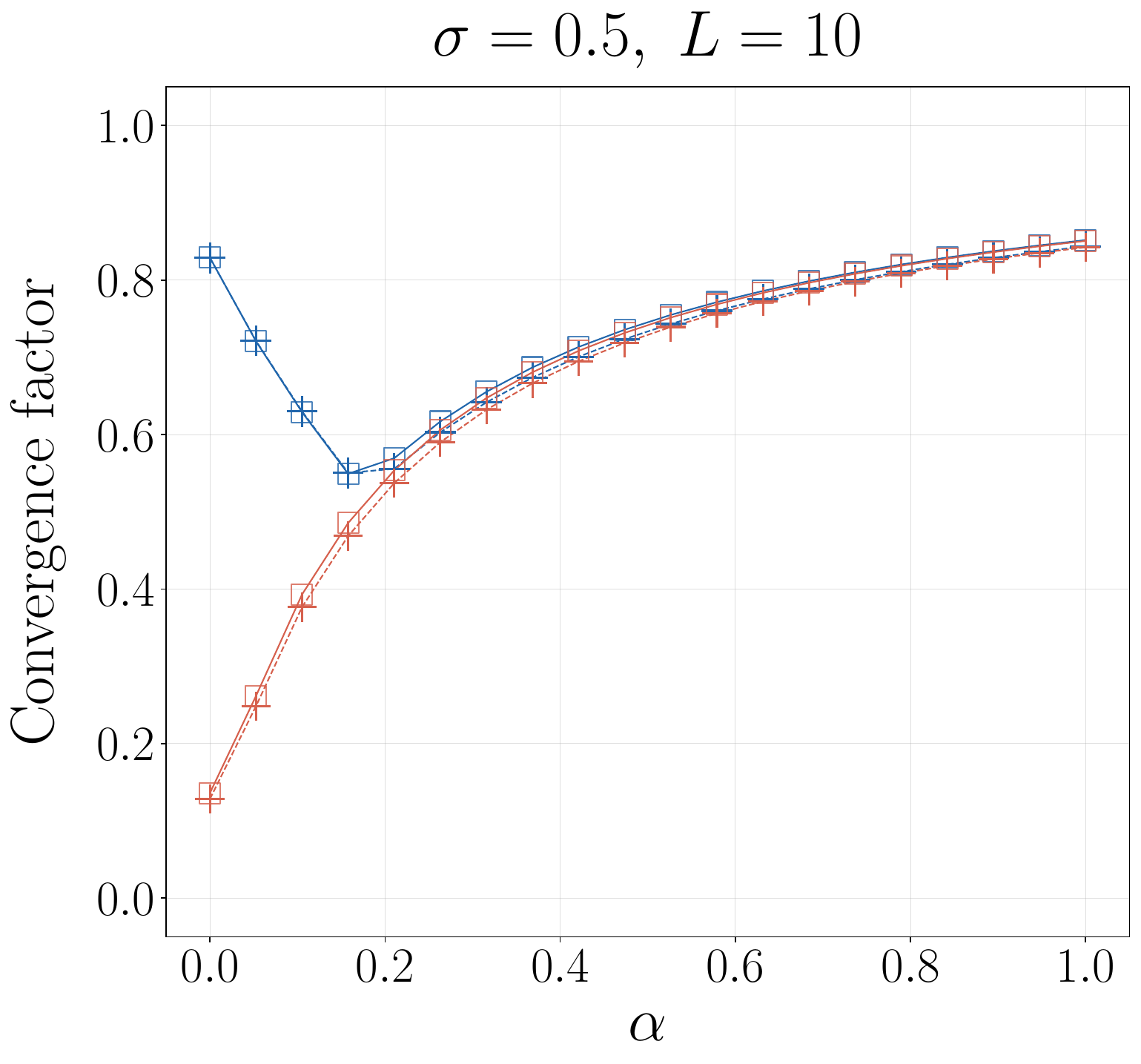}}
\end{subfigure}
\\[6pt]
\begin{subfigure}{0.3\textwidth}{\includegraphics[width=\linewidth, trim={0.1cm 0cm 0.2cm 0.2cm},clip]{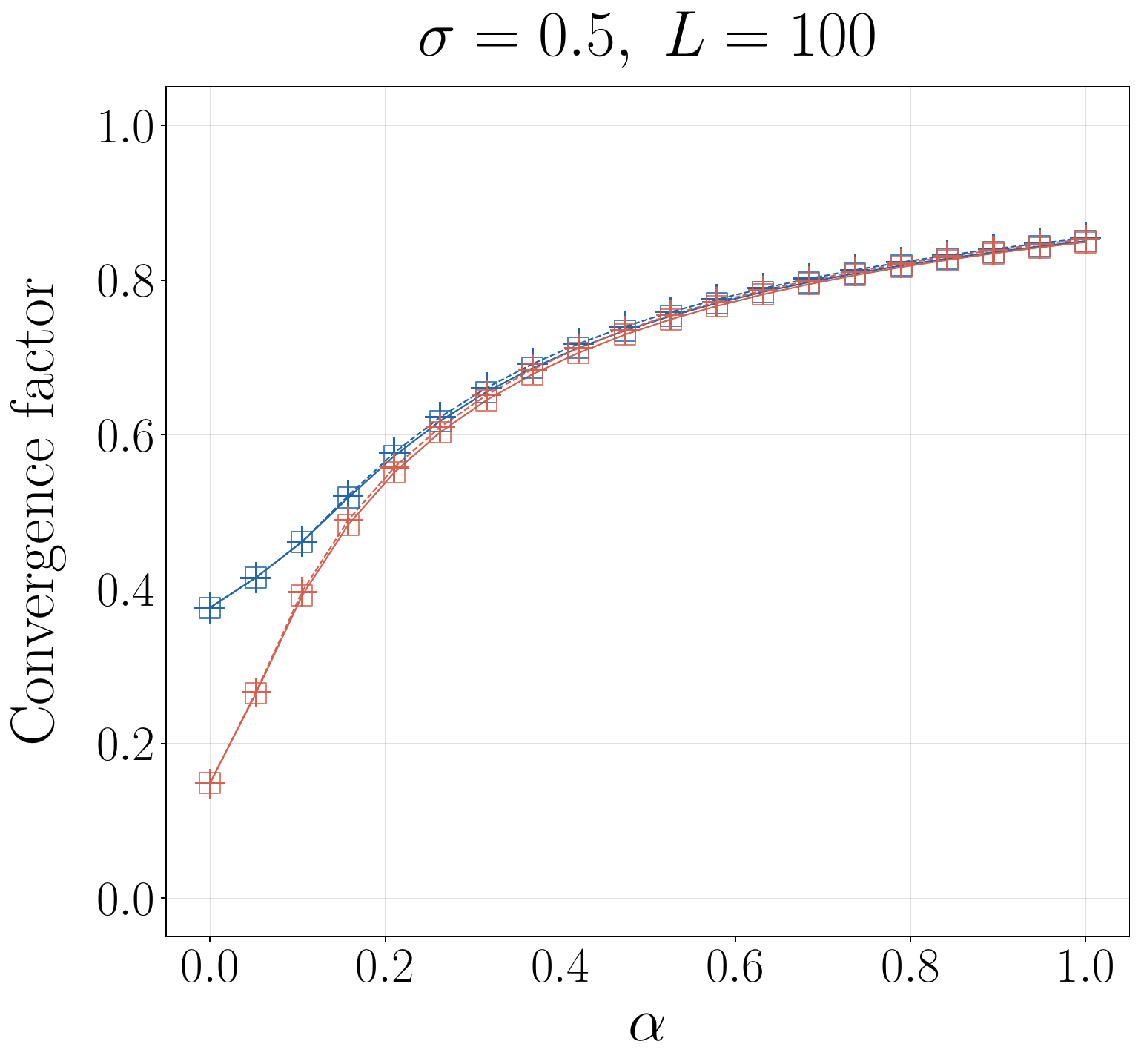}}
\end{subfigure}\hfill
\begin{subfigure}{0.3\textwidth}{\includegraphics[width=\linewidth, trim={0.1cm 0cm 0.2cm 0.2cm},clip]{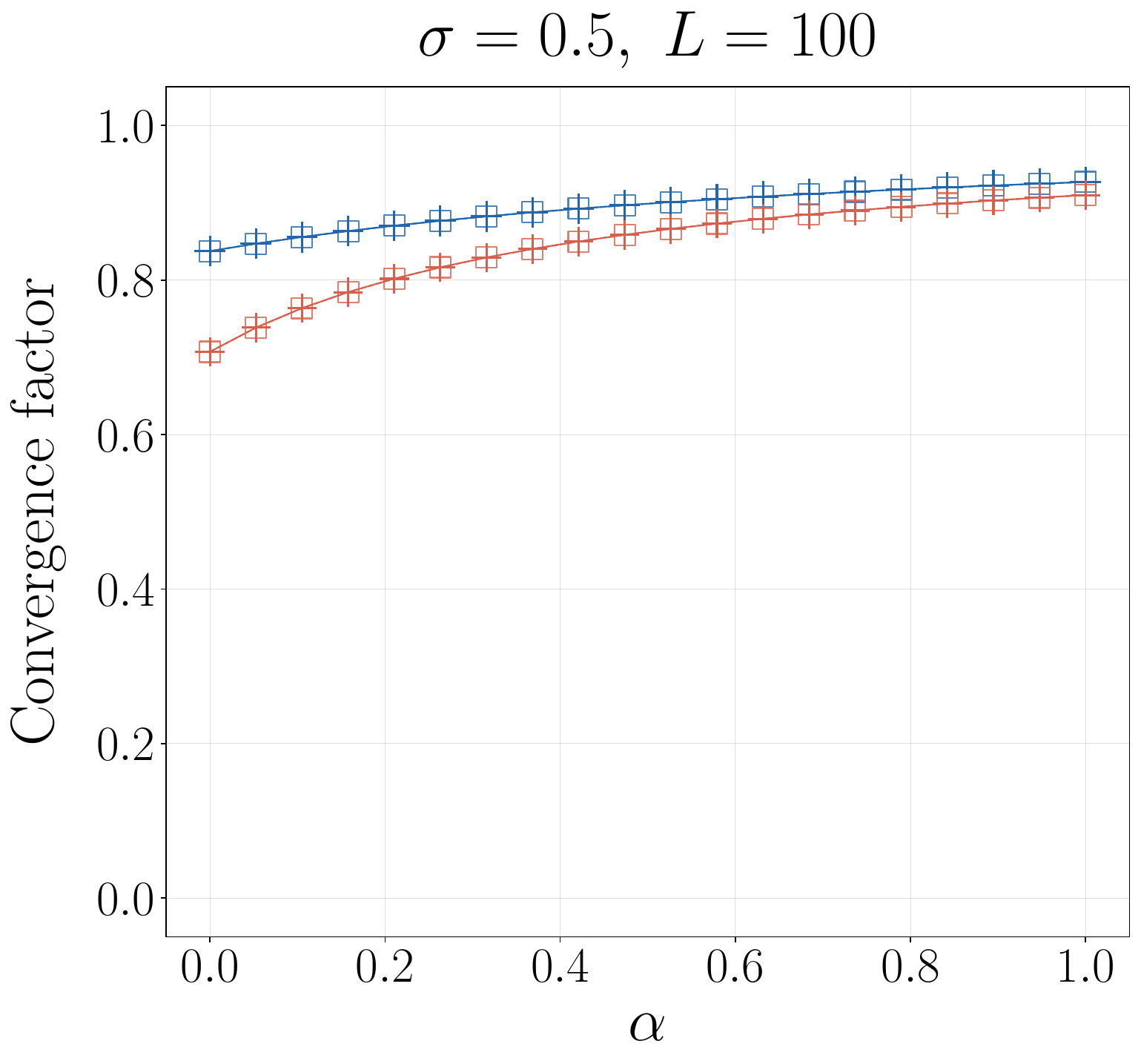}}
\end{subfigure}\hfill
\begin{subfigure}{0.3\textwidth}{\includegraphics[width=\linewidth, trim={0.1cm 0cm 0.2cm 0.2cm},clip]{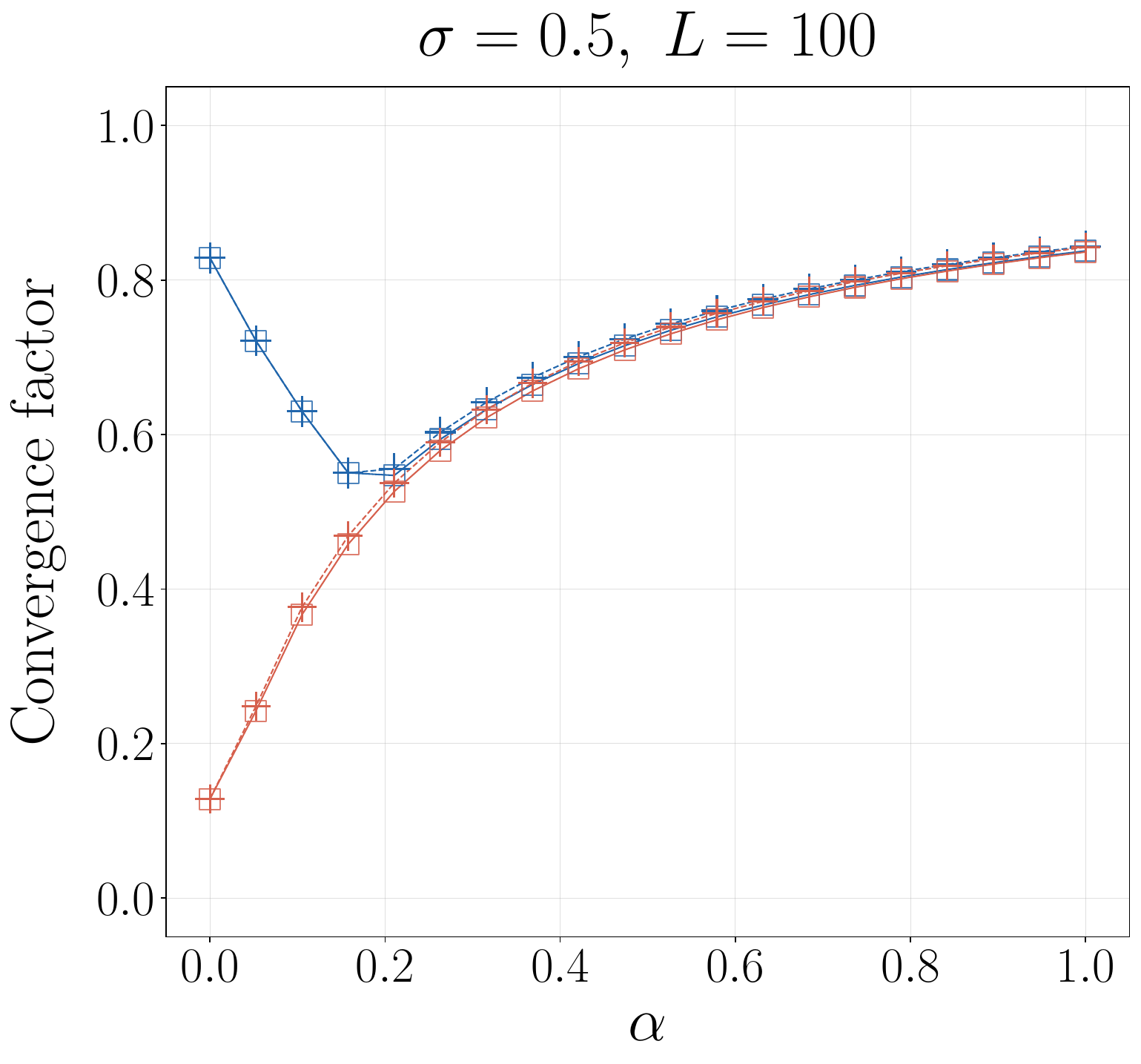}}
\end{subfigure}
\caption{Local convergence factors as functions in $\alpha\in[0,1]$,
for synthetic test problems of Example~\ref{ex:synth}, generated under the three different comparison graph structures:
Homogeneous (left column), Clustered (middle column), and Near-bipartite (right column),
with control parameters $\sigma=0.1,\ 0.5$ (dynamic range) and $L=10,\ 100$ (comparison number).}
\label{fig:1}
\end{figure}

\begin{table}[htbp]
\centering
\caption{
	Local convergence factor $\rho$
	and the corresponding gap $1-\rho$ for the boundary cases $\alpha=0$ and $\alpha=1$
	in~\Cref{fig:1}, with $L=100$.
	All reported values are rounded to three decimal places.
	 }
\label{tab:results}
\renewcommand{\arraystretch}{1.15}
\footnotesize
\begin{tabular}{l l cc cc}
\toprule
& & \multicolumn{2}{c}{$\sigma = 0.1$} & \multicolumn{2}{c}{$\sigma = 0.5$} \\
\cmidrule(lr){3-4} \cmidrule(lr){5-6}
Graph & Algorithm
  & $\rho$ / $1-\rho$ & $\bar{\rho}$ / $1-\bar{\rho}$
  & $\rho$ / $1-\rho$ & $\bar{\rho}$ / $1-\bar{\rho}$\\
\midrule
\multirow{4}{*}{Homogeneous}
   & $\alpha=0$, sync
    & 0.375 / 0.625 & 0.375 / 0.625
    & 0.376 / 0.624 & 0.376 / 0.624 \\
  & $\alpha=1$, sync
    & 0.692 / 0.308 & 0.691 / 0.309
    & 0.850 / 0.150 & 0.854 / 0.146 \\
  & $\alpha=0$, async
    & 0.149 / 0.851 & 0.149 / 0.851
    & 0.149 / 0.851 & 0.148 / 0.852 \\
  & $\alpha=1$, async
    & 0.659 / 0.341 & 0.658 / 0.342
   & 0.849 / 0.151 & 0.853 / 0.147 \\
\midrule
\multirow{4}{*}{Clusters}
  & $\alpha=0$, sync
    & 0.837 / 0.163 & 0.837 / 0.163
    & 0.837 / 0.163 & 0.837 / 0.163 \\
  & $\alpha=1$, sync
    & 0.919 / 0.081 & 0.919 / 0.081
    & 0.927 / 0.073 & 0.927 / 0.073 \\
  & $\alpha=0$, async
    & 0.706 / 0.294 & 0.706 / 0.294
    & 0.707 / 0.293 & 0.707 / 0.293 \\
  & $\alpha=1$, async
    & 0.897 / 0.103 & 0.897 / 0.103
    & 0.910 / 0.090 & 0.910 / 0.090 \\
\midrule
\multirow{4}{*}{Near-bipartite}
  & $\alpha=0$, sync
    & 0.829 / 0.171 & 0.829 / 0.171
    & 0.829 / 0.171 & 0.829 / 0.171 \\
  & $\alpha=1$, sync
    & 0.683 / 0.317 & 0.682 / 0.318
    & 0.838 / 0.162 & 0.844 / 0.156 \\
  & $\alpha=0$, async
    & 0.128 / 0.872 & 0.128 / 0.872
    & 0.128 / 0.872 & 0.128 / 0.872 \\
  & $\alpha=1$, async
    & 0.650 / 0.350 & 0.649 / 0.351
    & 0.836 / 0.164 & 0.842 / 0.158 \\
\bottomrule
\end{tabular}
\par\smallskip
\raggedright\footnotesize
\end{table}

\begin{figure}[htbp]
\begin{subfigure}{0.32\textwidth}\includegraphics[width=\linewidth, trim={0.0cm 0.5cm 4cm 1.5cm},clip]{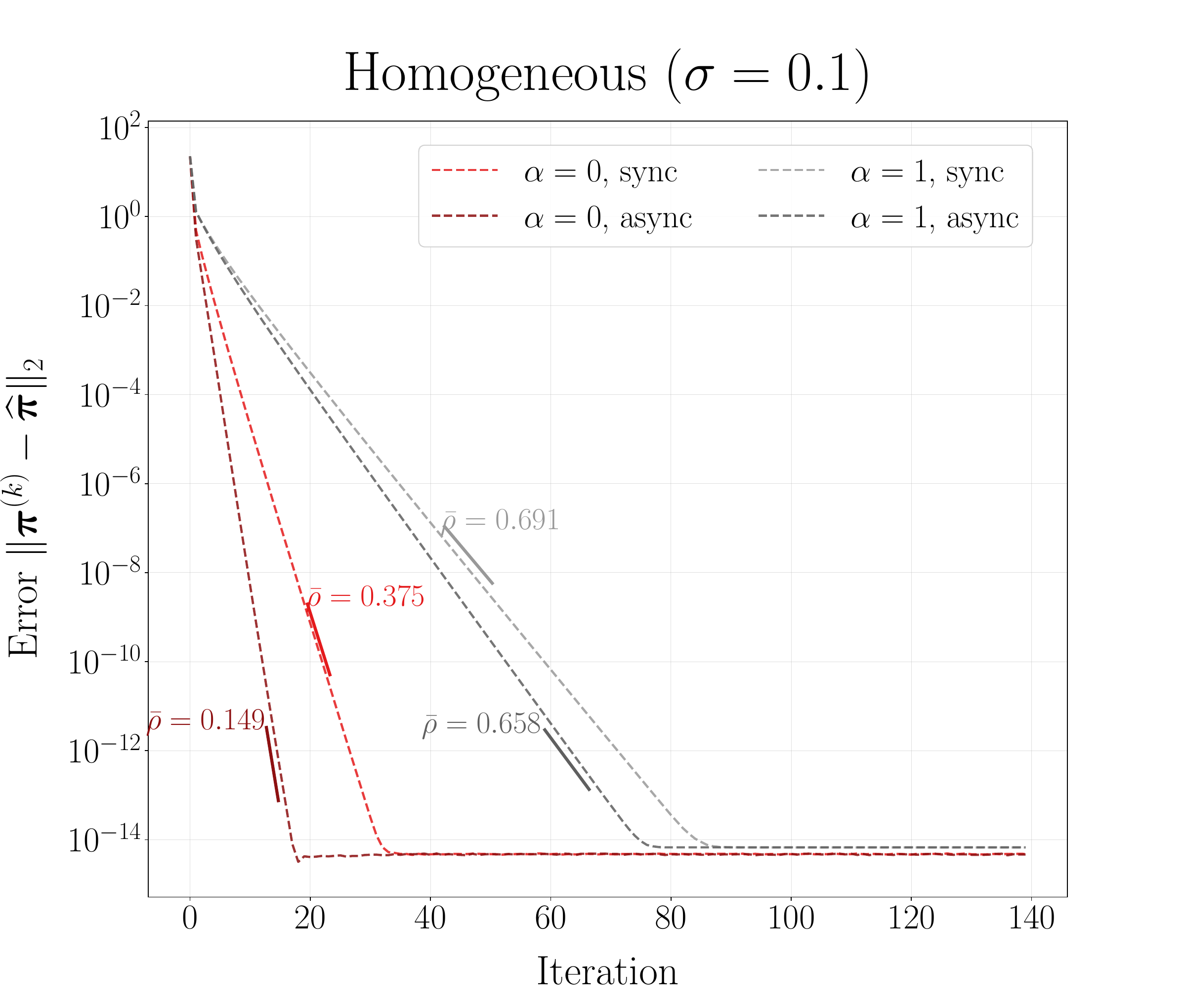}
\end{subfigure}\hfill
  \begin{subfigure}{0.32\textwidth}{\includegraphics[width=\linewidth, trim={0.0cm 0.5cm 4cm 1.5cm},clip]{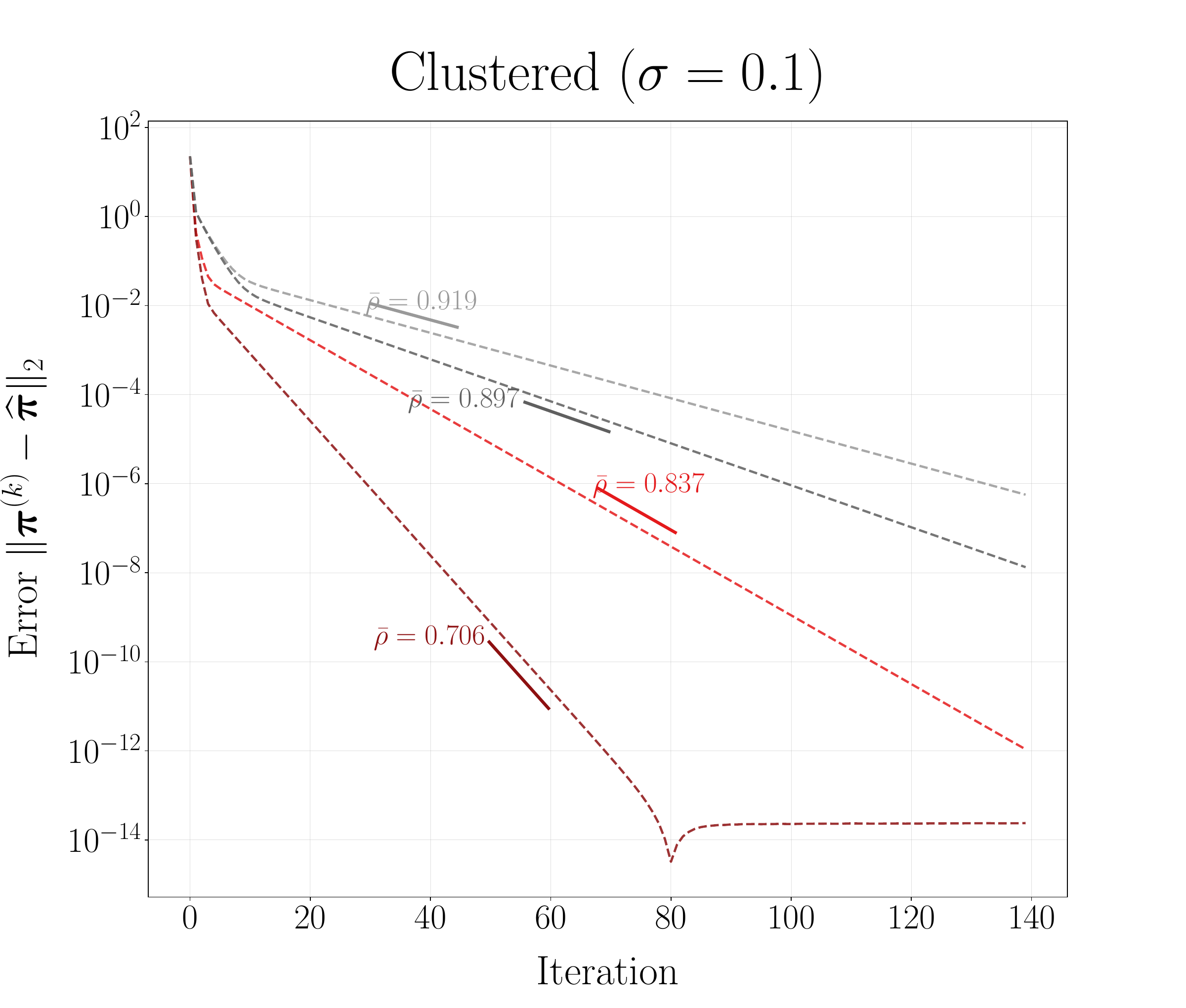}}
\end{subfigure}\hfill
  \begin{subfigure}{0.32\textwidth}{\includegraphics[width=\linewidth, trim={0.0cm 0.5cm 4cm 1.5cm},clip]{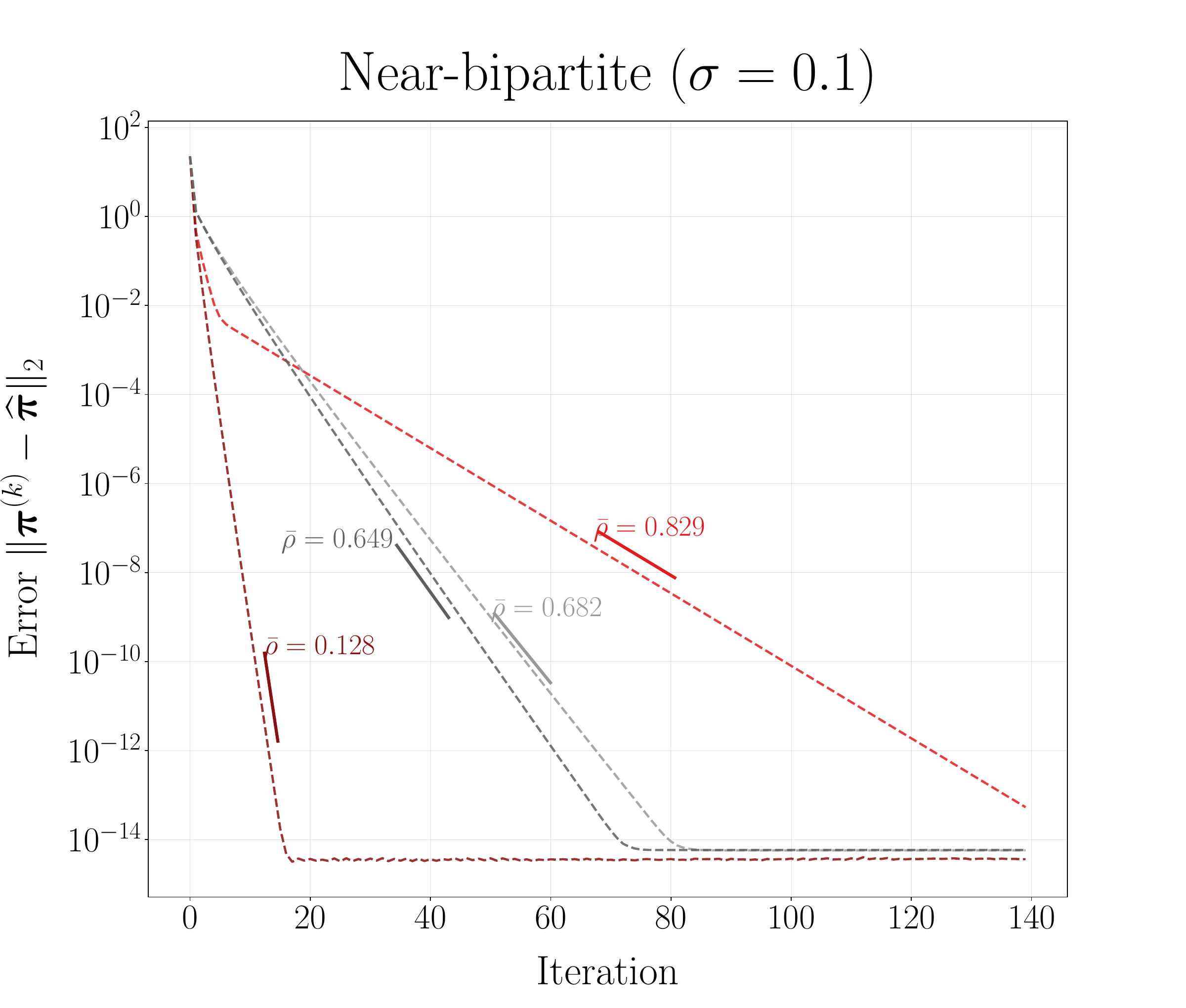}}  \end{subfigure}
 \\[10pt]
  \begin{subfigure}{0.32\textwidth}\includegraphics[width=\linewidth, trim={0.0cm 0.5cm 4cm 1.5cm},clip]{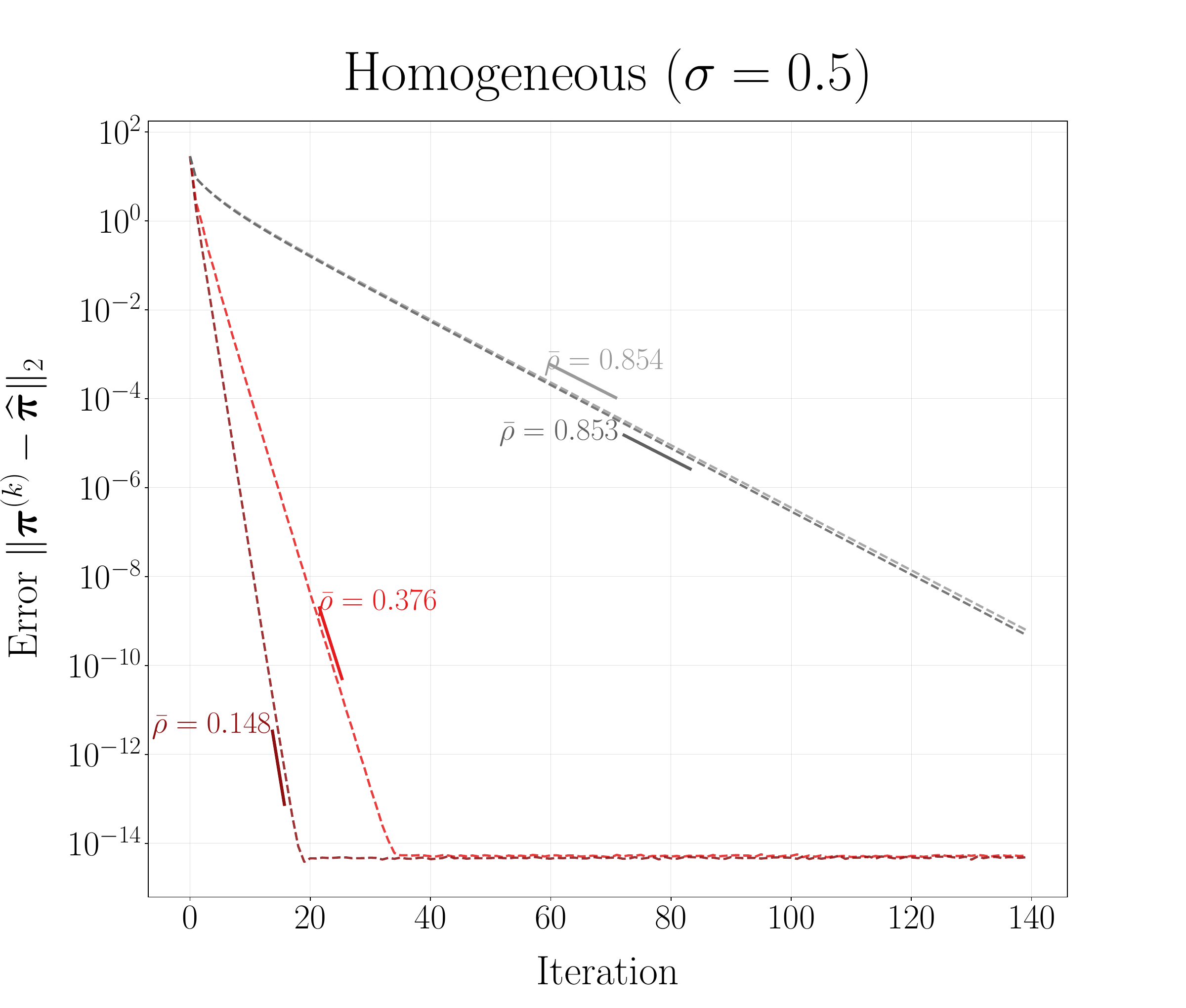}
  \end{subfigure}\hfill
  \begin{subfigure}{0.32\textwidth}{\includegraphics[width=\linewidth, trim={0.0cm 0.5cm 4cm 1.5cm},clip]{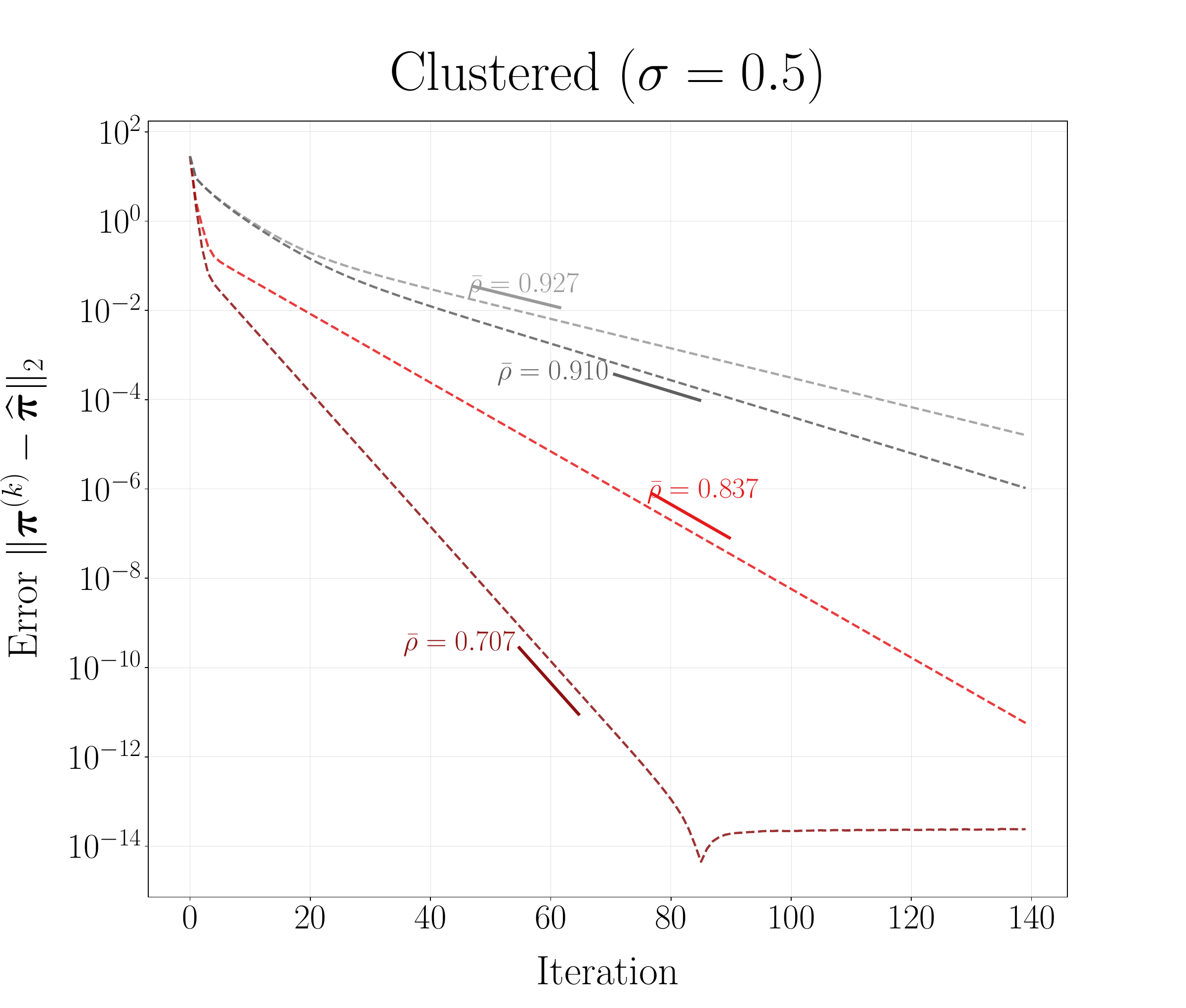}}
  \end{subfigure}\hfill
  \begin{subfigure}{0.32\textwidth}{\includegraphics[width=\linewidth, trim={0.0cm 0.5cm 4cm 1.5cm},clip]{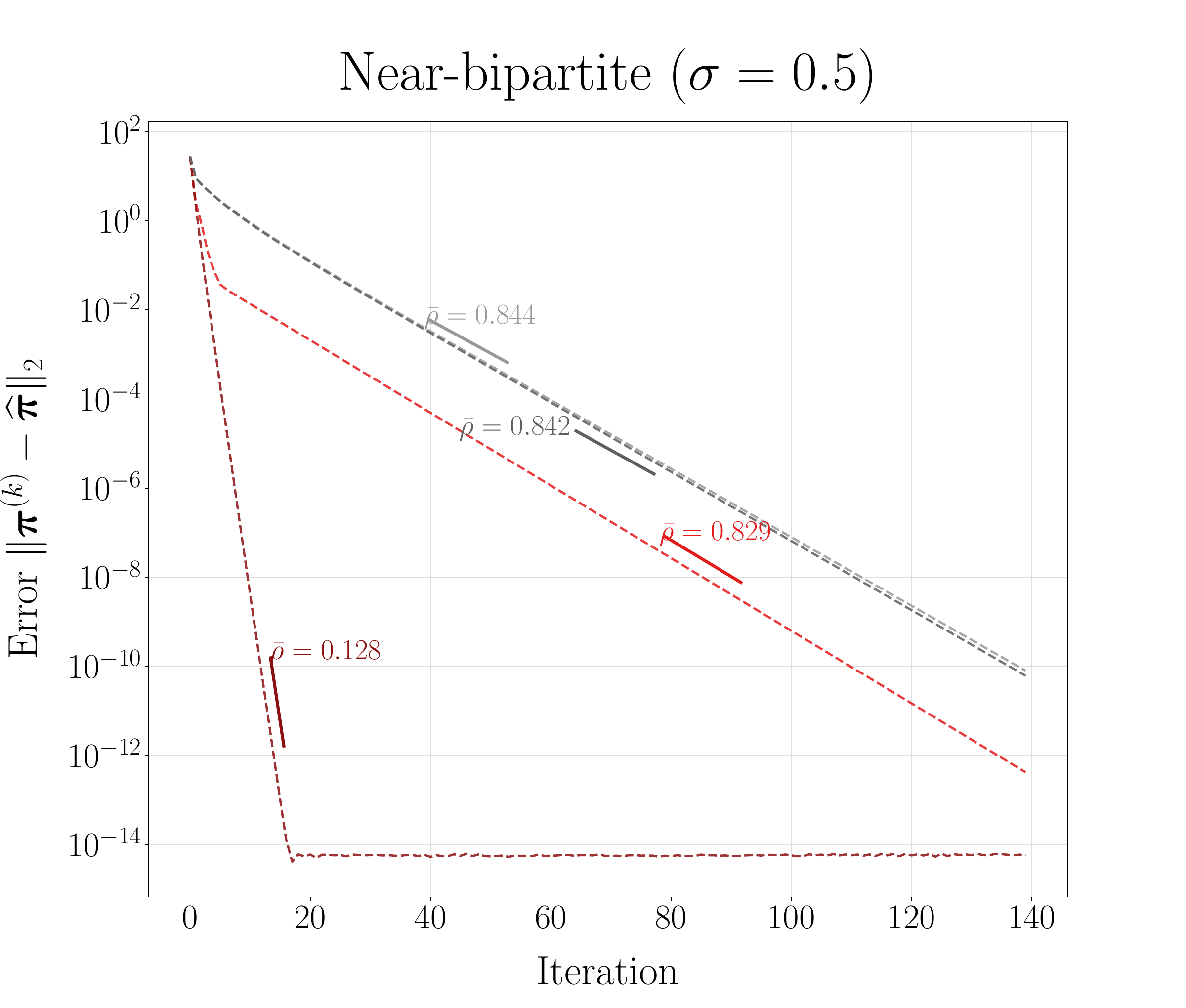}}
    \end{subfigure}
\caption{
	Convergence histories of Newman's $\alpha$-scheme with $\alpha=0$
	and $\alpha=1$ for the synthetic test problems in~\Cref{fig:1}, with $L=100$. 
	All algorithms are initialized from the all-ones vector.
	Solid lines indicate the theoretical asymptotic linear rates
	corresponding to the expected convergence factors $\bar\rho$.
}\label{fig:112} 
\end{figure}
\end{myexample}

\begin{myexample}\rm \label{eg:assumptions}
This example presents two representative cases in which the synchronous update in Newman's algorithm ($\alpha=0$) fails to converge
and therefore does not provide any acceleration over the classical Zermelo's algorithm ($\alpha=1$). In contrast, the asynchronous update remains convergent and may offer significant acceleration. These results further highlight the importance and effectiveness of asynchronous updates 
in the implementation of Newman's algorithm.

In the first experiment, we consider the bipartite comparison graph. In this case, $\brhosync(0)=1$ as noted in Remark~\ref{rmk:ssls}, indicating that Newman's $\alpha$-schemes with synchronous update may fail to converge when $\alpha=0$. In contrast,  with asynchronous updates, the algorithm is guaranteed to converge for all $\alpha\geq 0$ since $\rhoasync(\alpha)<1$ according to~\Cref{thm:rhoasync}.
For comparison, we test asynchronous updates using both the consistent ordering (grouping objects within each part together) and a random ordering (using random permutation).

Our test problem is generated using the same setup as in Example~\ref{ex:synth},
with parameters $p = 0$, $q = 0.5$, $L=10$, and $\sigma = 0.5$.
\Cref{3:(a),3:(b)} report 
the local convergence factors of the algorithms
and the convergence histories for $\alpha=0,1$.
As expected, $\rhosync(0)=1$ (to 3 decimal places) and $\rhoasync(\alpha)<1$.
Indeed, when $\alpha=0$, the synchronous update fails to converge,
while the asynchronous update remains convergent and offers
significant acceleration over Zermelo's algorithm ($\alpha=1$),
with speedup approximately $17.33$ for consistent ordering and $6.95$ for random
ordering, respectively.
From the left panel, the convergence factor $\rhoasync(\alpha)$ corresponding to the consistent ordering 
increases monotonically with $\alpha$, in agreement with the analysis in~\Cref{thm:coo}. This monotonicity is lost under the random ordering, although the two curves remain quite close for larger values of $\alpha$ away from zero.

In the second experiment, we consider the circular comparison graph, where the objects are arranged in a cycle, and each object is compared only with its two neighbors (a specialized bipartite graph). We study a challenging case in which the comparison outcomes in $\W$ are {\em cyclic}, i.e., $w_{ij}\neq 0$ if and only if $j \equiv i + 1 \pmod n$. The cyclic outcomes create a preference loop, with object $i$ preferred over $i+1$,
making the ranking based on the strength vector $\bm\pi$ under the BT model less informative (as the BT model is stochastically transitive), and slowing down the convergence of algorithms. Indeed, it can be shown that such a cyclic $\W$ is unlikely to arise under the BT model (see \Cref{app:bto1} for details), so the convergence analysis of Newman's $\alpha$-schemes based on the expected outcomes of the BT model may not accurately reflect the actual behavior of the algorithms (in particular, $\brhosync$ and $\rhosync$ may differ significantly).

For the numerical test, we set the problem size $n = 20$ 
and sample the nonzero $w_{ij}$ in the cyclic $\W$ 
as independent random integers uniformly distributed over 
$[10^7, 5\times 10^7]$. 
This yields a strongly connected comparison graph $\G([n],\W)$,
for which the MLE $\pih$ uniquely exists and is finite. 
\Cref{3:(c)} reports the local convergence factors of Newman's
$\alpha$-schemes for $\alpha\in[0,1]$.
For the synchronous updates, we observe that Zermelo's algorithm ($\alpha=1$) converges but exhibits a slow convergence rate and thus requires acceleration. However, decreasing $\alpha\to 0$ does not improve performance and in fact leads to divergence, as $\rhosync(\alpha)$ exceeds $1$.~\footnote{
Recall from~\Cref{thm:rhosync}  that the expected  $\brhosync(\alpha)\leq 1$ for all $\alpha\geq 0$, 
indicating a significant deviation from $\rhosync(\alpha)$,
as cyclic outcomes are unlikely under the BT model; see \Cref{app:bto1}.
}
In contrast, for the asynchronous updates, the algorithm remains convergent for all $\alpha$, consistent with~\Cref{thm:rhoasync}. At $\alpha=0$, the algorithm provides noticeable acceleration
compared with $\alpha=1$.
\end{myexample}

\begin{figure}[htbp]
\begin{subfigure}{0.3\textwidth}\includegraphics[width=\linewidth, trim={0.2cm 0cm 0.2cm 0.2cm},clip]{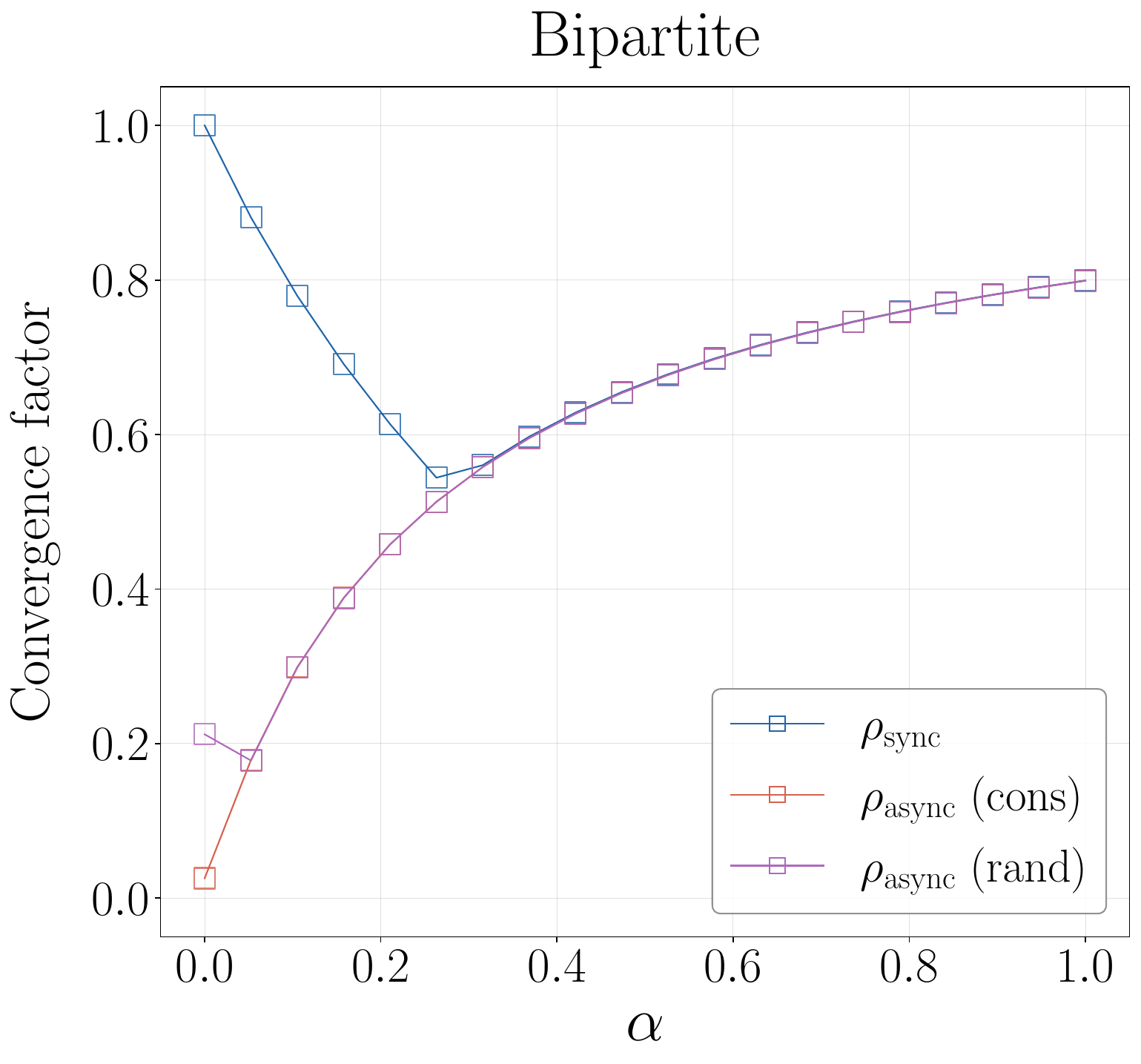}\caption{}\label{3:(a)}
\end{subfigure}\hfill
  \begin{subfigure}{0.32\textwidth}{\includegraphics[width=\linewidth, trim={0.2cm 0.0cm 0.2cm 0.25cm},clip]{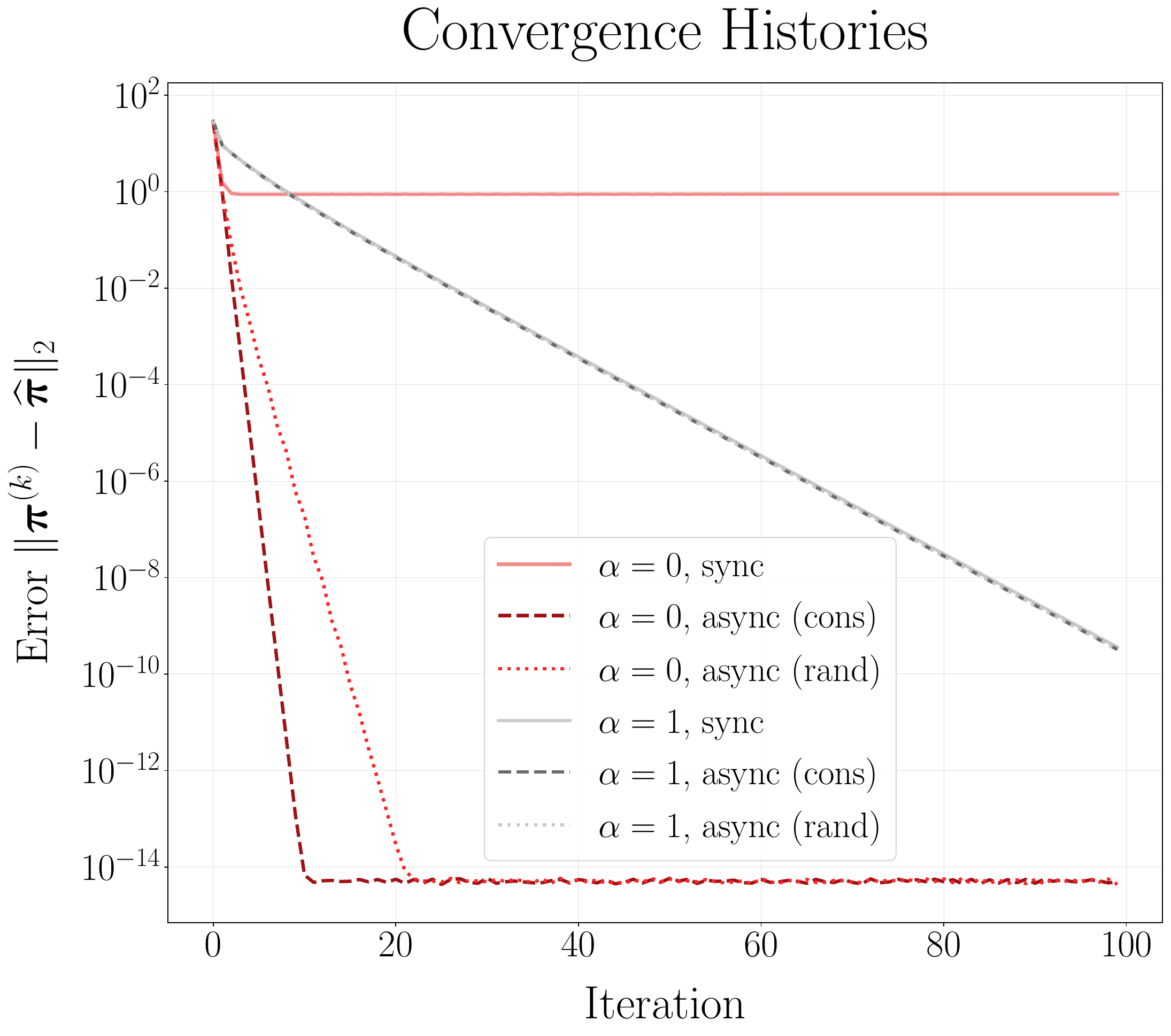}}\caption{}\label{3:(b)}
\end{subfigure}\hfill
  \begin{subfigure}{0.3\textwidth}{\includegraphics[width=\linewidth, trim={0.2cm 0cm 0.2cm 0.2cm},clip]{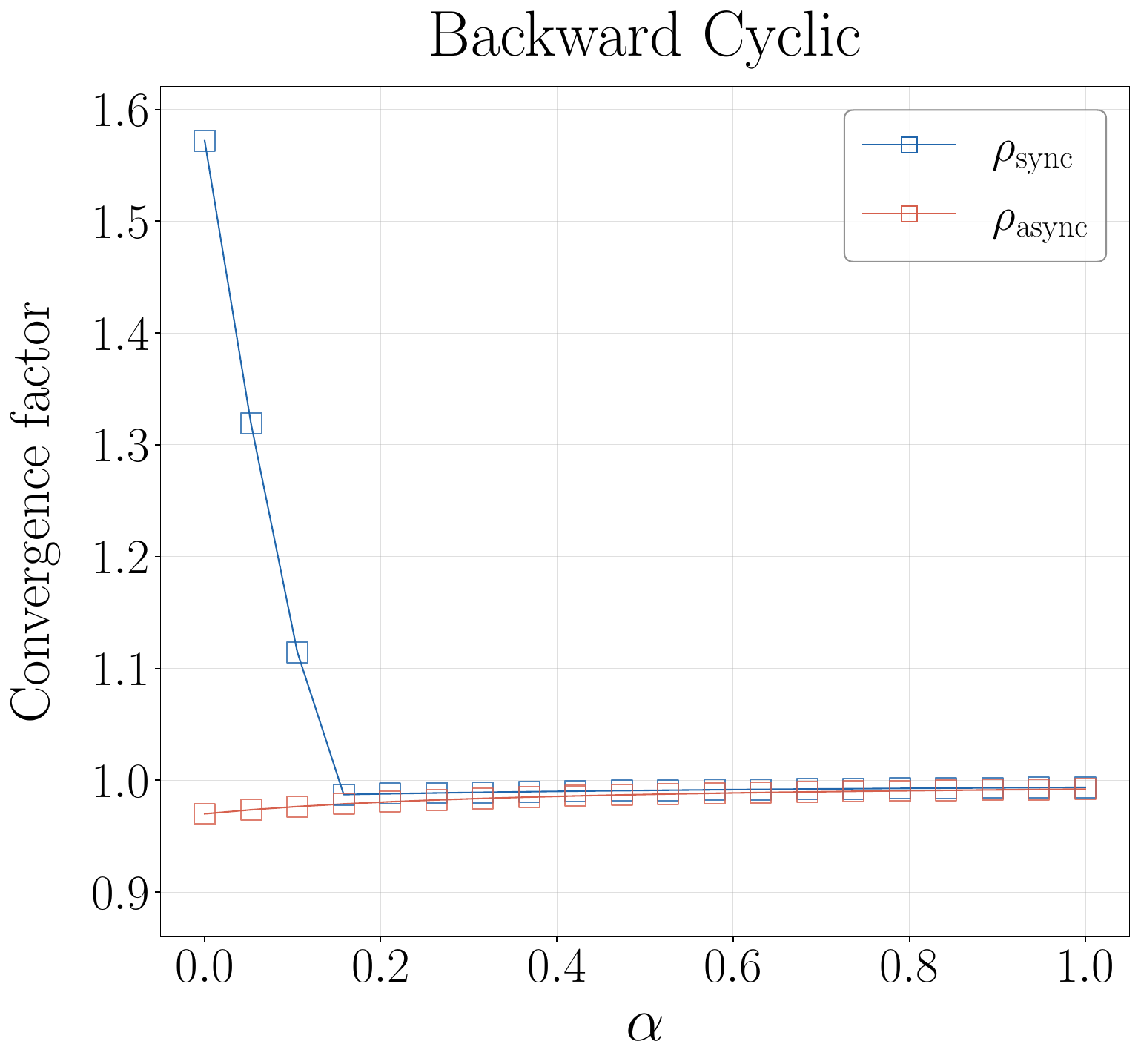}}\caption{}\label{3:(c)}
  \end{subfigure}
\caption{
	{\bf Bipartite comparison graph:}
	(a) 
	Local convergence factors of
	Newman's $\alpha$-scheme under synchronous and asynchronous updates;
	`cons' denotes consistent ordering, and `rand' denotes random
	ordering used in the asynchronous update;
	(b) 
	Convergence histories of the algorithms with
	$\alpha=0$ and $1$.
	{\bf Cyclic comparison graph:}
	(c) 
	Local convergence factors of algorithms.} \label{fig:2}
\end{figure}

\subsection{Real-world data}

This section demonstrates the effectiveness of our convergence analysis on real-world datasets. Comparison outcomes $\W$ arising in real-life applications may not follow the BT model exactly. For many applications, however, the convergence behavior of Newman's $\alpha$-scheme on those outcomes $\W$ can be predicted reasonably accurately by the fitted BT model obtained through the MLE. Specifically, given comparison outcomes $\W=\{w_{ij}\}$, one can fit a BT model~\eqref{bt-model}, with the same total number of comparisons $\M=\{m_{ij}\}$ with $m_{ij} = w_{ij}+w_{ji}$ and a strength vector estimated from the MLE~\eqref{mle0}, i.e., $\bm \pi^*\coloneqq\pih$. We show that the convergence factors $\brhosync$ and $\brhoasync$ based on the expected outcome of this fitted BT model may provide surprisingly accurate estimation of the observed convergence factors $\rhosync$ and $\rhoasync$, respectively; thereby further justifying the convergence analysis developed in~\Cref{sec:ana,sec:gs}.

For experiments, we consider three commonly used datasets for network data analysis and ranking algorithms, including a Vervet monkey dataset, an ATP tennis dataset, and an ASSISTments dataset. 

\begin{enumerate}
\item 
{\em Vervet monkeys.}\footnote{Available at \url{https://github.com/tbonne/rankReliability}}
A dominance hierarchy dataset recording agonistic interactions of
various kinds of wild vervet monkeys in the Samara Private Game Reserve
in South Africa \citep{vilette2020comparing}. After preprocessing to remove the objects with either zero wins or losses and retaining the largest strongly connected component, we are left with $n = 64$ objects and $N = 11,632$ comparisons. 

\item
{\em ATP tennis.}\footnote{Available at \url{https://github.com/PinjunD/Statistical-ranking-with-dynamic-covariate}}
A dataset of historical match outcomes from professional men's tennis tournaments
organized by the Association of Tennis Professionals (ATP).
The original data were collected by Jeff Sackmann\footnote{\url{https://github.com/JeffSackmann/tennis_MatchChartingProject}}, while the cleaned dataset used here is from \citet{dong2025statistical}, which consists of tour-level main-draw matches played between January 1980 and May 2024. 
This dataset contains $n$ = 1,803 players and $N$ = 136,935 matches, with each player participating in at least $10$ matches.

\item
{\em ASSISTments.}\footnote{Available at \url{https://sites.google.com/site/assistmentsdata/home/2009-2010-assistment-data/skill-builder-data-2009-2010}}
A student-problem response dataset collected from ASSISTments, an online math tutoring system that logs whether each student answers each problem correctly, during the 2009--2010 school year \citep{feng2009addressing}. We use the full dataset without restricting to specific problem skills, retaining only students and problems with at least $30$ interactions, and taking the largest strongly connected component of the resulting bipartite comparison graph. This yields $n = 1,545$ objects ($1,020$ problems and $525$ students) and $N = 51,788$ comparisons. 
\end{enumerate}

Following the previous experiments, we apply Newman's $\alpha$-scheme with both synchronous and asynchronous updates to the three test datasets and compare the observed local convergence factors $\rho$ with the predicted $\bar \rho$ computed from the expected outcome $\overline \W$ of the fitted BT model. As shown in the left panels of~\Cref{fig:3}, the predicted convergence factors closely match the observed ones, with the two curves visually overlapping for all $\alpha\in[0,1]$. This proximity is partly explained by the fact that the expected outcome matrix $\bar{\bm W}$ remains close to the original $\bm W$; particularly, the relative error $\|\bm W-\bar{\bm W}\|_2/\|\bm W\|_2$ is about $0.18$, $0.09$, and $0.19$ for Vervet Monkeys, ATP, and ASSISTments, respectively. These results indicate that the convergence properties predicted from the fitted BT model (for the ASSISTments, this is also called the Rasch model \citep{rasch1960studies}) can accurately capture the actual behavior of the algorithms on the real-world dataset, further supporting the practical relevance of our convergence analysis based on the expected outcomes of BT models.

\begin{figure}[htbp]
\centering
\includegraphics[width=.32\linewidth, trim={0.1cm 0cm 4cm 1.6cm},clip]{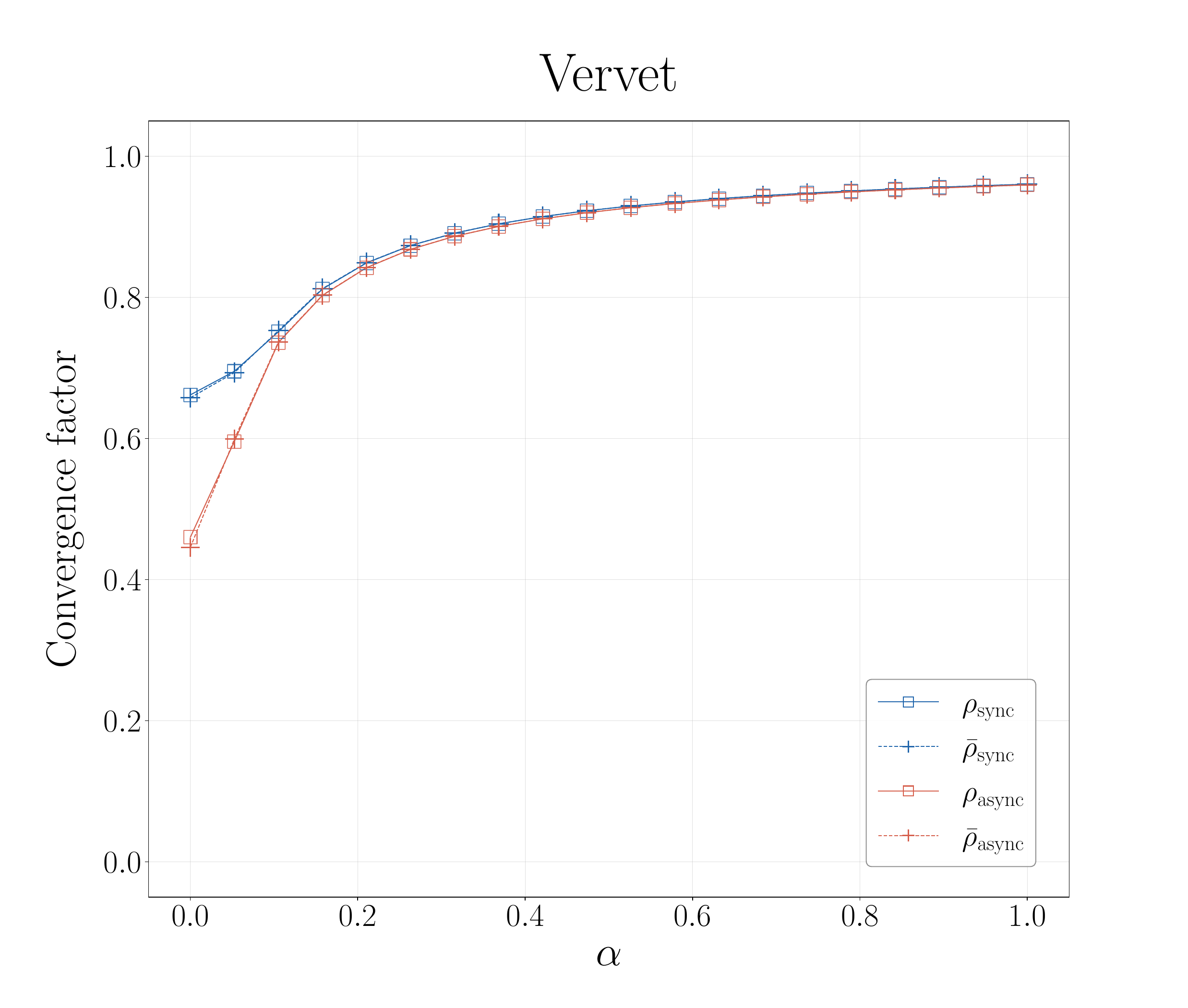}
\includegraphics[width=.32\linewidth, trim={0.1cm 0cm 4cm 1.6cm},clip]{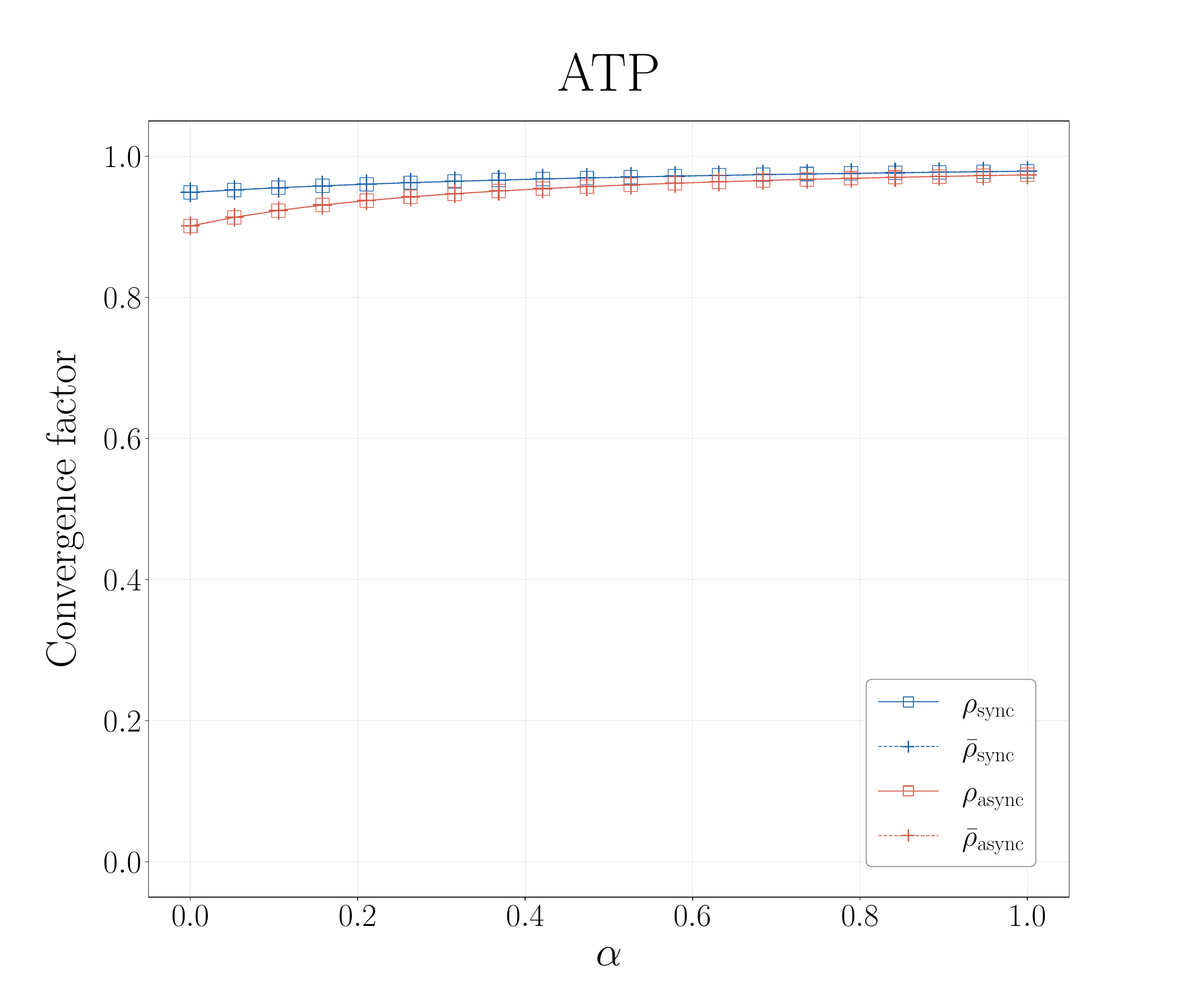}
\includegraphics[width=.32\linewidth, trim={0.1cm 0cm 4cm
1.6cm},clip]{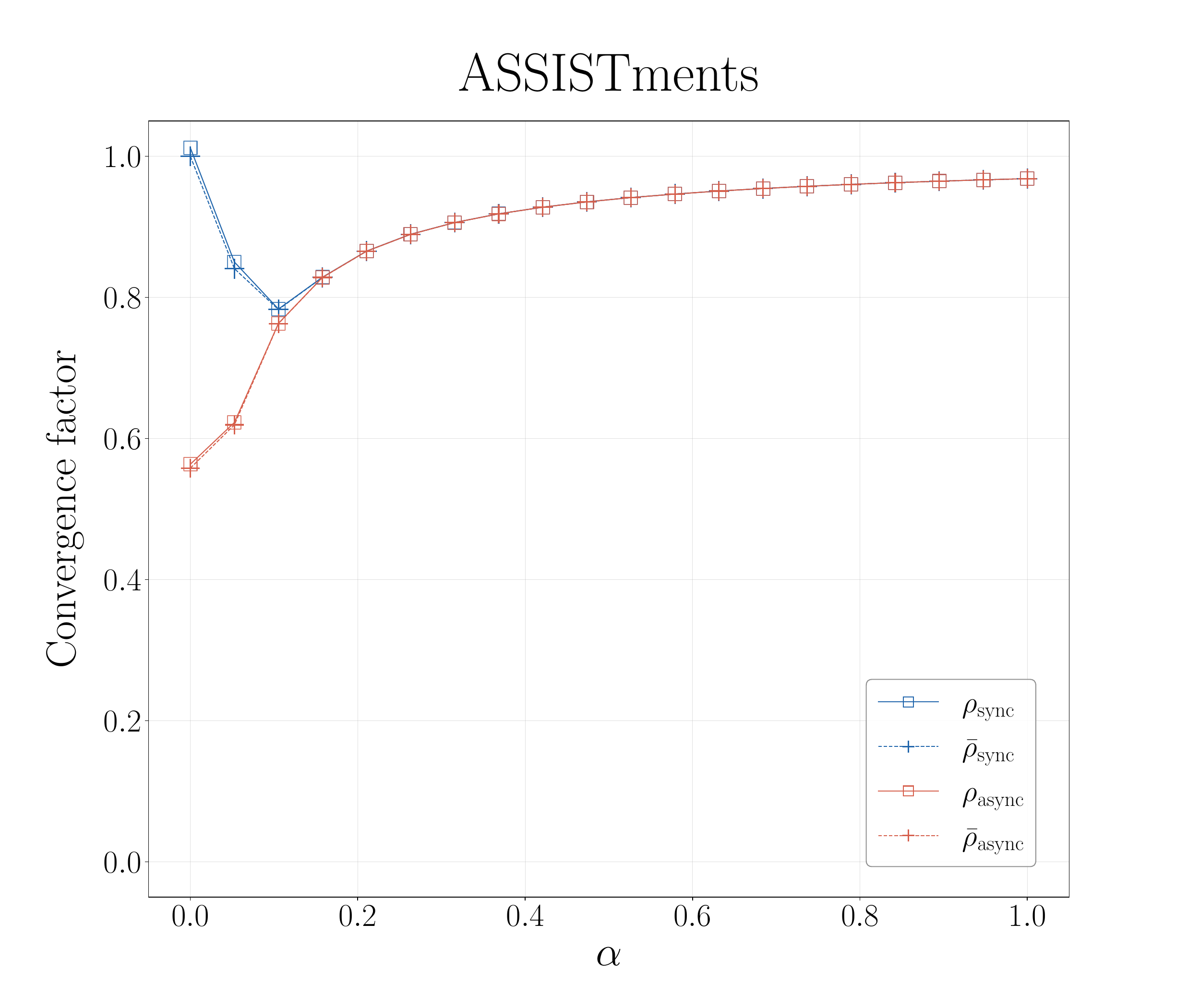}\\
\includegraphics[width=.32\linewidth, trim={0.1cm 0cm 4cm 1.6cm},clip]{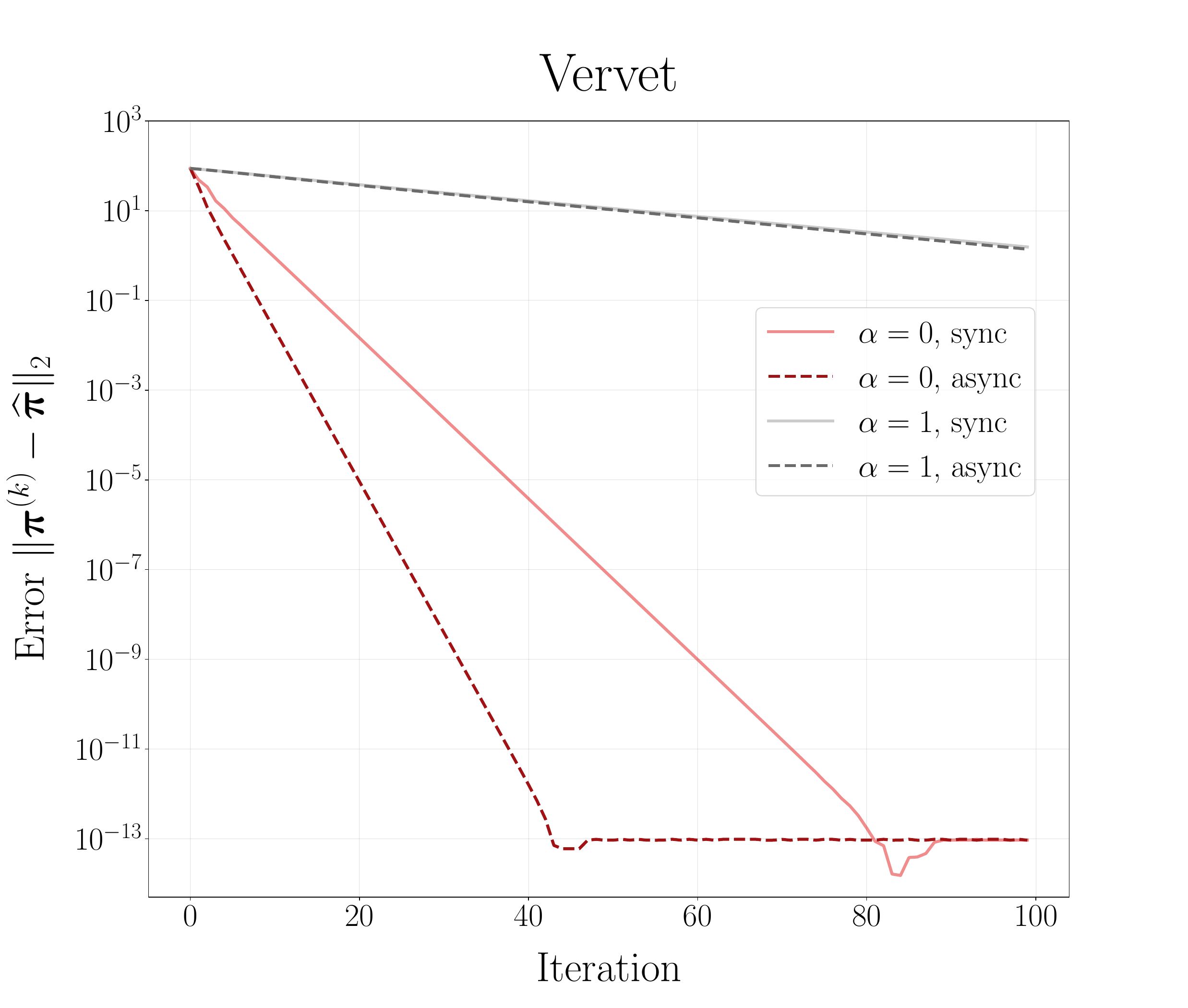}
\includegraphics[width=.32\linewidth, trim={0.1cm 0cm 4cm 1.6cm},clip]{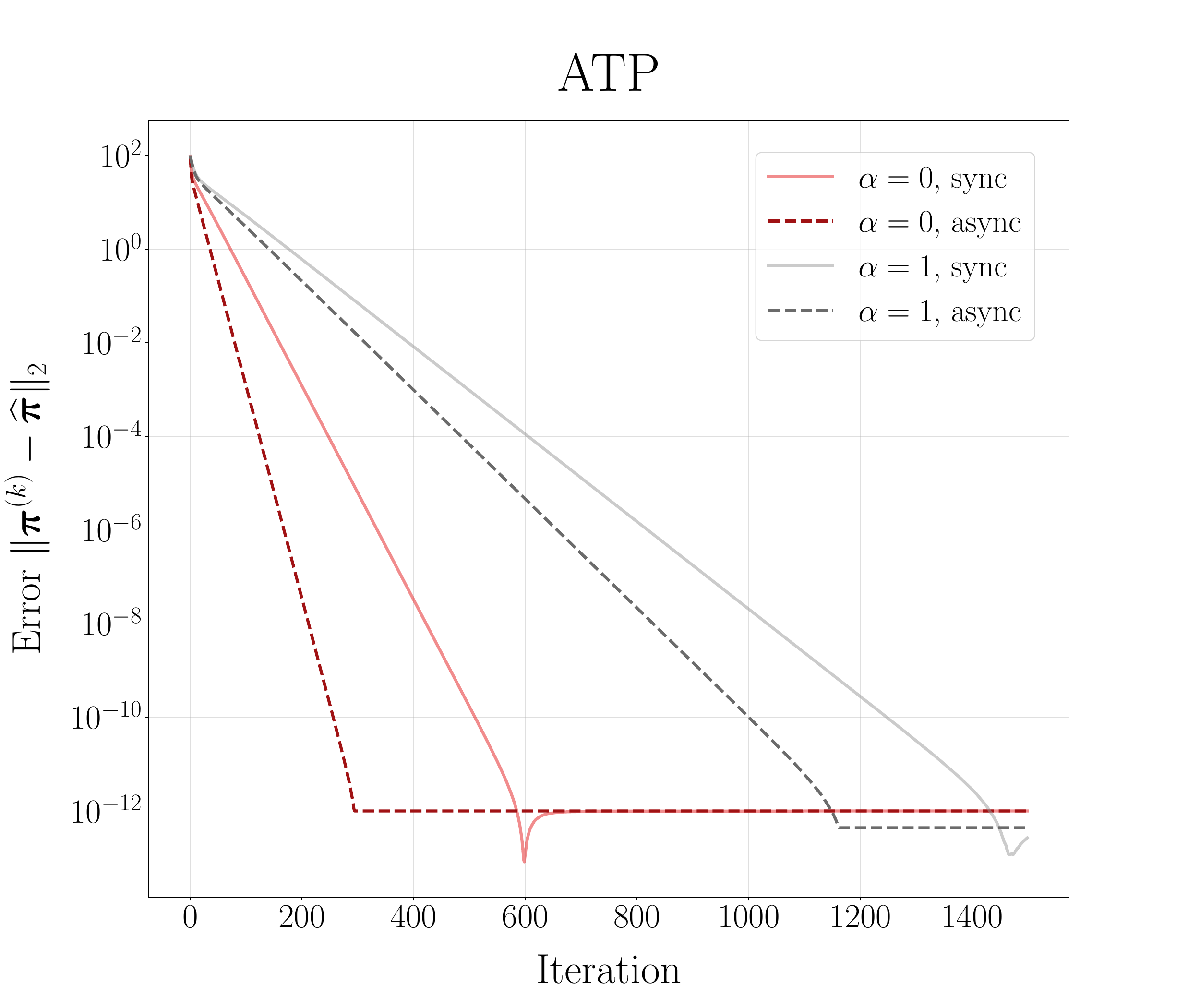}
\includegraphics[width=.32\linewidth, trim={0.1cm 0cm 4cm 1.6cm},clip]{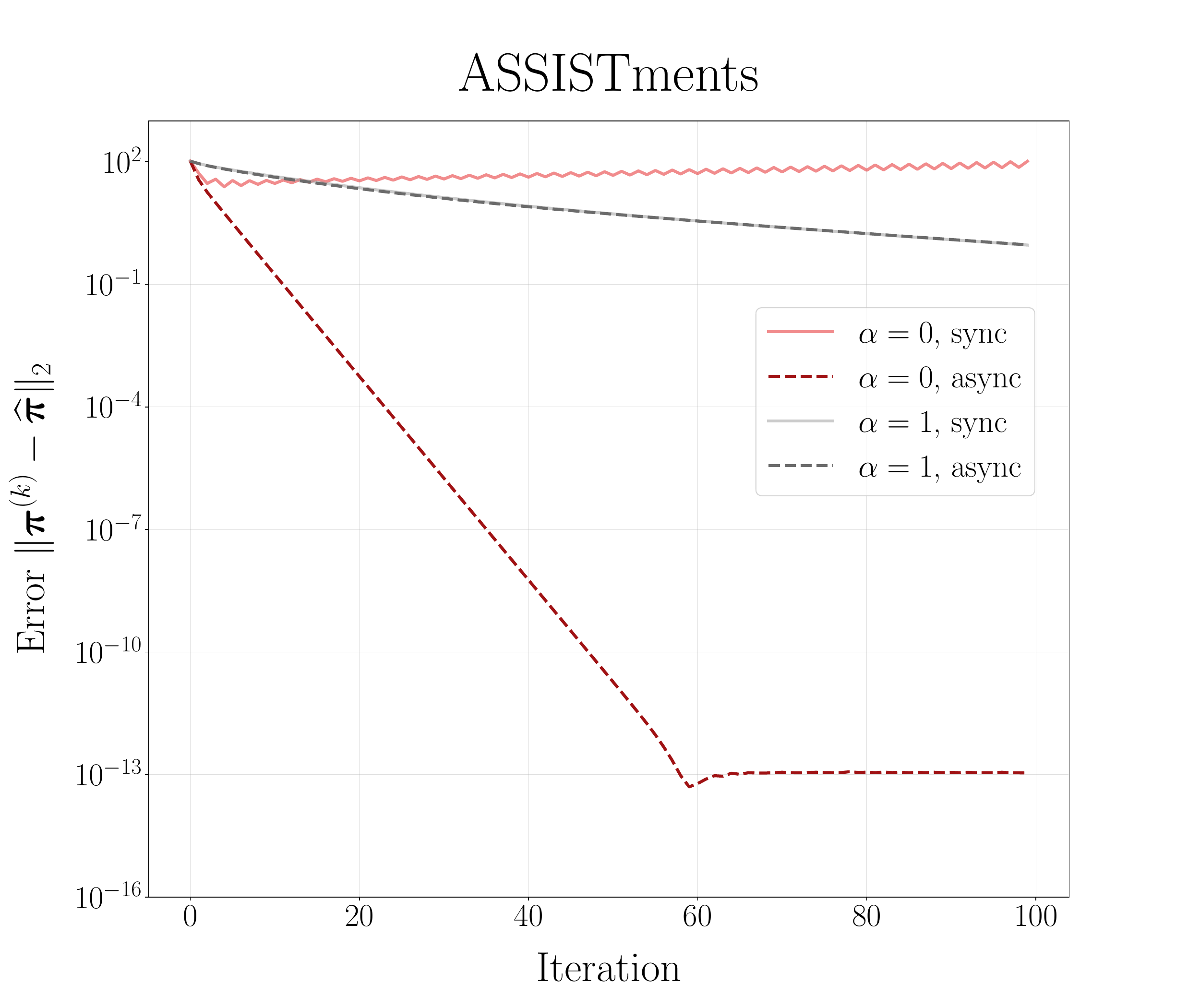}
\caption{
{\bf Top row}:
Observed local convergence factors $\rho$ of Newman's $\alpha$-scheme 
with synchronous and asynchronous updates,
together with the corresponding factors $\bar{\rho}$ predicted by the estimated BT model, for the Vervet Monkeys dataset (left column), the ATP dataset (middle column), and the ASSISTments dataset (right column). 
{\bf Bottom row}:
Convergence histories of the algorithms with $\alpha=0$ and $\alpha=1$.
} \label{fig:3}
\end{figure}

We also observe that, consistent with the synthetic experiments, asynchronous updates become increasingly effective and worthwhile as $\alpha \to 0$. For the Vervet Monkeys dataset, the reduction in convergence factors of asynchronous over synchronous update, $\rhoasync(\alpha)/\rhosync(\alpha)$, is about $0.69$ (with the predicted $\brhoasync(\alpha)/\brhosync(\alpha)\approx 0.67$) for Newman's algorithm ($\alpha = 0$), substantially smaller than the mere $0.99$ (and $0.99$) for Zermelo's algorithm ($\alpha=1$). Similarly, for the ATP dataset, the corresponding reductions are $0.94$ (and $0.95$) for $\alpha=0$ and $0.99$ (and $0.99$) for $\alpha=1$; and for the ASSISTments dataset, they are $0.56$ (and $0.56$) for $\alpha=0$ and $1.00$ (and $1.00$) for $\alpha=1$. These results further highlight the practical benefits of asynchronous updates for the implementation of Newman's algorithm.


Another noteworthy observation from~\Cref{fig:3} is that the dependence
of the convergence factors on the parameter $\alpha$ for the Vervet
monkeys, ATP, and ASSISTments datasets closely resembles that of the
homogeneous, clustered, and near-bipartite cases in~\Cref{fig:1}, respectively. Indeed, the Vervet dataset consists of a small,
well-connected comparison graph that is qualitatively homogeneous. The
ATP dataset, by contrast, exhibits clustered structures that may be
partially explained by generational separation among players. For instance, the gap between the \textit{Agassi--Sampras rivalry} of the early 1990s and the rise of the \textit{Big Three} (Djokovic, Federer, Nadal) in the 2000s results in two relatively well-connected sub-populations with sparser cross-era comparisons. The ASSISTments dataset is bipartite due to the item-response mechanism generating the pairwise comparisons.

\section{Conclusion}\label{sec:conc}
This paper studied the local convergence of Newman's $\alpha$-scheme
under both synchronous and asynchronous updates. We derived closed-form expressions for the local convergence factors and analyzed their dependence on the parameter $\alpha$ through the population version of the BT model. For the synchronous update, we showed that the local convergence factor may exceed $1$ when $\alpha<1$, leading to local divergence, and that it is a quasi-convex function in $\alpha$ under the population BT model. In contrast, for the asynchronous update, we proved the local convergence factor is always strictly less than $1$, and that it is provably monotonically increasing in $\alpha$ under the population BT model of bipartite comparison graphs in consistent ordering. These results complement existing convergence analyses of Zermelo and related algorithms for the BT model and, in particular, highlight the importance of the asynchronous update for implementing Newman's algorithm with $\alpha=0$ in order to achieve accelerated convergence over the classical Zermelo's algorithm (with $\alpha=1$). We further established asymptotic results justifying the approximation of the local convergence factors by their population counterparts under the BT model and demonstrated the effectiveness of our theoretical
analysis through numerical experiments.

There are several interesting directions for future research. First, our analysis of the variation of the convergence factors is established for the population version of the BT model. To what extent these results extend to general comparison outcomes (e.g., under specific forms of model misspecification) remains an open question. Second, although extensive numerical experiments suggest that the convergence factor of asynchronous updates is often monotonic in $\alpha$, our current theory establishes this only for population BT models on consistently ordered bipartite graphs. Extending this analysis to more general comparison graphs remains a significant theoretical challenge. A related direction is to understand the impact of the update ordering in the asynchronous implementation for a fixed comparison graph and to design efficient update-ordering strategies. We hope that this work provides a foundation for addressing these questions in future research.

\paragraph{Acknowledgment.} We would like to thank Hengrui Luo for carefully reading the manuscript and for pointing out a few gaps and typos. R. Han was supported by the Hong Kong Research Grants Council (No.~15302225) and the Hong Kong Polytechnic University (P0044617, P0045351). D. Lu was supported in part by NSF Grant DMS-2110731. Y. Xu was supported by start-up funding from the University of Kentucky and the AMS-Simons Travel Grant (No.~3048116562).

\appendix

\section{Proofs in \Cref{sec:app5}}

\subsection{Proof of \Cref{thm:asymp1}}\label{app:001}
We use $i\sim j$ to denote that $i, j$ are compared, i.e., $m_{ij}>0$. 
For the existence of the MLE, a sufficient condition is to require $\min_{i\sim j}\{w_{ij}, w_{ji}\}>0$, which, combined with the connectivity of $\G([n], \M)$, implies the strong connectivity property. It follows from a union bound that the probability of this event is at least 
\begin{align}\label{mlewq1}
\P\left(\min_{i\sim j}\{w_{ij}, w_{ji}\}>0\right)&\geq 1- \sum_{i\sim j}\left(\P(w_{ij}=0) + \P(w_{ji} = 0)\right)\\
& = 1-\sum_{i\sim j}\left[\left(\frac{\pi^*_i}{\pi^*_i+\pi^*_j}\right)^{m_{ij}} + \left(\frac{\pi^*_j}{\pi^*_i+\pi^*_j}\right)^{m_{ij}}\right]\nonumber\\
& \geq 1 - n(n-1)\left(\frac{\kappa}{1+\kappa}\right)^L\xrightarrow{L\to\infty} 1, \nonumber
\end{align}
where $\kappa$ is the dynamic range of $\bm\pi^*$. 

For the consistency of the MLE, we apply a similar chaining estimate as in \cite{han2023general}. In this case, it is more convenient to use $u_i=\log\pi_i$ and write the log-likelihood as
\begin{align*}
l(\bm u) = \sum_{i\sim j}\left[w_{ij}\log\left(\frac{e^{u_i}}{e^{u_i}+e^{u_j}}\right) + w_{ji}\log\left(\frac{e^{u_j}}{e^{u_i}+e^{u_j}}\right)\right], 
\end{align*} 
with constraint $\sum_{i}u_i = 0$. For $S, T\subseteq [n]$, let $m_{S, T} = \sum_{i\in S}\sum_{j\in T}m_{ij}$ and let $\partial S$ denote the boundary of $S$ in $\G([n], \M)$. It follows from the direct computation that   
\begin{align*}
\sum_{i\in S}\partial_i l(\bm u^*) = \sum_{i\in S}\sum_{j\sim i} \left(w_{ij} - \frac{m_{ij} e^{u^*_i}}{e^{u^*_i}+e^{u^*_j}}\right) = \sum_{i\in S}\sum_{j\in\partial S}\left(w_{ij} - \frac{m_{ij} e^{u^*_i}}{e^{u^*_i}+e^{u^*_j}}\right),
\end{align*}
where we used the that if both $i, j\in S$ and $i\sim j$ then the associated summands do not contribute. 
The (distribution of the) sum is $m_{S, \partial S}$ independent centered Bernoulli random variables. Applying Hoeffding's inequality and a union bound over $S$ yields that, with probability at least $1-(2/L)^n$, 
\begin{align}
\sum_{i\in S}\partial_i l(\bm u^*)\leq c_1\sqrt{n\cdot m_{S, \partial S}\log L}\quad\quad\text{for all $S\subset [n]$}, \label{mlewq2}
\end{align}
where $c_1>0$ is some absolute constant. 

We now condition on the intersection of the probabilistic events in \eqref{mlewq1}-\eqref{mlewq2}. Under such circumstances, the MLE $\widehat{\bm u}$ exists and is finite, and satisfies $\partial_i l(\widehat{\bm u}) = 0$ for all $i\in [n]$. Consequently, 
\begin{align}
\sum_{i\in S}\partial_i l(\bm u^*)-\sum_{i\in S}\partial_i l(\widehat{\bm u})\leq c_1\sqrt{n\cdot m_{S, \partial S}\log L}\quad\quad\text{for all $S\subset [n]$}.\label{mlewq3}
\end{align}
We now construct a particular sequence of sets that allows us to estimate $\|\widehat{\bm u}-\bm u^*\|_\infty$. Let 
\begin{align}
\beta^+\in\argmax_i (\widehat{u}_i - u_i^*), \quad\quad \beta^-\in\argmin_i (\widehat{u}_i - u_i^*), 
\end{align}
and define $B_k  = \{j\in [n]: (\widehat{u}_{\beta^+} - u_{\beta^+}^*)-(\widehat{u}_j - u_j^*)\leq (k-1)\tau\}$ (i.e., $\beta^+\in B_1$), where $c_2 = (1+2\kappa^2)^{-1}$ and $\tau = (2c_1/c_2)\sqrt{n\log L/L}$. We may assume $L$ is large enough so that $\tau<1/2$. 

We claim that $|B_k|$ is strictly increasing in $k$ if $|B_k|<n$, which implies that $\beta^-\in B_n$. The desired result follows by noting 
\begin{align*}
\|\widehat{\bm u}-\bm u^*\|_\infty\leq (\widehat{u}_{\beta^+} - u_{\beta^+}^*) - (\widehat{u}_{\beta^-} - u_{\beta^-}^*)\leq (n-1)\tau\xrightarrow{L\to\infty} 0. 
\end{align*}
It remains to verify the claim. We argue by contradiction. Suppose $|B_k| = |B_{k+1}|<n$ for some $k$. Then for $i\in B_k$ and $j\in \partial B_k\subseteq B^\complement_{k+1}$, $\widehat{u}_i-\widehat{u}_j\geq u_i^*-u_j^* + \tau$. This allows us to lower bound \eqref{mlewq3} as 
\begin{align*}
\sum_{i\in B_k}\partial_i l(\bm u^*)-\sum_{i\in B_k}\partial_i l(\widehat{\bm u}) &= \sum_{i\in B_k}\sum_{j\in\partial B_k}m_{ij}\left(\frac{1}{1+e^{-(\widehat{u}_i-\widehat{u}_j)}}- \frac{1}{1+e^{-(u^*_i-u^*_j)}}\right)\\
&\geq \sum_{i\in B_k}\sum_{j\in\partial B_{k}}m_{ij}\left(\frac{1}{1+e^{-(u^*_i-u^*_j + \tau)}}- \frac{1}{1+e^{-(u^*_i-u^*_j)}}\right)\\
&\geq c_2 m_{B_k, \partial B_{k}}\tau, 
\end{align*} 
where the last step used the mean-value theorem for $\tau<1/2$. Combining this with \eqref{mlewq3} yields 
\begin{align*}
c_2 m_{B_k, \partial B_{k}}\tau\leq c_1\sqrt{n\cdot m_{B_k, \partial B_k}\log L}\implies \frac{2c_1}{c_2}\sqrt{\frac{n\log L}{L}}\leq\tau\leq \frac{c_1}{c_2}\sqrt{\frac{n\log L}{m_{B_k, \partial B_k}}}\leq \frac{c_1}{c_2}\sqrt{\frac{n\log L}{L}},
\end{align*}
where the last step needs $|B_k|<n$. But this is a contradiction. Hence, the proof is complete. 

\subsection{Proof of \Cref{thm:asymp}}\label{app:002}

\paragraph{Proof of part (a)}
Since $n$ is fixed, it suffices to show $[\bm{J}_{F_\alpha}(\pih)]_{ij}-[\JJ_{F_\alpha}(\bm\pi^*)]_{ij}\xrightarrow{\P} 0$ as $L\to\infty$ for each $i, j\in [n]$. 
Fixing $i\neq j$ and dividing the numerator and denominator in each term of $[\bm J_{F_\alpha}(\bm\pi)]_{ij}$ by $m_i$ and $m_i^2$, respectively, we obtain
\begin{align}
[\bm J_{F_\alpha}(\bm\pi)]_{ij}=
\frac{\frac{1}{m_i}w_{ij} \frac{\pi_i(1 - \alpha)}{(\pi_i + \pi_j)^2} }{\frac{1}{m_i}\sum_{\ell} \frac{\alpha w_{i\ell} + w_{\ell i}}{\pi_i + \pi_\ell} } + \frac{\left(\frac{1}{m_i}\sum_{\ell} w_{i\ell} \frac{\alpha \pi_i + \pi_\ell}{\pi_i + \pi_\ell}\right)\frac{1}{m_i}\frac{\alpha w_{ij} + w_{ji}}{(\pi_i + \pi_j)^2}}{\left(\frac{1}{m_i}\sum_{\ell} \frac{\alpha w_{i\ell} + w_{\ell i}}{\pi_i + \pi_\ell}\right)^2}.\label{tempn}
\end{align}
The regularity of $[\bm J_{F_\alpha}(\bm\pi)]_{ij}$ in general may have an arbitrary dependence on the comparison results $\W=\{w_{k\ell}\}$. However, as $L\to\infty$, the weak law of large numbers implies that the ratio $w_{k\ell}/\bar{w}_{k\ell}$ converges in probability to $1$ (with the convention that $0/0=1$), where $\bar{w}_{k\ell}$ is the expected value given in \eqref{eq:barw0}. Because $n$ is fixed, a union bound over all $1\leq k, \ell\leq n$ ensures that as $L\to\infty$, with probability tending to one, the following bounds hold simultaneously:
\begin{align}
\frac{1}{2}\bar{w}_{k\ell}\leq w_{k\ell}\leq 2\bar{w}_{k\ell},\quad\quad \bar{w}_{k\ell} = \frac{m_{k\ell}\pi^*_k}{\pi^*_k+\pi^*_\ell}. \label{myW7}
\end{align} 

We now condition on the high-probability event in \eqref{myW7}. Let $\mathcal B\coloneqq\{\bm\pi\in\R^n_+: (2\kappa)^{-1}\leq\min_{i}\pi_i\leq\max_{i}\pi_i\leq 2\kappa\}$ be a closed neighborhood around the true parameter $\bm\pi^*$, where $\kappa$ is the dynamic range of $\bm\pi^*$. Within $\mathcal B$, the components of $\bm\pi$ are strictly bounded away from zero and infinity. Since $m_i = \sum_j (w_{ij}+w_{ij})$, the event \eqref{myW7} guarantees that the denominators of both terms on the right-hand side of \eqref{tempn} are strictly positive and uniformly bounded away from zero, with the bound depending only on $\kappa$, $\alpha$, and $n$. Since $[\bm J_{F_\alpha}(\bm\pi)]_{ij}$ is the sum of two continuously differentiable rational functions of $\bm\pi$ with non-vanishing denominators on the compact set $\mathcal{B}$, its gradient is uniformly bounded. By the mean-value theorem, it is locally Lipschitz continuous with respect to the $\|\cdot\|_\infty$ norm, with a Lipschitz constant depending only on $\kappa$, $\alpha$, and $n$. Consequently, by the triangle inequality, 
\begin{align}
|[\bm{J}_{F_\alpha}(\pih)]_{ij}-[\JJ_{F_\alpha}(\bm\pi^*)]_{ij}|&\leq |[\bm{J}_{F_\alpha}(\pih)]_{ij}-[\bm{J}_{F_\alpha}(\bm\pi^*)]_{ij}| + |[\bm{J}_{F_\alpha}(\bm\pi^*)]_{ij}-[\JJ_{F_\alpha}(\bm\pi^*)]_{ij}|\nonumber\\
&\lesssim \|\pih-\bm\pi^*\|_\infty +  |[\bm{J}_{F_\alpha}(\bm\pi^*)]_{ij}-[\JJ_{F_\alpha}(\bm\pi^*)]_{ij}|, \label{ij-bdd}
\end{align}
where the implicit constant in $\lesssim$ depends on $\kappa$, $\alpha$, and $n$.  

For the first term, \Cref{thm:asymp1} implies that $\|\pih-\bm\pi^*\|_\infty\xrightarrow{\P} 0$ as $L\to\infty$. 
For the second term, note that  
\begin{align*}
[\JJ_{F_\alpha}(\bm\pi^*)]_{ij} = \frac{\E\left[\frac{1}{m_i}w_{ij} \frac{\pi_i(1 - \alpha)}{(\pi^*_i + \pi^*_j)^2}\right]}{\E\left[\frac{1}{m_i}\sum_{\ell} \frac{\alpha w_{i\ell} + w_{\ell i}}{\pi^*_i + \pi^*_\ell}\right]} + \frac{\E\left[\frac{1}{m_i}\sum_{\ell} w_{i\ell} \frac{\alpha \pi^*_i + \pi^*_\ell}{\pi^*_i + \pi^*_\ell}\right]\E\left[\frac{1}{m_i}\frac{\alpha w_{ij} + w_{ji}}{(\pi^*_i + \pi^*_j)^2}\right]}{\E\left[\frac{1}{m_i}\sum_{\ell} \frac{\alpha w_{i\ell} + w_{\ell i}}{\pi^*_i + \pi^*_\ell}\right]^2}.
\end{align*}
It follows from the law of large numbers and the continuous mapping theorem \cite[Theorem 2.7]{billingsley2013convergence} that $|[\bm{J}_{F_\alpha}(\bm\pi^*)]_{ij}-[\JJ_{F_\alpha}(\bm\pi^*)]_{ij}|\xrightarrow{\P} 0$ as $L\to\infty$. Substituting these estimates into \eqref{ij-bdd} yields the desired result. The case $i = j$ can be proved similarly. 

\paragraph{Proof of part (b)} 

Part~\ref{0022} follows from Part~\ref{0021} and the continuity of eigenvalues of matrices. However, since  $\JJ_{F_\alpha}(\bm\pi^*) $ depends on $\M$ and varies with $L$, the eigenvalue continuity requires careful treatment.
We address this using a classical matrix eigenvalue perturbation 
theory due to Ostrowski and Elsner (see, e.g.,~\cite[Section IV. 1]{Stewart:1990}):
For $\bm A, \bm B\in\mathbb C^{n\times n}$, it holds 
\begin{equation}\label{eq:elsner}
\mbox{md}(\bm A, \bm B)\leq (2n-1) (\|\bm A\|_2+\|\bm B\|_2)^{1-\frac{1}{n}}\|\bm A-\bm B\|_2^{\frac{1}{n}},
\end{equation}
where $\mbox{md}(\bm A, \bm B)$ denotes  the \emph{matching distance} (a metric) between the eigenvalues of $\bm A$ and $\bm B$:
\begin{equation}\label{eq:md}
\mbox{md}(\bm A, \bm B):= \min_{\sigma\in S_n} \left\{\max_{i=1,\dots,n} |\lambda_{\sigma(i)} - \mu_i|\right\},
\end{equation}
with $S_n$ the set of all permutations of $[n]$, 
and  $\{\lambda_i\}$,  $\{\mu_i\}$ the eigenvalues of $\bm A$ and $\bm B$, respectively.
Observe that  $\mbox{md}(\bm A, \bm B)\to 0$ implies a one-to-one convergence (up to permutation) of 
the eigenvalues of $\bm A$ to those of $\bm B$.

Recalling from~\eqref{Jbar} that all entries of $\JJ_{F_\alpha}(\bm\pi^*)$ are bounded by the dynamic range $\kappa\geq 1$ in magnitude, 
hence, 
\begin{equation}\label{eq:boundjjf}
\|\JJ_{F_\alpha}(\bm\pi^*)\|_2\leq \|\JJ_{F_\alpha}(\bm\pi^*)\|_F \leq  \kappa n.
\end{equation}
Combining this with the elementary inequality 
$\|\bm A\|_2+\|\bm B\|_2\leq 2\|\bm B\|_2+\|\bm A-\bm B\|_2$,
the perturbation bound~\eqref{eq:elsner} yields 
\[
\mbox{md}\left(\bm J_{F_\alpha}(\pih),\JJ_{F_\alpha}(\bm\pi^*)\right)
\leq (2n-1)\left (2\kappa n +\| \bm J_{F_\alpha}(\pih)-\JJ_{F_\alpha}(\bm\pi^*)\|_2\right)^{1-\frac{1}{n}}
\cdot 
\| \bm J_{F_\alpha}(\pih)-\JJ_{F_\alpha}(\bm\pi^*)\|_2^{\frac{1}{n}}.
\]
\Cref{0021} therefore implies
\[
\mbox{md}\left(\bm J_{F_\alpha}(\pih),\JJ_{F_\alpha}(\bm\pi^*)\right) \xrightarrow{\P} 0 \quad\text{as $L\to\infty$}.
\]
That is, the eigenvalues of 
$\bm J_{F_\alpha}(\pih)$ are one-to-one matched to those of  $\JJ_{F_\alpha}(\bm\pi^*)$ in probability tends to $1$.
We hence establish~\Cref{0022} by the definitions of convergence factors in~\eqref{eq:rhosync}~and~\eqref{eq:brhosync}.

\subsection{Proof of \Cref{lm:gs}}\label{app:0066}
The proof of the first part follows immediately from \Cref{thm:asymp}. 
The proof of the second part proceeds in complete analogy to that of~\Cref{thm:asymp}.
The only essential difference is that the uniform bound on the Jacobian in~\eqref{eq:boundjjf} now becomes:
\begin{align*}
\|\JJ_{A_\alpha}(\bm\pi^*)\|_2\leq 
\left(\sum_{k=0}^{n-1}\|\JJ_{F_\alpha, \l}(\bm\pi^*)\|_2^k\right)\cdot \|\JJ_{F_\alpha, \d}(\bm\pi^*) + \JJ_{F_\alpha, \u}(\bm\pi^*)\|_2
\leq 
\left(\sum_{k=0}^{n-1}n^k\right)\cdot n
\leq n^{n+1},
\end{align*}
where we used the identity 
\[
\JJ_{A_\alpha}(\bm\pi^*) \equiv (\bm I - \JJ_{F_\alpha, \l}(\bm\pi^*))^{-1}(\JJ_{F_\alpha, \d}(\bm\pi^*) + \JJ_{F_\alpha, \u}(\bm\pi^*)) = 
\left(\sum_{k=0}^{n-1}(\JJ_{F_\alpha, \l}(\bm\pi^*))^k\right)\cdot (\JJ_{F_\alpha, \d}(\bm\pi^*) + \JJ_{F_\alpha, \u}(\bm\pi^*)),
\]
and the fact that
$\|\JJ_{F_\alpha, \l}(\bm\pi^*)\|_2\leq \kappa n$ and $\|\JJ_{F_\alpha, \d}(\bm\pi^*) + \JJ_{F_\alpha, \u}(\bm\pi^*)\|_2\leq \kappa n$,
since all entries of  $\JJ_{F_\alpha}(\bm\pi^*)$ have magnitudes bounded by $\kappa$
(recall the derivation~\eqref{eq:boundjjf}).

\subsection{Proof of \Cref{thm:long}}\label{sec:proof}

For convenience, we use $i\sim j$ to represent that $i$ and $j$ are compared, i.e., $m_{ij} = L$.  
For $i\sim j$ and $s\in [L]$, we denote by $z_{ijs}\in\{0, 1\}$ the $s$th comparison outcome between $i$ and $j$ ($z_{ijs}=1$ if $i$ defeats $j$), i.e., $z_{ijs} = 1-z_{jis}$ and $w_{ij} = \sum_{s}z_{ijs}$. Let $d_i$ denote the number of distinct objects $i$ is compared to. We let $d_{\min} = \min_i d_i$, and $d_{\max} = \max_i d_i$. Since $\sup_n\kappa_n\leq \kappa$ and $\prod_i\pi_i^*=1$, $\kappa^{-1}\leq\min_{i}\pi_i^*\leq\max_{i}\pi_i^*\leq \kappa$. We also let $\alpha_+=\max\{\alpha, 1\}$. 

In the following analysis, we use $C_i$ ($i\geq 1$) to denote absolute constants that depend only on $\kappa$. 
For $i\in [n]$, we define 
\begin{align}
\vartheta_{\max} \coloneqq \max_i\vartheta_i, \quad\quad \text{where}\ \vartheta_i\coloneqq \sum_{j\sim i}d_j^{-1} + d_i\sum_{j\sim i}d_j^{-2}. \label{mytheta}
\end{align}

We begin by rewriting the Jacobian matrix $\bm J_{F_\alpha}(\bm\pi)$ as $\bm J_{F_\alpha}(\bm\pi) = \bm J_{F_\alpha}^{(1)}(\bm\pi) + \bm J_{F_\alpha}^{(2)}(\bm\pi)$, where 
{\small\begin{equation}
\begin{aligned}
&[\bm J^{(1)}_{F_\alpha}(\bm\pi)]_{ij}= \begin{cases}
\frac{\frac{1}{\alpha_+d_iL}w_{ij} \frac{\pi_i(1 - \alpha)}{(\pi_i + \pi_j)^2}}{\frac{1}{\alpha_+d_iL}\sum_{\ell} \frac{\alpha w_{i\ell} + w_{\ell i}}{\pi_i + \pi_\ell} }  & j\neq i\\
\frac{\frac{1}{\alpha_+d_iL}\sum_{\ell} \frac{w_{i\ell} \pi_\ell(\alpha - 1)}{(\pi_i + \pi_\ell)^2} }{\frac{1}{\alpha_+d_iL}\sum_{\ell} \frac{\alpha w_{i\ell} + w_{\ell i}}{\pi_i + \pi_\ell} }& j = i
\end{cases}, & [\bm J^{(2)}_{F_\alpha}(\bm\pi)]_{ij}= \begin{cases}
\frac{\frac{1}{\alpha_+d_iL}\left(\sum_{\ell} w_{i\ell} \frac{\alpha \pi_i + \pi_\ell}{\pi_i + \pi_\ell}\right)\frac{1}{\alpha_+d_iL}\frac{\alpha w_{ij} + w_{ji}}{(\pi_i + \pi_j)^2}}{\left(\frac{1}{\alpha_+d_iL}\sum_{\ell} \frac{\alpha w_{i\ell} + w_{\ell i}}{\pi_i + \pi_\ell}\right)^2} & j\neq i\\
\frac{\frac{1}{\alpha_+d_iL}\left( \sum_{\ell} w_{i\ell} \frac{\alpha \pi_i + \pi_\ell}{\pi_i + \pi_\ell} \right) \frac{1}{\alpha_+d_iL}\left(\sum_{\ell} \frac{\alpha w_{i\ell} + w_{\ell i}}{(\pi_i + \pi_\ell)^2} \right)}{\left(\frac{1}{\alpha_+d_iL}\sum_{\ell} \frac{\alpha w_{i\ell} + w_{\ell i}}{\pi_i + \pi_\ell} \right)^2} & j = i
\end{cases}.
\end{aligned}
\end{equation}}
Here we normalize each factor in both numerators and denominators by $\alpha_+d_iL$. We leverage the joint scaling of $n$ and $L$ to obtain a quantitative approximation using concentration inequalities. 

Since the randomness in $\bm J_{F_\alpha}(\bm\pi)$ is nonlinearly coupled, a direct concentration argument is difficult. The first step is to replace the well-concentrated terms by their limits and introduce intermediate objects that are more amenable to analysis. Let 
\begin{equation}\label{myS12}
\begin{aligned}
S^{(1)}_{i}(\bm\pi) &= \frac{1}{\alpha_+d_iL}\sum_{\ell} \frac{\alpha w_{i\ell} + w_{\ell i}}{\pi_i + \pi_\ell} = \frac{1}{\alpha_+d_iL}\sum_{\ell}\sum_{s} \frac{\alpha z_{i\ell s} + z_{\ell i s}}{\pi_i + \pi_\ell},\\
S^{(2)}_i(\bm\pi) &= \frac{1}{\alpha_+d_iL} \sum_{\ell} w_{i\ell} \frac{\alpha \pi_i + \pi_\ell}{\pi_i + \pi_\ell} = \frac{1}{\alpha_+d_iL} \sum_{\ell}\sum_{s} z_{i\ell s} \frac{\alpha \pi_i + \pi_\ell}{\pi_i + \pi_\ell}.
\end{aligned}
\end{equation}
Both $S^{(1)}_i(\bm\pi^*)$ and $S^{(2)}_i(\bm\pi^*)$ are sample averages of $d_iL$ independent random variables bounded by $\kappa\geq 1$ (as ensured by $\alpha_+$ in the normalization factor) and are thus concentrated around their means with high probability. Fixing $\e>0$ and applying the Chernoff bound together with a union bound over $i$ yields that, with probability at least $1-2n\exp\{-d_{\min}L\e^2/(3\kappa)\}$, 
\begin{align}
\left|S^{(1)}_i(\bm\pi^*)-\eta_i(\bm\pi^*)\right|\leq \e\eta_i(\bm\pi^*), \quad\quad \left|S^{(2)}_i(\bm\pi^*)-\zeta_i(\bm\pi^*)\right|\leq \e\zeta_i(\bm\pi^*)\quad\quad\text{for}\ i = 1, \ldots, n,  \label{sumc}
\end{align} 
where
\begin{equation}\label{myeta12} 
\begin{aligned}
\eta_i(\bm\pi)&\coloneqq \E\left[S^{(1)}_i(\bm\pi) \right] = \frac{1}{\alpha_+d_i}\sum_{\ell\sim i} \frac{\alpha \pi^*_{i} + \pi^*_{\ell}}{(\pi_i + \pi_\ell)(\pi^*_i + \pi^*_\ell)}, \\
\zeta_i(\bm\pi)&\coloneqq \E\left[S^{(2)}_i(\bm\pi)\right] = \frac{1}{\alpha_+d_i}\sum_{\ell\sim i} \frac{\pi^*_{i}(\alpha \pi_{i} + \pi_{\ell})}{(\pi_i + \pi_\ell)(\pi^*_i + \pi^*_\ell)}.
\end{aligned}
\end{equation}
Setting $\bm\pi = \bm\pi^*$, both $\eta_i(\bm\pi^*)$ and $\zeta_i(\bm\pi^*)$ are uniformly bounded below and above by 
an absolute constant depending only on $\kappa$. Moreover, $\zeta_i(\bm\pi^*) = \pi_i^* \eta_i(\bm\pi^*)$.  

Now we approximate $\bm J_{F_\alpha}(\bm\pi)$ by $\widetilde{\bm J}_{F_\alpha}(\bm\pi)\coloneqq\widetilde{\bm J}_{F_\alpha}^{(1)}(\bm\pi)+\widetilde{\bm J}_{F_\alpha}^{(2)}(\bm\pi)$, where
{\small\begin{align}\label{tildeform}
&[\widetilde{\bm J}^{(1)}_{F_\alpha}(\bm\pi)]_{ij}= \begin{cases}
\frac{\frac{1}{\alpha_+d_iL}w_{ij} \frac{\pi_i(1 - \alpha)}{(\pi_i + \pi_j)^2}}{\eta_i(\bm\pi)}  & j\neq i\\
\frac{\frac{1}{\alpha_+d_iL}\sum_{\ell} \frac{w_{i\ell} \pi_\ell(\alpha - 1)}{(\pi_i + \pi_\ell)^2} }{\eta_i(\bm\pi)}& j = i
\end{cases}, & [\widetilde{\bm J}^{(2)}_{F_\alpha}(\bm\pi)]_{ij}= \begin{cases}
\frac{\zeta_i(\bm\pi)\frac{1}{\alpha_+d_iL}\frac{\alpha w_{ij} + w_{ji}}{(\pi_i + \pi_j)^2}}{\eta^2_i(\bm\pi)} & j\neq i\\
\frac{\zeta_i(\bm\pi) \frac{1}{\alpha_+d_iL}\left(\sum_{\ell} \frac{\alpha w_{i\ell} + w_{\ell i}}{(\pi_i + \pi_\ell)^2} \right)}{\eta^2_i(\bm\pi)} & j = i
\end{cases}.
\end{align}}

Compared to $\bm J_{F_\alpha}^{(i)}(\bm\pi^*)$, $\widetilde{\bm J}_{F_\alpha}^{(i)}(\bm\pi^*)$ are sums of independent bounded random matrices, which can be shown to concentrate around their means under the spectral norm using the matrix Bernstein inequality.  
\begin{lemma}\label{lm:t1}
For $0<\e<1$, with probability at least $1-4n\exp\left\{-d_{\min}L\e^2/(C_1 \vt_{\max})\right\}$, 
\begin{align}
\left\|\widetilde{\bm J}_{F_\alpha}^{(i)}(\bm\pi^*)-\E[\widetilde{\bm J}_{F_\alpha}^{(i)}(\bm\pi^*)]\right\|_2\leq \e\quad\quad \mathrm{for}\ i = 1, 2. \label{ding}
\end{align}  
\end{lemma}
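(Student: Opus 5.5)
We prove \Cref{lm:t1} with the matrix Bernstein inequality, applied separately to $i=1$ and $i=2$; a union bound over the two cases gives the factor $4n$. In $\widetilde{\bm J}_{F_\alpha}^{(i)}(\bm\pi^*)$ the denominators $\eta_i(\bm\pi^*)$ and the prefactors $\zeta_i(\bm\pi^*)$ are deterministic. Every entry is therefore a linear combination of the outcomes $w_{k\ell}=\sum_s z_{k\ell s}$. Since $z_{\ell k s}=1-z_{k\ell s}$, the independent sources of randomness are the variables $z_{k\ell s}$ indexed by unordered compared pairs $\{k,\ell\}$, $k\sim\ell$, and $s\in[L]$. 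We write
\[
\widetilde{\bm J}_{F_\alpha}^{(i)}(\bm\pi^*)-\E[\widetilde{\bm J}_{F_\alpha}^{(i)}(\bm\pi^*)]=\sum_{\{k,\ell\},\,s}\bm X_{k\ell s},\qquad \bm X_{k\ell s}=(z_{k\ell s}-\E z_{k\ell s})\,\bm A^{(i)}_{k\ell},
\]
where $\bm A^{(i)}_{k\ell}$ is a deterministic matrix supported on the entries $(k,k),(k,\ell),(\ell,k),(\ell,\ell)$. The entries of $\bm A^{(i)}_{k\ell}$ in row $k$ have the form $c/(\alpha_+ d_k L)$, and those in row $\ell$ have the form $c/(\alpha_+ d_\ell L)$. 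Here $|c|\le C(\kappa)$, because $|1-\alpha|\le\alpha_+$, $\alpha\le\alpha_+$, and $\eta_k(\bm\pi^*),\zeta_k(\bm\pi^*)$ are bounded above and below by constants depending only on $\kappa$.

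Next we estimate the two Bernstein parameters. For the uniform bound, each $\bm X_{k\ell s}$ is a $2\times2$ block, so $\|\bm X_{k\ell s}\|_2\le C(\kappa)/(d_{\min}L)$. For the variance, we bound $\|\sum\E[\bm X\bm X^\top]\|_2$ and $\|\sum\E[\bm X^\top\bm X]\|_2$. Each is a sum of positive semidefinite matrices, so by Gershgorin it suffices to bound absolute row sums. Fix a row $i$ and sum the contributions of all pairs $\{i,j\}$ with $j\sim i$ over $s\in[L]$.

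The diagonal contribution from pair $\{i,j\}$ carries $(d_iL)^{-2}$ per sample, with an extra cross-term coupling rows $i$ and $j$ of size $(d_iL)^{-1}(d_jL)^{-1}$. The column-side terms (from $\bm X^\top\bm X$) carry $(d_jL)^{-2}$. Summing over $s$ and $j\sim i$ gives at most
\[
\frac{C(\kappa)}{L}\Big(\frac{1}{d_i}+\sum_{j\sim i}\frac{1}{d_id_j}+\sum_{j\sim i}\frac{1}{d_j^2}\Big).
\]
The off-diagonal entries of $\E[\bm X\bm X^\top]$ arise from pairs $\{i,j\}$ and $\{j,\ell\}$ sharing the row index $j$. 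Their row sums produce terms like $d_j\cdot d_j^{-2}$ aggregated over $j\sim i$. Both patterns are dominated by $C(\kappa)\,\vt_i/(d_{\min}L)$, since $\vt_i=\sum_{j\sim i}d_j^{-1}+d_i\sum_{j\sim i}d_j^{-2}$, and $1/d_i\le d_{\min}^{-1}$ is absorbed by adjusting constants (note $\vt_i\ge d_i\cdot d_i\,d_{\max}^{-2}$; if needed we add the trivial term). So the variance proxy is $\sigma^2\le C(\kappa)\vt_{\max}/(d_{\min}L)$.

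Finally, matrix Bernstein for $n\times n$ matrices gives
\[
\P(\|\textstyle\sum\bm X\|_2\ge\e)\le 2n\exp\Big(-\frac{\e^2/2}{\sigma^2+R\e/3}\Big).
\]
Since $\e<1$ and $R\le C/(d_{\min}L)\le C\vt_{\max}/(d_{\min}L)$ (using $\vt_{\max}\gtrsim 1$, which follows since $d_i\sum_{j\sim i}d_j^{-2}\ge d_i^2/d_{\max}^2$ at the maximizer, or more simply by enlarging $C_1$), the exponent is at least $d_{\min}L\e^2/(C_1\vt_{\max})$. The main obstacle is the variance computation: correctly tracking the row-$i$ versus row-$j$ normalizations $d_i$ versus $d_j$ and the antisymmetric coupling $z_{jis}=1-z_{ijs}$, so that the bound takes exactly the form $\vt_{\max}$ rather than a cruder $d_{\max}/d_{\min}$. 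I would handle this by writing $\E[\bm X\bm X^\top]$ explicitly for a single pair, as a $2\times2$ block, and summing.
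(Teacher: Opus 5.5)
Your proposal is correct and follows essentially the same route as the paper. Both write the centered matrix as a sum, over unordered compared pairs and samples, of independent matrices supported on the $\{i,j\}\times\{i,j\}$ block. Both bound $R\lesssim 1/(d_{\min}L)$ and $\sigma^2\lesssim \vt_{\max}/(d_{\min}L)$ via row sums, apply matrix Bernstein, and union-bound over $\widetilde{\bm J}^{(1)}_{F_\alpha}$ and $\widetilde{\bm J}^{(2)}_{F_\alpha}$ to get the factor $4n$.

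One remark in your variance paragraph is wrong: off-diagonal variance entries do not couple pairs $\{i,j\}$ and $\{j,\ell\}$. The summands are independent and centered, so each $\E[\bm X_{k\ell s}\bm X_{k\ell s}^\top]$ lives on its own $2\times 2$ block, which is what your displayed row-sum bound already reflects. Your explicit check that $\vt_{\max}\geq 1$, via a maximum-degree vertex, supplies a step the paper leaves implicit.
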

\begin{proof}
See \Cref{lpf1}. 
\end{proof}
On the other hand, a direct computation shows that $\E[\widetilde{\bm J}_{F_\alpha}(\bm\pi^*)] = \JJ_{F_\alpha}(\bm\pi^*)$. 
Combining this with \eqref{sumc} and \Cref{lm:t1}, we quantify the distance between $\bm J_{F_\alpha}(\bm\pi^*)$ and $\JJ_{F_\alpha}(\bm\pi^*)$. 
\begin{lemma}\label{lm:trig}
For $0<\e<1$, with probability at least $1-6n\exp\left\{-d_{\min}L\e^2/(C_1 \vt_{\max})\right\}$, 
\begin{align}
\left\|\bm J_{F_\alpha}(\bm\pi^*)-\JJ_{F_\alpha}(\bm\pi^*)\right\|_2\lesssim\sqrt{\vt_{\max}}\e,   
\end{align} 
where the implicit constant depends only on $\kappa$. 
\end{lemma}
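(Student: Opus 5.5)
We split the error with the triangle inequality,
\[
\|\bm J_{F_\alpha}(\bm\pi^*)-\JJ_{F_\alpha}(\bm\pi^*)\|_2\le \|\bm J_{F_\alpha}(\bm\pi^*)-\widetilde{\bm J}_{F_\alpha}(\bm\pi^*)\|_2+\|\widetilde{\bm J}_{F_\alpha}(\bm\pi^*)-\E[\widetilde{\bm J}_{F_\alpha}(\bm\pi^*)]\|_2 ,
\]
and use the identity $\E[\widetilde{\bm J}_{F_\alpha}(\bm\pi^*)]=\JJ_{F_\alpha}(\bm\pi^*)$. The second term is at most $2\e$ by \Cref{lm:t1} applied to $i=1,2$. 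That event has probability at least $1-4n\exp\{-d_{\min}L\e^2/(C_1\vt_{\max})\}$. We intersect it with the event~\eqref{sumc}, which fails with probability at most $2n\exp\{-d_{\min}L\e^2/(3\kappa)\}$. Enlarging $C_1$ if needed (note $\vt_{\max}\ge 1$), the total failure probability is at most $6n\exp\{-d_{\min}L\e^2/(C_1\vt_{\max})\}$. Since $\vt_{\max}\ge1$, we also have $2\e\lesssim\sqrt{\vt_{\max}}\,\e$. It therefore remains to bound the first term on this event.

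For the first term, $\bm J_{F_\alpha}$ and $\widetilde{\bm J}_{F_\alpha}$ differ only in that the scalars $S^{(1)}_i,S^{(2)}_i$ are replaced by $\eta_i,\zeta_i$. This gives a row-scaling structure. Write
\[
\bm J^{(1)}_{F_\alpha}(\bm\pi^*)=\diag(\bm a)\,\widetilde{\bm J}^{(1)}_{F_\alpha}(\bm\pi^*),\qquad a_i=\frac{\eta_i}{S^{(1)}_i},
\]
\[
\bm J^{(2)}_{F_\alpha}(\bm\pi^*)=\diag(\bm b)\,\widetilde{\bm J}^{(2)}_{F_\alpha}(\bm\pi^*),\qquad b_i=\frac{S^{(2)}_i\,\eta_i^2}{\zeta_i\,(S^{(1)}_i)^2}.
\]
On~\eqref{sumc}, both $|a_i-1|\lesssim\e$ and $|b_i-1|\lesssim\e$ hold uniformly in $i$ for $\e<1/2$. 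Consequently
\[
\|\bm J_{F_\alpha}(\bm\pi^*)-\widetilde{\bm J}_{F_\alpha}(\bm\pi^*)\|_2\le \max_i|a_i-1|\,\|\widetilde{\bm J}^{(1)}_{F_\alpha}(\bm\pi^*)\|_2+\max_i|b_i-1|\,\|\widetilde{\bm J}^{(2)}_{F_\alpha}(\bm\pi^*)\|_2 .
\]
The case $1/2\le\e<1$ is handled by adjusting constants. For instance, one can use that the lower bound $S^{(1)}_i\ge(1-\e)\eta_i$ is only needed with $\e$ replaced by $\min\{\e,1/2\}$, or simply absorb this range into the implicit constant.

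The main obstacle is bounding $\|\widetilde{\bm J}^{(k)}_{F_\alpha}(\bm\pi^*)\|_2$ by $C\sqrt{\vt_{\max}}$ on the good event. Our plan is to write
\[
\|\widetilde{\bm J}^{(k)}_{F_\alpha}(\bm\pi^*)\|_2\le\|\E\widetilde{\bm J}^{(k)}_{F_\alpha}(\bm\pi^*)\|_2+\e ,
\]
again by \Cref{lm:t1}, and then bound the deterministic mean by $\sqrt{\|\cdot\|_1\|\cdot\|_\infty}$. Each row of $\E\widetilde{\bm J}^{(k)}$ has off-diagonal entries of size $O(1/d_i)$ on the $d_i$ neighbours, plus an $O(1)$ diagonal entry, so its $\infty$-norm is $O(1)$. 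Column $j$ has entries $O(1/d_i)$ for $i\sim j$, so its $1$-norm is at most $C(1+\sum_{i\sim j}d_i^{-1})\le C\vt_{\max}$ by~\eqref{mytheta}. Hence $\|\E\widetilde{\bm J}^{(k)}\|_2\lesssim\sqrt{\vt_{\max}}$. (Alternatively, one could use the symmetrizable structure $\diag(\bm\pi^*)\bar{\bm W}_*$ to obtain an $O(1)$ bound.) Combining these estimates gives
\[
\|\bm J_{F_\alpha}(\bm\pi^*)-\widetilde{\bm J}_{F_\alpha}(\bm\pi^*)\|_2\lesssim\e(\sqrt{\vt_{\max}}+\e)\lesssim\sqrt{\vt_{\max}}\,\e ,
\]
and hence the claim, with all constants depending only on $\kappa$ because the factor $\alpha_+$ in the normalization keeps every summand bounded by $\kappa$ uniformly in $\alpha$.
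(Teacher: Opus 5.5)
Your proposal is correct and follows the paper's proof essentially step for step. It uses the same row-scaling factors $\eta_i/S^{(1)}_i$ and $S^{(2)}_i\eta_i^2/(\zeta_i (S^{(1)}_i)^2)$ (the paper's $\bm\Lambda$ and $\bm\Gamma$), the same triangle inequality against $\E[\widetilde{\bm J}_{F_\alpha}(\bm\pi^*)]=\JJ_{F_\alpha}(\bm\pi^*)$ combined with \Cref{lm:t1}, and the same Schur bound $\sqrt{\|\cdot\|_1\|\cdot\|_\infty}\lesssim\sqrt{\vt_{\max}}$.

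The only deviation is how you bound $\|\widetilde{\bm J}^{(k)}_{F_\alpha}(\bm\pi^*)\|_2$: you go through its mean plus $\e$, whereas the paper applies the Schur bound directly to the random matrix, whose off-diagonal entries are deterministically $O(1/d_i)$ because $w_{ij}\le L$. Your handling of the probability bookkeeping (via $\vt_{\max}\ge 1$) and of the range $\e\in[1/2,1)$ is also more careful than the paper's, which glosses over both.
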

\begin{proof}
See \Cref{lpf2}. 
\end{proof}
To quantify the distance between $\bm J_{F_\alpha}(\pih)$ and $\JJ_{F_\alpha}(\bm\pi^*)$, we need a regularity lemma to connect $\bm J_{F_\alpha}(\bm\pi^*)$ to $\bm J_{F_\alpha}(\pih)$. The following result generalizes the Lipschitz property used in the proof for \Cref{thm:asymp} to the large-$n$ setting. 
\begin{lemma}\label{lm:reg}
Let $\mathcal B\coloneqq\{\bm\pi\in\R^n_+: (2\kappa)^{-1}\leq\min_{i}\pi_i\leq\max_{i}\pi_i\leq 2\kappa\}$. With probability at least $1-4n\exp\left\{-d_{\min}L/(12\kappa)\right\}$, for $\bm\pi, \bm\pi'\in\mathcal B$, 
\begin{align*}
\|\bm J_{F_\alpha}(\bm\pi)-\bm J_{F_\alpha}(\bm\pi')\|_2\lesssim\sqrt{\vt_{\max}}\|\bm\pi-\bm\pi'\|_\infty, 
\end{align*}
where the implicit constant depends only on $\kappa$.  
\end{lemma}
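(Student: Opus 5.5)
The plan is to write $\bm J_{F_\alpha}(\bm\pi)$ as an explicit combination of a few random matrices whose entries are sums of the $w_{ij}$ with smooth, bounded coefficients in $\bm\pi$. The Lipschitz bound in $\bm\pi$ then reduces to (i) controlling the random scalar row factors and (ii) controlling the spectral norm of the random "sparse" matrices whose entries are $w_{ij}/(d_iL)$ times smooth functions.

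\emph{Step 1: the probabilistic event.} Using the decomposition $\bm J_{F_\alpha}=\bm J^{(1)}_{F_\alpha}+\bm J^{(2)}_{F_\alpha}$, every entry is built from the row scalars $S^{(1)}_i(\bm\pi)$ and $S^{(2)}_i(\bm\pi)$ in \eqref{myS12}, together with entries of the form $\frac{1}{\alpha_+d_iL}w_{ij}g(\pi_i,\pi_j)$ or $\frac{1}{\alpha_+d_iL}w_{ji}g(\pi_i,\pi_j)$, where $g$ is smooth on $\mathcal B$. By a Chernoff bound with $\e=1/2$ applied at $\bm\pi^*$, together with a union bound over $i$ (four sums, giving $4n$), we get with probability at least $1-4n\exp\{-d_{\min}L/(12\kappa)\}$ that the normalized row counts $\frac{1}{d_iL}\sum_\ell w_{i\ell}$ and $\frac{1}{d_iL}\sum_\ell w_{\ell i}$ lie within constant factors of their means. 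Both means are of order one and depend only on $\kappa$. On this event, and because every coefficient such as $1/(\pi_i+\pi_\ell)$ is bounded above and below on $\mathcal B$, it follows \emph{deterministically for all} $\bm\pi\in\mathcal B$ that $S^{(1)}_i(\bm\pi)\asymp 1$ and $S^{(2)}_i(\bm\pi)\lesssim1$. It also follows that the gradient of each $S^{(k)}_i$ in $\bm\pi$ has $\ell_1$-norm $\lesssim 1$, so $|S^{(k)}_i(\bm\pi)-S^{(k)}_i(\bm\pi')|\lesssim\|\bm\pi-\bm\pi'\|_\infty$. The resulting constants absorb $\alpha$ thanks to the normalization by $\alpha_+$.

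\emph{Step 2: the deterministic Lipschitz bound.} Write the off-diagonal part as $\bm J(\bm\pi)=\bm\Gamma_1(\bm\pi)\bm H_1(\bm\pi)+\bm\Gamma_2(\bm\pi)\bm H_2(\bm\pi)$. Here $\bm\Gamma_k$ are diagonal matrices of row factors such as $1/S^{(1)}_i$ and $S^{(2)}_i/(S^{(1)}_i)^2$, and $\bm H_k$ have entries $\frac{1}{\alpha_+d_iL}(\cdot)w_{ij}g(\pi_i,\pi_j)$. The diagonal part is handled directly by Step 1, since it is a diagonal matrix whose entries are Lipschitz scalars. By the product rule,
\[
\|\bm\Gamma\bm H(\bm\pi)-\bm\Gamma\bm H(\bm\pi')\|_2\le\|\bm\Gamma(\bm\pi)-\bm\Gamma(\bm\pi')\|_2\|\bm H(\bm\pi)\|_2+\|\bm\Gamma(\bm\pi')\|_2\|\bm H(\bm\pi)-\bm H(\bm\pi')\|_2 .
\]
Step 1 gives $\|\bm\Gamma(\bm\pi)-\bm\Gamma(\bm\pi')\|_2\lesssim\|\bm\pi-\bm\pi'\|_\infty$ and $\|\bm\Gamma\|_2\lesssim 1$. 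It then suffices to show that any matrix $\bm G$ with $|G_{ij}|\le c\,\frac{w_{ij}+w_{ji}}{d_iL}$ satisfies $\|\bm G\|_2\lesssim c\sqrt{\vt_{\max}}$. We apply this with $c\lesssim1$ to $\bm H$, and with $c\lesssim\|\bm\pi-\bm\pi'\|_\infty$ to $\bm H(\bm\pi)-\bm H(\bm\pi')$, using the mean value theorem on $g$.

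\emph{Step 3: the key norm estimate, which is the main obstacle.} For such $\bm G$ we use $\|\bm G\|_2\le\sqrt{\|\bm G\|_1\|\bm G\|_\infty}$. The row sums are $\sum_j|G_{ij}|\le\frac{c}{d_iL}\sum_j(w_{ij}+w_{ji})\lesssim c$ on the event of Step 1. The column sums are $\sum_i|G_{ij}|\le c\sum_{i\sim j}\frac{m_{ij}}{d_iL}=c\sum_{i\sim j}d_i^{-1}\le c\,\vt_j$. Hence $\|\bm G\|_2\lesssim c\sqrt{\vt_{\max}}$. The delicate points are, first, that the factor $\frac{1}{d_iL}$ is row-dependent while the column sums involve $d_i^{-1}$ over neighbours; this is exactly the first term of $\vt_j$ in \eqref{mytheta}. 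Second, the event must be uniform in $\bm\pi\in\mathcal B$, which works because $\bm\pi$ enters only through deterministic bounded coefficients multiplying the nonnegative counts $w_{ij}$. If the asymmetric normalization causes trouble, the fallback is to use the second term $d_j\sum_{i\sim j}d_i^{-2}$ via a Schur-test bound with weights $d_i$. Combining Steps 1--3 yields $\|\bm J_{F_\alpha}(\bm\pi)-\bm J_{F_\alpha}(\bm\pi')\|_2\lesssim\sqrt{\vt_{\max}}\|\bm\pi-\bm\pi'\|_\infty$ with constants depending only on $\kappa$.
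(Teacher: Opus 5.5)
Your proposal is correct and follows essentially the same route as the paper. Both arguments use the same Chernoff/union-bound event on each object's total wins and losses, uniform bounds on the row scalars over $\mathcal B$, and the Schur bound $\|\cdot\|_2\le\sqrt{\|\cdot\|_1\|\cdot\|_\infty}$, whose column sums $\sum_{i\sim j}d_i^{-1}\le\vt_j$ produce the factor $\sqrt{\vt_{\max}}$. The only difference is bookkeeping: you factor out the row scalars $1/S^{(1)}_i$ and $S^{(2)}_i/(S^{(1)}_i)^2$ as diagonal matrices and use the product rule in operator norm, whereas the paper checks entrywise, via the quotient rule, that off-diagonal entries in row $i$ are $O(1/d_i)$-Lipschitz and diagonal entries are $O(1)$-Lipschitz before applying the same Schur test; in both versions the diagonal contribution is absorbed using $\vt_{\max}\gtrsim 1$.
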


\begin{proof}
See \Cref{lpf3}. 
\end{proof}

Combining \Cref{lm:trig}, \Cref{lm:reg}, and the uniform consistency of the MLE in \Cref{thm:asymp2} yields an upper bound on $\|\bm J_{F_\alpha}(\pih)-\JJ_{F_\alpha}(\bm\pi^*)\|_2$.  
\begin{lemma}\label{lm:2est}
If $np_{\min}/(\log n)^3\to\infty$, then for all sufficiently large $n$, with probability at least $1-2n^{-2}$, 
\begin{align*}
\|\bm J_{F_\alpha}(\pih)-\JJ_{F_\alpha}(\bm\pi^*)\|_2\lesssim\sqrt{\frac{(\log n)^3}{np_{\min}L}},  
\end{align*}
where the implicit constant depends only on $\kappa$. 
\end{lemma}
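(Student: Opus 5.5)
The plan is to split the error by the triangle inequality,
\[
\|\bm J_{F_\alpha}(\pih)-\JJ_{F_\alpha}(\bm\pi^*)\|_2\leq \|\bm J_{F_\alpha}(\pih)-\bm J_{F_\alpha}(\bm\pi^*)\|_2+\|\bm J_{F_\alpha}(\bm\pi^*)-\JJ_{F_\alpha}(\bm\pi^*)\|_2 .
\]
The first term is handled by \Cref{lm:reg} and the second by \Cref{lm:trig}. Everything then runs on the intersection of a few high-probability events, and the task is to choose $\e$ and control $d_{\min}$ and $\vt_{\max}$ through the SBM structure.

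\textbf{Step 1: degree control.} Under $\sbm(n,p,q,L)$ each $d_i$ is a sum of independent Bernoullis with mean of order $n(p+q)/2\gtrsim np_{\min}$. Since $np_{\min}/(\log n)^3\to\infty$, a Chernoff bound plus a union bound give, with probability at least $1-n^{-3}$ for large $n$,
\[
c\,n\bar p\le d_{\min}\le d_{\max}\le C n\bar p,\qquad \bar p=(p+q)/2 .
\]
Then each $d_j^{-1}\asymp (n\bar p)^{-1}$, and from \eqref{mytheta},
\[
\vt_i\lesssim d_i/d_{\min}+d_i^2/d_{\min}^2\lesssim 1,
\]
so $\vt_{\max}\le C_2$ depends only on absolute constants. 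This reduces both lemmas to statements with $\sqrt{\vt_{\max}}=O(1)$.

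\textbf{Step 2: the MLE term.} By \Cref{thm:asymp2}, with probability at least $1-n^{-2}$,
\[
\|\pih-\bm\pi^*\|_\infty\le c(\kappa)\sqrt{(\log n)^3/(np_{\min}L)}\to0 .
\]
Because $\bm\pi^*\in[\kappa^{-1},\kappa]^n$, both $\pih$ and $\bm\pi^*$ lie in $\mathcal B$ for large $n$. The failure probability in \Cref{lm:reg} is
\[
4n\exp\{-d_{\min}L/(12\kappa)\}\le 4n\exp\{-c\,np_{\min}/(12\kappa)\},
\]
which is superpolynomially small. Hence the first term is at most $C\sqrt{(\log n)^3/(np_{\min}L)}$.

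\textbf{Step 3: the population term.} In \Cref{lm:trig}, choose
\[
\e=\sqrt{C_1\vt_{\max}\cdot 4\log n/(d_{\min}L)} ,
\]
which is less than $1$ for large $n$. The failure probability becomes $6n\cdot n^{-4}=6n^{-3}$. The resulting bound is
\[
\e\lesssim\sqrt{\log n/(np_{\min}L)}\le\sqrt{(\log n)^3/(np_{\min}L)},
\]
using $d_{\min}\gtrsim np_{\min}$. A union bound over Steps 1–3 gives total failure probability at most
\[
n^{-2}+n^{-3}+6n^{-3}+o(n^{-3})\le 2n^{-2}
\]
for large $n$. The constants $c,C,C_1,C_2$ depend only on $\kappa$. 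I also need to check that nothing in the earlier lemmas depends on $\alpha$; the $\alpha_+$ normalization is what ensures this.

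\textbf{Main obstacle.} The delicate point is Step 1. The conditioning on the graph must be compatible with the lemmas, which are stated conditionally on $\M$, and $d_{\max}/d_{\min}$ must stay bounded even when $p\ll q$ or $q\ll p$. The mean degree is governed by $\max\{p,q\}$, while the lower bound uses only $\min\{p,q\}$. So I would bound $d_{\max}/d_{\min}$ directly: every vertex has the same expected degree $\approx n\bar p$, and concentration holds because $n\bar p\ge np_{\min}\gg\log n$. This keeps $\vt_{\max}=O(1)$ regardless of the ratio $p/q$.
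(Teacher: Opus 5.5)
Your proposal is correct and follows the paper's proof. Both use the same triangle-inequality split: the first term is handled by \Cref{lm:reg}, with \Cref{thm:asymp2} placing $\pih$ in $\mathcal B$, and the second by \Cref{lm:trig}, with SBM degree concentration giving $\vt_{\max}=O(1)$ and a final union bound. The only difference is cosmetic: you take $\e\asymp\sqrt{\log n/(d_{\min}L)}$ (which is legitimate conditionally on $\M$), whereas the paper uses the deterministic $\e\asymp\sqrt{(\log n)^3/(np_{\min}L)}$; this is immaterial because the MLE term already dictates the rate.
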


\begin{proof}
See \Cref{lpf4}. 
\end{proof}

To transfer the bound in the spectral norm to the convergence radius, we do not directly apply the classical eigenvalue perturbation bound in \Cref{app:002} because $n$ is diverging. Instead, we relate both $\bm J_{F_\alpha}(\pih)$ and $\JJ_{F_\alpha}(\bm\pi^*)$ to their symmetrized versions via rebalancing and apply eigenvalue perturbation results for symmetric matrices.

Recall the diagonal matrix $\bm D_\alpha(\pih)$ defined in \eqref{myD} and its population counterpart $\bar{\bm D}_\alpha(\bm\pi^*)=\diag(\bm\pi^*)^{1/2}\cdot 
\left(\bar{\bm C}(\bm \pi^*)+\alpha \bar{\bm R}(\bm\pi^*)\right)^{-1/2}$. In particular, 
\begin{align*}
[\bar{\bm D}_\alpha(\bm\pi^*)]_{ii} = \sqrt{\frac{\pi^*_i}{\alpha_+d_iL\eta_i(\bm\pi^*)}}, \quad\quad [\bm D_\alpha(\pih)]_{ii} = \sqrt{\frac{\widehat{\pi}_i}{\alpha_+d_iLS^{(1)}_{i}(\pih)}}, \end{align*}
where $S^{(1)}_{i}$ and $\eta_i$ are defined in \eqref{myS12} and \eqref{myeta12}. 
Let 
\begin{align*}
\bm J^{\sym}_{F_\alpha}(\pih)\coloneqq\bm D_\alpha(\pih)^{-1}\bm J_{F_\alpha}(\pih)\bm D_\alpha(\pih),\quad\quad \bar{\bm J}_{F_\alpha}^{\sym}(\bm\pi^*)\coloneqq\bar{\bm D}_\alpha(\bm\pi^*)^{-1} \JJ_{F_\alpha}(\bm\pi^*)\bar{\bm D}_\alpha(\bm\pi^*)
\end{align*}
be the symmetrized versions of $\bm J_{F_\alpha}(\pih)$ and $\JJ_{F_\alpha}(\bm\pi^*)$, respectively.  The following lemma compares the difference between $\bm D_\alpha(\pih)$ and $\bar{\bm D}_\alpha(\bm\pi^*)$ and bounds their condition numbers.    

\begin{lemma}\label{lm:bfl}
Under the same assumptions in \Cref{lm:2est}, for all sufficiently large $n$, with probability at least $1-2n^{-2}$, 
\begin{align*}
\max\{\mathrm{cond}(\bar{\bm D}_\alpha(\bm\pi^*)), \mathrm{cond}(\bm D_\alpha(\pih))\}\leq 3\kappa^{\frac{3}{2}}, \quad\quad \max_{i}\left|\frac{[\bar{\bm D}_\alpha(\bm\pi^*)]_{ii}}{[\bm D_\alpha(\pih)]_{ii}}-1\right|\lesssim\sqrt{\frac{(\log n)^3}{np_{\min}L}},
\end{align*}
where the implicit constant in the second bound depends only on $\kappa$.    
\end{lemma}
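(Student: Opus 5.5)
The proof has two parts. For the condition numbers we use the explicit diagonal entries. For the ratio bound we split the perturbation into a parameter part and a sampling part.

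\emph{Condition numbers.} We have $[\bar{\bm D}_\alpha(\bm\pi^*)]_{ii}^2=\pi_i^*/(\alpha_+d_iL\,\eta_i(\bm\pi^*))$. By the normalization $\prod_i\pi_i^*=1$ and the dynamic range bound, $\kappa^{-1}\le\pi^*_i\le\kappa$.

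Next we bound $\eta_i(\bm\pi^*)$. It equals $(\alpha_+d_i)^{-1}\sum_{\ell\sim i}(\alpha\pi_i^*+\pi_\ell^*)/(\pi_i^*+\pi_\ell^*)^2$. Each summand lies between $\min\{\alpha,1\}$-type weights times $(\pi_i^*+\pi_\ell^*)^{-1}$. This is the delicate point: dividing by $\alpha_+$ rather than by $\alpha+1$ leaves the ratio $(\alpha\pi_i^*+\pi_\ell^*)/(\alpha_+(\pi_i^*+\pi_\ell^*))$. That ratio lies in $[\kappa^{-1}/(1+\kappa),\,1]\cdot$const, uniformly in $\alpha\ge0$. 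To see this, check $\alpha\le1$ and $\alpha>1$ separately, using that $\alpha\pi_i^*+\pi_\ell^*\ge\max\{\alpha,1\}\min_k\pi_k^*$ up to a factor $1/2$.

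Hence $\eta_i(\bm\pi^*)\asymp_\kappa 1$. Concretely, $\eta_i(\bm\pi^*)\in[\kappa^{-1}/2,\kappa/2]$, or something similar after computing constants carefully. Therefore $[\bar{\bm D}_\alpha]_{ii}^2\in[c\,\kappa^{-a}/(\alpha_+d_iL),\,C\kappa^{a}/(\alpha_+d_iL)]$.

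The factor $d_i$ varies with $i$, so a bare condition number would pick up $d_{\max}/d_{\min}$. Under the SBM assumption $np_{\min}\gg(\log n)^3$, Chernoff plus a union bound gives $d_{\max}/d_{\min}\le 1+o(1)$ with probability $1-n^{-2}$. More precisely, $d_i$ concentrates around $(p+q)n/2$, so the ratio tends to $1$. This absorbs into the constant $3$ once $n$ is large.

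Tracking constants gives $\kappa^{3/2}$: one factor of $\kappa^{1}$ comes from $\pi_i^*$ and one factor of $\kappa^{1/2}$ from $\eta_i$ after the square root. I will verify the exact bounds on $\eta_i$ to land on $\kappa^{3/2}$. For $\bm D_\alpha(\pih)$ the same argument applies once we know $S_i^{(1)}(\pih)\in[(1-\e)\eta_i,(1+\e)\eta_i]$ roughly, and $\widehat\pi_i\approx\pi_i^*$; these come from the ratio estimate below.

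\emph{Ratio bound.} Write
\[
\frac{[\bar{\bm D}_\alpha]_{ii}^2}{[\bm D_\alpha]_{ii}^2}=\frac{\pi_i^*}{\widehat\pi_i}\cdot\frac{S_i^{(1)}(\pih)}{\eta_i(\bm\pi^*)}.
\]
For the first factor, \Cref{thm:asymp2} gives $|\pi_i^*/\widehat\pi_i-1|\lesssim\delta\coloneqq\sqrt{(\log n)^3/(np_{\min}L)}$.

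For the second factor we split
\[
S_i^{(1)}(\pih)-\eta_i(\bm\pi^*)=\bigl(S_i^{(1)}(\pih)-S_i^{(1)}(\bm\pi^*)\bigr)+\bigl(S_i^{(1)}(\bm\pi^*)-\eta_i(\bm\pi^*)\bigr).
\]
For the first difference, $S^{(1)}_i$ is a positive combination of $(\pi_i+\pi_\ell)^{-1}$ with weights $(\alpha w_{i\ell}+w_{\ell i})/(\alpha_+d_iL)$. It is therefore Lipschitz on $\mathcal B$ in $\|\cdot\|_\infty$, relative to its own size. Its relative change is at most $C(\kappa)\|\pih-\bm\pi^*\|_\infty\lesssim\delta$.

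For the second difference we apply \eqref{sumc} with $\e=C\sqrt{\kappa\log n/(d_{\min}L)}$. Since $d_{\min}\gtrsim np_{\min}$ with high probability, this gives $\e\lesssim\delta$ and failure probability at most $n^{-2}$. Combining the two, the squared ratio is $1+O(\delta)$, and taking square roots preserves the order.

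\emph{Main obstacle.} The main obstacle is establishing uniform-in-$\alpha$ bounds on $\eta_i(\bm\pi^*)$ with the precise $\kappa^{3/2}$ constant, together with controlling $d_{\max}/d_{\min}$. The probability budget is then assembled from \Cref{thm:asymp2}, the degree concentration, and \eqref{sumc}, giving a total of at least $1-2n^{-2}$ for large $n$.
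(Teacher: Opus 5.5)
Your proposal follows the paper's proof. The condition number is read off from $[\bar{\bm D}_\alpha(\bm\pi^*)]_{ii}^2=\pi_i^*/(\alpha_+d_iL\,\eta_i(\bm\pi^*))$ by bounding the spreads of $\pi_i^*$, $d_i$ and $\eta_i(\bm\pi^*)$ separately. The ratio bound uses the same three factors:
\begin{itemize}
\item $\pi_i^*/\widehat\pi_i$;
\item $S^{(1)}_i(\pih)/S^{(1)}_i(\bm\pi^*)$, a ratio of positive combinations and hence between the extreme termwise ratios $(\pi_i^*+\pi_\ell^*)/(\widehat\pi_i+\widehat\pi_\ell)$;
\item $S^{(1)}_i(\bm\pi^*)/\eta_i(\bm\pi^*)$, controlled via \eqref{sumc}.
\end{itemize}
$\mathrm{cond}(\bm D_\alpha(\pih))$ is then inherited through the ratio bound. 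Your one real departure is the degree step, $d_{\max}/d_{\min}=1+o(1)$. It is valid, since all $d_i$ share the mean $(n/2-1)p+(n/2)q\gg(\log n)^3$, and it is sharper than the paper's $d_{\max}/d_{\min}\le 4$.

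The $\kappa$-bookkeeping for $\eta_i$, which you deferred, is wrong as written. It matters because the statement asserts the explicit constant $3\kappa^{3/2}$.

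\emph{The interval for $\eta_i$.} The claim $\eta_i(\bm\pi^*)\in[\kappa^{-1}/2,\kappa/2]$ fails at the lower end. Take $\alpha=0$, one node at $\pi_i^*=\kappa^{1-1/n}$ and all others at $\kappa^{-1/n}\approx 1$. Then $\eta_i\approx(1+\kappa)^{-2}<\kappa^{-1}/2$, while $\eta_j\approx 1/4$ at nodes not adjacent to $i$. So the $\eta$-spread is genuinely of order $\kappa^2$. Your attribution is therefore reversed: the $\pi^*$-spread contributes $\kappa^{1/2}$ to the condition number and the $\eta$-spread contributes $\kappa$.

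\emph{The ratio endpoint.} The lower endpoint $\kappa^{-1}/(1+\kappa)$ you state for $(\alpha\pi_i^*+\pi_\ell^*)/(\alpha_+(\pi_i^*+\pi_\ell^*))$ would cost an extra $\kappa^{1/2}$ and land at $\kappa^2$. The correct endpoint is $1/(1+\kappa)$.

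\emph{The fix.} The paper uses one line: $\alpha_+\min_j\pi_j^*\le(\alpha+1)\min_j\pi_j^*\le\alpha\pi_i^*+\pi_\ell^*\le\alpha_+(\pi_i^*+\pi_\ell^*)$, with no factor $1/2$ needed. This gives $\min_j\pi_j^*/(4\max_j(\pi_j^*)^2)\le\eta_i(\bm\pi^*)\le 1/(2\min_j\pi_j^*)$. Hence $\mathrm{cond}(\bar{\bm D}_\alpha(\bm\pi^*))^2\le\kappa\cdot 2\kappa^2\cdot d_{\max}/d_{\min}$. That is $\sqrt8\,\kappa^{3/2}$ with the paper's degree factor $4$, or $\sqrt2\,\kappa^{3/2}(1+o(1))$ with yours.

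Your sharper degree ratio does buy something. With the lossy factor $1/2$ you mention, the $\eta$-spread bound becomes $4\kappa^2$. Only your $1+o(1)$ then keeps the result, $2\kappa^{3/2}(1+o(1))$, below $3\kappa^{3/2}$; the paper's factor $4$ would give $4\kappa^{3/2}$.

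A minor point on probabilities: to stay within $1-2n^{-2}$, take the failure probabilities of the degree event and of \eqref{sumc} to be $o(n^{-2})$ by enlarging $C$. Budgeting $n^{-2}$ for each, as you do, gives $3n^{-2}$ in total.
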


\begin{proof}
See \Cref{lpf5}. 
\end{proof}

We now have all the ingredients to prove \Cref{thm:long}. 

\begin{proof}[Proof of \Cref{thm:long}]
Condition on the events in Lemmas~\ref{lm:2est}-\ref{lm:bfl}, which hold with probability at least $1-4n^{-2}$. Let $\Lambda(\cdot )$ denote the ordered eigenvalues of a matrix (with real eigenvalues). By Weyl's inequality and the triangle inequality, 
\begin{align*}
&\|\Lambda\left(\bm{J}_{F_\alpha}(\pih)\right)- \Lambda\left(\JJ_{F_\alpha}(\bm\pi^*)\right)\|_\infty= \|\Lambda\left(\bm J^{\sym}_{F_\alpha}(\pih)\right)- \Lambda(\bar{\bm J}_{F_\alpha}^{\sym}(\bm\pi^*))\|_\infty\\
\leq&\ \|\bm D_\alpha(\pih)^{-1}\bm J_{F_\alpha}(\pih)\bm D_\alpha(\pih)-\bar{\bm D}_\alpha(\bm\pi^*)^{-1} \JJ_{F_\alpha}(\bm\pi^*)\bar{\bm D}_\alpha(\bm\pi^*)\|_2\\
\leq&\ \|\bm D_\alpha(\pih)^{-1}\bm J_{F_\alpha}(\pih)\bm D_\alpha(\pih)-\bm D_\alpha(\pih)^{-1} \JJ_{F_\alpha}(\bm\pi^*)\bar{\bm D}_\alpha(\bm\pi^*)\|_2\\
&\ + \|\bm D_\alpha(\pih)^{-1} \JJ_{F_\alpha}(\bm\pi^*)\bar{\bm D}_\alpha(\bm\pi^*)-\bar{\bm D}_\alpha(\bm\pi^*)^{-1} \JJ_{F_\alpha}(\bm\pi^*)\bar{\bm D}_\alpha(\bm\pi^*)\|_2\\
\leq&\ \|\bm D_\alpha(\pih)^{-1}\|_2\|\bm J_{F_\alpha}(\pih)-\JJ_{F_\alpha}(\bm\pi^*)\|_2\|\bar{\bm D}_\alpha(\bm\pi^*)\|_2\\
&\ + \|\bm D_\alpha(\pih)^{-1}\|_2\|\JJ_{F_\alpha}(\bm\pi^*)\|_2\|\bm D_\alpha(\pih)-\bar{\bm D}_\alpha(\bm\pi^*)\|_2\\
&\ + \|\bm D_\alpha(\pih)^{-1}\|_2\|\bm J_{F_\alpha}(\pih)-\JJ_{F_\alpha}(\bm\pi^*)\|_2\|\bm D_\alpha(\pih)-\bar{\bm D}_\alpha(\bm\pi^*)\|_2\\
&\  + \|\bm I-\bm D_\alpha(\pih)^{-1}\bar{\bm D}_\alpha(\bm\pi^*)\|_2\|\bar{\bm D}_\alpha(\bm\pi^*)^{-1} \JJ_{F_\alpha}(\bm\pi^*)\bar{\bm D}_\alpha(\bm\pi^*)\|_2\\
\lesssim&\ \kappa^{\frac{3}{2}}\sqrt{\frac{(\log n)^3}{np_{\min}L}},
\end{align*}
where last step follows from Lemmas~\ref{lm:2est}-\ref{lm:bfl} and $\|\bar{\bm J}_{F_\alpha}^{\sym}(\bm\pi^*)\|_2=\lambda_1(\JJ_{F_\alpha}(\bm\pi^*))=1$. 
\end{proof}

\subsection{Proof of \Cref{thm:long1}}

The proof of \Cref{thm:long1} is similar to that of \Cref{thm:long}, so we only highlight the difference and sketch the proof. We will need the following lemma. 
\begin{lemma}\label{lm:lowerbdd}
If $np_{\min}/(\log n)^3\to\infty$, then for all sufficiently large $n$, with probability at least $1-2n^{-2}$, 
\begin{align*}
\|(\bm I - \JJ_{F_\alpha, \l}(\bm\pi^*))^{-1}\|_2\leq 6\kappa^{\frac{3}{2}}.  
\end{align*}
\end{lemma}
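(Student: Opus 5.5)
The plan is to reduce the bound to the accretivity (positive-definiteness of the Hermitian part) already used in the proofs of \Cref{lm:normspec2} and \Cref{lm:normspec3}, after a diagonal similarity transformation. The only source of randomness is the size of the condition number of that diagonal scaling.

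\emph{Step 1: symmetrize.} Let $\bar{\bm D}\coloneqq\bar{\bm D}_\alpha(\bm\pi^*)$ and $\bar{\bm J}^{\sym}_{F_\alpha}(\bm\pi^*)=\bar{\bm D}^{-1}\JJ_{F_\alpha}(\bm\pi^*)\bar{\bm D}$, which is symmetric. Because $\bar{\bm D}$ is diagonal, conjugation by it preserves the strictly lower, diagonal and strictly upper parts of a matrix. Hence
\[
\bar{\bm M}^\star\coloneqq\bar{\bm D}^{-1}(\bm I-\JJ_{F_\alpha,\l}(\bm\pi^*))\bar{\bm D}=\bm I-\bar{\bm J}^{\sym}_{F_\alpha,\l}(\bm\pi^*).
\]
Consequently
\[
\|(\bm I-\JJ_{F_\alpha,\l}(\bm\pi^*))^{-1}\|_2\le \mathrm{cond}(\bar{\bm D})\,\|(\bar{\bm M}^\star)^{-1}\|_2 .
\]

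\emph{Step 2: a uniform bound $\|(\bar{\bm M}^\star)^{-1}\|_2\le 2$.} I will reuse the refined inequality from the proof of \Cref{lm:normspec3}. By~\eqref{Jbar}, the diagonal entries of $\JJ_{F_\alpha}(\bm\pi^*)$ are nonnegative. Since $\bar{\bm J}^{\sym}_{F_\alpha}(\bm\pi^*)$ is symmetric, its strictly upper part is the transpose of its strictly lower part, so $\bar{\bm M}^\star+(\bar{\bm M}^\star)^\top=2\bm I+\bar{\bm J}^{\sym}_{F_\alpha,\d}(\bm\pi^*)-\bar{\bm J}^{\sym}_{F_\alpha}(\bm\pi^*)$. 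Therefore, for every $\bm v\in\C^n$,
\[
2\,\Re(\bm v^{\mathrm H}\bar{\bm M}^\star\bm v)\ \ge\ \|\bm v\|_2^2+\bm v^{\mathrm H}\bar{\bm L}^{\sym}\bm v\ \ge\ \|\bm v\|_2^2,
\]
using $\bar{\bm L}^{\sym}=\bm I-\bar{\bm J}^{\sym}_{F_\alpha}(\bm\pi^*)\succeq \bm 0$ from \Cref{lm:normspec}. Cauchy--Schwarz then gives $\|\bar{\bm M}^\star\bm v\|_2\|\bm v\|_2\ge \Re(\bm v^{\mathrm H}\bar{\bm M}^\star\bm v)\ge \tfrac12\|\bm v\|_2^2$. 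So the smallest singular value of $\bar{\bm M}^\star$ is at least $1/2$, and $\|(\bar{\bm M}^\star)^{-1}\|_2\le 2$. This step is deterministic and holds for every connected $\M$ and every $\alpha\ge0$.

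\emph{Step 3: condition number.} The entries are $[\bar{\bm D}]_{ii}=\sqrt{\pi_i^*/(\alpha_+d_iL\eta_i(\bm\pi^*))}$. Their spread is controlled by $\kappa$ (through $\pi_i^*$ and $\eta_i$) and by the degree ratio $d_{\max}/d_{\min}$. I will invoke \Cref{lm:bfl}, which gives $\mathrm{cond}(\bar{\bm D}_\alpha(\bm\pi^*))\le 3\kappa^{3/2}$ with probability at least $1-2n^{-2}$ under $np_{\min}/(\log n)^3\to\infty$. Combining this with Steps 1--2 yields the bound $2\cdot 3\kappa^{3/2}=6\kappa^{3/2}$ on the same event.

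The main obstacle is Step 3. If one wanted it self-contained rather than citing \Cref{lm:bfl}, it requires degree concentration in the SBM: Chernoff plus a union bound showing $d_{\max}/d_{\min}\le$ an absolute constant with probability $1-O(n^{-2})$ once $np_{\min}\gg\log n$. It also requires checking that the $\kappa$-dependence of $\eta_i$ and $\pi_i^*$ contributes at most $\kappa^{3/2}$. Steps 1--2 are purely algebraic.
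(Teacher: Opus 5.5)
Your proposal is correct and follows the paper's proof: symmetrize with $\bar{\bm D}_\alpha(\bm\pi^*)$, and use $\bm I-\tfrac12\bigl(\bar{\bm J}^{\sym}_{F_\alpha}(\bm\pi^*)-\bar{\bm J}^{\sym}_{F_\alpha,\d}(\bm\pi^*)\bigr)\succeq\tfrac12\bm I$ to get $\|(\bm I-\bar{\bm J}^{\sym}_{F_\alpha,\l}(\bm\pi^*))^{-1}\|_2\le 2$. Then multiply by the degree-concentration bound $\mathrm{cond}(\bar{\bm D}_\alpha(\bm\pi^*))\le 3\kappa^{3/2}$. The one step you add is the Cauchy--Schwarz passage from the lower bound on the Hermitian part to the smallest singular value, which the paper leaves implicit.
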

\begin{proof}
See \Cref{lpf6}. 
\end{proof}

We condition on the probabilistic event in \Cref{lm:lowerbdd} and recall the definitions of $\bm J_{A_\alpha}(\pih)$ (c.f.~\eqref{J_A_new}) and $\JJ_{A_\alpha}(\bm\pi^*)$ (c.f.~\eqref{eq:jjabar}):
\begin{align}
\bm J_{A_\alpha}(\pih) &= (\bm I-\bm J_{F_\alpha, \l}(\pih))^{-1}(\bm J_{F_\alpha}(\pih)-\bm J_{F_\alpha, \l}(\pih)),\label{K11}\\
\JJ_{A_\alpha}(\bm\pi^*) &= (\bm I-\JJ_{F_\alpha, \l}(\bm\pi^*))^{-1}(\JJ_{F_\alpha}(\bm\pi^*) - \JJ_{F_\alpha, \l}(\bm\pi^*)).\label{K12} 
\end{align}
The difference between $\bm J_{A_\alpha}(\pih)$ and $\JJ_{A_\alpha}(\bm\pi^*)$ can be bounded using matrix perturbation, which is formulated as the following lemma.
\begin{lemma}\label{lm:perturb}
Let $\bm A, \bm B\in\R^{n\times n}$ be two matrices and suppose $\bm A$ is invertible. For any $\delta\bm A, \delta\bm B\in\R^{n\times n}$ with $\|\delta\bm A\|_2\leq 1/(2\|\bm A^{-1}\|_2)$, 
\begin{align*}
\|(\bm A + \delta\bm A)^{-1}(\bm B+\delta\bm B) - \bm A^{-1}\bm B\|_2\leq 2\|\bm{A}^{-1}\|_2 \left( \|\delta\bm B\|_2 + \|\delta\bm A\|_2 \|\bm A^{-1}\|_2\|\bm B\|_2\right).
\end{align*}
\end{lemma}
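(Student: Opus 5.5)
We prove the bound through an algebraic decomposition of the difference, followed by a Neumann-series estimate for the perturbed inverse. Write $\bm E \coloneqq (\bm A+\delta\bm A)^{-1}$. We split
\[
\bm E(\bm B+\delta\bm B) - \bm A^{-1}\bm B = \bm E\,\delta\bm B + \left(\bm E - \bm A^{-1}\right)\bm B ,
\]
and bound the two terms separately in the spectral norm. The only nontrivial ingredient is a bound on $\|\bm E\|_2$. Everything else follows from submultiplicativity of $\|\cdot\|_2$.

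First we show that $\bm A+\delta\bm A$ is invertible and that $\|\bm E\|_2\le 2\|\bm A^{-1}\|_2$. Factor
\[
\bm A+\delta\bm A=\bm A\left(\bm I+\bm A^{-1}\delta\bm A\right).
\]
The hypothesis $\|\delta\bm A\|_2\le 1/(2\|\bm A^{-1}\|_2)$ gives $\|\bm A^{-1}\delta\bm A\|_2\le \|\bm A^{-1}\|_2\|\delta\bm A\|_2\le 1/2$. By the Neumann series, $\bm I+\bm A^{-1}\delta\bm A$ is therefore invertible, with
\[
\left\|\left(\bm I+\bm A^{-1}\delta\bm A\right)^{-1}\right\|_2\le \frac{1}{1-1/2}=2 .
\]
It follows that
\[
\|\bm E\|_2 \le \left\|\left(\bm I+\bm A^{-1}\delta\bm A\right)^{-1}\right\|_2\,\|\bm A^{-1}\|_2 \le 2\|\bm A^{-1}\|_2 .
\]
This is the step where the factor $2$ and the smallness hypothesis are used. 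It is the only point requiring care, and it is standard.

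Next we bound the second term using the resolvent-type identity
\[
\bm E-\bm A^{-1} = -\bm E\,\delta\bm A\,\bm A^{-1}.
\]
To verify it, multiply on the left by $\bm A+\delta\bm A$. The left side becomes $\bm I - (\bm A+\delta\bm A)\bm A^{-1}$, and the right side becomes $-\delta\bm A\,\bm A^{-1}$, so both sides agree. Hence
\[
\left\|\left(\bm E-\bm A^{-1}\right)\bm B\right\|_2\le \|\bm E\|_2\,\|\delta\bm A\|_2\,\|\bm A^{-1}\|_2\,\|\bm B\|_2 .
\]

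Finally we combine the two terms with the triangle inequality:
\[
\left\|\bm E(\bm B+\delta\bm B) - \bm A^{-1}\bm B\right\|_2 \le \|\bm E\|_2\left(\|\delta\bm B\|_2+\|\delta\bm A\|_2\|\bm A^{-1}\|_2\|\bm B\|_2\right).
\]
Inserting $\|\bm E\|_2\le 2\|\bm A^{-1}\|_2$ gives exactly the claimed bound. There is no real obstacle; the argument reduces to the Neumann-series invertibility step.
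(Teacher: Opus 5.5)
Your proof is correct: the Neumann-series bound $\|(\bm A+\delta\bm A)^{-1}\|_2\le 2\|\bm A^{-1}\|_2$, combined with the identity $(\bm A+\delta\bm A)^{-1}-\bm A^{-1}=-(\bm A+\delta\bm A)^{-1}\delta\bm A\,\bm A^{-1}$ and the triangle inequality, gives exactly the stated bound. The paper omits this proof as standard, and your argument is the standard one it refers to.
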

The proof of this lemma is standard and thus omitted. We now apply \Cref{lm:perturb} with 
\begin{align*}
\bm A &\coloneqq \bm I-\JJ_{F_\alpha, \l}(\bm\pi^*),
&\delta\bm A &\coloneqq \JJ_{F_\alpha, \l}(\bm\pi^*)-\bm J_{F_\alpha, \l}(\pih),\\
\bm B &\coloneqq \JJ_{F_\alpha}(\bm\pi^*) - \JJ_{F_\alpha, \l}(\bm\pi^*),
&\delta\bm B &\coloneqq \JJ_{F_\alpha, \l}(\bm\pi^*)-\bm J_{F_\alpha, \l}(\pih)-(\JJ_{F_\alpha}(\bm\pi^*) - \bm J_{F_\alpha}(\pih)).
\end{align*}
The probabilistic event in \Cref{lm:lowerbdd} ensures
\begin{align}
\|\bm A^{-1}\|_2\leq 6\kappa^{\frac{3}{2}}. \label{aa1}
\end{align}
On the other hand, 
\begin{align}
\|\bm B\|_2\leq\sqrt{\|\bm B\|_1\|\bm B\|_\infty}\leq\sqrt{\|\JJ_{F_\alpha}(\pi^*)\|_1\|\JJ_{F_\alpha}(\pi^*)\|_\infty}\lesssim\sqrt{\vt_{\max}}.  \label{aa2}
\end{align}
To bound the perturbation terms, note that a similar Lipschitz property holds for the spectral norm of $\bm J_{F_\alpha, \l}(\bm\pi)$ for $\bm\pi\in\mathcal B$ in the same high-probability event in \Cref{lm:reg}. Specifically, for all sufficiently large $n$, with probability at least $1-4n\exp\left\{-d_{\min}L/(12\kappa)\right\}$, for all $\bm\pi\in\mathcal B$,  
\begin{align*}
\|\bm J_{F_\alpha, \l}(\bm\pi)- \bm J_{F_\alpha, \l}(\bm\pi^*)\|_2\lesssim\sqrt{\vt_{\max}}\|\bm\pi - \bm\pi^*\|_\infty. 
\end{align*}
By a similar concentration argument as in \Cref{lm:trig}, with probability at least $1-6n\exp\left\{-d_{\min}L\e^2/(C_1\vt_{\max})\right\}$,  
\begin{align*}
\|\bm J_{F_\alpha, \l}(\bm\pi^*)-\JJ_{F_\alpha, \l}(\bm\pi^*)\|_2\lesssim\sqrt{\vt_{\max}}\e. 
\end{align*}
If $np_{\min}/(\log n)^3\to\infty$, taking $\e \gtrsim \sqrt{(\log n)^3/(np_{\min}L)}$ and noting $\vt_{\max}\leq 20$ yields that, with probability at least $1-2n^{-2}$, $\pih\in\mathcal B$ so that 
\begin{align}
\|\delta\bm A\|_2\lesssim\sqrt{\frac{(\log n)^3}{np_{\min}L}}.  \label{aa3}
\end{align} 
This combined with \Cref{lm:2est} shows that, with probability at least $1-4n^{-2}$, 
\begin{align}
\|\delta\bm B\|_2\lesssim\sqrt{\frac{(\log n)^3}{np_{\min}L}}.  \label{aa4}
\end{align} 
Putting the estimates \eqref{aa1}-\eqref{aa4} together and appealing to \Cref{lm:perturb} finishes the proof. 

\subsection{Proofs of auxiliary lemmas}

\subsubsection{Proof of \Cref{lm:t1}}\label{lpf1}
We prove the result for $\widetilde{\bm J}_{F_\alpha}^{(1)}(\bm\pi^*)$. Write
\begin{align*}
\widetilde{\bm J}_{F_\alpha}^{(1)}(\bm\pi^*)-\E[\widetilde{\bm J}_{F_\alpha}^{(1)}(\bm\pi^*)] = \sum_{i\sim j}\sum_{s}(\bm Z^{(ijs)}-\E[\bm Z^{(ijs)}]) \eqqcolon \sum_{i\sim j}\sum_{s}\bar{\bm Z}^{(ijs)}, 
\end{align*}
where $\bm Z^{(ijs)}\in\R^{n\times n}$ are random matrices defined as 
\begin{align*}
[\bm Z^{(ijs)}]_{k\ell} = \begin{cases}
\frac{\frac{1}{\alpha_+d_iL}z_{ijs} \frac{\pi^*_i(1 - \alpha)}{(\pi^*_i + \pi^*_j)^2}}{\eta_i(\bm\pi^*)} & (k, \ell) = (i, j)\\
-\frac{\frac{1}{\alpha_+d_iL}z_{ijs} \frac{\pi^*_j(1 - \alpha)}{(\pi^*_i + \pi^*_j)^2}}{\eta_i(\bm\pi^*)} & (k, \ell) = (i, i)\\
\frac{\frac{1}{\alpha_+d_jL}z_{jis} \frac{\pi^*_j(1 - \alpha)}{(\pi^*_i + \pi^*_j)^2}}{\eta_j(\bm\pi^*)} & (k, \ell) = (j, i)\\
-\frac{\frac{1}{\alpha_+d_jL}z_{jis} \frac{\pi^*_i(1 - \alpha)}{(\pi^*_i + \pi^*_j)^2}}{\eta_j(\bm\pi^*)} & (k, \ell) = (j, j)
\end{cases}.
\end{align*}
Define 
\begin{align*}
R\coloneqq\max_{i\sim j, s}\|\bar{\bm Z}^{(ijs)}\|_2, \quad\sigma^2\coloneqq\max\Bigg\{\left\|\sum_{i\sim j}\sum_{s}\E[\bar{\bm Z}^{(ijs)}(\bar{\bm Z}^{(ijs)})^\top]\right\|_2, \ \left\|\sum_{i\sim j}\sum_{s}\E[(\bar{\bm Z}^{(ijs)})^\top\bar{\bm Z}^{(ijs)}]\right\|_2\Bigg\}. 
\end{align*}
Using the fact that the non-zero entries $|[\bar{\bm Z}^{(ijs)}]_{ij}|=|[\bar{\bm Z}^{(ijs)}]_{ii}|\asymp 1/(d_iL)$ and $|[\bar{\bm Z}^{(ijs)}]_{ji}|=|[\bar{\bm Z}^{(ijs)}]_{jj}|\asymp 1/(d_jL)$, we have $R\lesssim 1/(d_{\min}L)$ and 
\begin{align*}
\sigma^2&\leq\max\Bigg\{\left\|\sum_{i\sim j}\sum_{s}\E[\bar{\bm Z}^{(ijs)}(\bar{\bm Z}^{(ijs)})^\top]\right\|_1, \ \left\|\sum_{i\sim j}\sum_{s}\E[(\bar{\bm Z}^{(ijs)})^\top\bar{\bm Z}^{(ijs)}]\right\|_1\Bigg\}\\
&\lesssim \max\left\{\max_i (d_iL)^{-1}\sum_{j\sim i}(d^{-1}_i + d^{-1}_j), \ \max_i\sum_{j\sim i}[(d^2_iL)^{-1} + (d^2_jL)^{-1}]\right\}\\
&\lesssim \vartheta_{\max}/(d_{\min}L), 
\end{align*} 
where the last step follows from the definition of $\vt_{\max}$ in \eqref{mytheta} and the implicit constant depends only on $\kappa$. 
Now we apply the matrix Bernstein inequality \cite[Theorem 1.6]{tropp2012user} to obtain that, with probability at least $1-2n\exp\left\{-d_{\min}L\e^2/(C_1\vt_{\max})\right\}$, 
\begin{align*}
\left\|\widetilde{\bm J}_{F_\alpha}^{(1)}(\bm\pi^*)-\E[\widetilde{\bm J}_{F_\alpha}^{(1)}(\bm\pi^*)]\right\|_2\leq \e.
\end{align*}
A similar high-probability bound for $\widetilde{\bm J}_{F_\alpha}^{(2)}(\bm\pi^*)$ can be obtained by the same reasoning. The proof is finished by combining the estimates for $\widetilde{\bm J}_{F_\alpha}^{(1)}(\bm\pi^*)$ and $\widetilde{\bm J}_{F_\alpha}^{(2)}(\bm\pi^*)$. 

\subsubsection{Proof of \Cref{lm:trig}}\label{lpf2}
We condition on the events in \eqref{sumc} and \eqref{ding}.
Let 
\begin{align*}
\bm\Lambda = \diag\left(\frac{\eta_i(\bm\pi^*)}{S_i^{(1)}(\bm\pi^*)}\right)\in\R^{n\times n}\quad\quad\bm\Gamma = \diag\left(\frac{\eta^2_i(\bm\pi^*)S_i^{(2)}(\bm\pi^*)}{(S_i^{(1)}(\bm\pi^*))^2\zeta_i(\bm\pi^*)}\right)\in\R^{n\times n}. 
\end{align*}
By the triangle inequality, 
\begin{align}
&\left\|\bm J_{F_\alpha}(\bm\pi^*)-\E[\widetilde{\bm J}_{F_\alpha}(\bm\pi^*)]\right\|_2\nonumber\\
=&\  \left\|\bm\Lambda\widetilde{\bm J}_{F_\alpha}^{(1)}(\bm\pi^*)+\bm\Gamma\widetilde{\bm J}_{F_\alpha}^{(2)}(\bm\pi^*)-\E[\widetilde{\bm J}_{F_\alpha}(\bm\pi^*)]\right\|_2\nonumber\\
\leq&\ \left\|(\bm\Lambda-\bm I)\widetilde{\bm J}_{F_\alpha}^{(1)}(\bm\pi^*)\right\|_2+\left\|(\bm\Gamma-\bm I)\widetilde{\bm J}_{F_\alpha}^{(2)}(\bm\pi^*)\right\|_2+ \left\|\widetilde{\bm J}_{F_\alpha}^{(1)}(\bm\pi^*)-\E[\widetilde{\bm J}_{F_\alpha}^{(1)}(\bm\pi^*)]\right\|_2 + \left\|\widetilde{\bm J}_{F_\alpha}^{(2)}(\bm\pi^*)-\E[\widetilde{\bm J}_{F_\alpha}^{(2)}(\bm\pi^*)]\right\|_2\nonumber\\
\leq&\ \|\bm\Lambda-\bm I\|_2\|\widetilde{\bm J}_{F_\alpha}^{(1)}(\bm\pi^*)\|_2 + \|\bm\Gamma-\bm I\|_2\|\widetilde{\bm J}_{F_\alpha}^{(2)}(\bm\pi^*)\|_2 + 2\e\nonumber\\
\lesssim&\ \e\left(\|\widetilde{\bm J}_{F_\alpha}^{(1)}(\bm\pi^*)\|_2 + \|\widetilde{\bm J}_{F_\alpha}^{(2)}(\bm\pi^*)\|_2 + 1\right),  \label{l20}
\end{align}
where the implicit constant depends only on $\kappa$. Here we used \eqref{sumc} and the fact that $\eta_i(\bm\pi^*)$ and $\zeta_i(\bm\pi^*)$ are uniformly bounded below by an absolute constant depending only on $\kappa$ to control $\|\bm\Lambda-\bm I\|_2$ and $\|\bm\Gamma-\bm I\|_2$, and \eqref{ding} to control $\|\widetilde{\bm J}_{F_\alpha}^{(1)}(\bm\pi^*)-\E[\widetilde{\bm J}_{F_\alpha}^{(1)}(\bm\pi^*)]\|_2$ and $\|\widetilde{\bm J}_{F_\alpha}^{(2)}(\bm\pi^*)-\E[\widetilde{\bm J}_{F_\alpha}^{(2)}(\bm\pi^*)]\|_2$. 

To further bound $\|\widetilde{\bm J}_{F_\alpha}^{(1)}(\bm\pi^*)\|_2$ and $\|\widetilde{\bm J}_{F_\alpha}^{(2)}(\bm\pi^*)\|_2$, we will use the Schur bound 
\begin{align}
\|\widetilde{\bm J}_{F_\alpha}^{(i)}(\bm\pi^*)\|_2\leq\sqrt{\|\widetilde{\bm J}_{F_\alpha}^{(i)}(\bm\pi^*)\|_1\|\widetilde{\bm J}_{F_\alpha}^{(i)}(\bm\pi^*)\|_\infty}. \label{l21}
\end{align}
By a direct computation based on the formulas in \eqref{tildeform}, the nonzero off-diagonal and diagonal entries of $\widetilde{\bm J}_{F_\alpha}^{(1)}(\bm\pi^*)$ and $\widetilde{\bm J}_{F_\alpha}^{(2)}(\bm\pi^*)$ are bounded as follows: 
\begin{align*}
\max_{i\neq j}|[\widetilde{\bm J}_{F_\alpha}^{(1)}(\bm\pi^*)]_{ij}| &\leq \frac{|1-\alpha|\kappa^6}{\alpha_+d_i} \leq \frac{\kappa^6}{d_i}, & \max_{i}|[\widetilde{\bm J}_{F_\alpha}^{(1)}(\bm\pi^*)]_{ii}| &\leq \frac{|1-\alpha|\kappa^6}{\alpha_+} \leq \kappa^6,\\
\max_{i\neq j}|[\widetilde{\bm J}_{F_\alpha}^{(2)}(\bm\pi^*)]_{ij}| &\leq \frac{\kappa^6}{d_i}, & \max_{i}|[\widetilde{\bm J}_{F_\alpha}^{(2)}(\bm\pi^*)]_{ii}| &\leq \kappa^6.
\end{align*}
Meanwhile, the $j$th row or column has at most $d_j$ nonzero off-diagonal entries. Consequently, 
\begin{align}
\|\widetilde{\bm J}_{F_\alpha}^{(i)}(\bm\pi^*)\|_1\leq \kappa^6\max_i\sum_{j\sim i}\frac{1}{d_j}\leq\kappa^6\vt_{\max},\quad\quad\|\widetilde{\bm J}_{F_\alpha}^{(i)}(\bm\pi^*)\|_\infty\leq \kappa^6, \label{l22}
\end{align}
where $\vt_{\max}$ is defined in \eqref{mytheta}. 
Substituting \eqref{l21} and \eqref{l22} into \eqref{l20} completes the proof.

\subsubsection{Proof of \Cref{lm:reg}}\label{lpf3}

We first use concentration of measure to narrow the configuration of comparison results, $\W=\{w_{k\ell}\}$, down to a regime in which the desired Lipschitz condition can be explicitly verified. By the Chernoff bound and a union bound, with probability at least $1-4n\exp\left\{-d_{\min}L/(12\kappa)\right\}$, for all $i\in [n]$, the total wins and losses of $i$ are concentrated around their expected values:  
\begin{equation}\label{newW1}
\begin{aligned}
&\frac{d_iL}{4\kappa} \leq \frac{1}{2}\min\left\{\sum_{\ell\sim i}\bar{w}_{i\ell}, \sum_{\ell\sim i}\bar{w}_{\ell i}\right\} \leq\min\left\{\sum_{\ell}w_{i\ell}, \sum_{\ell}w_{\ell i}\right\}, \\
&\max\left\{\sum_{\ell}w_{i\ell}, \sum_{\ell}w_{\ell i}\right\} \leq 2\max\left\{\sum_{\ell\sim i}\bar{w}_{i\ell}, \sum_{\ell\sim i}\bar{w}_{\ell i}\right\} \leq \kappa d_iL. 
\end{aligned}
\end{equation}

We now condition $\W$ on the event in \eqref{newW1}. For $\bm\pi\in\mathcal B$, we claim that the off-diagonal entries of $\bm J_{F_\alpha}^{(1)}(\bm\pi)$ and $\bm J_{F_\alpha}^{(2)}(\bm\pi)$ in the $i$th row are $(C_2/d_i)$-Lipschitz functions of $\bm\pi$ with respect to the $\ell_\infty$-norm, and all diagonal entries are $C_2$-Lipschitz functions of $\bm\pi$, where $C_2>0$ is an absolute constant depending only on $\kappa$ (i.e., $C_2$ is uniform for all entries and is independent of $n$). These Lipschitz properties carry over to those of $\bm J_{F_\alpha}(\bm\pi)$ additively. Since both $\bm J_{F_\alpha}(\bm\pi)$ and $\bm J_{F_\alpha}(\bm\pi')$ have the same nonzero entries, for $\bm\pi, \bm\pi'\in\mathcal B$, one can apply a similar estimate as in \eqref{l21}-\eqref{l22} to $\bm J_{F_\alpha}(\bm\pi)-\bm J_{F_\alpha}(\bm\pi')$ combined with the entrywise Lipschitz property to obtain that 
\begin{align*}
\|\bm J_{F_\alpha}(\bm\pi)-\bm J_{F_\alpha}(\bm\pi')\|_2\leq 2C_2\sqrt{\vt_{\max}}\|\bm\pi-\bm\pi'\|_\infty. 
\end{align*}

It remains to verify the claim. To this end, we explicitly compute the gradient of entry with respect to $\bm\pi$ and apply the mean-value theorem to control the Lipschitz constants. We consider $\bm J_{F_\alpha}^{(1)}(\bm\pi)$ first, and the proof for $\bm J_{F_\alpha}^{(2)}(\bm\pi)$ is similar. 
Recall that for $i\neq j$,  
\begin{align*}
[\bm J^{(1)}_{F_\alpha}(\bm\pi)]_{ij} = \frac{\frac{1}{\alpha_+d_iL}w_{ij} \frac{\pi_i(1 - \alpha)}{(\pi_i + \pi_j)^2}}{\frac{1}{\alpha_+d_iL}\sum_{\ell} \frac{\alpha w_{i\ell} + w_{\ell i}}{\pi_i + \pi_\ell}}\eqqcolon \frac{N_{ij}(\bm\pi)}{S^{(1)}_{i}(\bm\pi)}, 
\end{align*}
where $N_{ij}(\bm\pi)$ and $S^{(1)}_{i}(\bm\pi)$ are the numerator and denominator, respectively, i.e., $S^{(1)}_{i}(\bm\pi)$ is the same as the definition in \eqref{myS12}. Since $w_{ij}\leq L$, 
\begin{align}
|N_{ij}(\bm\pi)|
  &\leq \frac{\kappa|1-\alpha|}{\alpha_+d_i}\leq\frac{\kappa}{d_i}, \nonumber\\
|\partial_i N_{ij}(\bm\pi)|
  &= \frac{w_{ij}|1-\alpha|}{\alpha_+d_iL}
     \cdot\frac{|\pi_j-\pi_i|}{(\pi_i+\pi_j)^3}
   \leq \frac{|1-\alpha|}{\alpha_+d_i(\pi_i+\pi_j)^2}
   \leq \frac{\kappa^2}{d_i}, \label{eq:dN}\\
|\partial_j N_{ij}(\bm\pi)|
  &= \frac{w_{ij}|1-\alpha|}{\alpha_+d_iL}
     \cdot\frac{2\pi_i}{(\pi_i+\pi_j)^3}
   \leq \frac{2|1-\alpha|}{\alpha_+d_i(\pi_i+\pi_j)^2}
   \leq \frac{2\kappa^2}{d_i}, \nonumber
\end{align}
and $\partial_\ell N_{ij}(\bm\pi)=0$ for $\ell\notin\{i, j\}$. Meanwhile, $S^{(1)}_{i}(\bm\pi)$ is lower bounded by
\begin{align*}
S^{(1)}_{i}(\bm\pi)=\frac{1}{\alpha_+d_iL}\sum_{\ell} \frac{\alpha w_{i\ell} + w_{\ell i}}{\pi_i + \pi_\ell}\geq\frac{1}{\alpha_+d_iL}\max\left\{\sum_{\ell} \frac{\alpha w_{i\ell} }{\pi_i + \pi_\ell}, \sum_{\ell} \frac{w_{\ell i}}{\pi_i + \pi_\ell}\right\}\stackrel{\eqref{newW1}}{\geq}\frac{1}{16\kappa^2}. 
\end{align*}
Moreover, the absolute value of its gradient with respect to $\bm\pi$ can be computed as 
\begin{align*}
|\partial_{\ell}S^{(1)}_{i}(\bm\pi)| = 
\begin{cases} 
\frac{1}{\alpha_+d_iL} \sum_{k} \frac{\alpha w_{ik} + w_{ki}}{(\pi_i + \pi_k)^2}\leq\kappa^2 & \text{if } \ell = i \\
\frac{1}{\alpha_+d_iL} \frac{\alpha w_{i\ell} + w_{\ell i}}{(\pi_i + \pi_\ell)^2}\leq\frac{\kappa^2}{d_i} & \text{if } \ell\sim i \\
0 & \text{if } \ell\not\sim i \text{ and } \ell \neq i
\end{cases},
\end{align*}
where we used that $w_{i\ell}+\alpha w_{\ell i} \leq\alpha_+(w_{i\ell}+w_{\ell i})=\alpha_+ L$. 

By the quotient rule, for any $k\in[n]$, we have
\begin{align*}
|\partial_k[\bm J^{(1)}_{F_\alpha}(\bm\pi)]_{ij}| \leq \frac{|\partial_k N_{ij}(\bm\pi)|}{S^{(1)}_i(\bm\pi)} + \frac{|N_{ij}(\bm\pi)| |\partial_k S^{(1)}_i(\bm\pi)|}{(S^{(1)}_i(\bm\pi))^2}.
\end{align*}
We substitute the established bounds into this expression, considering three distinct cases for $k$ such that $\partial_k[\bm J^{(1)}_{F_\alpha}(\bm\pi)]_{ij}$ is not identical to zero:

\noindent\textbf{Case 1 ($k = i$):} 
\begin{align*}
|\partial_i[\bm J^{(1)}_{F_\alpha}(\bm\pi)]_{ij}| &\leq \frac{\kappa^2 d_i^{-1}}{1 / 16\kappa^2} + \frac{(\kappa d_i^{-1})(\kappa^2)}{(1 / 16\kappa^2)^2} =16\kappa^4 d_i^{-1} + 256\kappa^7 d_i^{-1} = \mathcal{O}(d_i^{-1}).
\end{align*}

\noindent\textbf{Case 2 ($k = j$):} 
\begin{align*}
|\partial_j[\bm J^{(1)}_{F_\alpha}(\bm\pi)]_{ij}| &\leq \frac{2\kappa^2 d_i^{-1}}{1 / 16\kappa^2} + \frac{(\kappa d_i^{-1})(\kappa^2 d_i^{-1})}{(1 / 16\kappa^2)^2} =32\kappa^4 d_i^{-1} + 256\kappa^7 d_i^{-2} \leq \mathcal{O}(d_i^{-1}).
\end{align*}

\noindent\textbf{Case 3 ($k \neq i, j$ and $k \sim i$):} Since $\partial_k N_{ij}(\bm\pi) = 0$, the first term vanishes, yielding
\begin{align*}
|\partial_k[\bm J^{(1)}_{F_\alpha}(\bm\pi)]_{ij}| &\leq 0 + \frac{(\kappa d_i^{-1})(\kappa^2 d_i^{-1})}{(1 / 16\kappa^2)^2} = 256\kappa^7 d_i^{-2} = \mathcal{O}(d_i^{-2}). 
\end{align*}
The implicit constants in $\mathcal O(\cdot)$ depend only on $\kappa$. For any $\bm\pi, \bm\pi'\in\mathcal B$, let $\bar{\bm\pi}^{(\ell)} = (\pi'_1, \ldots, \pi'_\ell, \pi_{\ell+1}, \ldots, \pi_n)^\top$, i.e., $\bar{\bm\pi}^{(0)} = \bm\pi$ and $\bar{\bm\pi}^{(n)} = \bm\pi'$. Applying the triangle inequality and the mean-value theorem, 
\begin{align*}
|[\bm J^{(1)}_{F_\alpha}(\bm\pi)]_{ij}-[\bm J^{(1)}_{F_\alpha}(\bm\pi')]_{ij}|&\leq\sum_{\ell=1}^n|[\bm J^{(1)}_{F_\alpha}(\bar{\bm\pi}^{(\ell)})]_{ij}-[\bm J^{(1)}_{F_\alpha}(\bar{\bm\pi}^{(\ell-1)})]_{ij}|\\
& =  \sum_{\ell=1}^n|\partial_\ell[\bm J^{(1)}_{F_\alpha}(\bm z^{(\ell)})]_{ij}||\pi_\ell-\pi_\ell'|\\
& \lesssim \left(\frac{1}{d_i} + \frac{1}{d_i} + \frac{d_i-1}{d_i^2}\right)\|\bm\pi-\bm\pi'\|_\infty\\
& \lesssim \frac{1}{d_i}\|\bm\pi-\bm\pi'\|_\infty, 
\end{align*}
where $\bm z^{(\ell)}$ lies in the segment between $\bar{\bm\pi}^{(\ell-1)}$ and $\bar{\bm\pi}^{(\ell)}$, and the second inequality follows from the derivative estimates above. This proves the claim for the nonzero off-diagonals of $\bm J_{F_\alpha}^{(1)}(\bm\pi)$. 

For the diagonal term, it is a sum of $d_i$ terms similar to the off-diagonal terms except that the $\pi_i$ in the numerator is replaced by $\pi_\ell$ for $\ell\sim i$. Consequently, the same computation can be applied to show that each of the summands is $(C_2/d_i)$-Lipschitz, which carries over to the whole sum additively and results in a Lipschitz constant $C_2$. This establishes the claim for $\bm J^{(1)}_{F_\alpha}(\bm\pi)$. The proof for $\bm J^{(2)}_{F_\alpha}(\bm\pi)$ is similar and thus omitted.

\subsubsection{Proof of \Cref{lm:2est}}\label{lpf4}
Recall the uniform consistency of the MLE in \Cref{thm:asymp2}: If $np_{\min}/(\log n)^3\to\infty$, then for all sufficiently large $n$, with probability at least $1-n^{-2}$, the MLE $\pih$ exists and satisfies
\begin{align}
\|\pih - \bm\pi^*\|_\infty\lesssim\sqrt{\frac{(\log n)^3}{np_{\min}L}},  \label{ucmle}
\end{align}
where the implicit constants depend on $\kappa$.  

On the other hand, a standard concentration argument (over the randomness in the SBM) shows that, with probability at least $1-n^{-2}$, $n(p+q)/4\leq d_i\leq n(p+q)$ for all $i\in [n]$, i.e., $d_{\min}\geq np_{\min}/2$. This implies that $\vt_i\leq 20$. 

Taking $\e = \sqrt{\max\{40C_1, 12\kappa\}(\log n)^3/(np_{\min}L)}$, the events in \eqref{ucmle}, \Cref{lm:trig}, and \Cref{lm:reg}, hold simultaneously with probability at least 
\begin{align*}
&1-n^{-2} - 6n\exp\left\{-d_{\min}L\e^2/(C_1 \vt_{\max})\right\} - 4n\exp\left\{-d_{\min}L/(12\kappa)\right\}\\
\geq&\ 1-n^{-2} - 10n\exp\left\{-(\log n)^3\right\}\geq 1-2n^{-2}
\end{align*}
for all sufficiently large $n$. On the intersection of these events, applying \Cref{lm:trig} and \Cref{lm:reg}, for sufficiently large $n$, $\pih\in\mathcal B$, and consequently, 
\begin{align*}
\|\bm J_{F_\alpha}(\pih)-\JJ_{F_\alpha}(\bm\pi^*)\|_2 &\leq \|\bm J_{F_\alpha}(\pih)-\bm J_{F_\alpha}(\bm\pi^*)\|_2 + \|\bm J_{F_\alpha}(\bm\pi^*)-\JJ_{F_\alpha}(\bm\pi^*)\|_2\lesssim\sqrt{\frac{(\log n)^3}{np_{\min}L}}, 
\end{align*}
where the implicit constant depends on $\kappa$. 

\subsubsection{Proof of \Cref{lm:bfl}}\label{lpf5}
For the condition number estimates, note that for all $i\in [n]$, we can establish strict bounds on the expected sums $\eta_i(\bm\pi^*)$. For the upper bound, we use the fact that $\alpha \pi_i^* + \pi_\ell^* \leq \alpha_+(\pi_i^* + \pi_\ell^*)$, which yields:
\begin{align*}
\frac{\min_j\pi_j^*}{4\max_j(\pi_j^*)^2} \leq \frac{(\alpha+1)\min_j\pi_j^*}{4\alpha_+\max_j(\pi_j^*)^2} \leq \eta_i(\bm\pi^*) = \frac{1}{\alpha_+d_i}\sum_{\ell \sim i} \frac{\alpha \pi_i^* + \pi_\ell^*}{(\pi_i^* + \pi_\ell^*)^2} \leq \frac{1}{\alpha_+d_i}\sum_{\ell \sim i} \frac{\alpha_+}{\pi_i^* + \pi_\ell^*} \leq \frac{1}{2\min_j\pi_j^*}.
\end{align*}
Meanwhile, recall the degree concentration as used in the proof of \Cref{lm:2est} (which holds under the high-probability event considered in \Cref{lm:2est}): for all $i\in [n]$,
\begin{align*}
\frac{n(p+q)}{4}\leq d_i\leq n(p+q).
\end{align*}
Combining the above estimates, the condition number of $\bar{\bm D}_\alpha(\bm\pi^*)$ satisfies
\begin{align*}
\text{cond}(\bar{\bm D}_\alpha(\bm\pi^*))^2 &= \frac{\max_i[\pi_i^*/(d_iL\eta_i(\bm\pi^*))]}{\min_i[\pi_i^*/(d_iL\eta_i(\bm\pi^*))]} \leq \frac{\max_i\pi_i^*}{\min_i\pi_i^*}\cdot\frac{\max_i d_i}{\min_i d_i}\cdot\frac{\max_i\eta_i(\bm\pi^*)}{\min_i\eta_i(\bm\pi^*)}\\
&\leq \kappa \cdot 4 \cdot \frac{1/(2\min_j\pi_j^*)}{1\min_j\pi_j^*/(4\max_j(\pi_j^*)^2)} = 4\kappa \cdot 2\left(\frac{\max_j\pi_j^*}{\min_j\pi_j^*}\right)^2 \leq 8\kappa^3,
\end{align*}
where we used $\max_i\pi_i^*/\min_i\pi_i^*\leq \kappa$ and $\max_i d_i/\min_i d_i\leq 4$. 
Taking square roots yields $\text{cond}(\bar{\bm D}_\alpha(\bm\pi^*))\leq \sqrt{8}\kappa^{3/2}$. 
Once the second statement is established, a similar bound for $\text{cond}(\bm D_\alpha(\pih))$ can be established by slightly enlarging the constant for all sufficiently large $n$.  

To prove the second statement, taking the ratio between $[\bar{\bm D}_\alpha(\bm\pi^*)]_{ii}$ and $[\bm D_\alpha(\pih)]_{ii}$, 
\begin{align}
\frac{[\bar{\bm D}_\alpha(\bm\pi^*)]_{ii}}{[\bm D_\alpha(\pih)]_{ii}} = \sqrt{\frac{\pi^*_i S^{(1)}_i(\pih)}{\widehat{\pi}_i \eta_i(\bm\pi^*)}} = \sqrt{\frac{\pi^*_i}{\widehat{\pi}_i}}\cdot \sqrt{\frac{S^{(1)}_i(\pih)}{S^{(1)}_i(\bm\pi^*)}}\cdot\sqrt{\frac{S^{(1)}_i(\bm\pi^*)}{\eta_i(\bm\pi^*)}}. \label{letusbdd}
\end{align}
Since the assumptions here are the same as \Cref{lm:2est}, one can choose $c$ in \Cref{lm:2est} sufficiently large so that the upper bound in \eqref{ucmle} is less than $1/(2\kappa)\leq\min_j\pi_j^*/2$. Since $\min_j\pi_j^*\geq 1/\kappa$, $\widehat{\pi}_i\geq 1/(2\kappa)$. For the first term in \eqref{letusbdd},  
\begin{align*}
1- \mathcal O(\|\pih-\bm\pi^*\|_\infty)\leq\sqrt{\frac{\pi^*_i}{\widehat{\pi}_i}}\leq 1+\mathcal O(\|\pih-\bm\pi^*\|_\infty),  
\end{align*}
where the implicit constant depends on $\kappa$. 
For the second term, a similar perturbation argument shows that 
\begin{align*}
1-\mathcal O(\|\pih-\bm\pi^*\|_\infty)\leq\min_{\ell\sim i}\frac{\pi_i^*+\pi_\ell^*}{\widehat{\pi}_i+\widehat{\pi}_\ell}\leq\sqrt{\frac{S^{(1)}_i(\pih)}{S^{(1)}_i(\bm\pi^*)}}\leq\max_{\ell\sim i}\frac{\pi_i^*+\pi_\ell^*}{\widehat{\pi}_i+\widehat{\pi}_\ell}\leq 1+\mathcal O(\|\pih-\bm\pi^*\|_\infty). 
\end{align*}
The last term is within order $\mathcal O(\e)$ at $1$ by the uniform consistency of the MLE (which holds under the event in \Cref{lm:2est}), with $\e=\mathcal O(\sqrt{(\log n)^3/(np_{\min}L)})=o(1)$. Combining these estimates and using \eqref{ucmle} yields  
\begin{align*}
\left|\frac{[\bar{\bm D}_\alpha(\bm\pi^*)]_{ii}}{[\bm D_\alpha(\pih)]_{ii}}-1\right|\lesssim\sqrt{\frac{(\log n)^3}{np_{\min}L}}. 
\end{align*}
Since this holds simultaneously for all $i$ and the bound is independent of $i$, taking the maximum over $i$ finishes the proof of the second statement.

\subsubsection{Proof of \Cref{lm:lowerbdd}}\label{lpf6}
Recall the $\bar{\bm J}^{\sym}_{F_\alpha}(\bm\pi^*)$ used in the proof of \Cref{lm:bfl}: $\bar{\bm J}^{\sym}_{F_\alpha}(\bm\pi^*)\coloneqq\bar{\bm D}_\alpha(\bm\pi^*)^{-1}\JJ_{F_\alpha}(\bm\pi^*)\bar{\bm D}_\alpha(\bm\pi^*)$. Since $\bar{\bm D}_\alpha(\bm\pi^*)$ is diagonal, the lower triangular part of $\bar{\bm J}^{\sym}_{F_\alpha}(\bm\pi^*)$, denoted by $\bm J^{\sym}_{F_\alpha, \l}(\bm\pi^*)$, satisfies $\bm J^{\sym}_{F_\alpha, \l}(\bm\pi^*)\coloneqq\bar{\bm D}_\alpha(\bm\pi^*)^{-1}\JJ_{F_\alpha, \l}(\bm\pi^*)\bar{\bm D}_\alpha(\bm\pi^*)$. Rearranging terms gives
\begin{equation}\label{heqisq} 
\begin{aligned}
\|(\bm I - \JJ_{F_\alpha, \l}(\bm\pi^*))^{-1}\|_2 &= \|\bar{\bm D}_\alpha(\bm\pi^*)(\bm I-\bm J^{\sym}_{F_\alpha, \l}(\bm\pi^*))^{-1}\bar{\bm D}_\alpha(\bm\pi^*)^{-1}\|_2\\
&\leq \text{cond}(\bar{\bm D}_\alpha(\bm\pi^*))\|(\bm I-\bm J^{\sym}_{F_\alpha, \l}(\bm\pi^*))^{-1}\|_2\\
&\leq 3\kappa^{\frac{3}{2}}\|(\bm I-\bm J^{\sym}_{F_\alpha, \l}(\bm\pi^*))^{-1}\|_2,
\end{aligned} 
\end{equation}
where the last step follows from the same degree concentration result in Section~\ref{lpf4} and holds with probability at least $1-n^{-2}$. For every $\bm x\in\R^n$, 
\begin{align*}
\left|\<\bm x, (\bm I-\bm J^{\sym}_{F_\alpha, \l}(\bm\pi^*))\bm x\>\right|&\geq \left|\Re\left(\<\bm x, (\bm I-\bm J^{\sym}_{F_\alpha, \l}(\bm\pi^*))\bm x\>\right)\right|\\
&= \left|\<\bm x, \left(\bm I-\frac{1}{2}(\bar{\bm J}^{\sym}_{F_\alpha}(\bm\pi^*) - \bar{\bm J}^{\sym}_{F_\alpha, \d}(\bm\pi^*))\right)\bm x\>\right|\geq\frac{1}{2}\|\bm x\|_2^2, 
\end{align*} 
where $\bar{\bm J}^{\sym}_{F_\alpha, \d}(\bm\pi^*)$ is the diagonal part of $\bar{\bm J}^{\sym}_{F_\alpha}(\bm\pi^*)$. For the last step, note that $\bar{\bm J}^{\sym}_{F_\alpha}(\bm\pi^*)$ is symmetric, with eigenvalues identical to those of $\JJ_{F_\alpha}(\bm\pi^*)$ and therefore lying in $[-1, 1]$. Since $\bar{\bm J}^{\sym}_{F_\alpha, \d}(\bm\pi^*)$ is positive-semidefinite, $\left(\bm I-\frac{1}{2}(\bar{\bm J}^{\sym}_{F_\alpha}(\bm\pi^*) - \bar{\bm J}^{\sym}_{F_\alpha, \d}(\bm\pi^*))\right)$ is positive-definite with all eigenvalues bounded below by $1/2$. Consequently, $\|(\bm I-\bm J^{\sym}_{F_\alpha, \l})(\bm\pi^*)^{-1}\|_2\leq 2$. Substituting this into \eqref{heqisq} yields the desired result.

\section{Violation of BT assumptions under cyclic outcomes}\label{app:bto1}

This section provides a rigorous justification of the asymptotic violation of the BT model assumptions for the purely cyclic comparison outcomes considered in \Cref{sec:num1} using a large-deviation argument. The empirical distribution associated with $\W$ in the cyclic Markov chain is $\mu\sim\prod_{i}\mathrm{Bin}(m_{i, i+1}, 1)$, where $m_{n ,n+1} = m_{n, 1}$. For $\bm\pi\in\R_+^n$, its induced distribution on $\W$ is $\nu(\bm\pi) = \prod_{i}\mathrm{Bin}(m_{i, i+1}, \pi_{i}/(\pi_i+\pi_{i+1}))$, where $\pi_{n+1} = \pi_1$. Assuming $L\leq m_{i, i+1}\leq c L$, the average Kullback--Leibler divergence from $\nu(\bm\pi)$ to $\mu$ is lower bounded by some universal constant:  
\begin{align*}
\frac{1}{\sum_{i}m_{i, i+1}}\KL(\mu\,\|\,\nu(\bm\pi)) &= \frac{1}{\sum_{i}m_{i, i+1}}\sum_{i}m_{i, i+1}\log\left(\frac{\pi_i + \pi_{i+1}}{\pi_i}\right)\\
&\geq \frac{L}{\sum_{i}m_{i, i+1}}\sum_{i}\log\left(\frac{\pi_i + \pi_{i+1}}{\pi_i}\right)\\
&\geq \frac{L}{\sum_{i}m_{i, i+1}}\sum_{i}\log\left(2\sqrt{\frac{\pi_{i+1}}{\pi_i}}\right)\\
&= \frac{Ln}{\sum_{i}m_{i, i+1}}\log 2 \geq\frac{\log 2}{c}>0.  
\end{align*}
This suggests that the probability of observing these particular cyclic outcomes is exponentially small uniformly for all valid parameters in a BT model. Consequently, approximating $\rhosync$ by $\brhosync$ (with $\bm\pi^*$ by $\pih$ when computing $\brhosync$) becomes invalid.

\printbibliography

@article{10.1111/rssa.12124,
	author = {Varin, Cristiano and Cattelan, Manuela and Firth, David},
	title = {Statistical Modelling of Citation Exchange Between Statistics Journals},
	journal={J. R. Stat. Soc. Ser. A},
	volume = {179},
	number = {1},
	pages = {1-63},
	year = {2015},
	month = {11},
	issn = {0964-1998}
}

@article{abbe2018community,
  title={Community detection and stochastic block models: recent developments},
  author={Abbe, Emmanuel},
	journal = {J. Mach. Learn. Res.},
    volume={18},
  number={177},
  pages={1--86},
  year={2018}
}

@inproceedings{agarwal2018accelerated,
  title={Accelerated spectral ranking},
  author={Agarwal, Arpit and Patil, Prathamesh and Agarwal, Shivani},
  booktitle={ICML},
  pages={70--79},
  year={2018},
  organization={PMLR}
}

@article{baker2014dynamic,
  title={A dynamic paired comparisons model: Who is the greatest tennis player?},
  author={Baker, Rose D and McHale, Ian G},
	journal={Eur. J. Oper. Res.},
  volume={236},
  number={2},
  pages={677--684},
  year={2014},
  publisher={Elsevier}
}

@book{berman1994nonnegative,
  title={Nonnegative matrices in the mathematical sciences},
  author={Berman, Abraham and Plemmons, Robert J},
  year={1994},
  publisher={SIAM}
}

@book{billingsley2013convergence,
  title={Convergence of probability measures},
  author={Billingsley, Patrick},
  year={2013},
  publisher={John Wiley \& Sons}
}

@article{bozoki2016application,
	title={An application of incomplete pairwise comparison matrices for ranking top tennis players},
	author={Boz{\'o}ki, S{\'a}ndor and Csat{\'o}, L{\'a}szl{\'o} and Temesi, J{\'o}zsef},
	journal={Eur. J. Oper. Res.},
	volume={248},
	number={1},
	pages={211--218},
	year={2016},
	publisher={Elsevier}
}

@article{MR0070925,
  title={Rank analysis of incomplete block designs: I. the method of paired comparisons},
  author={Bradley, Ralph Allan and Terry, Milton E},
  journal={Biometrika},
  volume={39},
  number={3/4},
  pages={324--345},
  year={1952},
  publisher={JSTOR}
}

@article{chen2019spectral,
  title={Spectral method and regularized MLE are both optimal for top-K ranking},
  author={Chen, Yuxin and Fan, Jianqing and Ma, Cong and Wang, Kaizheng},
  journal={Ann. Statist.},
  volume={47},
  number={4},
  pages={2204},
  year={2019}
}

@article{chen2022optimal,
  title={Optimal full ranking from pairwise comparisons},
  author={Chen, Pinhan and Gao, Chao and Zhang, Anderson Y},
  	journal = {Ann. Statist.},
  volume={50},
  number={3},
  pages={1775--1805},
  year={2022},
  publisher={Institute of Mathematical Statistics}
}

@article{chen2022partial,
	title={Partial recovery for top-k ranking: optimality of MLE and suboptimality of the spectral method},
	author={Chen, Pinhan and Gao, Chao and Zhang, Anderson Y},
	journal={Ann. Statist.},
	volume={50},
	number={3},
	pages={1618--1652},
	year={2022},
	publisher={Institute of Mathematical Statistics}
}

@article{chen2025item,
    AUTHOR = {Chen, Yunxiao and Li, Xiaoou and Liu, Jingchen and Ying,
              Zhiliang},
  title={Item response theory—A statistical framework for educational and psychological measurement},
   journal = {Statist. Sci.},
    VOLUME = {40},
      YEAR = {2025},
    NUMBER = {2},
     PAGES = {167--194}
}

@inproceedings{Christiano2017,
  title={Deep reinforcement learning from human preferences},
  author={Christiano, Paul F. and Leike, Jan and Brown, Tom and Martic, Miljan and Legg, Shane and Amodei, Dario},
  booktitle={NeurIPS},
  pages={4299--4307},
  year={2017}
}

@article{dong2025statistical,
    AUTHOR = {Dong, Pinjun and Han, Ruijian and Jiang, Binyan and Xu,
              Yiming},
     TITLE = {Statistical ranking with dynamic covariates},
   journal = {J. R. Stat. Soc. Ser. B Stat. Methodol.},
    VOLUME = {88},
      YEAR = {2026},
    NUMBER = {1},
     PAGES = {221--238}
}

@article{Dykstra:1956,
  title = {A Note on the Rank Analysis of Incomplete Block Designs -- Applications beyond the Scope of Existing Tables},
  author = {Dykstra, O.},
  year = {1956},
  journal = {Biometrics},
  volume = {12},
  number = {3},
  pages = {301}
}

@article{fahrmeir1985consistency,
  title={Consistency and asymptotic normality of the maximum likelihood estimator in generalized linear models},
  author={Fahrmeir, Ludwig and Kaufmann, Heinz},
  	journal = {Ann. Statist.},
  volume={13},
  number={1},
  pages={342--368},
  year={1985},
  publisher={Institute of Mathematical Statistics}
}

@article{fang2026recent,
  title={Recent advances in the Bradley--Terry Model: theory, algorithms, and applications},
  author={Fang, Shuxing and Han, Ruijian and Luo, Yuanhang and Xu, Yiming},
  journal={arXiv preprint arXiv:2601.14727},
  year={2026}
}

@article{feng2009addressing,
  title={Addressing the assessment challenge with an online system that tutors as it assesses},
  author={Feng, Mingyu and Heffernan, Neil and Koedinger, Kenneth},
  journal={User Model. User-Adapt. Interact.},
  volume={19},
  number={3},
  pages={243--266},
  year={2009},
  publisher={Springer}
}

@article{gao2023uncertainty,
  title={Uncertainty quantification in the Bradley--Terry--Luce model},
  author={Gao, Chao and Shen, Yandi and Zhang, Anderson Y},
  journal={Inf. Inference},
  volume={12},
  number={2},
  pages={1073--1140},
  year={2023},
  publisher={Oxford University Press}
}

@article{goldenberg2010survey,
  title={A survey of statistical network models},
  author={Goldenberg, Anna and Zheng, Alice X and Fienberg, Stephen E and Airoldi, Edoardo M},
  journal={Found. Trends Mach. Learn.},
  volume={2},
  number={2},
  pages={129--233},
  year={2010},
  publisher={Emerald Publishing Limited}
}

@article{hamilton2023many,
  title={The many routes to the ubiquitous Bradley-Terry model},
  author={Hamilton, Ian and Tawn, Nick and Firth, David},
  journal={arXiv preprint arXiv:2312.13619},
  year={2023}
}

@article{han2020asymptotic,
  title={Asymptotic theory of sparse Bradley--Terry model},
  author={Han, Ruijian and Ye, Rougang and Tan, Chunxi and Chen, Kani},
  journal={Ann. Appl. Probab.},
  volume={30},
  number={5},
  pages={2491--2515},
  year={2020},
  publisher={JSTOR}
}

@article{han2023general,
  title={A general pairwise comparison model for extremely sparse networks},
  author={Han, Ruijian and Xu, Yiming and Chen, Kani},
  journal={J. Amer. Statist. Assoc.},
  volume={118},
  number={544},
  pages={2422--2432},
  year={2023},
  publisher={Taylor \& Francis}
}

@article{han2025unified,
  title={A unified analysis of likelihood-based estimators in the Plackett--Luce model},
  author={Han, Ruijian and Xu, Yiming},
  journal={Ann. Statist.},
  volume={53},
  number={5},
  pages={2077--2102},
  year={2025},
  publisher={Institute of Mathematical Statistics}
}

@article{Higham:2002a,
  title = {Detecting a Definite Hermitian Pair and a Hyperbolic or Elliptic Quadratic Eigenvalue Problem, and Associated Nearness Problems},
  author = {Higham, Nicholas J. and Tisseur, Fran{\c c}oise and Van Dooren, Paul M.},
  year = {2002},
   journal = {Linear Algebra Appl.},
  volume = {351--352},
  pages = {455--474}
}

@article{holland1983stochastic,
  title={Stochastic blockmodels: First steps},
  author={Holland, Paul W and Laskey, Kathryn Blackmond and Leinhardt, Samuel},
  journal={Soc. Networks},
  volume={5},
  number={2},
  pages={109--137},
  year={1983},
  publisher={Elsevier}
}

@article{hunter2004mm,
  title={MM algorithms for generalized Bradley-Terry models},
  author={Hunter, David R},
  journal={Ann. Statist.},
  volume={32},
  number={1},
  pages={384--406},
  year={2004},
  publisher={Institute of Mathematical Statistics}
}

@book{Kato:2013,
  title={Perturbation theory for linear operators},
  author={Kato, Tosio},
  volume={132},
  year={2013},
  publisher={Springer Science \& Business Media}
}

@article{loewen2012testing,
  title={Testing the power of arguments in referendums: A {B}radley--{T}erry approach},
  author={Loewen, Peter John and Rubenson, Daniel and Spirling, Arthur},
  journal={Elect. Stud.},
  volume={31},
  number={1},
  pages={212--221},
  year={2012},
  publisher={Elsevier}
}

@article{mcfadden1973conditional,
  title={Conditional logit analysis of qualitative choice behavior},
  author={McFadden, D},
  journal={Frontiers in Economics},
  pages={105--142},
  year={1973},
  publisher={Academic Press}
}

@article {MR0097876,
	AUTHOR = {Ford, Jr., L. R.},
	TITLE = {Solution of a ranking problem from binary comparisons},
	journal = {Amer. Math. Monthly},
	VOLUME = {64},
	YEAR = {1957},
	NUMBER = {8, part II},
	PAGES = {28--33},
	MRCLASS = {62.00},
	MRNUMBER = {0097876},
	MRREVIEWER = {I. R. Savage},
}

@article{Negahban2012RankCR,
  title={Rank centrality: Ranking from pairwise comparisons},
  author={Negahban, Sahand and Oh, Sewoong and Shah, Devavrat},
  journal={Oper. Res.},
  volume={65},
  number={1},
  pages={266--287},
  year={2017},
  publisher={INFORMS Inst. for Operations Res. and the Management Sciences}
}

@article{newman2023efficient,
  title={Efficient computation of rankings from pairwise comparisons},
  author={Newman, MEJ},
  journal={J. Mach. Learn. Res.},
  volume={24},
  number={238},
  pages={1--25},
  year={2023}
}

@book{ortega2000iterative,
  title={Iterative solution of nonlinear equations in several variables},
  author={Ortega, James M and Rheinboldt, Werner C},
  year={2000},
  publisher={SIAM}
}

@article{qu2025sinkhorn,
  title={On sinkhorn’s algorithm and choice modeling},
  author={Qu, Zhaonan and Galichon, Alfred and Gao, Wenzhi and Ugander, Johan},
journal = {Oper. Res.},
year={2025},
  publisher={INFORMS}
}

@inproceedings{Rafailov2023,
  title={Direct preference optimization: Your language model is secretly a reward model},
  author={Rafailov, Rafael and Sharma, Archit and Mitchell, Eric and Manning, Christopher D and Ermon, Stefano and Finn, Chelsea},
  booktitle={NeurIPS},
  volume={36},
  pages={53728--53741},
  year={2023}
}

@article{rasch1960studies,
  title={Studies in mathematical psychology: I. Probabilistic models for some intelligence and attainment tests.},
  author={Rasch, Georg},
  year={1960},
  publisher={Nielsen \& Lydiche}
}

@book{Rellich:1969,
  title={Perturbation theory of eigenvalue problems},
  author={Rellich, Franz},
  year={1969},
  publisher={CRC Press}
}

@book{saad2003iterative,
  title={Iterative methods for sparse linear systems},
  author={Saad, Yousef},
  year={2003},
  publisher={SIAM}
}

@book{Stewart:1990,
  title     = {Matrix Perturbation Theory},
  author    = {Stewart, G. W. and Sun, J.-G.},
  year      = {1990},
  publisher = {Academic Press},
  address   = {Boston, MA}
}

@inproceedings{Sun2025,
  title={Rethinking Reward Modeling in Preference-based Large Language Model Alignment},
  author={Sun, Hao and Shen, Yunyi and Ton, Jean-Francois},
  booktitle={ICLR},
  year={2025},
  url={https://openreview.net/forum?id=rfdblE10qm}
}

@article{thurstone1927method,
  title={The method of paired comparisons for social values.},
  author={Thurstone, Louis L},
  journal = {J. Abnorm. Psychol.},
  volume={21},
  number={4},
  pages={384},
  year={1927},
  publisher={American Psychological Association}
}

@article{Tisseur:2001,
  title = {The {{Quadratic Eigenvalue Problem}}},
  author = {Tisseur, Fran{\c c}oise and Meerbergen, Karl},
  year = 2001,
  journal = {SIAM Rev.},
  volume = {43},
  number = {2},
  pages = {235--286}
}

@article{tropp2012user,
	    AUTHOR = {Tropp, Joel A.},
  title={User-friendly tail bounds for sums of random matrices},
  journal = {Found. Comput. Math.},
VOLUME = {12},
YEAR = {2012},
NUMBER = {4},
PAGES = {389--434},
MRCLASS = {60B20 (60F10 60G42 60G50)},
MRNUMBER = {2946459}
}

@article{vilette2020comparing,
  title={Comparing dominance hierarchy methods using a data-splitting approach with real-world data},
  author={Vilette, Chlo{\'e} and Bonnell, Tyler and Henzi, Peter and Barrett, Louise},
  journal={Behav. Ecol.},
  volume={31},
  number={6},
  pages={1379--1390},
  year={2020},
  publisher={Oxford University Press UK}
}

@article{vojnovic2023accelerated,
  title={Accelerated mm algorithms for inference of ranking scores from comparison data},
  author={Vojnovi{\'c}, Milan and Yun, Se-Young and Zhou, Kaifang},
journal = {Oper. Res.},
volume={71},
  number={4},
  pages={1318--1342},
  year={2023},
  publisher={INFORMS}
}

@article{wapman2022quantifying,
	title={Quantifying hierarchy and dynamics in {US} faculty hiring and retention},
	author={Wapman, K Hunter and Zhang, Sam and Clauset, Aaron and Larremore, Daniel B},
	journal={Nature},
	volume={610},
	number={7930},
	pages={120--127},
	year={2022},
	publisher={Nature Publishing Group UK London}
}

@article{yeung2025efficient,
  title={Efficient inference of rankings from multibody comparisons},
  author={Yeung, Jack and Kaiser, Daniel and Radicchi, Filippo},
   journal = {Phys. Rev. E},
  volume={112},
  number={1},
  pages={014305},
  year={2025},
  publisher={APS}
}

@book{young2014iterative,
  title={Iterative solution of large linear systems},
  author={Young, David M},
  year={2014},
  publisher={Elsevier}
}

@article{zermelo1929berechnung,
  title={Die berechnung der turnier-ergebnisse als ein maximumproblem der wahrscheinlichkeitsrechnung},
  author={Zermelo, Ernst},
  journal={Math. Z.},
  volume={29},
  number={1},
  pages={436--460},
  year={1929},
  publisher={Springer}
}

\end{document}